\newcommand{\kl}{k_{\mathrm{last}}}
\newcommand{\numin}{\nu_{\mathrm{min}}}
\newcommand{\sigmamin}{\sigma_{\mathrm{min}}}
\newcommand{\hbtheta}{\hat{\btheta}}
\newcommand{\hbpsi}{\hat{\bpsi}}
\newcommand{\ttau}{\tilde{\tau}}
\newcommand{\hbw}{\hat{\bw}}
\newcommand{\cbw}{\check{\bw}}
\newcommand{\tbw}{\tilde{\bw}}
\newcommand{\pesQ}{\check{Q}}
\newcommand{\pesV}{\check{V}}
\newcommand{\normbphiR}[2]{\nbr{\bphi_{#1,#2}}_{\bH_{#1-1,#2}^{-1}}}
\newcommand{\normbphiV}[2]{\nbr{\bphi_{#1,#2}}_{\bSigma_{#1-1,#2}^{-1}}}
\newcommand{\ceil}[1]{\lceil #1 \rceil}
\newcommand{\algobandit}{\textsc{Heavy-OFUL}}
\newcommand{\algomdp}{\textsc{Heavy-LSVI-UCB}}
\title{Tackling Heavy-Tailed Rewards in Reinforcement Learning with Function Approximation: Minimax Optimal and Instance-Dependent Regret Bounds}
\author{%
\textbf{Jiayi Huang}$^1$ \qquad \textbf{Han Zhong}$^1$ \qquad \textbf{Liwei Wang}$^{1,2}$ \qquad \textbf{Lin F. Yang}$^3$ \\
$^1$Center for Data Science, Peking University \\
$^2$National Key Laboratory of General Artificial Intelligence, \\
School of Intelligence Science and Technology, Peking University \\
$^3$University of California, Los Angeles \\
\texttt{\{jyhuang, hanzhong\}@stu.pku.edu.cn}, \\
\texttt{wanglw@cis.pku.edu.cn},\quad \texttt{linyang@ee.ucla.edu}
}
\begin{document}

\maketitle

\begin{abstract}
While numerous works have focused on devising efficient algorithms for reinforcement learning (RL) with uniformly bounded rewards, it remains an open question whether sample or time-efficient algorithms for RL with large state-action space exist when the rewards are \emph{heavy-tailed}, i.e., with only finite $(1+\epsilon)$-th moments for some $\epsilon\in(0,1]$. In this work, we address the challenge of such rewards in RL with linear function approximation. We first design an algorithm, \textsc{Heavy-OFUL}, for heavy-tailed linear bandits, achieving an \emph{instance-dependent} $T$-round regret of $\tilde{O}\big(d T^{\frac{1-\epsilon}{2(1+\epsilon)}} \sqrt{\sum_{t=1}^T \nu_t^2} + d T^{\frac{1-\epsilon}{2(1+\epsilon)}}\big)$, the \emph{first} of this kind. Here, $d$ is the feature dimension, and $\nu_t^{1+\epsilon}$ is the $(1+\epsilon)$-th central moment of the reward at the $t$-th round. We further show the above bound is minimax optimal when applied to the worst-case instances in stochastic and deterministic linear bandits. We then extend this algorithm to the RL settings with linear function approximation. Our algorithm, termed as \textsc{Heavy-LSVI-UCB}, achieves the \emph{first} computationally efficient \emph{instance-dependent} $K$-episode regret of $\tilde{O}(d \sqrt{H \mathcal{U}^*} K^\frac{1}{1+\epsilon} + d \sqrt{H \mathcal{V}^* K})$. Here, $H$ is length of the episode, and $\mathcal{U}^*, \mathcal{V}^*$ are instance-dependent quantities scaling with the central moment of reward and value functions, respectively. We also provide a matching minimax lower bound $\Omega(d H K^{\frac{1}{1+\epsilon}} + d \sqrt{H^3 K})$ to demonstrate the optimality of our algorithm in the worst case. Our result is achieved via a novel robust self-normalized concentration inequality that may be of independent interest in handling heavy-tailed noise in general online regression problems.
\end{abstract}

\section{Introduction}\label{sec:intro}

Designing efficient reinforcement learning (RL) algorithms for large state-action space is a significant challenge within the RL community.
A crucial aspect of RL is understanding the reward functions, which directly impacts the quality of the agent's policy.
In certain real-world situations, reward distributions may exhibit heavy-tailed behavior, characterized by the occurrence of extremely large values at a higher frequency than expected in a normal distribution.
Examples include image noise in signal processing \citep{hamza2001image}, stock price fluctuations in financial markets \citep{cont2001empirical,hull2012risk}, and value functions in online advertising \citep{choi2020online,jebarajakirthy2021mobile}.
However, much of the existing RL literature assumes rewards to be either uniformly bounded or light-tailed (e.g., sub-Gaussian).
In such light-tailed settings, the primary challenge lies in learning the transition probabilities, leading most studies to assume deterministic rewards for ease of analysis \citep{azar2017minimax,jin2020provably,he2023nearly}.
As we will demonstrate, the complexity of learning reward functions may dominate in heavy-tailed settings.
Consequently, the performance of traditional algorithms may decline, emphasizing the need for the development of new, efficient algorithms specifically designed to handle heavy-tailed rewards.

Heavy-tailed distributions have been extensively studied in the field of statistics \citep{catoni2012challenging,lugosi2019mean} and in more specific online learning scenarios, such as bandits \citep{bubeck2013bandits,medina2016no,shao2018almost,xue2020nearly,zhong2021breaking}.
However, there is a dearth of theoretical research in RL concerning heavy-tailed rewards, whose distributions only admit finite $(1+\epsilon)$-th moment for some $\epsilon \in (0, 1]$. One notable exception is \citet{zhuang2021no}, which made a pioneering effort in establishing worst-case regret guarantees in \emph{tabular} Markov Decision Processes (MDPs) with heavy-tailed rewards. However, their algorithm cannot handle RL settings with large state-action space. Moreover, their reliance on truncation-based methods is sub-optimal as these methods heavily depend on raw moments, which do not vanish in deterministic cases.
Therefore, a natural question arises:

\begin{center}
    \emph{Can we derive sample and time-efficient algorithms for RL with large state-action space \\ that achieve instance-dependent regret in the presence of heavy-tailed rewards?}
\end{center}

In this work, we focus on linear MDPs \citep{yang2019sample,jin2020provably} with heavy-tailed rewards and answer the above question affirmatively.
We say a distribution is \emph{heavy-tailed} if it only admits finite $(1+\epsilon)$-th moment for some $\epsilon\in(0,1]$.
Our contributions are summarized as follows.
\begin{itemize}[leftmargin=*]
    \item We first propose a computationally efficient algorithm $\algobandit$ for heavy-tailed linear bandits.
    Such a setting can be regarded as a special case of linear MDPs.
    $\algobandit$ achieves an \emph{instance-dependent} $T$-round regret of $\tilde{O}\big(d T^{\frac{1-\epsilon}{2(1+\epsilon)}} \sqrt{\sum_{t=1}^T \nu_t^2} + d T^{\frac{1-\epsilon}{2(1+\epsilon)}}\big)$, the \emph{first} of this kind.
    Here $d$ is the feature dimension and $\nu_t^{1+\epsilon}$ is the $(1+\epsilon)$-th central moment of the reward at the $t$-th round.
    The instance-dependent regret bound has a main term that only depends on the summation of central moments, and therefore does not have a $\sqrt{T}$ term.
    Our regret bound is shown to be minimax optimal in both stochastic and deterministic linear bandits (See Remark~\ref{rem:bandit-regret} for details).
    \item We then extend this algorithm to time-inhomogeneous linear MDPs with heavy-tailed rewards, resulting in a new computationally efficient algorithm $\algomdp$,
    which achieves a $K$-episode regret scaling as $\tilde{O}(d \sqrt{H\cU^*} K^{\frac{1}{1+\epsilon}} + d \sqrt{H\cV^*K})$ for the \emph{first} time.
    Here, $H$ is the length of the episode and $\cU^*, \cV^*$ are quantities measuring the central moment of the reward functions and transition probabilities, respectively (See Theorem~\ref{thm:mdp-regret} for details).
    Our regret bound is \emph{instance-dependent} since the main term only relies on the instance-dependent quantities, which vanishes when the dynamics and rewards are deterministic. When specialized to special cases, our instance-dependent regret recovers the variance-aware regret in \citet{li2023variance} (See Remark~\ref{rem:mdp-li2023variance} for details) and improves existing first-order regret bounds \citep{wagenmaker2022first,li2023variance} (See Corollary~\ref{cor:mdp-first-order} for details).  
    \item We provide a minimax regret lower bound $\Omega(d H K^{\frac{1}{1+\epsilon}} + d \sqrt{H^3 K})$ for linear MDPs with heavy-tailed rewards, which matches the worst-case regret bound implied by our instance-dependent regret, thereby demonstrating the minimax optimality of $\algomdp$ in the worst case.
\end{itemize}

For better comparisons between our algorithms and state-of-the-art results, we summarize the regrets in Table~\ref{tab:bandit} and \ref{tab:mdp} for linear bandits and linear MDPs, respectively.
\ifarxiv\else
More related works are deferred to Appendix~\ref{sec:related}.
\fi
Remarkably, our results demonstrate that $\epsilon=1$ (i.e. finite variance) is sufficient to obtain variance-aware regret bounds of the same order as the case where rewards are uniformly bounded for both linear bandits and linear MDPs. The main technique contribution behind our results is a novel robust self-normalized concentration inequality inspired by \citet{sun2020adaptive}. To be more specific, it is a non-trivial generalization of adaptive Huber regression from independent and identically distributed (i.i.d.) case to heavy-tailed online regression settings and gives a \emph{self-normalized} bound instead of the $\ell_2$-norm bound in \citet{sun2020adaptive}.
Our result is computationally efficient and only scales with the feature dimension, $d$, $(1+\epsilon)$-th \emph{central} moment of the noise, $\nu$, and does not depend on the absolute magnitude as in other self-normalized concentration inequalities \citep{zhou2021nearly,zhou2022computationally}.

\begin{table*}[t!]
    \caption{Comparisons with previous works on linear bandits.
    $d$, $T$, $\{\sigma_t\}_{t\in[T]}$, $\{\nu_t\}_{t\in[T]}$ are feature dimension, the number of rounds, the variance or central moment of the reward at the $t$-th round.
    }
    \label{tab:bandit}
    \centering
    \resizebox{\linewidth}{!}{
    \begin{tabular}{|c|c|c|c|c|c|}
        \hline
        \textbf{Algorithm}  & \textbf{Regret} & \scriptsize\makecell{\textbf{Instance-}\\\textbf{dependent?}} & \scriptsize\makecell{\textbf{Minimax}\\\textbf{Optimal?}} & \scriptsize\makecell{\textbf{Deterministic-}\\\textbf{Optimal?}} & \scriptsize\makecell{\textbf{Heavy-}\\\textbf{Tailed}\\\textbf{Rewards?}}\\
        \hline
        \makecell{OFUL \citep{abbasi2011improved}} & $\tilde{O}\rbr{d \sqrt{T}}$ & No & \textbf{Yes} & No & No\\
        \hline
        \makecell{IDS-UCB \citep{kirschner2018information} \\ Weighted OFUL+ \citep{zhou2022computationally} \\ AdaOFUL \citep{li2023variance}} & $\tilde{O}\rbr{d \sqrt{\sum_{t=1}^T \sigma_t^2} + d}$ & \textbf{Yes} & \textbf{Yes} & \textbf{Yes} & \makecell{No \\ No \\ $\epsilon=1$}\\
        \hline
        \makecell{MENU and TOFU \citep{shao2018almost}} & $\tilde{O}\rbr{d T^\frac{1}{1+\epsilon}}$ & No & \textbf{Yes} & No & \textbf{Yes}\\
        \hline
        \makecell{$\algobandit$ (\textbf{Ours})} & $\tilde{O}\rbr{d T^{\frac{1-\epsilon}{2(1+\epsilon)}} \sqrt{\sum_{t=1}^T \nu_t^2} + d T^{\frac{1-\epsilon}{2(1+\epsilon)}}}$ & \textbf{Yes} & \textbf{Yes} & \textbf{Yes} & \textbf{Yes}\\
        \hline
    \end{tabular}
    }
\end{table*}

\begin{table*}[t!]
    \caption{Comparisons with previous works on time-inhomogeneous linear MDPs.
    $d$, $H$, $K$, $V^*_1$, $\cG^*$ are feature dimension, the length of the episode, the number of episodes, optimal value function, variance-dependent quantity defined in \citet{li2023variance}. $\cU^*$, $\cV^*$ are defined in Theorem~\ref{thm:mdp-regret}.
    }
    \label{tab:mdp}
    \centering
    \resizebox{\linewidth}{!}{
    \begin{tabular}{|c|c|c|c|c|c|c|}
        \hline
        \textbf{Algorithm}  & \textbf{Regret} & \scriptsize\makecell{\textbf{Central}\\\textbf{Moment-}\\\textbf{Dependent?}} & \scriptsize\makecell{\textbf{First-}\\\textbf{Order?}} & \scriptsize\makecell{\textbf{Minimax}\\\textbf{Optimal?}} & \scriptsize\makecell{\textbf{Computa-}\\\textbf{tionally}\\\textbf{Efficient?}} & \scriptsize\makecell{\textbf{Heavy-}\\\textbf{Tailed}\\\textbf{Rewards?}}\\
        \hline
        \makecell{LSVI-UCB\citep{jin2020provably}} & $\tilde{O}\rbr{\sqrt{d^3H^4K}}$ & No & No & No & \textbf{Yes} & No\\
        \hline
        \makecell{\textsc{Force} \footnotesize 
         \citep{wagenmaker2022first}} & $\tilde{O}\rbr{\sqrt{d^3H^3V^*_1K}}$ & No & \textbf{Yes} & No & No & No\\
        \hline
        \makecell{VO$Q$L \citep{agarwal2023vo} \\ LSVI-UCB++ \citep{he2023nearly}} & $\tilde{O}\rbr{d\sqrt{H^3K}}$ & No & No & \textbf{Yes} & \textbf{Yes} & No\\
        \hline
        \makecell{VARA \citep{li2023variance}} & $\tilde{O}\Big(d\sqrt{H\cG^*K}\Big)$ & \textbf{Yes} & \textbf{Yes} & \textbf{Yes} & \textbf{Yes} & $\epsilon=1$\\
        \hline
        \makecell{$\algomdp$ (\textbf{Ours})} & $\tilde{O}\rbr{d \sqrt{H\cU^*} K^{\frac{1}{1+\epsilon}} + d \sqrt{H\cV^*K}}$ & \textbf{Yes} & \textbf{Yes} & \textbf{Yes} & \textbf{Yes} & \textbf{Yes}\\
        \hline
    \end{tabular}
    }
\end{table*}

\ifarxiv
\paragraph{Road Map} The rest of the paper is organized as follows. Section~\ref{sec:related} gives related work. Section~\ref{sec:preliminary} introduces heavy-tailed linear bandits and linear MDPs. Section~\ref{sec:huber} presents the robust self-normalized concentration inequality for general online regression problems with heavy-tailed noise. Section~\ref{sec:bandit} and \ref{sec:mdp} give the main results for heavy-tailed linear bandits and linear MDPs, respectively. We then conclude in Section~\ref{sec:conclu}. Experiments and All proofs can be found in Appendix.
\else
\paragraph{Road Map} The rest of the paper is organized as follows. Section~\ref{sec:preliminary} introduces heavy-tailed linear bandits and linear MDPs. Section~\ref{sec:huber} presents the robust self-normalized concentration inequality for general online regression problems with heavy-tailed noise. Section~\ref{sec:bandit} and \ref{sec:mdp} give the main results for heavy-tailed linear bandits and linear MDPs, respectively. We then conclude in Section~\ref{sec:conclu}. Related work, experiments and all proofs can be found in Appendix.
\fi

\paragraph{Notations} Let $\|\ba\| := \|\ba\|_2$. Let $[t]:= \{1,2,\dots,t\}$. Let $\cB_d(r) := \{\bx \in \RR^d | \|\bx\| \le r\}$. Let $x_{[a,b]}:=\max\{a,\min\{x,b\}\}$ denote the projection of $x$ onto the close interval $[a,b]$. Let $\sigma(\{X_s\}_{s\in[t]})$ be the $\sigma$-field generated by random vectors $\{X_s\}_{s\in[t]}$.

\ifarxiv
\section{Related Work}\label{sec:related}

\paragraph{RL with linear function approximation}
In this work, we focus on the setting of linear MDPs \citep{yang2019sample,jin2020provably,hu2022nearly,agarwal2023vo,he2023nearly,zhong2023theoretical}, where the reward functions and transition probabilities can be expressed as linear functions of some known, $d$-dimensional state-action features.
\citet{jin2020provably} proposed the first computationally efficient algorithm LSVI-UCB that achieves $\tilde{O}(\sqrt{d^3H^4K})$ worst-case regret in online learning settings.
LSVI-UCB constructs upper confidence bounds for action-value functions based on least squares regression.
Then \citet{agarwal2023vo,he2023nearly}
improve it to $\tilde{O}(d\sqrt{H^3K})$, which matches the minimax lower bound $\Omega(d\sqrt{H^3K})$ by \citet{zhou2021nearly} up to logarithmic factors.
There is another line of works that make the linear mixture MDP assumption \citep{modi2020sample,jia2020model,ayoub2020model,zhou2021nearly,zhou2022computationally,zhao2023variance} where the transition kernel is a linear combination of some basis transition probability functions.

\paragraph{Instance-dependent regret in bandits and RL}

Recently, there are plenty of works that achieve instance-dependent regret in bandits and RL \citep{kirschner2018information,zhou2021nearly,zhang2021improved,kim2022improved,zhou2022computationally,zhao2023variance,zanette2019tighter,zhou2023sharp,wagenmaker2022first,li2023variance}.
Instance-dependent regret uses fine-grained quantities that inherently characterize the problems, and thus provides tighter guarantee than worst-case regret. These results approximately provide two different kinds of regret.
The first is first-order regret, which was originally achieved by \citet{zanette2019tighter} in tabular MDPs.
Then \citet{wagenmaker2022first} proposed a computationally inefficient\footnote{\citet{wagenmaker2022first} also provided a computationally efficient alternative with an extra factor of $\sqrt{d}$.} algorithm \textsc{Force} in linear MDPs that achieves $\tilde{O}(\sqrt{d^3H^3V^*_1K})$ first-order regret, where $V^*_1$ is the optimal value function.
Recently, \citet{li2023variance} improved it to a computationally efficient result of $\tilde{O}(d\sqrt{H^3V^*_1K})$.
Since first-order regret depends on the optimal value function, which is non-diminishing when the MDPs are less stochastic, it is typically sub-optimal in such deterministic cases.
The second is variance-aware regret, which is well-studied in linear bandits with light-tailed rewards \citep{kirschner2018information,zhou2022computationally}.
These works are based on weighted ridge regression and a Bernstein-style concentration inequality.
However, hardly few works consider heavy-tailed rewards. One exception is \citet{li2023variance}, which improved adaptive Huber regression and provided the first variance-aware regret in the presence of finite-variance rewards. Unfortunately, little has been done in the case $\epsilon<1$. 

\paragraph{Heavy-tailed rewards in bandits and RL}
\citet{bubeck2013bandits} made the first attempt to study heavy-tailed rewards in Multi-Armed Bandits (MAB). Since then, robust mean estimators, such as median-of-means and truncated mean, have been broadly utilized in linear bandits \citep{medina2016no,shao2018almost,xue2020nearly,zhong2021breaking} to achieve tight regret bounds. As far as our knowledge, the only work that considers heavy-tailed rewards where the variance could be non-existent in RL is \citet{zhuang2021no}, which established a regret bound in tabular MDPs that is tight with respect to $S,A,K$ by plugging truncated mean into UCRL2 \citep{auer2008near} and Q-Learning \citep{jin2018q}.

\paragraph{Robust mean estimators and robust regression}
\citet{lugosi2019mean} provided an overview of most robust mean estimators for heavy-tailed distributions, including the median-of-means estimator, truncated mean, and Catoni's M-estimator. The median-of-means estimator has limitation with minimum sample size. Truncated mean uses raw $(1+\epsilon)$-th moments, thus is sub-optimal in the deterministic case. The original version of Catoni's M-estimator in \citet{catoni2012challenging} requires the existence of finite variance. Then \citet{chen2021generalized,bhatt2022nearly} generalized it to handle heavy-tailed distributions while preserving the same order as $\epsilon \to 1$.
\citet{wagenmaker2022first} extended Catoni's M-estimator to general heterogeneous online regression settings with some covering arguments. Different from \citet{catoni2012challenging}, their results scale with the second raw moments instead of variance.
\citet{sun2020adaptive} imposed adaptive Huber regression to handle homogeneous offline heavy-tailed noise by utilizing Huber loss \citep{huber1964robust} as a surrogate of squared loss. \citet{li2023variance} modified adaptive Huber regression to handle finite-variance noise in heterogeneous online regression settings.
\fi

\section{Preliminaries}\label{sec:preliminary}

\subsection{Heavy-Tailed Linear Bandits}\label{sec:pre-bandit}

\begin{definition}[Heterogeneous linear bandits with heavy-tailed rewards]
\label{def:bandit}
    Let $\{\cD_t\}_{t\ge1}$ denote a series of fixed decision sets, where all $\bphi_t \in \cD_t$ satisfy $\|\bphi_t\| \le L$ for some known upper bound $L$. At each round $t$, the agent chooses $\bphi_t \in \cD_t$, then receives a reward $R_t$ from the environment. 
    We define the filtration $\{\cF_t\}_{t\ge1}$ as $\cF_t = \sigma(\{\bphi_s,R_s\}_{s\in[t]} \cup \{\bphi_{t+1}\})$ for any $t\ge1$.
    We assume
    \ifarxiv
    \[
        R_t = \dotp{\bphi_t}{\btheta^*} + \varepsilon_t
    \]
    \else
    $R_t = \dotp{\bphi_t}{\btheta^*} + \varepsilon_t$
    \fi
    with the unknown coefficient $\btheta^* \in \cB_d(B)$ for some known upper bound $B$. The random variable $\varepsilon_t \in \RR$ is $\cF_t$-measurable and satisfies $\EE[\varepsilon_t | \cF_{t-1}] = 0, \EE[|\varepsilon_t|^{1+\epsilon} | \cF_{t-1}] = \nu_t^{1+\epsilon}$ for some $\epsilon\in(0,1]$ with $\nu_t$ being $\cF_{t-1}$-measurable.
\end{definition}
The agent aims to minimize the $T$-round \emph{pseudo-regret} defined as
\ifarxiv
\[
    \mathrm{Regret}(T) = \sum_{t=1}^T [\dotp{\bphi_t^*}{\btheta^*} - \dotp{\bphi_t}{\btheta^*}],
\]
\else
$\mathrm{Regret}(T) = \sum_{t=1}^T [\dotp{\bphi_t^*}{\btheta^*} - \dotp{\bphi_t}{\btheta^*}]$,
\fi
where $\bphi_t^* = \argmax_{\bphi \in \cD_t} \dotp{\bphi}{\btheta^*}$.

\subsection{Linear MDPs with Heavy-Tailed Rewards}\label{sec:pre-mdp}

We use a tuple $M = M(\cS, \cA, H, \{R_h\}_{h\in[H]}, \{\PP_h\}_{h\in[H]})$ to describe the \emph{time-inhomogeneous finite-horizon MDP},
where $\cS$ and $\cA$ are state space and action space, respectively, $H$ is the length of the episode, $R_h:\cS\times\cA\to\RR$ is the random reward function with expectation $r_h:\cS\times\cA\to\RR$, and $\PP_h:\cS\times\cA\to\Delta(\cS)$ is the transition probability function. More details can be found in \citet{puterman2014markov}.
A time-dependent \emph{policy} $\pi = \{\pi_h\}_{h\in H}$ satisfies $\pi_h: \cS \to \Delta(\cA)$ for any $h\in[H]$. When the policy is deterministic, we use $\pi_h(s_h)$ to denote the action chosen at the $h$-th step given $s_h$ by policy $\pi$.
For any state-action pair $(s,a) \in \cS\times\cA$, we define the \emph{state-action value function} $Q_h^\pi(s,a)$ and \emph{state value function} $V_h^\pi(s)$ as follows:
\ifarxiv
\[
    Q_h^\pi(s,a) = \EE\sbr{\sum_{h'=h}^H r(s_{h'}, a_{h'}) | s_h=s, a_h=a},\quad V_h^\pi(s) = Q_h^\pi(s,\pi_h(s)),
\]
\else
$Q_h^\pi(s,a) = \EE\sbr{\sum_{h'=h}^H r(s_{h'}, a_{h'}) | s_h=s, a_h=a}$, $V_h^\pi(s) = Q_h^\pi(s,\pi_h(s))$,
\fi
where the expectation is taken with respect to the transition probability of $M$ and the agent's policy $\pi$. If $\pi$ is randomized, then the definition of $V$ should have an expectation. Denote the optimal value functions as $V_h^*(s) = \sup_\pi V_h^\pi(s)$ and $Q_h^*(s,a) = \sup_\pi Q_h^\pi(s,a)$.

We introduce the following shorthands for simplicity. At the $h$-th step, for any value function $V: \cS \to \RR$, let
\ifarxiv
\[
    [\PP_h V](s,a) = \EE_{s' \sim \PP_h(\cdot|s,a)} V(s'),\quad [\VV_h V](s,a) = [\PP_h V^2](s,a) - [\PP_h V]^2 (s,a)
\]
\else
$[\PP_h V](s,a) = \EE_{s' \sim \PP_h(\cdot|s,a)} V(s')$, $[\VV_h V](s,a) = [\PP_h V^2](s,a) - [\PP_h V]^2 (s,a)$
\fi
denote the expectation and the variance of the next-state value function at the $h$-th step given $(s,a)$.

We aim to minimize the $K$-episode \emph{regret} defined as
\ifarxiv
\[
    \mathrm{Regret}(K) = \sum_{k=1}^K [V_1^*(s_1^k) - V_1^{\pi^k}(s_1^k)].
\]
\else
$\mathrm{Regret}(K) = \sum_{k=1}^K [V_1^*(s_1^k) - V_1^{\pi^k}(s_1^k)]$.
\fi

In the rest of this section, we introduce linear MDPs with heavy-tailed rewards. We first give the definition of linear MDPs studied in \citet{yang2019sample,jin2020provably}, with emphasis that the rewards in their settings are deterministic or uniformly bounded. Then we focus on the heavy-tailed random rewards.

\begin{definition}
\label{def:linear-mdp}
    An MDP $M = M(\cS, \cA, H, \{R_h\}_{h\in[H]}, \{\PP_h\}_{h\in[H]})$ is a \emph{time-inhomogeneous finite-horizon linear MDP}, if there exist known feature maps $\bphi(s,a): \cS\times\cA\to\cB_d(1)$, unknown $d$-dimensional signed measures $\{\bmu_h^*\}_{h\in[H]}$ over $\cS$ with $\|\bmu_h^*(\cS)\|:= \int_{s\in\cS} |\bmu(s)| \ud s \le \sqrt{d}$ and unknown coefficients $\{\btheta_h^*\}_{h\in[H]}\subseteq \cB_d(B)$ for some known upper bound $B$ such that
    \[
    r_h(s,a) = \dotp{\bphi(s,a)}{\btheta_h^*},\quad \PP_h(\cdot|s,a) = \dotp{\bphi(s,a)}{\bmu_h^*(\cdot)}
    \]
    for any state-action pair $(s,a)\in\cS\times\cA$ and timestep $h\in[H]$.
\end{definition}

\begin{assumption}[Realizable rewards]\label{ass:reward}
For all $(s,a,h) \in \cS\times\cA\times[H]$, the random reward $R_h(s,a)$ is independent of next state $s' \sim \PP_h(\cdot|s,a)$ and admits the linear structure
\[
R_h(s,a) = \dotp{\bphi(s,a)}{\btheta_h^*} + \varepsilon_h(s,a),
\]
where $\varepsilon_h(s,a)$ is a mean-zero heavy-tailed random variable specified below.
\end{assumption}

We introduce the notation $\bnu_n[X] = \EE[|X - \EE X|^n]$ for the $n$-th central moment of any random variable $X$.
And for any random reward function at the $h$-th step $R_h: \cS\times\cA \to \RR$, let
\begin{align*}
[\EE_h R_h](s,a) &= \EE[R_h(s_h,a_h)|(s_h,a_h)=(s,a)],\\
[\bnu_{1+\epsilon} R_h](s,a) &= \EE [|[R_h - \EE_h R_h](s_h,a_h)|^{1+\epsilon} | (s_h,a_h)=(s,a)]
\end{align*}
denote its expectation and the $(1+\epsilon)$-th central moment given $(s_h,a_h)=(s,a)$ for short.

\begin{assumption}[Heavy-tailedness of rewards]\label{ass:bounded-moment}
Random variable $\varepsilon_h(s,a)$ satisfies $[\EE_h \varepsilon_h](s,a) = 0$. And for some known $\epsilon, \epsilon' \in(0,1]$ and constants $\nu_R,\nu_{R^\epsilon} \ge 0$, the following unknown moments of $\varepsilon_h(s,a)$ satisfy
\[
[\EE_h|\varepsilon_h|^{1+\epsilon}](s,a) \le \nu_R^{1+\epsilon},\quad [\bnu_{1+\epsilon'} |\varepsilon_h|^{1+\epsilon}](s,a) \le \nu_{R^\epsilon}^{1+\epsilon'}
\]
for all $(s,a,h) \in \cS \times \cA \times [H]$.
\end{assumption}

Assumption~\ref{ass:bounded-moment} generalizes Assumption~2.2 of \citet{li2023variance}, which is the weakest moment condition on random rewards in the current literature of RL with function approximation. Setting $\epsilon=1$ and $\epsilon'=1$ immediately recovers their settings.

\begin{assumption}[Realizable central moments]\label{ass:central-moment}
There are some unknown coefficients $\{\bpsi_h^*\}_{h\in[H]} \subseteq \cB_d(W)$ for some known upper bound $W$ such that 
\[
[\EE_h|\varepsilon_h|^{1+\epsilon}](s,a) = \dotp{\bphi(s,a)}{\bpsi_h^*}
\]
for all $(s,a, h) \in \cS \times \cA \times [H]$.
\end{assumption}

\begin{remark}
    When $\epsilon=1$, that is the rewards have finite variance, \citet{li2023variance} use the fact that $[\bnu_2 R_h](s,a) = [\VV_h R_h](s,a) = [\EE_h R_h^2](s,a) - [\EE_h R_h]^2(s,a)$, assume the linear realizability of the second moment $[\EE_h R_h^2](s,a)$, and estimate it instead. However, when $\epsilon < 1$, there is no such relationship between the $(1+\epsilon)$-th central moment $[\bnu_{1+\epsilon} R_h](s,a)$ and the $(1+\epsilon)$-th raw moment $[\EE_h R_h^{1+\epsilon}](s,a)$. Thus, we adopt a new approach to estimate $[\bnu_{1+\epsilon} R_h](s,a)$ directly, and bound the error by a novel perturbation analysis of adaptive Huber regression in Appendix~\ref{sec:huber-perturbation}.
\end{remark}

\begin{assumption}[Bounded cumulative rewards]\label{ass:bounded}
For any policy $\pi$, let $\{s_h,a_h,R_h\}_{h\in[H]}$ be a random trajectory following policy $\pi$. And define $r_\pi = \sum_{h=1}^H [\EE_h R_h](s_h,a_h)= \sum_{h=1}^H r_h(s_h, a_h)$.
We assume
(1) $0 \le r_\pi \le \cH$.
(2) $\sum_{h=1}^H [\bnu_{1+\epsilon}R_h]^\frac{2}{1+\epsilon}(s_{h},a_{h}) \le \cU$.
(3) $\Var(r_\pi) \le \cV$.
\end{assumption}

Here, (1) gives an upper bound of cumulative expected rewards $r_\pi$. (2) assumes the summation of $(1+\epsilon)$-th central moment of rewards $[\bnu_{1+\epsilon}R_h](s_{h},a_{h})$ is bounded since $[\sum_{h=1}^H [\bnu_{1+\epsilon}R_h](s_{h},a_{h})]^\frac{2}{1+\epsilon} \le \sum_{h=1}^H [\bnu_{1+\epsilon}R_h]^\frac{2}{1+\epsilon}(s_{h},a_{h}) \le \cU$ due to Jensen's inequality. And (3) is to bound the variance of $r_\pi$ along the trajectory following policy $\pi$.

\section{Adaptive Huber Regression}\label{sec:huber}
At the core of our algorithms for both heavy-tailed linear bandits and linear MDPs is a new approach -- adaptive Huber regression -- to handle heavy-tailed noise. \citet{sun2020adaptive} imposed adaptive Huber regression to handle i.i.d. heavy-tailed noise by utilizing Huber loss \citep{huber1964robust} as a surrogate of squared loss. \citet{li2023variance} modified adaptive Huber regression for heterogeneous online settings, where the variances in each round are different. However, it is not readily applicable to deal with heavy-tailed noise. Our contribution in this section is to construct a new self-normalized concentration inequality for general online regression problems with heavy-tailed noise.

We first give a brief introduction to Huber loss function and its properties. 

\begin{definition}[Huber loss]\label{def:huber-loss}
\emph{Huber loss} is defined as
\begin{equation}\label{eq:huber-loss}
    \ell_\tau(x) = \begin{cases}
    \frac{x^2}{2} & \text{if } |x| \le \tau,\\
    \tau|x| - \frac{\tau^2}{2} & \text{if } |x| > \tau,
    \end{cases}
\end{equation}
where $\tau>0$ is referred as a robustness parameter.
\end{definition}

Huber loss is first proposed by \citet{huber1964robust} as a robust version of squared loss while preserving the convex property. Specifically, 
Huber loss is a quadratic function of $x$ when $|x|$ is less than the threshold $\tau$, while becomes linearly dependent on $|x|$ when $|x|$ grows larger than $\tau$. It has the property of strongly convex near zero point and is not sensitive to outliers.
See Appendix~\ref{sec:huber-property} for more properties of Huber loss.

Next, we define general online regression problems with heavy-tailed noise, which include heavy-tailed linear bandits as a special case. Then we utilize Huber loss to estimate $\btheta^*$. Below we give the main theorem to bound the deviation of the estimated $\btheta_t$ in Algorithm~\ref{algo:huber} from the ground truth $\btheta^*$.

\begin{definition}
\label{def:regression}
    Let $\{\cF_t\}_{t\ge 1}$ be a filtration. For all $t>0$, let random variables $y_t, \varepsilon_t$ be $\cF_t$-measurable and random vector $\bphi_t \in \cB_d(L)$ be $\cF_{t-1}$-measurable.
    Suppose $y_t = \dotp{\bphi_t}{\btheta^*} + \varepsilon_t$, where $\btheta^* \in \cB_d(B)$ is an unknown coefficient and
    \[
    \EE[\varepsilon_t | \cF_{t-1}] = 0,\quad \EE[|\varepsilon_t|^{1+\epsilon} | \cF_{t-1}] = \nu_t^{1+\epsilon}
    \]
    for some $\epsilon\in(0,1]$.
    The goal is to estimate $\btheta^*$ at any round $t$ given the realizations of $\{\bphi_{s}, y_{s}\}_{s\in[t]}$.
\end{definition}

\begin{algorithm}[t]
	\caption{Adaptive Huber Regression}\label{algo:huber}
	\begin{algorithmic}[1]
	\REQUIRE Number of total rounds $T$, confidence level $\delta$, regularization parameter $\lambda$, $\sigmamin$, parameters for adaptive Huber regression $c_0,c_1,\tau_0$, estimated central moment $\hat{\nu}_t$ and moment parameter $b$ that satisfy $\nu_t / \hat{\nu}_t \le b \text{ for all } t \le T$.\label{eq:b}
    \ENSURE The estimated coefficient $\btheta_t$.
    \STATE $\kappa = d \log(1 + {T L^2}/({d\lambda \sigma_{\min}^2}))$.
    \STATE Set $\bH_{t-1} = \lambda \bI + \sum_{s=1}^{t-1} \sigma_s^{-2} \bphi_s \bphi_s^\top$.
    \STATE Set $\sigma_t = \max\cbr{\hat{\nu}_t, \sigma_{\mathrm{min}}, \frac{\|\bphi_t\|_{\bH_{t-1}^{-1}}}{c_0}, \frac{\sqrt{LB}}{c_1^{\frac{1}{4}} (2\kappa b^2)^{\frac{1}{4}}} \|\bphi_t\|_{\bH_{t-1}^{-1}}^{\frac{1}{2}}}$.\label{eq:sigma_t}
    \STATE Set
        $\tau_t = \tau_0 \frac{\sqrt{1+w_t^2}}{w_t} t^{\frac{1-\epsilon}{2(1+\epsilon)}} \text{ with } w_t = \|\bphi_t / \sigma_t\|_{\bH_{t-1}^{-1}}$.\label{eq:tau_t}
    \STATE Define the loss function $L_t(\btheta) := \frac{\lambda}{2}\|\btheta\|^2 + \sum_{s=1}^{t} \ell_{\tau_s}(\frac{y_s - \dotp{\bphi_s}{\btheta}}{\sigma_s})$.
    \STATE Compute $\btheta_t = \argmin_{\btheta \in \cB_d(B)} L_t(\btheta)$.\label{eq:theta_t}
	\end{algorithmic}
\end{algorithm}

\begin{theorem}
\label{thm:heavy}
For the online regression problems in Definition~\ref{def:regression}, we solve for $\btheta_t$ by adaptive Huber regression in Algorithm~\ref{algo:huber} with $c_0,c_1,\tau_0$ in Appendix~\ref{proof:heavy}.
Then for any $\delta\in(0,1)$, with probability at least $1- 3\delta$, for all $t \le T$, we have $\|\btheta_t - \btheta^*\|_{\bH_t} \le \beta_t$, where $\bH_t$ is defined in Algorithm~\ref{algo:huber} and
\begin{equation}\label{eq:beta_t}
\beta_t = 3 \sqrt{\lambda}B + 24 t^{\frac{1-\epsilon}{2(1+\epsilon)}} \sqrt{2\kappa} b (\log 3T)^{\frac{1-\epsilon}{2(1+\epsilon)}} (\log (2T^2/\delta))^{\frac{\epsilon}{1+\epsilon}}.
\end{equation}
\end{theorem}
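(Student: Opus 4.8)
The plan is to run a first-order optimality argument for the constrained weighted Huber objective $L_t$ and then convert the resulting ``basic inequality'' into a self-normalized bound. Write $\Delta = \btheta_t - \btheta^*$, let $\psi_\tau := \ell_\tau'$ be the clipped identity, and set $\eta_s := \psi_{\tau_s}(\varepsilon_s/\sigma_s)$, so that $\nabla L_t(\btheta^*) = \lambda\btheta^* - \sum_{s\le t}\eta_s\,\bphi_s/\sigma_s$. Because $\btheta_t$ minimizes the convex $L_t$ over $\cB_d(B)$ and $\btheta^*\in\cB_d(B)$, the variational inequality $\dotp{\nabla L_t(\btheta_t)}{\Delta}\le 0$ together with monotonicity of $\alpha\mapsto\dotp{\nabla L_t(\btheta^*+\alpha\Delta)}{\Delta}$ gives $C \le \nbr{\nabla L_t(\btheta^*)}_{\bH_t^{-1}}\,\nbr{\Delta}_{\bH_t}$, where $C := \int_0^1\Delta^\top\nabla^2 L_t(\btheta^*+\alpha\Delta)\Delta\,\ud\alpha \ge 0$ is the curvature along the segment. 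Since $\bH_t\succeq\lambda\bI$, the regularizer contributes $\lambda\nbr{\btheta^*}_{\bH_t^{-1}}\le\sqrt\lambda B$, which is the origin of the $\sqrt\lambda B$ term in $\beta_t$. It then remains to (i) control the stochastic gradient $\nbr{\sum_{s\le t}\eta_s\bphi_s/\sigma_s}_{\bH_t^{-1}}$ and (ii) lower-bound $C$ by a constant multiple of $\nbr{\Delta}_{\bH_t}^2$.

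For (i) I would split $\eta_s$ into its $\cF_{s-1}$-conditional mean and a martingale difference. For the bias, $\EE[\varepsilon_s\mid\cF_{s-1}]=0$ gives $|\EE[\eta_s\mid\cF_{s-1}]| = |\EE[\psi_{\tau_s}(\varepsilon_s/\sigma_s)-\varepsilon_s/\sigma_s\mid\cF_{s-1}]|$, and the elementary truncation estimate $|x-\psi_\tau(x)|\le |x|^{1+\epsilon}\tau^{-\epsilon}$ combined with $\nu_s/\sigma_s\le b$ (ensured by $\sigma_s\ge\hat\nu_s$) bounds each conditional mean by $b^{1+\epsilon}\tau_s^{-\epsilon}$. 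For the martingale part I would invoke the new robust self-normalized concentration inequality, using that the increments are bounded, $|\eta_s|\le\tau_s$, and have conditional second moment at most $\tau_s^{1-\epsilon}(\nu_s/\sigma_s)^{1+\epsilon}\le\tau_s^{1-\epsilon}b^{1+\epsilon}$, via $\psi_\tau(x)^2\le\tau^{1-\epsilon}|x|^{1+\epsilon}$. Both pieces are aggregated through the weights $w_s=\nbr{\bphi_s/\sigma_s}_{\bH_{s-1}^{-1}}$: the term $\sigma_s\ge\nbr{\bphi_s}_{\bH_{s-1}^{-1}}/c_0$ keeps $w_s\le c_0$ bounded, the log-determinant (elliptical potential) lemma gives $\sum_{s\le t}\min(1,w_s^2)\le 2\kappa$, and the factor $\sqrt{1+w_s^2}/w_s$ inside $\tau_s$ is engineered so that the $w_s^2$-weighted range and variance contributions collapse onto $\sum_s\log(1+w_s^2)\le\kappa$, producing $\sqrt{2\kappa}$. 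The shared factor $t^{(1-\epsilon)/2(1+\epsilon)}$ in $\tau_s$ optimally balances the bias $\propto\tau_s^{-\epsilon}$ against the stochastic term (a positive power of $\tau_s$), and after inserting the prescribed $\tau_0$ this leaves the advertised $b\,(\log 3T)^{(1-\epsilon)/2(1+\epsilon)}(\log(2T^2/\delta))^{\epsilon/(1+\epsilon)}$ scaling.

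For (ii) the obstacle is that $\ell_{\tau_s}$ is quadratic only on $|x|\le\tau_s$, so the per-step Bregman divergence $D_{\ell_{\tau_s}}\!\big(\varepsilon_s/\sigma_s-\dotp{\bphi_s/\sigma_s}{\Delta},\,\varepsilon_s/\sigma_s\big)$ dominates $\tfrac12\dotp{\bphi_s/\sigma_s}{\Delta}^2$ only when the residual stays in the quadratic region; since $\nabla^2 L_t$ kills directions whose residual is clipped, heavy tails can in principle destroy the curvature. I would handle this by a localization / restricted-strong-convexity argument run inside a continuity (bootstrap) scheme: assume provisionally $\nbr{\Delta}_{\bH_t}\le\beta_t$, so $|\dotp{\bphi_s/\sigma_s}{\Delta}|\le (\nbr{\bphi_s}_{\bH_{s-1}^{-1}}/\sigma_s)\beta_t$, and use the fourth term in the definition of $\sigma_s$ (the $\sqrt{LB}\,\nbr{\bphi_s}_{\bH_{s-1}^{-1}}^{1/2}$ term, scaled by $c_1,\kappa,b$) to force this perturbation to be small relative to $\tau_s$. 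This keeps enough residuals un-clipped along the whole segment that the surviving quadratic terms dominate a constant fraction of $\bH_t$, i.e. $C\gtrsim\nbr{\Delta}_{\bH_t}^2$.

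Combining (i) and (ii) produces a quadratic inequality $\nbr{\Delta}_{\bH_t}^2\lesssim \beta_t\,\nbr{\Delta}_{\bH_t}$ up to the curvature constant, hence $\nbr{\Delta}_{\bH_t}\lesssim\beta_t$; the bootstrap closes because the provisional bound is reproduced, and a union bound over $t\le T$ together with the two concentration events accounts for the $3\delta$ failure probability. I expect step (i)---establishing the self-normalized concentration for the clipped, heavy-tailed increments with the correct $\sqrt\kappa$, $b$, and $t^{(1-\epsilon)/2(1+\epsilon)}$ dependence---to be the main obstacle, as it is precisely the new inequality the paper advertises; the curvature localization in (ii) is delicate but is essentially pre-empted by the bespoke choices of $\sigma_s$ and $\tau_s$.
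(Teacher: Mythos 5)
Your high-level skeleton coincides with the paper's proof in Appendix~\ref{proof:heavy}: first-order optimality of the constrained minimizer plus the mean-value theorem yield $\|\btheta_t-\btheta^*\|_{\bH_t}\le 3\|\nabla L_t(\btheta^*)\|_{\bH_t^{-1}}$, the regularizer contributes exactly the $\sqrt{\lambda}B$ term, and the elliptical potential bound $\sum_s\min\{1,w_s^2\}\le2\kappa$ (Lemma~\ref{lem:w-sum}) together with the engineered factors in $\sigma_s,\tau_s$ does the aggregation, just as you describe. But on both key lemmas your route differs from the paper's. For the gradient (your step (i)), you split the clipped score $\Psi_{\tau_s}(z_s^*)$ into a conditional bias, bounded by $b^{1+\epsilon}\tau_s^{-\epsilon}$, plus a bounded martingale difference, and then appeal to a Bernstein-type self-normalized bound. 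The paper never performs this split: it expands $\|\bd_t\|_{\bH_t^{-1}}^2$ exactly via Sherman--Morrison into a cross term $I_1$ and a square term $I_2$, controls the moment generating function of $I_1$ directly through the Catoni-style sandwich $-\log(1-x+|x|^{1+\epsilon})\le\Psi_1(x)\le\log(1+x+|x|^{1+\epsilon})$ (Property~\ref{prop:huber}\ref{prop:huber-3}, Lemma~\ref{lem:I1}), and handles the circularity that $I_1$ involves $\bd_{s-1}$ itself by an induction over the events $A_t=\{\|\bd_s\|_{\bH_s^{-1}}\le\alpha_s,\ s\le t\}$. Your decomposition can be made to give the same exponents (your bias and variance estimates are correct, and H\"older with $\tau_s\propto s^{\frac{1-\epsilon}{2(1+\epsilon)}}$ reproduces both terms of $\alpha_t$), with two caveats. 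First, invoking ``the new robust self-normalized concentration inequality'' for the martingale part is circular --- Theorem~\ref{thm:heavy} \emph{is} that inequality --- so you must supply a concrete martingale bound. Second, a uniform-variance bound like Lemma~\ref{lem:self-bern} does not apply directly, because $\tau_s=\tau_0\frac{\sqrt{1+w_s^2}}{w_s}s^{\frac{1-\epsilon}{2(1+\epsilon)}}$ blows up as $w_s\to0$, so $\EE[\Psi_{\tau_s}^2(z_s^*)\mid\cF_{s-1}]\le\tau_s^{1-\epsilon}b^{1+\epsilon}$ is not uniformly bounded; you must keep the weight and use the cancellation $\Psi_{\tau_s}^2(z_s^*)\frac{w_s^2}{1+w_s^2}\le\tau_0^2\,t^{\frac{1-\epsilon}{1+\epsilon}}\Psi_1^2(z_s^*/\tau_s)$ before applying Freedman --- which is exactly the paper's Lemma~\ref{lem:I2}. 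You correctly anticipated this cancellation, but it forces your argument back toward the paper's recursion rather than an off-the-shelf inequality.

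For the curvature (your step (ii)), your bootstrap/localization is both unnecessary and, taken literally, broken. The paper's Lemma~\ref{lem:hessian} proves $\frac13\bH_t\preceq\nabla^2L_t(\btheta)\preceq\bH_t$ \emph{uniformly over the whole ball} $\cB_d(B)$: it splits $\ind\{|z_s(\btheta)|>\tau_s\}\le\ind\{|z_s^*|>\tau_s/2\}+\ind\{|\dotp{\bphi_s}{\btheta-\btheta^*}/\sigma_s|>\tau_s/2\}$, handles the noise part by Markov plus Freedman on the indicator sum, and handles the perturbation part \emph{deterministically} using the crude bound $|\dotp{\bphi_s}{\btheta-\btheta^*}|\le2LB$ together with the fourth term in $\sigma_s$, concluding that clipping removes at most a $\frac13$-fraction of each quadratic form $\sum_s\dotp{\bphi_s}{\bu}^2/\sigma_s^2$. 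Since $\btheta_t$ is constrained to $\cB_d(B)$, no localization is needed. Moreover, a localization at radius $\beta_t$ that tries to keep residuals in the quadratic region fails pointwise: $\beta_t\sim t^{\frac{1-\epsilon}{2(1+\epsilon)}}$ while $\tau_s\sim s^{\frac{1-\epsilon}{2(1+\epsilon)}}$, so for early rounds $s\ll t$ with $w_s$ of order $c_0$ one can have $w_s\beta_t>\tau_s/2$; only the fractional (Markov-type) accounting survives, and your ``constant fraction'' hedge is the version that actually works. Your instinct about the role of the $\sqrt{LB}\,\|\bphi_s\|_{\bH_{s-1}^{-1}}^{1/2}$ term in $\sigma_s$ is right, but its purpose is to make the worst-case $2LB$ bound affordable, not to prevent clipping. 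Finally, the $3\delta$ budget is one $\delta$ for the Hessian event and one each for Lemmas~\ref{lem:I1} and~\ref{lem:I2}, slightly different from your ``two concentration events'' tally.
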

\begin{proof}
To derive a tight high-probability bound, we take the most advantage of the properties of Huber loss.
A Chernoff bounding technique is used to bound the main error term, which requires a careful analysis of the moment generating function.
See Appendix~\ref{proof:heavy} for a detailed proof.
\end{proof}

We refer to the regression process in Line~\ref{eq:theta_t} of Algorithm~\ref{algo:huber} as \emph{adaptive Huber regression} in line with \citet{sun2020adaptive} to emphasize that the value of robustness parameter $\tau_t$ is chosen to adapt to data for a better trade-off between bias and robustness. Specifically, since we are in the online setting, $\bphi_t$ are dependent on $\{\bphi_s\}_{s<t}$, which is the key difference from the i.i.d. case in \citet{sun2020adaptive} where they set $\tau_t = \tau_0$, for all $t\le T$. Thus, as shown in Line~\ref{eq:tau_t} of Algorithm~\ref{algo:huber}, inspired by \citet{li2023variance}, we adjust $\tau_t$ according to the importance of observations $w_t = \|\bphi_t / \sigma_t\|_{\bH_{t-1}^{-1}}$, where $\sigma_t$ is specified below.
In the case where $\epsilon < 1$, different from \citet{li2023variance}, we first choose $\tau_t$ to be small for robust purposes, then gradually increase it with $t$ to reduce the bias.

Next, we illustrate the reason for setting $\sigma_t$ via Line~\ref{eq:sigma_t} of Algorithm~\ref{algo:huber}.
We use $\hat{\nu}_t \in \cF_{t-1}$ to estimate the central moment $\nu_t$ and use moment parameter $b$ to measure the closeness between $\hat{\nu}_t$ and $\nu_t$.
When we choose $\hat{\nu}_t$ as an upper bound of $\nu_t$, $b$ becomes a constant that equals to $1$.
And $\sigmamin$ is a small positive constant to avoid singularity.
The last two terms with respect to $c_0$ and $c_1$ are set according to the uncertainty $\|\bphi_t\|_{\bH_{t-1}^{-1}}$.
In addition, setting the parameter $c_0 \le 1$ yields $w_t \le 1$, which is essential to meet the condition of elliptical potential lemma \citep{abbasi2011improved}.

\begin{remark}
    The error bound $\beta_t$ in \eqref{eq:beta_t} is only related to the feature dimension $d$ and moment parameter $b$. While the Bernstein-style self-normalized concentration bounds \citep{zhou2021nearly,zhou2022computationally} depend on the magnitude of $\varepsilon_t$, thus cannot handle heavy-tailed errors.
\end{remark}

\section{Linear Bandits}\label{sec:bandit}

In this section, we show the algorithm $\algobandit$ in Algorithm~\ref{algo:bandit} for heavy-tailed linear bandits in Definition~\ref{def:bandit}. We first give a brief algorithm description, and then provide a theoretical regret analysis.

\begin{algorithm}[t]
	\caption{$\algobandit$}\label{algo:bandit}
	\begin{algorithmic}[1]
	\REQUIRE Number of total rounds $T$, confidence level $\delta$, regularization parameter $\lambda$, $\sigmamin$, parameters for adptive Huber regression $c_0, c_1, \tau_0$, confidence radius $\beta_t$.
    \STATE $\kappa = d \log(1 + \frac{T L^2}{d\lambda \sigma_{\min}^2})$, $\cC_0 = \cB_d(B)$, $\bH_0 = \lambda \bI$.
	\FOR{$t=1,\ldots, T$}
    \STATE Observe $\cD_t$.
    \STATE Set $(\bphi_t,\cdot) = \argmax_{\bphi \in \cD_t, \btheta\in\cC_{t-1}} \dotp{\bphi}{\btheta}$.
    \STATE Play $\bphi_t$ and observe $R_t,\nu_t$.
    \STATE Set $\sigma_t = \max\big\{\nu_t, \sigmamin, \frac{\|\bphi_t\|_{\bH_{t-1}^{-1}}}{c_0}, \frac{\sqrt{LB} }{c_1^{\frac{1}{4}} (2\kappa)^{\frac{1}{4}}} \|\bphi_t\|_{\bH_{t-1}^{-1}}^{\frac{1}{2}}\big\}$.
    \STATE Set $\tau_t = \tau_0 \frac{\sqrt{1+w_t^2}}{w_t} t^{\frac{1-\epsilon}{2(1+\epsilon)}}$ with $w_t = \|\bphi_t / \sigma_t\|_{\bH_{t-1}^{-1}}$.
    \STATE Update $\bH_t = \bH_{t-1} + \sigma_t^{-2}\bphi_t\bphi_t^\top$.
    \STATE Solve for $\btheta_t$ by Algorithm~\ref{algo:huber} and set $\cC_t = \{\btheta \in \RR^d | \|\btheta -\btheta_{t}\|_{\bH_{t}} \le \beta_{t}\}$.\label{eq:bandit-cC_t}
	\ENDFOR
	\end{algorithmic}
\end{algorithm}

\subsection{Algorithm Description}

$\algobandit$ follows the principle of Optimism in the Face of Uncertainty (OFU) \citep{abbasi2011improved}, and uses adaptive Huber regression in Section~\ref{sec:huber} to maintain a set $\cC_t$ that contains the unknown coefficient $\btheta^*$ with high probability.
Specifically, at the $t$-th round, $\algobandit$ estimates the expected reward of any arm $\bphi$ as $\max_{\btheta \in \cC_{t-1}} \dotp{\bphi}{\btheta}$, and selects the arm that maximizes the estimated reward.
The agent then receives the reward $R_t$ and updates the confidence set $\cC_{t}$ based on the information up to round $t$ with its center $\btheta_t$ computed by adaptive Huber regression as in Line~\ref{eq:bandit-cC_t} of Algorithm~\ref{algo:bandit}.

\subsection{Regret Analysis}

We next give the instance-dependent regret upper bound of $\algobandit$ in Theorem~\ref{thm:bandit-regret}.

\begin{theorem}\label{thm:bandit-regret}
    For the heavy-tailed linear bandits in Definition~\ref{def:bandit}, we set $c_0,c_1,\tau_0,\beta_t$ in Algorithm~\ref{algo:bandit} according to Theorem~\ref{thm:heavy} with $b=1$. Besides, let $\lambda = d / B^2$, and $\sigmamin = 1 / \sqrt{T}$. Then with probability at least $1-3\delta$, the regret of $\algobandit$ is bounded by
    \[
    \mathrm{Regret}(T) = \tilde{O}\rbr{d T^{\frac{1-\epsilon}{2(1+\epsilon)}} \sqrt{\sum\nolimits_{t = 1}^T \nu_t^2} + d T^{\frac{1-\epsilon}{2(1+\epsilon)}}}.
    \]
\end{theorem}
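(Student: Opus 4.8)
The plan is to run the standard optimism-in-the-face-of-uncertainty (OFU) argument, but with the weighted design matrix $\bH_{t-1}$ and the self-normalized radius $\beta_t$ supplied by Theorem~\ref{thm:heavy} in place of the usual ridge confidence set. First I would invoke Theorem~\ref{thm:heavy} (with $b=1$, since $\nu_t$ is observed in Line~5 so one may take $\hat\nu_t=\nu_t$) to conclude that, on an event of probability at least $1-3\delta$, $\|\btheta_t-\btheta^*\|_{\bH_t}\le\beta_t$ holds simultaneously for all $t\le T$; equivalently $\btheta^*\in\cC_t$ for every $t$. Working on this event, let $\tilde\btheta_{t-1}\in\cC_{t-1}$ denote the parameter paired with $\bphi_t$ in the joint $\argmax$ of Line~4. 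Optimism then gives $\dotp{\bphi_t}{\tilde\btheta_{t-1}}=\max_{\bphi\in\cD_t,\btheta\in\cC_{t-1}}\dotp{\bphi}{\btheta}\ge\dotp{\bphi_t^*}{\btheta^*}$, since $\bphi_t^*\in\cD_t$ and $\btheta^*\in\cC_{t-1}$. Hence the instantaneous regret is controlled by
\[
\dotp{\bphi_t^*}{\btheta^*}-\dotp{\bphi_t}{\btheta^*}\le\dotp{\bphi_t}{\tilde\btheta_{t-1}-\btheta^*}\le\|\bphi_t\|_{\bH_{t-1}^{-1}}\,\|\tilde\btheta_{t-1}-\btheta^*\|_{\bH_{t-1}}\le 2\beta_{t-1}\|\bphi_t\|_{\bH_{t-1}^{-1}},
\]
where the last step uses Cauchy--Schwarz, the triangle inequality, and the fact that both $\tilde\btheta_{t-1}$ and $\btheta^*$ lie in $\cC_{t-1}$.

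Summing over $t$ and using that $\beta_t$ is non-decreasing (its only $t$-dependence is the factor $t^{\frac{1-\epsilon}{2(1+\epsilon)}}$ with non-negative exponent), I would bound $\mathrm{Regret}(T)\le 2\beta_T\sum_{t=1}^T\|\bphi_t\|_{\bH_{t-1}^{-1}}$. Writing $\|\bphi_t\|_{\bH_{t-1}^{-1}}=\sigma_t w_t$ with $w_t=\|\bphi_t/\sigma_t\|_{\bH_{t-1}^{-1}}$, and recalling that $c_0\le 1$ forces $w_t\le c_0\le 1$, the elliptical potential lemma \citep{abbasi2011improved} yields $\sum_{t=1}^T w_t^2\le 2\kappa$. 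The task thus reduces to bounding $\sum_{t=1}^T\sigma_t w_t$.

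The crux --- and the step I expect to be the main obstacle --- is that $\sigma_t$ is a maximum of four partly self-referential terms (two of them involve $\|\bphi_t\|_{\bH_{t-1}^{-1}}$, hence $\sigma_t$ itself), so bounding the max by the sum of the four terms is too lossy and produces a circular inequality. Instead I would partition $[T]$ by which term attains the maximum in Line~6 and treat each block separately. On the block where $\sigma_t=\nu_t$, Cauchy--Schwarz and the potential bound give $\sum\sigma_t w_t\le\sqrt{\sum_t\nu_t^2}\,\sqrt{2\kappa}$, the main term. On the block where $\sigma_t=\sigmamin$, one gets $\sigmamin\sum w_t\le\sigmamin\sqrt{T}\sqrt{2\kappa}=\sqrt{2\kappa}$ using $\sigmamin=1/\sqrt{T}$. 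On the block where $\sigma_t=\|\bphi_t\|_{\bH_{t-1}^{-1}}/c_0$ one has exactly $w_t=c_0$, so this block contains at most $2\kappa/c_0^2$ rounds, each contributing at most $L/\sqrt{\lambda}$ to $\sum\sigma_t w_t$ (since $\|\bphi_t\|_{\bH_{t-1}^{-1}}\le L/\sqrt{\lambda}$); and on the block governed by the fourth term a short calculation gives $\sigma_t w_t=\tfrac{LB}{\sqrt{c_1\,2\kappa}}w_t^2$, whose sum is at most $LB\sqrt{2\kappa/c_1}$. Each of the last three blocks therefore contributes only a lower-order $\tilde O(\sqrt d)$ amount after substituting $\lambda=d/B^2$ and $\kappa=\tilde O(d)$, so that $\sum_{t=1}^T\|\bphi_t\|_{\bH_{t-1}^{-1}}\le\sqrt{2\kappa}\,\sqrt{\sum_t\nu_t^2}+\tilde O(\sqrt d)$.

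Finally I would substitute the parameter values. Since $\kappa=d\log(1+TL^2/(d\lambda\sigmamin^2))=\tilde O(d)$, the radius satisfies $\beta_T=\tilde O(\sqrt{d}\,T^{\frac{1-\epsilon}{2(1+\epsilon)}})$ and $\sqrt{2\kappa}=\tilde O(\sqrt d)$. Plugging these into $\mathrm{Regret}(T)\le 2\beta_T\big(\sqrt{2\kappa}\sqrt{\sum_t\nu_t^2}+\tilde O(\sqrt d)\big)$ gives the claimed $\tilde O\big(dT^{\frac{1-\epsilon}{2(1+\epsilon)}}\sqrt{\sum_t\nu_t^2}+dT^{\frac{1-\epsilon}{2(1+\epsilon)}}\big)$, with the stated $1-3\delta$ confidence inherited directly from Theorem~\ref{thm:heavy}.
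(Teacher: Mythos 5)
Your proposal is correct and takes essentially the same route as the paper's proof in Appendix~\ref{proof:bandit-regret}: the same invocation of Theorem~\ref{thm:heavy} with $b=1$, the same optimism plus Cauchy--Schwarz reduction to $2\beta_T\sum_{t=1}^T\sigma_t w_t$, and the same partition of $[T]$ according to which term attains the maximum defining $\sigma_t$, with matching per-block bounds and parameter substitutions. The only cosmetic differences are that you split the paper's set $\cJ_1$ into separate $\nu_t$ and $\sigmamin$ blocks and bound the $\cJ_2$ block by counting its rounds rather than summing $w_t^2$ directly, both of which are equivalent computations.
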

\begin{proof}
The proof uses the self-normalized concentration inequality of adaptive Huber regression and a careful analysis to bound the summation of bonuses.
See Appendix~\ref{proof:bandit-regret} for a detailed proof.
\end{proof}

\begin{remark}
\label{rem:bandit-regret}
Theorem~\ref{thm:bandit-regret} shows $\algobandit$ achieves an instance-dependent regret bound. When we assume $\nu_t, \forall t \ge 1$ have uniform upper bound $\nu$ (which can be treated as a constant), then the bound is reduced to $\tilde{O}(dT^{\frac{1}{1+\epsilon}})$. It matches the lower bound $\Omega(dT^{\frac{1}{1+\epsilon}})$ by \citet{shao2018almost} up to logarithmic factors. In the deterministic scenario, where $\epsilon=1$ and $\nu_t = 0$, for all $t \ge 1$, the bound is reduced to $\tilde{O}(d)$. It matches the lower bound $\Omega(d)$\footnote{Consider the decision set consisting of unit bases of $\mathbb{R}^d$. Given that each arm pull can only yield information about a single coordinate, it is inevitable that $d$ pulls are required for exploration.} up to logarithmic factors.
\end{remark}

\section{Linear MDPs}\label{sec:mdp}

In this section, we show the algorithm $\algomdp$ in Algorithm~\ref{algo:mdp} for linear MDP with heavy-tailed rewards defined in Section~\ref{sec:pre-mdp}. Let $\bphi_{k,h} := \bphi(s_{k,h}, a_{k,h})$ for short. We first give the algorithm description intuitively, then provide the computational complexity and regret bound. 

\begin{algorithm}[t!]
\caption{$\algomdp$}\label{algo:mdp}
\begin{algorithmic}[1]
\REQUIRE Number of episodes $K$, confidence level $\delta$, regularization parameter $\lambda_R,\lambda_V$, $\numin,\sigmamin$, confidence radius $\beta_{R^\epsilon}, \beta_0, \beta_{R}, \beta_V$.
\STATE $\kappa = d \log(1 + \frac{K}{d\lambda_R \numin^2})$.
\STATE $\btheta_{0,h}=\hbw_{0,h}=\cbw_{0,h}=\zero$, $\bH_{0,h} = \lambda_R \bI, \bSigma_{0,h} = \lambda_V \bI$, UPDATE = TRUE.
\FOR{$k=1,\ldots,K$}
\STATE $V^k_{H+1}(\cdot) = \pesV^k_{H+1}(\cdot) = 0$.
\FOR{$h=H,\ldots,1$}
\STATE Compute $\btheta_{k-1,h}$, $\hbw_{k-1,h}$ and $\cbw_{k-1,h}$ via \eqref{eq:mdp-btheta} and \eqref{eq:mdp-bw}. \label{line:thetakh}
\IF{UPDATE}\label{eq:rare-switching-start}
\STATE $Q^k_h(\cdot,\cdot) = \dotp{\bphi(\cdot,\cdot)}{\btheta_{k-1,h} + \hbw_{k-1,h}} + \beta_{R,k-1}\|\bphi(\cdot,\cdot)\|_{\bH_{k-1,h}^{-1}} + \beta_V\|\bphi(\cdot,\cdot)\|_{\bSigma_{k-1,h}^{-1}}$.
\STATE $\pesQ^k_h(\cdot,\cdot) = \dotp{\bphi(\cdot,\cdot)}{\btheta_{k-1,h} + \cbw_{k-1,h}} - \beta_{R,k-1}\|\bphi(\cdot,\cdot)\|_{\bH_{k-1,h}^{-1}} - \beta_V\|\bphi(\cdot,\cdot)\|_{\bSigma_{k-1,h}^{-1}}$.
\STATE $Q^k_h(\cdot,\cdot) = \min\{Q^k_h(\cdot,\cdot), Q^{k-1}_h(\cdot,\cdot), \cH\}, \pesQ^k_h(\cdot,\cdot) = \max\{\pesQ^k_h(\cdot,\cdot), \pesQ^{k-1}_h(\cdot,\cdot), 0\}$.\label{line:Qkh}
\STATE Set $\kl = k$.
\ELSE
\STATE $Q^k_h(\cdot,\cdot)=Q^{k-1}_h(\cdot,\cdot)$, $\pesQ^k_h(\cdot,\cdot)=\pesQ^{k-1}_h(\cdot,\cdot)$.
\ENDIF
\STATE $V^k_h(\cdot) = \max_a Q^k_h(\cdot,a)$, $\pesV^k_h(\cdot) = \max_a \pesQ^k_h(\cdot,a)$, $\pi^k_h(\cdot) = \argmax_a Q^k_h(\cdot, a)$.\label{eq:rare-switching-end}
\ENDFOR
\STATE Observe initial state $s_{k,1}$.
\FOR{$h=1,\ldots,H$}
\STATE Take action $a_{k,h}=\pi^k_{h}(s_{k,h})$ and observe $R_{k,h}, s_{k,h+1}$.\label{line:akh}
\STATE Set $\nu_{k,h}$ and $\sigma_{k,h}$ according to \eqref{eq:mdp-nu} and \eqref{eq:mdp-sigma} respectively.
\STATE Set $\tau_{k,h} = \tau_0 \frac{\sqrt{1+w_{k,h}^2}}{w_{k,h}} k^{\frac{1-\epsilon}{2(1+\epsilon)}}$, $\ttau_{k,h} = \ttau_0 \frac{\sqrt{1+w_{k,h}^2}}{w_{k,h}} k^{\frac{1-\epsilon'}{2(1+\epsilon')}}$ with $w_{k,h} = \|\bphi_{k,h} / \nu_{k,h}\|_{\bH_{k,h}^{-1}}$.
\STATE Update $\bH_{k,h} = \bH_{k-1,h} + \frac{1}{\nu_{k,h}^2}\bphi_{k,h}\bphi_{k,h}^\top$ and $\bSigma_{k,h} = \bSigma_{k-1,h} + \frac{1}{\sigma_{k,h}^2}\bphi_{k,h}\bphi_{k,h}^\top$.
\ENDFOR
\IF{$\exists h'\in[H]$ such that $\det(\bH_{k,h'}) \ge 2 \det(\bH_{\kl,h'})$ or $\det(\bSigma_{k,h'}) \ge 2 \det(\bSigma_{\kl,h'})$}
\STATE Set UPDATE = TRUE.
\ELSE
\STATE Set UPDATE = FALSE.
\ENDIF
\ENDFOR
\end{algorithmic}
\end{algorithm}

\subsection{Algorithm Description}

$\algomdp$ features a novel combination of adaptive Huber regression in Section~\ref{sec:huber} and existing algorithmic frameworks for linear MDPs with bounded rewards \citep{jin2020provably,he2023nearly}.
At a high level, $\algomdp$ employs separate estimation techniques to handle heavy-tailed rewards and transition kernels.
Specifically, we utilize adaptive Huber regression proposed in Section~\ref{sec:huber} to estimate heavy-tailed rewards and weighted ridge regression \citep{zhou2021nearly,he2023nearly} to estimate the expected next-state value functions.
Then, it follows the value iteration scheme to update the optimistic and pessimistic estimation of the optimal value function $Q^k_{h}$, $V^k_h$ and $\pesQ^k_h$, $\pesV^k_h$, respectively, via a rare-switching policy as in Line~\ref{eq:rare-switching-start} to \ref{eq:rare-switching-end} of Algorithm~\ref{algo:mdp}.
We highlight the key steps of $\algomdp$ as follows. 

\paragraph{Estimation for expected heavy-tailed rewards}

Since the expected rewards have linear structure in linear MDPs, i.e., $r_h(s,a) = \dotp{\bphi(s,a)}{\btheta_h^*}$, we use adaptive Huber regression to estimate $\btheta_h^*$:
\begin{equation}\label{eq:mdp-btheta}
    \btheta_{k,h} = \argmin_{\btheta \in \cB_d(B)} \frac{\lambda_R}{2}\|\btheta\|^2 + \sum_{i=1}^{k} \ell_{\tau_{i,h}}\rbr{\frac{R_{i,h} - \dotp{\bphi_{i,h}}{\btheta}}{\nu_{i,h}}},
\end{equation}
where $\nu_{i,h}$ will be specified later.

\paragraph{Estimation for central moment of rewards}

By Assumption~\ref{ass:central-moment}, the $(1+\epsilon)$-th central moment of rewards is linear in $\bphi$, i.e., $[\bnu_{1+\epsilon} R_h](s,a) = \dotp{\bphi(s,a)}{\bpsi_h^*}$. Motivated by this, we estimate $\bpsi_h^*$ by adaptive Huber regression as
\begin{equation}\label{eq:mdp-bpsi}
\bpsi_{k,h} = \argmin_{\bpsi \in \cB_d(W)} \frac{\lambda_R}{2}\|\bpsi\|^2 + \sum_{i=1}^{k} \ell_{\ttau_{i,h}}\rbr{\frac{|\varepsilon_{i,h}|^{1+\epsilon} - \dotp{\bphi_{i,h}}{\bpsi}}{\nu_{i,h}}},
\end{equation}
where $W$ is the upper bound of $\|\bpsi_h^*\|$ defined in Assumption~\ref{ass:central-moment}. Since $\varepsilon_{i,h}$ is intractable, we estimate it by $\hat{\varepsilon}_{i,h} = R_{i,h} - \dotp{\bphi_{i,h}}{\btheta_{i,h}}$, which gives $\hbpsi_{k,h}$ as
\begin{equation}\label{eq:mdp-hbpsi}
\hbpsi_{k,h} = \argmin_{\bpsi \in \cB_d(W)} \frac{\lambda_R}{2}\|\bpsi\|^2 + \sum_{i=1}^{k} \ell_{\ttau_{i,h}}\rbr{\frac{|\hat{\varepsilon}_{i,h}|^{1+\epsilon} - \dotp{\bphi_{i,h}}{\bpsi}}{\nu_{i,h}}}.
\end{equation}
The inevitable error between $\bpsi_{k,h}$ and $\hbpsi_{k,h}$ can be quantified by a novel perturbation analysis of adaptive Huber regression in Appendix~\ref{sec:huber-perturbation}.

We then set the weight $\nu_{k,h}$ for adaptive Huber regression as
\begin{equation}\label{eq:mdp-nu}
\nu_{k,h} = \max\cbr{\hat{\nu}_{k,h}, \numin, \frac{\normbphiR{k}{h}}{c_0}, \frac{\sqrt{\max\{B,W\}}}{c_1^{\frac{1}{4}} (2\kappa)^{\frac{1}{4}}} \normbphiR{k}{h}^{\frac{1}{2}}},
\end{equation}
where $\numin$ is a small positive constant to avoid the singularity, $\hat{\nu}_{k,h}^{1+\epsilon} = [\hat{\bnu}_{1+\epsilon} R_h](s_{k,h},a_{k,h}) + W_{k,h}$
is a high-probability upper bound of rewards' central moment $[\bnu_{1+\epsilon} R_h](s_{k,h},a_{k,h})$ with 
\begin{equation}\label{eq:mdp-hatbnu}
[\hat{\bnu}_{1+\epsilon} R_h](s_{k,h},a_{k,h}) = \dotp{\bphi_{k,h}}{\hbpsi_{k-1,h}},
\end{equation}
\begin{equation}\label{eq:mdp-W}
W_{k,h} = (\beta_{R^\epsilon,k-1} + 6 \cH^\epsilon \beta_{R,k-1} \kappa) \normbphiR{k}{h},
\end{equation}
where $\kappa$ is defined in Algorithm~\ref{algo:mdp}, $\beta_{R^\epsilon,k} = \tilde{O}(\sqrt{d}\nu_{R^\epsilon}k^\frac{1-\epsilon'}{2(1+\epsilon')}/\numin)$ and $\beta_{R,k} = \tilde{O}(\sqrt{d}k^\frac{1-\epsilon}{2(1+\epsilon)})$.

\paragraph{Estimation for expected next-state value functions}
For any value function $f: \cS \to \RR$, we define the following notations for simplicity:
\begin{equation}\label{eq:bwf}
\bw_{h}[f] = \int_{s\in\cS} \bmu^*_h(s)f(s)\ud s,\quad
\hbw_{k,h}[f] = \bSigma_{k,h}^{-1} \sum_{i=1}^k \sigma_{i,h}^{-2} \bphi_{i,h} f(s_{i,h+1}),
\end{equation}
where $\sigma_{i,h}$ will be specified later. Note for any state-action pair $(s,a)\in\cS\times\cA$, by linear structure of transition probabilities, we have
$$
[\PP_h f](s,a) = \int_{s'\in\cS} \dotp{\bphi(s,a)}{\bmu^*_h(s')} f(s') \ud s' = \dotp{\bphi(s,a)}{\bw_{h}[f]}.
$$
In addition, for any $f,g: \cS \to \RR$, it holds that $\bw_{h}[f+g] = \bw_{h}[f] + \bw_{h}[g]$ and $\hbw_{k,h}[f+g] = \hbw_{k,h}[f] + \hbw_{k,h}[g]$ due to the linear property of integration and ridge regression.

We remark $\hbw_{k,h}[f]$ is the estimation of $\bw_{h}[f]$ by weighted ridge regression on $\{\bphi_{i,h}, f(s_{i,h+1})\}_{i\in[k]}$.
And we estimate the coefficients $\hbw_{k,h},\cbw_{k,h},\tbw_{k,h}$
\begin{equation}\label{eq:mdp-bw}
\hbw_{k,h} = \hbw_{k,h}[V^k_{h+1}],\quad
\cbw_{k,h} = \hbw_{k,h}[\pesV^k_{h+1}],\quad
\tbw_{k,h} = \hbw_{k,h}[(V^k_{h+1})^2],
\end{equation}
where $V^k_h$ and $\pesV^k_h$ are optimistic and pessimistic estimation of the optimal value functions.

\paragraph{Estimation for variance of next-state value functions}

Inspired by \citet{he2023nearly}, we set the weight $\sigma_{k,h}$ for weighted ridge regression in \eqref{eq:bwf} as
\begin{equation}\label{eq:mdp-sigma}
\sigma_{k,h} = \max\cbr{\hat{\sigma}_{k,h}, \sqrt{d^3H D_{k,h}}, \sigmamin, \normbphiV{k}{h}, \sqrt{d^\frac{5}{2}H\cH} \normbphiV{k}{h}^{\frac{1}{2}}},
\end{equation}
where $\sigmamin$ is a small constant to avoid singularity, $\hat{\sigma}_{k,h}^2 = [\hat{\VV}_h V^k_{h+1}](s_{k,h}, a_{k,h}) + E_{k,h}$ with
\begin{equation}\label{eq:mdp-hatVV}
\sbr{\hat{\VV}_h V^k_{h+1}} (s_{k,h}, a_{k,h}) = \dotp{\bphi_{k,h}}{\tbw_{k-1,h}}_{[0,\cH^2]} - \dotp{\bphi_{k,h}}{\hbw_{k-1,h}}_{[0,\cH]}^2,
\end{equation}
\begin{align}
E_{k,h} &= \min\cbr{4\cH \dotp{\bphi_{k,h}}{\hbw_{k-1,h} - \cbw_{k-1,h}} + 11\cH\beta_0\normbphiV{k}{h}, \cH^2}.\label{eq:mdp-E}\\
D_{k,h} &= \min\cbr{2\cH \dotp{\bphi_{k,h}}{\hbw_{k-1,h} - \cbw_{k-1,h}} + 4\cH\beta_0\normbphiV{k}{h}, \cH^2},\label{eq:mdp-D}
\end{align}
where $\beta_0 = \tilde{O}(\sqrt{d^3H\cH^2}/\sigmamin)$. Here $\hat{\sigma}_{k,h}^2$ and $D_{k,h}$ are
upper bounds of $[\VV_h V^*_{h+1}](s_{k,h},a_{k,h})$ and $\max\{[\VV_h\rbr{V^k_{h+1} - V^*_{h+1}}](s_{k,h},a_{k,h}), [\VV_h\rbr{V^*_{h+1} - \pesV^k_{h+1}}](s_{k,h},a_{k,h})\}$, respectively.

\subsection{Computational Complexity}
\label{sec:computational-complexity}

\begin{theorem}
\label{thm:computational-complexity}
For the linear MDPs with heavy-tailed rewards defined in Section~\ref{sec:pre-mdp}, the computational complexity of $\algomdp$ is $\tilde{O}(d^4|\mathcal{A}|H^3K + HK\mathcal{R})$. Here $\mathcal{R}$ is the cost of the optimization algorithm for solving adaptive Huber regression in \eqref{eq:mdp-btheta}. Furthermore, we can specialize $\mathcal{R}$ by adopting the Nesterov accelerated method, which gives $\mathcal{R}=\tilde{O}(d+d^{-\frac{1-\epsilon}{2(1+\epsilon)}} H^{\frac{1-\epsilon}{2(1+\epsilon)}} K^{\frac{1+2\epsilon}{2(1+\epsilon)}})$.
\end{theorem}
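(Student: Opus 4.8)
The plan is to split the per-episode work into two parts: (i) construction of the optimistic and pessimistic value functions together with the weighted ridge vectors $\hbw_{k,h},\cbw_{k,h},\tbw_{k,h}$ in \eqref{eq:mdp-bw}, which is pure linear algebra plus value-function evaluations, and (ii) the calls to adaptive Huber regression \eqref{eq:mdp-btheta} (and the moment regressions \eqref{eq:mdp-bpsi}--\eqref{eq:mdp-hbpsi}), each costing $\mathcal{R}$. I will show part (i) contributes $\tilde{O}(d^4|\cA|H^3K)$, part (ii) contributes $O(HK\mathcal{R})$, and that all remaining bookkeeping---Sherman--Morrison updates of $\bH_{k,h}^{-1},\bSigma_{k,h}^{-1}$ at $O(d^2)$ each, the determinant-doubling tests, and forward-pass action selection---is dominated by these two.

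First I would bound the number of episodes in which \textsc{update} is true. By the rare-switching rule every such episode is triggered by some determinant at least doubling for some $h'$; since $\det(\bH_{\cdot,h'})$ and $\det(\bSigma_{\cdot,h'})$ grow by at most a $\mathrm{poly}(K)^{d}$ factor over the whole run, each doubles only $\tilde{O}(d)$ times, so summing over $h'\in[H]$ the total number of switches is $\tilde{O}(dH)$. Between switches the value functions are frozen (so the vectors can be maintained incrementally in $O(d^2)$), and the expensive evaluations occur only at a switch. At a switch we run a full backward pass ($H$ steps); at step $h$ the bottleneck is forming $\hbw_{k,h}[V^k_{h+1}]$, $\hbw_{k,h}[\pesV^k_{h+1}]$ and $\hbw_{k,h}[(V^k_{h+1})^2]$, each requiring the relevant value function at all $\le K$ historical successor states $\{s_{i,h+1}\}$. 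A single evaluation $V^k_{h+1}(s)=\max_a Q^k_{h+1}(s,a)$ is a maximization over $|\cA|$ actions, and each $Q^k_{h+1}(s,a)$, owing to the monotone pointwise-min clipping across previously switched estimates, is a minimum over $\tilde{O}(dH)$ stored layers whose Mahalanobis bonuses $\|\bphi\|_{\bH_{k,h}^{-1}},\|\bphi\|_{\bSigma_{k,h}^{-1}}$ cost $O(d^2)$ per layer. Multiplying the factors---$\tilde{O}(dH)$ switches, $H$ backward steps, $K$ successor states, $|\cA|$ actions, $\tilde{O}(dH)$ layers, $O(d^2)$ per layer---yields exactly $\tilde{O}(d^4|\cA|H^3K)$, and I would check that forward-pass action selection ($\tilde{O}(K|\cA|d^3H^2)$) is subsumed.

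For part (ii), $\btheta_{k,h}$ depends on all data collected at stage $h$ up to episode $k$, so it is recomputed once per $(k,h)$, giving $O(HK)$ solves and hence the $O(HK\mathcal{R})$ term. To instantiate $\mathcal{R}$ I would note that $L_t$ in \eqref{eq:mdp-btheta} is $\lambda_R$-strongly convex (from the ridge penalty) and smooth with Hessian $\preceq\bH_t$, so Nesterov's accelerated gradient converges geometrically at a rate governed by $\lambda_{\max}(\bH_t)/\lambda_R$. Using $\|\bphi_{k,h}\|\le 1$ and $\sigma_{k,h}\ge\numin$ to bound $\lambda_{\max}(\bH_t)\lesssim K/\numin^2$, together with the fact that one only needs optimization accuracy comparable to the statistical radius $\beta_{R,t}=\tilde{O}(\sqrt{d}\,t^{\gamma})$ with $\gamma:=\tfrac{1-\epsilon}{2(1+\epsilon)}$, fixes the iteration count; combining this with the per-iteration gradient cost---exploiting that the Huber gradient coincides with the ridge gradient except on the few points whose residual exceeds $\tau_{k,h}\sigma_{k,h}$, whose count is controlled by the $(1+\epsilon)$-th moment and the growth $\tau_{k,h}\gtrsim t^{\gamma}$---and substituting the chosen $\numin$ produces the stated $\mathcal{R}=\tilde{O}(d+d^{-\gamma}H^{\gamma}K^{(1+2\epsilon)/(2(1+\epsilon))})$.

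The hard part will be the $\mathcal{R}$ bound rather than the $d^4|\cA|H^3K$ bound. Obtaining a per-solve cost that is \emph{sublinear} in the sample count $K$ (the exponent $\tfrac{1+2\epsilon}{2(1+\epsilon)}<1$) forces one to avoid recomputing a full $O(Kd)$ gradient from scratch each episode: the argument must warm-start each solve from the previous iterate, maintain the quadratic-region sufficient statistics incrementally, and charge work only to the outlier (linear-region) points, whose number must be bounded via the adaptive threshold $\tau_{k,h}$ and the moment condition. Making the interaction between this outlier/active-set count, the condition number, and the required accuracy precise---so the exponents of $d$, $H$, $K$ come out as claimed---is the delicate step; the secondary subtlety is rigorously justifying the $\tilde{O}(dH)$ layer depth of the clipped $Q$-functions so that the value-function evaluation cost is charged correctly.
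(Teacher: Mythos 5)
Your proposal follows essentially the same route as the paper's proof, with one divergence worth flagging. For the $\tilde{O}(d^4|\mathcal{A}|H^3K)$ term your accounting reproduces the paper's exactly: $\tilde{O}(dH)$ policy switches from the determinant-doubling rule (Lemma~\ref{lem:rare-update}), a full recomputation of $\{V^k_{h+1}(s_{i,h+1})\}_{i\in[k]}$ at each switch, each evaluation a max over $|\mathcal{A}|$ actions of a min over $\tilde{O}(dH)$ stored layers at $O(d^2)$ per bonus via Sherman--Morrison, with the $\tilde{O}(d^3|\mathcal{A}|H^2K)$ forward pass subsumed; and your $O(HK)$ count of Huber solves and the $\lambda_R$-strong-convexity / $(\lambda_R + K/\numin^2)$-smoothness condition-number bound on Nesterov's iteration count are precisely the paper's argument for $\mathcal{R}$. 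The divergence is where you locate the difficulty in bounding $\mathcal{R}$: the paper, citing \citet{bubeck2015convex}, simply charges one derivation at $O(d)$ operations per Nesterov iteration and multiplies, obtaining $\mathcal{R}=\tilde{O}\big(d\sqrt{1+K/(\lambda_R\numin^2)}\big)$ with no warm starts, no incremental sufficient statistics, and no outlier/active-set accounting. Your stricter reading --- that a full gradient of the $k$-term Huber sum costs $O(kd)$, so a per-solve cost sublinear in $K$ requires an amortization argument charging work only to linear-region points --- identifies a genuine looseness in the paper, but the machinery you sketch is never developed there and is not trivially completable as stated: the set of points whose residual exceeds $\tau_{k,h}\sigma_{k,h}$ depends on the current iterate $\btheta$, not only on the data, so it is not controlled by the $(1+\epsilon)$-th moment condition alone. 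Held to the paper's own per-iteration convention, your proof is complete and the extra machinery is unnecessary; held to your convention, the "delicate step" you flag remains unproven in your proposal --- and, arguably, in the paper itself.
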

\begin{proof}
See Appendix~\ref{proof:computational-complexity} for a detailed proof.
\end{proof}

Such a complexity allows us to focus on the complexity introduced by the RL algorithm rather than the optimization subroutine for solving adaptive Huber regression. Compared to that of LSVI-UCB++ \citep{he2023nearly}, $\tilde{O}(d^4|\mathcal{A}|H^3K)$, the extra term $\tilde{O}(HK\mathcal{R})$ causes a slightly worse computational time in terms of $K$. This is due to the absence of a closed-form solution of adaptive Huber regression in \eqref{eq:mdp-btheta}. Thus extra optimization steps are unavoidable. Nevertheless, Nesterov accelerated method gives $\mathcal{R}=\tilde{O}\rbr{K^{\frac{1+2\epsilon}{2(1+\epsilon)}}}$ with respect to $K$, which implies the computational complexity of $\algomdp$ is better than that of LSVI-UCB \citep{jin2020provably}, $\tilde{O}(d^2|\mathcal{A}|HK^2)$ in terms of $K$, thanks to the rare-switching updating policy. We conduct numerical experiments in Appendix~\ref{sec:exp} to further corroborate the computational efficiency of adaptive Huber regression.

\subsection{Regret Bound}\label{sec:mdp-result}

\ifarxiv
\begin{theorem}\label{thm:mdp-regret}
    For the time-inhomogeneous linear MDPs with heavy-tailed rewards defined in Section~\ref{sec:pre-mdp}, we set parameters in Algorithm~\ref{algo:mdp} as follows: $\lambda_R = d/\max\cbr{B^2,W^2}$, $\lambda_V = 1/\cH^2$,
    $\numin$ in Lemma~\ref{lem:mdp-sumR}, $\sigmamin$ in Lemma~\ref{lem:mdp-sumV},
    $\beta_{R^\epsilon}$, $\beta_0$, $\beta_R$, $\beta_V$ in \eqref{eq:mdp-beta_Reps}, \eqref{eq:mdp-beta_0}, \eqref{eq:mdp-beta_R}, \eqref{eq:mdp-beta_V}, respectively.
    Then for any $\delta\in(0,1)$, with probability at least $1-16\delta$, the regret of $\algomdp$ is bounded by
    \begin{align*}
    \mathrm{Regret}(K) = \tilde{O}\bigg(&d\sqrt{H\cU^*}K^\frac{1}{1+\epsilon} + d\sqrt{H\cV^*K} + \nu_{R^\epsilon}^\frac{1}{1+\epsilon} d^\frac{2+\epsilon}{1+\epsilon} H^\frac{3+\epsilon}{2(1+\epsilon)} K^\frac{2+(1-\epsilon)(1+\epsilon')}{2(1+\epsilon)(1+\epsilon')}\\
    & + \sqrt{d^7 H^{3.5} \cH^2} K^\frac{1}{4} + \rbr{d^{2+\epsilon} H^\frac{1+\epsilon}{2} \cH^\epsilon K^\frac{1-\epsilon}{2}}^\frac{1}{\epsilon}\\
    & + (\nu_R+\max\cbr{B,W})dHK^\frac{1-\epsilon}{2(1+\epsilon)} + d^5H^3\cH\bigg),
    \end{align*}
    where
    \begin{align}
    \cU^* &= \min\cbr{\cU^*_0,\cU}\text{ with }\cU^*_0 = \frac{1}{K}\sum_{k=1}^K \EE_{(s_h,a_h)\sim d_h^{\pi^k}} \sbr{\sum_{h=1}^H \sbr{\bnu_{1+\epsilon} R_h}^\frac{2}{1+\epsilon}(s_h,a_h)},\label{eq:U}\\
    \cV^* &= \min\cbr{\cV^*_0,\cV}\text{ with }\cV^*_0 = \frac{1}{K}\sum_{k=1}^K \EE_{(s_h,a_h)\sim d_h^{\pi^k}} \sbr{\sum_{h=1}^H \sbr{\VV_h V^*_{h+1}}(s_h,a_h)},\label{eq:V}\\
    &\qquad\qquad\qquad\,\, d_h^{\pi^k}(s, a) = \PP^{\pi^k}((s_h,a_h)=(s,a)|s_0=s_{k,1})\label{eq:d}
    \end{align}
    and $\cH, \cU, \cV$ are defined in Assumption~\ref{ass:bounded}.
\end{theorem}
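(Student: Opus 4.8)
The plan is to follow the optimistic value-iteration template: first certify that the two regression routines produce valid confidence sets, then prove optimism/pessimism of the running estimates $V_h^k,\pesV_h^k$, and finally decompose the regret into a telescoping sum of exploration bonuses that I bound by Cauchy--Schwarz and the elliptical potential lemma, matching each resulting term against the instance-dependent quantities $\cU^*$ and $\cV^*$. The two leading terms will come from the reward-estimation bonus and the value-estimation bonus respectively; everything else collapses into the explicit lower-order terms.

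First I would establish the high-probability concentration events. For the reward coefficients, Theorem~\ref{thm:heavy} directly gives $\nbr{\btheta_{k,h}-\btheta_h^*}_{\bH_{k,h}}\le\beta_{R,k}$, hence $|\dotp{\bphi(s,a)}{\btheta_{k,h}-\btheta_h^*}|\le\beta_{R,k}\nbr{\bphi(s,a)}_{\bH_{k,h}^{-1}}$. For the central moment I would apply Theorem~\ref{thm:heavy} to the idealized regression \eqref{eq:mdp-bpsi} and then transfer the guarantee to the realizable estimator \eqref{eq:mdp-hbpsi}, where $\varepsilon_{i,h}$ is replaced by $\hat\varepsilon_{i,h}=R_{i,h}-\dotp{\bphi_{i,h}}{\btheta_{i,h}}$; the perturbation $\big||\hat\varepsilon_{i,h}|^{1+\epsilon}-|\varepsilon_{i,h}|^{1+\epsilon}\big|$ is controlled through the reward error via the elementary bound $\big||a|^{1+\epsilon}-|b|^{1+\epsilon}\big|\le(1+\epsilon)(|a|^{\epsilon}+|b|^{\epsilon})|a-b|$, which is exactly what produces the correction $W_{k,h}$ in \eqref{eq:mdp-W} and certifies $\hat\nu_{k,h}^{1+\epsilon}$ as an upper bound of $[\bnu_{1+\epsilon}R_h](s_{k,h},a_{k,h})$. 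For the value coefficients I would invoke the weighted-ridge Bernstein-style self-normalized bound (as in \citep{he2023nearly,zhou2021nearly}) to obtain $|\dotp{\bphi}{\hbw_{k,h}-\bw_h[V^k_{h+1}]}|\le\beta_V\nbr{\bphi}_{\bSigma_{k,h}^{-1}}$, and analogously for $\cbw_{k,h}$ and $\tbw_{k,h}$.

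On these events I would prove by backward induction on $h$ that $\pesQ_h^k\le Q_h^*\le Q_h^k$, hence $\pesV_h^k\le V_h^*\le V_h^k$. This forces the two bonuses to dominate the estimation errors and, crucially, requires $\hat\sigma_{k,h}^2$ and $\sigma_{k,h}^2$ to be legitimate upper bounds of $[\VV_h V_{h+1}^*](s_{k,h},a_{k,h})$. Since the variance proxy \eqref{eq:mdp-hatVV} uses the optimistic $V_{h+1}^k$ rather than the unknown $V_{h+1}^*$, I would bound the discrepancy $|[\VV_h V_{h+1}^k]-[\VV_h V_{h+1}^*]|$ by the optimism gap $\dotp{\bphi_{k,h}}{\hbw_{k-1,h}-\cbw_{k-1,h}}$, which is precisely the role of the inflation terms $E_{k,h},D_{k,h}$ in \eqref{eq:mdp-E}--\eqref{eq:mdp-D}. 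With optimism in hand the regret telescopes, $V_1^*(s_{k,1})-V_1^{\pi^k}(s_{k,1})\le V_1^k(s_{k,1})-V_1^{\pi^k}(s_{k,1})$, and unrolling along the executed trajectory yields $\mathrm{Regret}(K)\lesssim\sum_{k,h}\big(\beta_{R,k-1}\normbphiR{k}{h}+\beta_V\normbphiV{k}{h}\big)$ plus martingale and rare-switching residuals.

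The final and most delicate step is converting the two bonus sums into the stated bound. Writing $\normbphiR{k}{h}=\nu_{k,h}w_{k,h}$ and $\normbphiV{k}{h}=\sigma_{k,h}\tilde w_{k,h}$, Cauchy--Schwarz together with the elliptical potential lemma gives $\sum_{k,h}\beta_{R,k-1}\nu_{k,h}w_{k,h}\lesssim\beta_{R,K}\sqrt{\sum_{k,h}\nu_{k,h}^2}\,\sqrt{Hd}$ and similarly for the variance term. Since $\beta_{R,K}=\tilde O(\sqrt d\,K^{\frac{1-\epsilon}{2(1+\epsilon)}})$ and the exponents satisfy $\tfrac{1-\epsilon}{2(1+\epsilon)}+\tfrac12=\tfrac{1}{1+\epsilon}$, the reward term becomes $\tilde O(d\sqrt{H\cU^*}K^{\frac{1}{1+\epsilon}})$ once Lemma~\ref{lem:mdp-sumR} shows $\sum_{k,h}\nu_{k,h}^2\lesssim K\cU^*$ up to lower-order terms, and the value term becomes $\tilde O(d\sqrt{H\cV^*K})$ once Lemma~\ref{lem:mdp-sumV} shows $\sum_{k,h}\sigma_{k,h}^2\lesssim K\cV^*$ up to lower-order terms. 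I expect the hard part to be Lemma~\ref{lem:mdp-sumV}: bounding $\sum_{k,h}\sigma_{k,h}^2$ demands a law-of-total-variance argument to replace the pointwise variances $[\VV_h V_{h+1}^*](s_{k,h},a_{k,h})$ by the trajectory-level quantity $\cV$, while simultaneously absorbing the inflation terms $E_{k,h},D_{k,h}$, which themselves depend on the optimism gap one is trying to bound. This closes a self-referential loop that must be resolved by a recursive/bootstrapping estimate on $\sum_{k,h}\dotp{\bphi_{k,h}}{\hbw_{k-1,h}-\cbw_{k-1,h}}$. All remaining contributions --- the $\numin,\sigmamin$ truncations, the moment-perturbation correction $W_{k,h}$, and the rare-switching overhead --- then collect into the explicit lower-order terms displayed in the theorem.
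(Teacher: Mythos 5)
Your outline reproduces the paper's proof route essentially step for step: the same hierarchy of high-probability events (coarse events for the central-moment and ridge estimates, then the refined $\cE_R\cap\cE_V$), optimism/pessimism by backward induction with the inflations $E_{k,h},D_{k,h}$ controlled through the optimistic--pessimistic gap (Lemmas~\ref{lem:mdp-opt}, \ref{lem:mdp-E}, \ref{lem:mdp-D}), telescoping into the two bonus sums plus variance-aware martingale residuals (Lemma~\ref{lem:mdp-subopt-gap}), and the same exponent arithmetic $\frac{1-\epsilon}{2(1+\epsilon)}+\frac{1}{2}=\frac{1}{1+\epsilon}$. You also correctly single out the self-referential loop between the weights $\sigma_{k,h}$ and the gap $\dotp{\bphi_{k,h}}{\hbw_{k-1,h}-\cbw_{k-1,h}}$ and its resolution by a self-bounding estimate, which is exactly how the paper closes Lemma~\ref{lem:mdp-sumV} via Lemmas~\ref{lem:mdp-opt-pes-gap}, \ref{lem:mdp-hatsigma-sum} and \ref{lem:mdp-D-sum}. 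One omission of the same kind: the reward side has its own self-bounding loop, since $W_{k,h}$ feeds $\hat{\nu}_{k,h}$ back into the bonus sum; Lemma~\ref{lem:mdp-sumR} resolves an inequality of the form $x\le a\,x^{1/(1+\epsilon)}+b$, which your ``up to lower-order terms'' gloss hides but which is handled the same way.

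The one step that would fail as written is your moment-perturbation transfer. The inequality $\abr{|a|^{1+\epsilon}-|b|^{1+\epsilon}}\le(1+\epsilon)(|a|^{\epsilon}+|b|^{\epsilon})|a-b|$ is true, but applied with $a=\hat{\varepsilon}_{i,h}$, $b=\varepsilon_{i,h}$ it leaves the random prefactor $|\varepsilon_{i,h}|^{\epsilon}$, which is unbounded under Assumption~\ref{ass:bounded-moment} (only a $(1+\epsilon)$-th moment exists). The perturbation lemma for adaptive Huber regression that both you and the paper rely on (Lemma~\ref{lem:perturbation}) requires a bound of the form $\abr{\hat{y}_s-y_s}\le\hat{\beta}_s\nbr{\bphi_s}_{\bH_s^{-1}}$ with $\hat{\beta}_s$ controlled on the concentration event; with your factor, $\max_{s\le t}\hat{\beta}_s$ is itself heavy-tailed, and any union-bound truncation of $|\varepsilon_{i,h}|^{\epsilon}$ over $i\le K$ injects polynomial-in-$K$ factors that destroy the claimed $W_{k,h}$ of \eqref{eq:mdp-W}. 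The paper's Lemma~\ref{lem:mdp-W} instead extracts the deterministic factor $\cH^{\epsilon}$ from the \emph{difference} $\hat{\varepsilon}_{i,h}-\varepsilon_{i,h}=\dotp{\bphi_{i,h}}{\btheta^*_h-\btheta_{i,h}}$, whose magnitude is bounded by $\cH$ because both dot products lie in $[0,\cH]$, arriving at $\abr{|\hat{\varepsilon}_{i,h}|^{1+\epsilon}-|\varepsilon_{i,h}|^{1+\epsilon}}\le\cH^{\epsilon}\beta_{R,i-1}\nbr{\bphi_{i,h}}_{\bH_{i,h}^{-1}}$; the structural point is that the correction fed into Lemma~\ref{lem:perturbation} must be deterministic on the event, not a function of the heavy-tailed noise itself. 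Relatedly, your ``transfer the guarantee'' step is not one-shot: because the Huber weights $\nu_{k,h}$ depend on $\hat{\nu}_{k,h}$ and $W_{k,h}$, which in turn presuppose the reward event at earlier episodes, the paper proves $\cE_{R,k}$ by induction over $k$ (via the truncated rewards $\tilde{R}_{i,h}$ in the proof of Lemma~\ref{lem:mdp-ER}), intertwined with $\cE_{R^\epsilon}$; your plan should make this induction explicit, since otherwise the moment estimate used to set $\nu_{k,h}$ assumes the very event being certified.
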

\begin{proof}
See Appendix~\ref{sec:mdp-regret} for a detailed proof.
\end{proof}

\begin{remark}
If $\epsilon > \frac{1}{1+\epsilon'}$, then $K^\frac{2+(1-\epsilon)(1+\epsilon')}{2(1+\epsilon)(1+\epsilon')} < \sqrt{K}$. When the number of episodes $K$ is sufficiently large, the regret can be simplified to 
\[
\tilde{O}\rbr{d\sqrt{H\cU^*}K^\frac{1}{1+\epsilon} + d\sqrt{H\cV^*K}}.
\]
\end{remark}
\else
\begin{theorem}[Informal]\label{thm:mdp-regret}
    For the linear MDPs with heavy-tailed rewards defined in Section~\ref{sec:pre-mdp}, we set parameters in Algorithm~\ref{algo:mdp} as follows: $\lambda_R = d/\max\cbr{B^2,W^2}$, $\lambda_V = 1/\cH^2$, $\numin$, $\sigmamin$, $c_0$, $c_1$, $\tau_0$, $\ttau_0$, $\beta_{R^\epsilon}$, $\beta_0$, $\beta_R$, $\beta_V$ in Appendix~\ref{sec:mdp-events}. Then for any $\delta\in(0,1)$, with probability at least $1-16\delta$, the regret of $\algomdp$ is bounded by
    \[
    \mathrm{Regret}(K) = \tilde{O}\rbr{d\sqrt{H\cU^*}K^\frac{1}{1+\epsilon} + d\sqrt{H\cV^*K}},
    \]
    where $\epsilon\in(0,1]$, $\cU^* = \min\{\cU^*_0,\cU\}$, $\cV^* = \min\{\cV^*_0,\cV\}$ with $\cU^*_0$, $\cV^*_0$ defined in Appendix~\ref{sec:mdp-regret} and $\cH, \cU, \cV$ defined in Assumption~\ref{ass:bounded}.
\end{theorem}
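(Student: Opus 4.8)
The plan is to follow the optimism-based value-iteration analysis of LSVI-UCB++ \citep{he2023nearly}, but with the reward and central-moment estimation replaced by the adaptive Huber regression of Section~\ref{sec:huber}, and with the instance-dependent quantities $\cU^*,\cV^*$ extracted from the weighted bonus sums. First I would establish a collection of high-probability concentration events. Applying Theorem~\ref{thm:heavy} to the reward regression \eqref{eq:mdp-btheta} with weights $\nu_{k,h}$ yields $\|\btheta_{k,h}-\btheta_h^*\|_{\bH_{k,h}}\le\beta_{R,k}$ with $\beta_{R,k}=\tilde O(\sqrt d\,k^{\frac{1-\epsilon}{2(1+\epsilon)}})$; applying it to the idealized moment regression \eqref{eq:mdp-bpsi} controls $\bpsi_h^*$ by $\beta_{R^\epsilon,k}$, and a perturbation argument (Appendix~\ref{sec:huber-perturbation}) transfers this to the implementable regression \eqref{eq:mdp-hbpsi} that uses $\hat\varepsilon_{i,h}$. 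For the transition term, since $[\PP_h f](s,a)=\dotp{\bphi(s,a)}{\bw_h[f]}$, a Bernstein-style self-normalized bound for the weighted ridge estimator $\hbw_{k,h}[f]$, made uniform over the relevant optimistic, pessimistic, and squared value-function classes by a covering argument, gives $\|\hbw_{k,h}[f]-\bw_h[f]\|_{\bSigma_{k,h}}\le\tilde O(\beta_V)$. I would then prove the weights are \emph{valid}: that $\nu_{k,h}$ in \eqref{eq:mdp-nu} upper bounds the true $(1+\epsilon)$-th central moment and that $\hat\sigma_{k,h}^2$ in \eqref{eq:mdp-hatVV} upper bounds $[\VV_h V^*_{h+1}]$, which is precisely the role of the correction terms $W_{k,h}$, $E_{k,h}$, $D_{k,h}$.

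With these events, I would prove \emph{optimism} $V^k_h\ge V^*_h\ge\pesV^k_h$ by backward induction on $h$: the bonuses $\beta_{R,k-1}\normbphiR{k}{h}+\beta_V\normbphiV{k}{h}$ dominate both the reward error $\dotp{\bphi_{k,h}}{\btheta_{k-1,h}-\btheta_h^*}$ and the transition error $\dotp{\bphi_{k,h}}{\hbw_{k-1,h}-\bw_h[V_{h+1}^*]}$, and the clipping in Line~\ref{line:Qkh} preserves the inequalities. Optimism gives $\mathrm{Regret}(K)\le\sum_{k}[V_1^k(s_{k,1})-V_1^{\pi^k}(s_{k,1})]$, and I would unroll the recursion
\[
V_h^k(s_{k,h})-V_h^{\pi^k}(s_{k,h})\le 2\beta_{R,k-1}\normbphiR{k}{h}+2\beta_V\normbphiV{k}{h}+[\PP_h(V_{h+1}^k-V_{h+1}^{\pi^k})](s_{k,h},a_{k,h})+\xi_{k,h},
\]
where $\xi_{k,h}$ is a martingale difference; telescoping over $h$ and summing over $k$ reduces the regret to the two weighted bonus sums plus an Azuma term.

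Third, I would bound the bonus sums. For the reward part, the elliptical potential lemma gives $\sum_{k}\normbphiR{k}{h}\lesssim\sqrt{\kappa\sum_k\nu_{k,h}^2}$, and since $\beta_{R,k-1}=\tilde O(\sqrt d\,k^{\frac{1-\epsilon}{2(1+\epsilon)}})$, the product contributes the exponent $K^{\frac{1-\epsilon}{2(1+\epsilon)}}\cdot K^{1/2}=K^{\frac{1}{1+\epsilon}}$; because $\nu_{k,h}$ tracks $[\bnu_{1+\epsilon}R_h]^{\frac1{1+\epsilon}}$, the term $\sum_{k,h}\nu_{k,h}^2$ is governed by $\sum_{k,h}[\bnu_{1+\epsilon}R_h]^{\frac{2}{1+\epsilon}}(s_{k,h},a_{k,h})$, which I relate to $K\cU^*$ by a martingale comparison to the occupancy measures $d_h^{\pi^k}$ together with the deterministic per-trajectory bound $\cU$ from Assumption~\ref{ass:bounded}, taking the $\min$. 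This produces the main term $d\sqrt{H\cU^*}K^{\frac{1}{1+\epsilon}}$. For the transition part the same elliptical-potential-plus-occupancy-concentration argument, now with $\sigma_{k,h}$ tracking $[\VV_h V^*_{h+1}]^{1/2}$ and a law-of-total-variance bound relating $\sum_h[\VV_h V^*_{h+1}]$ to the return variance $\cV$, yields $d\sqrt{H\cV^* K}$; the floorings $\numin,\sigmamin$, the correction masses, and the rare-switching doubling (which costs only logarithmic factors) collect into the stated lower-order terms.

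I expect the main obstacle to be twofold. The deepest difficulty is the central-moment estimation: because $\varepsilon_{i,h}$ is unobservable and replaced by $\hat\varepsilon_{i,h}=R_{i,h}-\dotp{\bphi_{i,h}}{\btheta_{i,h}}$ in \eqref{eq:mdp-hbpsi}, the error in estimating $\btheta_h^*$ propagates nonlinearly through the map $x\mapsto|x|^{1+\epsilon}$ into the moment regression, and controlling it requires the novel perturbation analysis of adaptive Huber regression in Appendix~\ref{sec:huber-perturbation}; this is exactly what forces the correction term $W_{k,h}$ with its factor $6\cH^\epsilon\beta_{R,k-1}\kappa$. The second difficulty is the coupled variance recursion: guaranteeing that $\sigma_{k,h}$ is a valid upper bound on $[\VV_h V^*_{h+1}]$ while its estimator \eqref{eq:mdp-hatVV} uses the optimistic and pessimistic value functions requires the gap terms $E_{k,h},D_{k,h}$ in \eqref{eq:mdp-E}--\eqref{eq:mdp-D} to simultaneously dominate $[\VV_h(V_{h+1}^k-V_{h+1}^*)]$ and $[\VV_h(V^*_{h+1}-\pesV^k_{h+1})]$, so the optimism proof and the weight-validity proof must be carried out jointly by induction rather than sequentially.
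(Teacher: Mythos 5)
Your proposal matches the paper's proof essentially step for step: the same high-probability events ($\cE_{R^\epsilon}$, $\cE_0$, $\cE_R$, $\cE_V$) built from Theorem~\ref{thm:heavy} and the covering-argument self-normalized bounds, the perturbation analysis of Appendix~\ref{sec:huber-perturbation} producing the correction $W_{k,h}$, optimism/pessimism by backward induction, the telescoping decomposition of the regret into the two weighted bonus sums (Lemma~\ref{lem:mdp-subopt-gap}), and the elliptical-potential-plus-occupancy-measure arguments yielding $\cU^*$ and $\cV^*$ (Lemmas~\ref{lem:mdp-sumR} and~\ref{lem:mdp-sumV}). You also correctly isolate the two genuine difficulties --- the nonlinear propagation of the reward-estimation error into the moment regression, and the coupled induction between weight validity and optimism, which the paper resolves via an induction over episodes with truncated rewards (Lemma~\ref{lem:mdp-ER}) --- the only detail left implicit in your sketch being that the bonus sums reappear inside the correction masses $W_{k,h}$, $E_{k,h}$, $D_{k,h}$, so the paper closes the argument by solving self-bounding inequalities of the form $x \le a x^{1/(1+\epsilon)} + b$ and $x \le a\sqrt{x} + b$, exactly producing the stated lower-order terms.
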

\begin{proof}
See Appendix~\ref{sec:mdp-regret} for a formal version of Theorem~\ref{thm:mdp-regret} and its detailed proof.
\end{proof}
\fi

\paragraph{Quantities $\cU^*$, $\cV^*$}
We make a few explanations for the quantities $\cU^*$, $\cV^*$.
On one hand, $\cU^*$ is upper bounded by $\cU$, which is the upper bound of the sum of the $(1+\epsilon)$-th central moments of reward functions along a single trajectory. On the other hand, $\cU^*$ is no more than $\cU^*_0$, which is the sum of the $(1+\epsilon)$-th central moments with respect to the averaged occupancy measure of the first $K$ episodes.
$\cV^*$ is defined similar to $\cU^*$, but measures the randomness of transition probabilities.

\begin{remark}\label{rem:mdp-li2023variance}
When $\epsilon=1$, we can show that this regret is bounded by $\tilde{O}(d\sqrt{H\cG^*K})$, where $\cG^*$ is an variance-dependent quantity defined by \citet{li2023variance}. Thus, our result recovers their variance-aware regret bound. See Remark~\ref{rem:mdp-li2023variance-appendix} in Appendix~\ref{sec:mdp-regret} for a detailed proof.
\end{remark}
To demonstrate the optimality of our results and establish connections with existing literature, we can specialize Theorem~\ref{thm:mdp-regret} to obtain the worst-case regret \citep{jin2020provably,agarwal2023vo,he2023nearly} and first-order regret \citep{wagenmaker2022first}.

\begin{corollary}[Worst-case regret]
For the linear MDPs with heavy-tailed rewards defined in Section~\ref{sec:pre-mdp} and for any $\delta\in(0,1)$, with probability at least $1-16\delta$, the regret of $\algomdp$ is bounded by
\[
\tilde{O}(dHK^\frac{1}{1+\epsilon} + d \sqrt{H^3 K}).
\]
\end{corollary}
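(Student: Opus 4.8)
The plan is to derive the worst-case guarantee directly from the instance-dependent regret of Theorem~\ref{thm:mdp-regret} by replacing the two instance-dependent quantities $\cU^*$ and $\cV^*$ with the largest values the assumptions permit, under the standard normalization $\cH = O(H)$ (bounded per-step expected reward). Since the leading terms of Theorem~\ref{thm:mdp-regret} are $d\sqrt{H\cU^*}K^{\frac{1}{1+\epsilon}} + d\sqrt{H\cV^*K}$, it suffices to establish $\cU^* = \tilde{O}(H)$ and $\cV^* = O(H^2)$; substituting these gives $d\sqrt{H\cdot H}\,K^{\frac{1}{1+\epsilon}} = dHK^{\frac{1}{1+\epsilon}}$ and $d\sqrt{H\cdot H^2\cdot K} = d\sqrt{H^3K}$, which is exactly the claimed bound.

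First I would control $\cU^*$. Because $\cU^* = \min\{\cU^*_0,\cU\} \le \cU$, it is enough to bound the trajectory quantity $\cU$ of Assumption~\ref{ass:bounded}. By Assumption~\ref{ass:bounded-moment}, the $(1+\epsilon)$-th central moment of each reward satisfies $[\bnu_{1+\epsilon}R_h](s,a) \le \nu_R^{1+\epsilon}$ pointwise, hence $[\bnu_{1+\epsilon}R_h]^{\frac{2}{1+\epsilon}}(s,a) \le \nu_R^2$. Summing over $h\in[H]$ yields $\sum_{h=1}^H [\bnu_{1+\epsilon}R_h]^{\frac{2}{1+\epsilon}}(s_h,a_h) \le H\nu_R^2$ along every trajectory, so the worst admissible value in Assumption~\ref{ass:bounded}(2) is $\cU = H\nu_R^2 = \tilde{O}(H)$, treating the heavy-tail constant $\nu_R$ as $O(1)$.

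Next I would control $\cV^*$. The cleanest route uses $\cV^* = \min\{\cV^*_0,\cV\} \le \cV$ together with Assumption~\ref{ass:bounded}: since the cumulative expected reward satisfies $r_\pi \in [0,\cH]$, its variance obeys $\Var(r_\pi) \le \cH^2/4$, so the worst admissible value is $\cV = \cH^2/4 = O(H^2)$. (Alternatively one may bound $\cV^*_0$ directly through a law-of-total-variance argument: as $V^*_{h+1}\in[0,\cH]$, one has $\EE^{\pi^k}[\sum_h [\VV_h V^*_{h+1}](s_h,a_h)] = O(\cH^2)$ for each $k$, whence $\cV^*_0 = O(\cH^2)$; the $\cV$ route sidesteps this argument and is preferable.) Either way $\cV^* = O(H^2)$.

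Substituting both worst-case bounds into Theorem~\ref{thm:mdp-regret} produces the two stated terms. The only remaining work — and the place where care is genuinely needed — is the formal (arxiv) version, whose regret carries several additional lower-order terms mixing $\epsilon$ and $\epsilon'$. I would verify that, after setting $\cU$, $\cV$, and $\cH$ to their worst-case values, each such term has a strictly smaller $K$-exponent than $K^{\frac{1}{1+\epsilon}}$ (respectively $\sqrt{K}$) and no larger $d,H$ dependence, so that for sufficiently large $K$ they are absorbed into the $\tilde{O}(\cdot)$ of the two leading terms. This exponent bookkeeping is routine but is the step most likely to conceal an error, precisely because the auxiliary terms couple the two moment exponents.
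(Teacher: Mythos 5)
Your proposal is correct and takes essentially the same route as the paper's two-line proof, which likewise bounds $\cU^* \le \cU \le H\nu_R^2$ pointwise via Assumption~\ref{ass:bounded-moment} and $\cV^* = O(\cH^2)$ (the paper cites the total variance lemma of \citet{jin2018q}, whereas you use the equally valid Popoviciu bound $\cV \le \cH^2/4$ from $r_\pi \in [0,\cH]$, an alternative you yourself mention), then sets $\cH = H$ and treats $\nu_R$ as a constant before substituting into Theorem~\ref{thm:mdp-regret}. Your closing caution about verifying that the lower-order terms of the formal theorem (e.g., the $K^{\frac{1-\epsilon}{2\epsilon}}$ and $K^{\frac{2+(1-\epsilon)(1+\epsilon')}{2(1+\epsilon)(1+\epsilon')}}$ terms, whose exponents can exceed $\frac{1}{1+\epsilon}$ for small $\epsilon$ or $\epsilon'$) are genuinely dominated is diligence the paper's proof omits entirely, so it is a fair flag rather than a deviation.
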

\begin{proof}
    Notice $\cU^*$ and $\cV^*$ are upper bounded by $H\nu_R^2$ and $\cH^2$ (total variance lemma in \citet{jin2018q}) respectively. When $\cH = H$, and 
    we treat $\nu_R$ as a constant, the result follows.
\end{proof}
Next, we give the regret lower bound of linear MDPs with heavy-tailed rewards in Theorem~\ref{thm:lower-bound-heavy}, which shows our proposed $\algomdp$ is minimax optimal in the worst case.

\begin{theorem}\label{thm:lower-bound-heavy}
    For any algorithm, there exists an $H$-episodic, $d$-dimensional linear MDP with heavy-tailed rewards such that for any $K$, the algorithm's regret is
    \[
        \Omega(d H K^{\frac{1}{1+\epsilon}} + d \sqrt{H^3 K}).
    \]
\end{theorem}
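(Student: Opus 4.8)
The plan is to construct an explicit hard instance that decouples into two independent sources of difficulty, one forcing the $\Omega(dHK^{1/(1+\epsilon)})$ term and the other forcing the $\Omega(d\sqrt{H^3K})$ term, and then combine them by concatenating the two hard MDPs along the horizon. The second term $\Omega(d\sqrt{H^3K})$ is the standard minimax lower bound for linear MDPs with \emph{bounded} rewards, already established by \citet{zhou2021nearly} and matched by \citet{he2023nearly}; so for that piece I would simply invoke their construction on a horizon of length $\Theta(H)$ with deterministic (or uniformly bounded) rewards, where all the difficulty comes from learning the transition kernel. The genuinely new piece is the first term, which isolates the cost of learning \emph{heavy-tailed rewards} and must reduce to the heavy-tailed linear bandit lower bound $\Omega(dT^{1/(1+\epsilon)})$ of \citet{shao2018almost}.

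For the reward term, I would build a degenerate linear MDP with a trivial (e.g.\ fixed or single-state) transition structure at each of $\Theta(H)$ steps, so that each step behaves as an independent copy of a heavy-tailed linear bandit instance in dimension $d$. Concretely, at each step $h$ the agent faces the decision set consisting of scaled unit basis vectors (the construction underlying the $\Omega(d)$ remark in Remark~\ref{rem:bandit-regret}), with a reward noise distribution chosen to have finite $(1+\epsilon)$-th moment but be as spread out as the moment bound allows. Over $K$ episodes, each step accumulates $K$ samples, and the bandit lower bound forces regret $\Omega(d K^{1/(1+\epsilon)})$ \emph{per step}; summing the independent contributions across $\Theta(H)$ steps yields $\Omega(dHK^{1/(1+\epsilon)})$. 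The independence of the steps is what makes the per-step regrets additive rather than being shared, and this is where the extra factor of $H$ (rather than $\sqrt{H}$) in this term comes from — the heavy-tailed reward cost does not benefit from the variance-pooling that the transition term does.

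The formal engine is a two-point (or multi-point) information-theoretic argument: I would fix a base instance and a family of perturbed instances differing only in the sign/direction of $\btheta_h^*$ along each coordinate, then lower bound the regret by relating it to the probability of distinguishing these instances via a Bretagnolle--Huber / Le Cam or Assouad-style reduction. The heavy-tailed nature enters through the KL divergence between the candidate reward noise distributions: because only a $(1+\epsilon)$-th moment is assumed, one can make the per-sample KL divergence scale like $\Delta^{1+\epsilon}$ rather than $\Delta^2$ for a gap $\Delta$ in means, and optimizing the gap $\Delta$ against the number of samples $K$ produces the exponent $1/(1+\epsilon)$ instead of $1/2$. This is exactly the mechanism in \citet{shao2018almost}, so the cleanest route is to reduce directly to their bandit lower bound rather than re-derive the information inequality from scratch.

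The main obstacle I anticipate is making the concatenation rigorous while keeping the instance a valid linear MDP satisfying all of Assumptions~\ref{ass:reward}--\ref{ass:bounded}: in particular, ensuring that the heavy-tailed reward block and the transition-hard block can coexist in a single $d$-dimensional feature map without the agent being able to exploit one block to shortcut learning in the other, and verifying that the cumulative-reward and moment bounds $\cH,\cU,\cV$ in Assumption~\ref{ass:bounded} are respected (so the lower bound is compatible with the regime of the upper bound). The standard fix is to devote disjoint halves of the horizon to the two blocks, so that trajectories factor across the two sub-MDPs and the total regret is the sum of the two independent lower bounds; verifying this factorization and that the optimal policy decomposes accordingly is the detail requiring the most care, but it is routine once the block structure is set up.
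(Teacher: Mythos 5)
Your proposal matches the paper's own proof essentially verbatim: the paper likewise concatenates $H$ copies of the heavy-tailed linear bandit hard instance of \citet{shao2018almost} with deterministic transitions to obtain $\Omega(dHK^{\frac{1}{1+\epsilon}})$, and invokes the $\Omega(d\sqrt{H^3K})$ linear MDP lower bound of \citet{zhou2021nearly} for the second term, combining the two. Your additional discussion of the KL mechanism behind the $K^{\frac{1}{1+\epsilon}}$ exponent and of verifying Assumption~\ref{ass:bounded} is sound elaboration of the same argument, not a different route.
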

\begin{proof}
Intuitively, the proof of Theorem~\ref{thm:lower-bound-heavy} follows from a combination of the lower bound constructions for heavy-tailed linear bandits in \citet{shao2018almost} and linear MDPs in \citet{zhou2021nearly}. See Appendix~\ref{proof:lower-bound-heavy} for a detailed proof.
\end{proof}

Theorem~\ref{thm:lower-bound-heavy} shows that for sufficiently large $K$, the reward term dominates in the regret bound. Thus, in heavy-tailed settings, the main difficulty is learning the reward functions.

\begin{corollary}[First-order regret]\label{cor:mdp-first-order}
For the linear MDPs with heavy-tailed rewards defined in Section~\ref{sec:pre-mdp} and for any $\delta\in(0,1)$, with probability at least $1-16\delta$, the regret of $\algomdp$ is bounded by
\[
\tilde{O}(d\sqrt{H\cU^*}K^\frac{1}{1+\epsilon} + d\sqrt{H \cH V_1^* K}).
\]
And when the rewards are uniformly bounded in $[0,1]$, the result is reduced to the first-order regret bound of $\tilde{O}(d\sqrt{H^2 V_1^* K})$.
\end{corollary}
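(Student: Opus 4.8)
The plan is to derive this directly from Theorem~\ref{thm:mdp-regret}. The reward term $d\sqrt{H\cU^*}K^{1/(1+\epsilon)}$ is already in first-order form, so the only work is to replace the transition term $d\sqrt{H\cV^*K}$ by $d\sqrt{H\cH V_1^* K}$. Since $\cV^*\le\cV^*_0$, it suffices to show that $\cV^*_0$ is controlled by $\cH\cdot\frac1K\sum_k V_1^*(s_1^k)$ up to a lower-order self-bounding term. The core is a law-of-total-variance (LTV) argument. Fixing an episode $k$ (write $\pi=\pi^k$), I would introduce the martingale differences $\xi_h = V^*_{h+1}(s_{h+1}) - [\PP_h V^*_{h+1}](s_h,a_h)$, which are conditionally mean-zero under $\pi$ with conditional variance $[\VV_h V^*_{h+1}](s_h,a_h)$. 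As martingale differences are uncorrelated,
\[
\EE_{\pi}\sbr{\sum_{h=1}^H [\VV_h V^*_{h+1}](s_h,a_h)} = \Var_{\pi}(X), \qquad X := \sum_{h=1}^H \xi_h.
\]
Using the Bellman optimality identity $[\PP_h V^*_{h+1}](s,a) = Q^*_h(s,a) - r_h(s,a)$ and telescoping $\sum_h V^*_{h+1}(s_{h+1}) = \sum_h V^*_h(s_h) - V^*_1(s_1)$ (as $V^*_{H+1}=0$), this simplifies to
\[
X = \sum_{h=1}^H r_h(s_h,a_h) + \sum_{h=1}^H g_h - V^*_1(s_1), \qquad g_h := V^*_h(s_h) - Q^*_h(s_h,a_h) \ge 0.
\]

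Next I would bound $\Var_\pi(X) \le 2\Var_\pi(\sum_h r_h) + 2\Var_\pi(\sum_h g_h)$ (the constant $V^*_1(s_1)$ drops). For the return term, Assumption~\ref{ass:bounded}(1) gives $0\le\sum_h r_h(s_h,a_h)\le\cH$ with mean $V_1^\pi(s_1)\le V_1^*(s_1)$, hence $\Var_\pi(\sum_h r_h)\le\cH\,V_1^\pi(s_1)$. For the gap term, $0\le g_h\le\cH$ yields $(\sum_h g_h)^2 \le H\cH\sum_h g_h$, and the performance difference lemma gives $\EE_\pi[\sum_h g_h] = V_1^*(s_1)-V_1^\pi(s_1)$. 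Summing over $k$ and using $V_1^\pi\le V_1^*$,
\[
K\,\cV^*_0 \le 2\cH\sum_{k=1}^K V_1^*(s_1^k) + 2H\cH\,\mathrm{Regret}(K).
\]

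Plugging this into Theorem~\ref{thm:mdp-regret} and using $\sqrt{a+b}\le\sqrt a+\sqrt b$, the transition contribution becomes $\tilde O(d\sqrt{H\cH V_1^* K}) + \tilde O(d\sqrt{H^2\cH\,\mathrm{Regret}(K)})$ where $V_1^*:=\frac1K\sum_k V_1^*(s_1^k)$. The second piece has the form $C\sqrt{\mathrm{Regret}(K)}$ and is removed by the standard self-bounding step ($\mathrm{Regret}\le A + C\sqrt{\mathrm{Regret}}\Rightarrow\mathrm{Regret}\lesssim A + C^2$), leaving a lower-order additive constant. This proves the first claim. For rewards bounded in $[0,1]$ I would take $\epsilon=1$ (so $K^{1/(1+\epsilon)}=\sqrt K$) and $\cH=H$; then $[\bnu_2 R_h](s,a)=\Var(R_h)\le\EE[R_h]=r_h(s,a)$ gives $\cU^*\le\cU^*_0\le\frac1K\sum_k V_1^*(s_1^k)=V_1^*$, so the reward term is $\tilde O(d\sqrt{HV_1^*K})$, dominated by the transition term $\tilde O(d\sqrt{H^2V_1^*K})$, yielding the stated reduction.

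The main obstacle is the variance conversion in the second step: the one-step variances are those of the \emph{optimal} value $V^*$ but accumulated along the \emph{executed} policy's trajectory, so LTV does not apply verbatim. Handling this through the gap decomposition, and in particular arguing via the performance difference lemma plus self-bounding that the gap-variance contribution is genuinely lower order, is the crux; the remaining manipulations are routine.
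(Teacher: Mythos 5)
Your proposal is correct, but it takes a genuinely different route from the paper's. The paper never analyzes $\cV^*_0$ at all: it uses the other branch of the minimum, $\cV^* \le \cV$, together with the elementary observation (recorded as \eqref{eq:mdp-first-order-relationship} inside the proof of Lemma~\ref{lem:total-variance}) that for every policy $\pi$, $\Var(r_\pi) \le \EE[r_\pi^2] \le \cH\,\EE[r_\pi] = \cH V_1^\pi \le \cH V_1^*$ since $0 \le r_\pi \le \cH$ by Assumption~\ref{ass:bounded}; substituting $\cV^* \le \cH V_1^*$ into Theorem~\ref{thm:mdp-regret} finishes the first claim in two lines, with no self-bounding and no regret-dependent remainder. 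You instead bound the $\cV^*_0$ branch --- the occupancy-weighted one-step variances of the \emph{optimal} value along the \emph{executed} policies --- which genuinely requires your extra machinery: the law-of-total-variance identity for $X = \sum_h \xi_h$, the gap decomposition $X = \sum_h r_h + \sum_h g_h - V_1^*(s_1)$ (note your appeal to the performance difference lemma is redundant, since $\EE_\pi[\sum_h g_h] = V_1^* - V_1^\pi$ already follows from $\EE_\pi[X]=0$ in your own decomposition), and the self-bounding step, whose additive $\tilde{O}(d^2H^2\cH)$ cost is indeed absorbed by the $d^5H^3\cH$ lower-order term already present in Theorem~\ref{thm:mdp-regret}. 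All of these steps check out, including $g_h \in [0,\cH]$ and $(\sum_h g_h)^2 \le H\cH \sum_h g_h$. The trade-off: the paper's argument is shorter but leans on interpreting $\cV$ in Assumption~\ref{ass:bounded} as the tightest admissible constant $\sup_\pi \Var(r_\pi)$, whereas your argument is self-contained in $\cV^*_0$ and would survive even without the $\cV$ branch of the assumption, at the price of carrying $\mathrm{Regret}(K)$ on the right-hand side and removing it by self-bounding. Your bounded-reward specialization ($\epsilon=1$, $\cH=H$, $\Var(R_h) \le r_h$, hence $\cU^* \le V_1^*$) coincides with the paper's.
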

\begin{proof}
    See Section~\ref{proof:mdp-first-order} for a detailed proof.
\end{proof}

Our first-order regret $\tilde{O}(d\sqrt{H^2 V_1^* K})$ is minimax optimal in the worst case since $V^*_1 \le H$. And it improves the state-of-the-art result $\tilde{O}(d\sqrt{H^3 V_1^* K})$ \citep{li2023variance} by a factor of $\sqrt{H}$.

\section{Conclusion}\label{sec:conclu}

In this work, we propose two computationally efficient algorithms for heavy-tailed linear bandits and linear MDPs, respectively.
Our proposed algorithms, termed as $\algobandit$ and $\algomdp$, are based on a novel self-normalized concentration inequality for adaptive Huber regression, which may be of independent interest.
$\algobandit$ and $\algomdp$ achieve minimax optimal and instance-dependent regret bounds scaling with the central moments.
We also provide a lower bound for linear MDPs with heavy-tailed rewards to demonstrate the optimality of $\algomdp$.
To the best of our knowledge, we are the first to study heavy-tailed rewards in RL with function approximation and provide a new algorithm for this setting which is both statistically and computationally efficient.

\section*{Acknowledgments}
Liwei Wang is supported by National Key R\&D Program of China (2022ZD0114900) and National
Science Foundation of China (NSFC62276005).
Lin F. Yang is supported in part by NSF grant 2221871, and an Amazon Research Grant.

\bibliographystyle{plainnat}
\bibliography{rl_ref}

\clearpage
\appendix
\ifarxiv\else

\fi

\section{Experiments}
\label{sec:exp}

In this section, we conduct empirical evaluations of the proposed algorithm, $\algobandit$, for heavy-tailed linear bandit problems in Definition~\ref{def:bandit} which can be regarded as a special case of linear MDPs. Comparisons are made between MENU and TOFU \citep{shao2018almost}, which give the worst-case optimal regret bound in such settings (See Table~\ref{tab:bandit}). To the best of our knowledge, we are the first to address the challenge of heavy-tailed rewards in RL with function approximation, even when $\epsilon$ is less than $1$. Consequently, no other algorithms in the RL literature can be readily compared to our approach (See Table~\ref{tab:mdp}).

\begin{figure}[ht]
    \centering
    \includegraphics[width=0.6\textwidth]{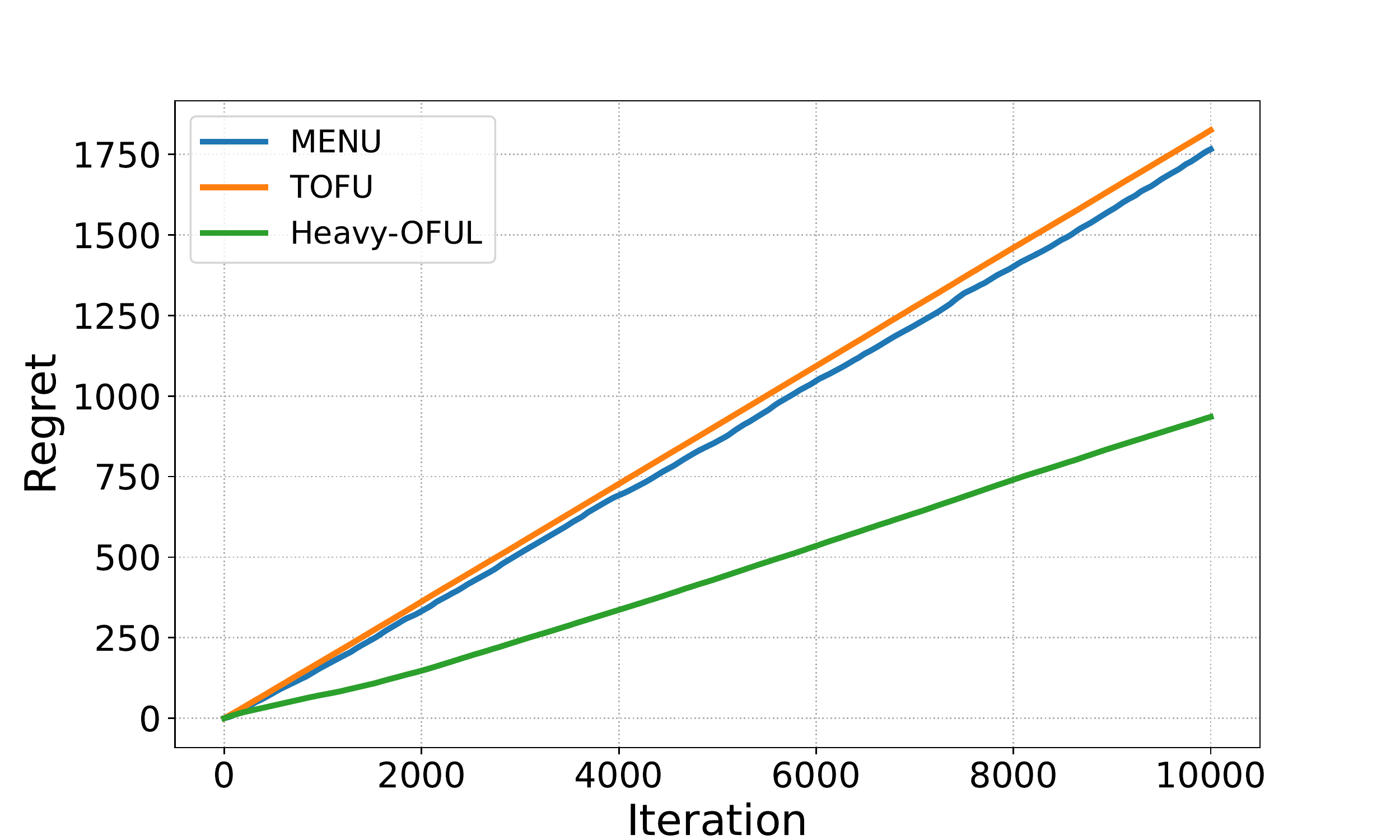}
    \caption{Comparisons of our algorithm ($\algobandit$) versus MENU and TOFU in heavy-tailed linear bandits problems  for $1\times10^4$ rounds.}
    \label{fig:heavy-tailed-comparison}
\end{figure}

We generate $5$ independent paths for each algorithm and show the average cumulative regret. The experimental setup is as follows: Let the feature dimension $d = 10$. For the chosen arm $\bm{\phi}_t \in \mathcal{D}_t$, reward is $R_t = \langle \bm{\phi}_t, \btheta^{*} \rangle + \varepsilon_t$, where $\btheta^* = \mathbf{1}_d / \sqrt{d} \in \mathbb{R}^d$ so that $\|\btheta^*\|_2 = 1$. $\varepsilon_t$ is first sampled from a Student's $t$-distribution with degree of freedom $\text{df}=2$, then is multiplied by a scaling factor $\alpha$ such that the central moments of $\varepsilon_t$ in each rounds are different, where $\log_{10}(\alpha) \sim \mathrm{Unif}(0,2)$. Note the variance of $\varepsilon_t$ does not exist and we choose $\epsilon=0.99$. Normalization is made to ensure $L=B=1$. As shown in Figure~\ref{fig:heavy-tailed-comparison}, results demonstrate the effectiveness of the proposed algorithm, which further corroborates our theoretical findings.

\section{Proofs for Section~\ref{sec:huber}}\label{proof:huber}

\subsection{Properties of Huber Loss}\label{sec:huber-property}

The following properties of Huber loss are important in the proofs.

\begin{property}\label{prop:huber}
Denote $\Psi_{\tau}(x)$ as the derivative of Huber loss, i.e., $\Psi_{\tau}(x) := \ell'_{\tau}(x)$, where $\Psi_{\tau}(x)$ is Huber loss defined in Definition~\ref{def:huber-loss}. Then the followings are true:
\begin{enumerate}[label=(\arabic*)]
    \item $|\Psi_\tau(x)| = \min \{|x|, \tau\}$,\label{prop:huber-1}
    \item $\Psi_{\tau}(x) = \tau \Psi_{1}(\frac{x}{\tau})$,\label{prop:huber-2}
    \item $-\log(1 - x + |x|^{1+\epsilon}) \le \Psi_1(x) \le \log(1 + x + |x|^{1+\epsilon})$\text{ for any }$\epsilon\in(0,1],$\label{prop:huber-3}
    \item $\Psi'_\tau(x) = \ell_\tau''(x) = \ind\{|x|\le \tau\}$.\label{prop:huber-4}
\end{enumerate}
\end{property}

Here, \ref{prop:huber-1} gives an upper bound of the derivative of Huber loss. \ref{prop:huber-2} demonstrates the homogeneous property. \ref{prop:huber-3} shows the similarity between the derivative of Huber loss and influence function in \citet{catoni2012challenging}, thereby motivating us to study the moment generating function for characterizing the noise distributions (See details in the proof of Lemma~\ref{lem:I1} in Appendix~\ref{proof:I1}). And \ref{prop:huber-4} is to characterize the second derivative of Huber loss.

\subsection{Proof of Theorem~\ref{thm:heavy}}\label{proof:heavy}

\paragraph{Parameters for Adaptive Huber Regression.}

First, we set the parameters in Algorithm~\ref{algo:huber} as follows:

\[
c_0 = \frac{1}{\sqrt{23\log \frac{2T^2}{\delta}}},
\quad c_1 = \frac{(\log 3T)^{\frac{1-\epsilon}{1+\epsilon}}}{48\rbr{\log \frac{2T^2}{\delta}}^{\frac{2}{1+\epsilon}}},
\quad \tau_0 = \frac{\sqrt{2\kappa} b (\log 3T)^{\frac{1-\epsilon}{2(1+\epsilon)}}}{\rbr{\log \frac{2T^2}{\delta}}^{\frac{1}{1+\epsilon}}}.
\]

Then, we give the proof of Theorem~\ref{thm:heavy}.

\begin{proof}[Proof of Theorem~\ref{thm:heavy}]

Denote $z_t(\btheta) := \frac{y_t - \dotp{\bphi_t}{\btheta}}{\sigma_t}$. By the definition of $b$ and $\sigma_t$ in Line~\ref{eq:b} and \ref{eq:sigma_t} of Algorithm~\ref{algo:huber}, respectively,
$$
\EE[{|z_t(\btheta^*)|^{1+\epsilon}|\cF_{t-1}}] = \EE\sbr{\abr{\frac{\varepsilon_t}{\sigma_t}}^{1+\epsilon} \bigg| \cF_{t-1}} \le b^{1+\epsilon}.
$$

Notice the gradient of $L_t(\btheta)$ is give by
\begin{equation}\label{eq:gradient}
    \nabla L_t(\btheta) = \lambda \btheta - \sum_{s=1}^t \frac{\bphi_s}{\sigma_s} \Psi_{\tau_s}(z_s(\btheta)).
\end{equation}
Recall $\btheta_t$ is the solution of Line~\ref{eq:theta_t} in Algorithm~\ref{algo:huber}, which is a constrained convex optimization problem. Thus it holds that $\dotp{\nabla L_{t}(\btheta_t)}{\btheta_t- \btheta} \le 0,\,\, \forall \btheta \in \cB_d(B)$. Since $\|\btheta^*\| \le B$, we have
\begin{equation}\label{eq:first-order}
    \dotp{\nabla L_{t}(\btheta_t)}{\btheta_t- \btheta^*} \le 0.
\end{equation}

By the mean value theorem for vector-valued functions, we have
\begin{equation}\label{eq:mean-value-thm}
\nabla L_{t}(\btheta_{t}) - \nabla L_{t}(\btheta^*)
= \int_0^1 \nabla^2  L_{t}((1-\eta) \btheta^* + \eta \btheta_{t}) \mathrm{d} \eta \cdot (\btheta_t - \btheta^*).
\end{equation}

In Lemma~\ref{lem:hessian}, we prove that $\nabla^2 L_{t}(\btheta)$ approximates $\bH_t$ well as long as $\btheta \in \cB_d(B)$.

\begin{lemma}\label{lem:hessian}
Assume $\EE[z_t(\btheta^*)|\cF_{t-1}] = 0$ and $\EE[|z_t(\btheta^*)|^{1+\epsilon} | \cF_{t-1}] \le b^{1+\epsilon}$, set
$$
\tau_0 \ge \frac{\sqrt{2\kappa} b (\log 3T)^{\frac{1-\epsilon}{2(1+\epsilon)}}}{\rbr{\log \frac{2T^2}{\delta}}^{\frac{1}{1+\epsilon}}},
\quad c_0 \le \frac{1}{\sqrt{23\log \frac{2T^2}{\delta}}},
\quad c_1 \le \frac{(\log 3T)^{\frac{1-\epsilon}{1+\epsilon}}}{48\rbr{\log \frac{2T^2}{\delta}}^{\frac{2}{1+\epsilon}}}.
$$
Then with probability at least $1-\delta$, for all $t \le T$ and $\btheta\in \cB_d(B)$, we have
$$
\frac{1}{3}\bH_t \preceq \nabla^2 L_t(\btheta) \preceq \bH_t.
$$
\end{lemma}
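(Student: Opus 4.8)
The plan is to compute the Hessian in closed form and then reduce the two-sided operator bound to controlling the curvature that the Huber clipping throws away. Writing $z_s(\btheta) = (y_s - \dotp{\bphi_s}{\btheta})/\sigma_s$ and using $\ell_\tau''(x) = \Psi_\tau'(x) = \ind\{|x|\le\tau\}$ from Property~\ref{prop:huber}\ref{prop:huber-4}, differentiating $L_t$ twice gives
\[
\nabla^2 L_t(\btheta) = \lambda\bI + \sum_{s=1}^t \frac{\bphi_s\bphi_s^\top}{\sigma_s^2}\ind\cbr{|z_s(\btheta)|\le\tau_s} = \bH_t - \bB_t(\btheta),\quad \bB_t(\btheta) := \sum_{s=1}^t \frac{\bphi_s\bphi_s^\top}{\sigma_s^2}\ind\cbr{|z_s(\btheta)|>\tau_s}.
\]
Because each $\sigma_s^{-2}\bphi_s\bphi_s^\top\succeq 0$ and the indicators lie in $[0,1]$, the upper bound $\nabla^2 L_t(\btheta)\preceq\bH_t$ is immediate. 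For the lower bound it suffices to prove $\bB_t(\btheta)\preceq\tfrac23\bH_t$, since then $\nabla^2 L_t(\btheta) = \bH_t - \bB_t(\btheta)\succeq\tfrac13\bH_t$.

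Next I would split the clipping event. On $\{|z_s(\btheta)|>\tau_s\}$ we have $|\dotp{\bphi_s}{\btheta-\btheta^*}| + |\varepsilon_s| \ge \sigma_s|z_s(\btheta)| > \tau_s\sigma_s$, so at least one of the two summands exceeds $\tau_s\sigma_s/2$; as all matrices are PSD,
\[
\bB_t(\btheta)\preceq\sum_{s=1}^t \frac{\bphi_s\bphi_s^\top}{\sigma_s^2}\ind\cbr{|\dotp{\bphi_s}{\btheta-\btheta^*}|>\tfrac{\tau_s\sigma_s}{2}} + \sum_{s=1}^t \frac{\bphi_s\bphi_s^\top}{\sigma_s^2}\ind\cbr{|\varepsilon_s|>\tfrac{\tau_s\sigma_s}{2}}.
\]
The crucial observation is that the first (bias) matrix is the zero matrix for \emph{every} $\btheta\in\cB_d(B)$ simultaneously. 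Indeed, writing $w_s=\|\bphi_s/\sigma_s\|_{\bH_{s-1}^{-1}}=\|\bphi_s\|_{\bH_{s-1}^{-1}}/\sigma_s$, the form of $\tau_s$ and the fourth term in $\sigma_s$ give $\tau_s\sigma_s\ge\tau_0 s^{\frac{1-\epsilon}{2(1+\epsilon)}}\sigma_s^2/\|\bphi_s\|_{\bH_{s-1}^{-1}}\ge(\tau_0/(\sqrt{c_1}\sqrt{2\kappa}b))LB=\sqrt{48}\,LB$ under the stated choices of $\tau_0,c_1$, so $\tau_s\sigma_s/2\ge\sqrt{12}\,LB>2LB\ge|\dotp{\bphi_s}{\btheta-\btheta^*}|$. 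This is precisely the role of the $c_1$-term: it annihilates the bias contribution deterministically, thereby eliminating all $\btheta$-dependence and reducing the lemma to bounding the single random matrix $\bB_t := \sum_{s\le t}\sigma_s^{-2}\bphi_s\bphi_s^\top\ind\{|\varepsilon_s|>\tau_s\sigma_s/2\}$.

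I would then treat $\bB_t$ as a matrix martingale. Markov's inequality with the conditional moment bound $\EE[|\varepsilon_s|^{1+\epsilon}\mid\cF_{s-1}]\le(b\sigma_s)^{1+\epsilon}$ yields $p_s:=\PP(|\varepsilon_s|>\tau_s\sigma_s/2\mid\cF_{s-1})\le(2b/\tau_s)^{1+\epsilon}$. Since the third term in $\sigma_s$ forces $w_s\le c_0$ and hence $\tau_s\ge\tau_0/w_s\ge\tau_0/c_0$, the chosen $c_0,\tau_0$ make $\max_s p_s$ a small constant, so the compensator satisfies $\sum_{s\le t}p_s\sigma_s^{-2}\bphi_s\bphi_s^\top\preceq(\max_s p_s)\sum_{s\le t}\sigma_s^{-2}\bphi_s\bphi_s^\top\preceq\tfrac13\bH_t$. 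For the martingale part $\sum_{s\le t}(\ind\{|\varepsilon_s|>\tau_s\sigma_s/2\}-p_s)\sigma_s^{-2}\bphi_s\bphi_s^\top$, I would pass to $\bH_t$-normalized coordinates $\bH_t^{-1/2}(\cdot)\bH_t^{-1/2}$, in which the total curvature is $\preceq\bI$ and each increment has operator norm $\sigma_s^{-2}\|\bphi_s\|_{\bH_t^{-1}}^2\le\|\bphi_s/\sigma_s\|_{\bH_{s-1}^{-1}}^2=w_s^2\le c_0^2$; a matrix Freedman inequality then bounds the deviation by $\tilde{O}(c_0\sqrt{\log(d/\delta)}+c_0^2\log(d/\delta))\le\tfrac13$, exactly because $c_0\sim 1/\sqrt{\log(2T^2/\delta)}$ is small. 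Summing the two halves gives $\bB_t\preceq\tfrac23\bH_t$ with probability at least $1-\delta$, uniformly over $t$.

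The step I expect to be the main obstacle is this last matrix concentration with the \emph{correct absolute constant}. A naive scalarization $\sigma_s^{-2}(\bphi_s^\top\bx)^2\le w_s^2\,\bx^\top\bH_t\bx$ followed by the elliptical potential bound only yields $\bB_t\preceq O(\log(T/\delta))\bH_t$, which is far too weak for the target $\tfrac23$. The resolution is to stay at the matrix level and exploit the two separately-tuned mechanisms built into $\sigma_s$: the $c_1$-term makes the bias term vanish (so that no union bound over $\btheta\in\cB_d(B)$ is needed, which would otherwise be fatal to the constant), and the $c_0$-term makes the $\bH_t$-normalized increments uniformly $O(c_0^2)$, so that matrix Freedman produces a constant rather than logarithmic deviation. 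The remaining work — enforcing the bound simultaneously for all $t\le T$ through a stopping-time or union argument that feeds only into the already-present logarithmic factors, and tracking the numerical constants so the compensator and deviation sum to at most $\tfrac23$ — is routine bookkeeping.
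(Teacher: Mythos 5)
Your Hessian computation, the split of the clipping event into a noise part and a bias part, and the upper bound $\nabla^2 L_t(\btheta)\preceq\bH_t$ all match the paper. Your handling of the bias part is correct and in fact slightly stronger than the paper's: under the stated choices one has $\tau_s\sigma_s \ge \tau_0\sigma_s^2/\|\bphi_s\|_{\bH_{s-1}^{-1}} \ge \tau_0 LB/(\sqrt{c_1}\sqrt{2\kappa}\,b) \ge \sqrt{48}\,LB > 4LB$, so the indicator $\ind\cbr{|\dotp{\bphi_s}{\btheta-\btheta^*}|/\sigma_s > \tau_s/2}$ vanishes identically on $\cB_d(B)$, whereas the paper bounds this indicator by $4\dotp{\bphi_s}{\btheta-\btheta^*}^2/(\tau_s\sigma_s)^2$ and only shows the bias term is at most a $\tfrac{1}{3}$-fraction of $\sum_s \sigma_s^{-2}\dotp{\bphi_s}{\bu}^2$. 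Either way, no covering of $\cB_d(B)$ is needed, as you observe.

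The genuine gap is in your treatment of the noise matrix $\bB_t = \sum_{s\le t}\sigma_s^{-2}\bphi_s\bphi_s^\top\ind\{|\varepsilon_s|>\tau_s\sigma_s/2\}$. Your matrix-Freedman step is invalid as stated: the conjugating matrix $\bH_t^{-1/2}$ depends on $\{\bphi_s,\sigma_s\}_{s\le t}$ and is therefore not $\cF_{s-1}$-measurable for $s\le t$, so the conjugated increments $\bH_t^{-1/2}(\ind_s-p_s)\sigma_s^{-2}\bphi_s\bphi_s^\top\bH_t^{-1/2}$ are \emph{not} matrix martingale differences (the random normalization is correlated with the indicators), and matrix Freedman does not apply; applying it instead to the unnormalized sum and comparing operator norm against $\lambda\bI$ is far too lossy. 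Your compensator step is also shaky: it requires $\max_s p_s\le \tfrac13$, but $p_s\le(2bc_0/\tau_0)^{1+\epsilon}$ only gives $2bc_0/\tau_0 \le \tfrac{2}{\sqrt{46\kappa}}\big(\log\tfrac{2T^2}{\delta}/\log 3T\big)^{\frac{1-\epsilon}{2(1+\epsilon)}}$, which need not be small when $\delta$ is tiny relative to $T$. Moreover, your reason for avoiding scalarization is mistaken: scalarization is not lossy \emph{provided you bound the number of clipped indices probabilistically rather than by the elliptical potential}. This is exactly what the paper does: since each clipped term satisfies $\sigma_s^{-2}\bphi_s\bphi_s^\top \preceq \tfrac{w_s^2}{1+w_s^2}\bH_t \preceq c_0^2\bH_t$ (Sherman--Morrison plus $\bH_s\preceq\bH_t$), one has $\bB_t \preceq c_0^2 N_t\,\bH_t$ where $N_t$ is the clipped count; scalar Freedman applied to $N_t$, with compensator $\sum_s p_s \le (2b/\tau_0)^{1+\epsilon}\sum_s\big(w_s/\sqrt{1+w_s^2}\big)^{1+\epsilon}s^{-\frac{1-\epsilon}{2}} \le 2^{1+\epsilon}\log\tfrac{2T^2}{\delta}$ via H\"older and the elliptical potential, gives $N_t\le\tfrac{23}{3}\log\tfrac{2T^2}{\delta}$, so $c_0^2 N_t\le\tfrac13$ by the choice of $c_0$ — dimension-free, with no $\log d$ penalty and the correct absolute constant. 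So the natural repair of your matrix step collapses precisely into the scalar-count argument you rejected, which is the paper's proof.
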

\begin{proof}
See Appendix~\ref{proof:hessian} for a detailed proof.
\end{proof}

We set $\tau_0, c_0, c_1$ to follow Lemma~\ref{lem:hessian}. Lemma~\ref{lem:hessian} implies that for all $t \le T$ and $\eta \in [0,1]$, we have
$$
\nabla^2 L_t((1-\eta) \btheta^* + \eta \btheta_{t}) \succeq \frac{1}{3}\bH_t.
$$
Thus, multiplying $\btheta_t - \btheta^*$ in both sides of \eqref{eq:mean-value-thm} yields
\begin{equation}
\begin{aligned}\label{eq:hessian-approx}
    &\dotp{\btheta_t - \btheta^*}{\nabla L_{t}(\btheta_{t}) - \nabla L_{t}(\btheta^*)}\\
    ={}& \dotp{\btheta_t - \btheta^*}{\int_0^1 \nabla^2  L_{t}((1-\eta) \btheta^* + \eta \btheta_{t}) \mathrm{d} \eta \cdot (\btheta_t - \btheta^*)}\\
    \ge{}& \dotp{\btheta_t - \btheta^*}{\frac{1}{3}\bH_t \cdot (\btheta_t - \btheta^*)}\\
    ={}& \frac{1}{3} \|\btheta_t - \btheta^*\|_{\bH_t}^2.
\end{aligned}
\end{equation}
On the other hand, by \eqref{eq:first-order}, we have
\[
\dotp{\btheta_t - \btheta^*}{\nabla L_{t}(\btheta_{t}) - \nabla L_{t}(\btheta^*)}
\le \dotp{\btheta_t - \btheta^*}{- \nabla L_{t}(\btheta^*)} 
\le \|\btheta_t - \btheta^*\|_{\bH_t} \cdot \|\nabla L_{t}(\btheta^*)\|_{\bH_t^{-1}}.
\]
That implies
\begin{equation}\label{eq:surrogate}
    \|\btheta_t - \btheta^*\|_{\bH_t} \le 3 \|\nabla L_t(\btheta^*)\|_{\bH_t^{-1}}.
\end{equation}

Next we use Lemma~\ref{lem:grad} to give a high probability upper bound of $\|\nabla L_t(\btheta^*)\|_{\bH_t^{-1}}$.

\begin{lemma}\label{lem:grad}
Assume $\EE[z_t(\btheta^*)|\cF_{t-1}] = 0$, $\EE[|z_t(\btheta^*)|^{1+\epsilon} | \cF_{t-1}] \le b^{1+\epsilon}$ and 
\[
\tau_0 \ge \frac{\sqrt{2\kappa} b (\log 3T)^{\frac{1-\epsilon}{2(1+\epsilon)}}}{\rbr{\log \frac{2T^2}{\delta}}^{\frac{1}{1+\epsilon}}},
\]
then with probability at least $1-2\delta, \forall t \ge 1$, we have
$$
\|\nabla L_t(\btheta^*)\|_{\bH_t^{-1}} \le \sqrt{\lambda}B + \alpha_t,
$$
where
$$
\alpha_t = 4 t^{\frac{1-\epsilon}{2(1+\epsilon)}} \sbr{\frac{(\sqrt{2\kappa} b)^{1+\epsilon} (\log 3t)^{\frac{1-\epsilon}{2}}}{\tau_0^\epsilon} + \tau_0 \log \frac{2t^2}{\delta}}.
$$
\end{lemma}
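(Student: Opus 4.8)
The plan is to bound $\|\nabla L_t(\btheta^*)\|_{\bH_t^{-1}}$ by decomposing the gradient \eqref{eq:gradient} evaluated at the truth. Since $z_s(\btheta^*) = \varepsilon_s/\sigma_s$, we have $\nabla L_t(\btheta^*) = \lambda\btheta^* - \sum_{s=1}^t (\bphi_s/\sigma_s)\,\Psi_{\tau_s}(z_s(\btheta^*))$. By the triangle inequality for $\|\cdot\|_{\bH_t^{-1}}$ and the fact that $\bH_t \succeq \lambda\bI$ forces $\|\lambda\btheta^*\|_{\bH_t^{-1}} \le \sqrt{\lambda}\,\|\btheta^*\| \le \sqrt{\lambda}B$, it remains to control the stochastic term $S_t := \sum_{s=1}^t (\bphi_s/\sigma_s)\,\Psi_{\tau_s}(z_s(\btheta^*))$. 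The crux is that $\Psi_{\tau_s}$ is bounded but \emph{biased}, so I would split each summand into its conditional mean and a martingale difference, writing $\Psi_{\tau_s}(z_s) = \EE[\Psi_{\tau_s}(z_s)\mid\cF_{s-1}] + \xi_s$, and bound $\|S_t\|_{\bH_t^{-1}}$ by the resulting bias and martingale pieces separately. Note that the hypotheses $\EE[z_s(\btheta^*)\mid\cF_{s-1}]=0$ and $\EE[|z_s(\btheta^*)|^{1+\epsilon}\mid\cF_{s-1}]\le b^{1+\epsilon}$ follow from the definition of $\sigma_s$ and the moment conditions.

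For the bias, since $\EE[z_s\mid\cF_{s-1}]=0$, the elementary Huber estimate $|\Psi_\tau(x)-x|\le |x|^{1+\epsilon}/\tau^\epsilon$ (a consequence of Property~\ref{prop:huber-1}) yields $|\EE[\Psi_{\tau_s}(z_s)\mid\cF_{s-1}]| \le b^{1+\epsilon}/\tau_s^\epsilon$. Using $\|\bphi_s/\sigma_s\|_{\bH_t^{-1}} \le \|\bphi_s/\sigma_s\|_{\bH_{s-1}^{-1}} = w_s$ together with the explicit choice $\tau_s = \tau_0\frac{\sqrt{1+w_s^2}}{w_s}s^{\frac{1-\epsilon}{2(1+\epsilon)}}$, the bias contribution to $\|S_t\|_{\bH_t^{-1}}$ is at most $\frac{b^{1+\epsilon}}{\tau_0^\epsilon}\sum_{s\le t}\frac{w_s^{1+\epsilon}}{(1+w_s^2)^{\epsilon/2}}\,s^{-\frac{\epsilon(1-\epsilon)}{2(1+\epsilon)}}$. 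I would then apply Hölder with exponents $\frac{2}{1+\epsilon}$ and $\frac{2}{1-\epsilon}$ (pairing $w_s^{1+\epsilon}$ against the $s$-power), invoke the elliptical potential bound $\sum_s w_s^2 \le 2\kappa$ (legitimate because $c_0\le 1$ forces $w_s\le 1$), and use $\sum_{s\le t}s^{-\epsilon/(1+\epsilon)} = O(t^{1/(1+\epsilon)})$. This produces exactly the first term of $\alpha_t$, of order $t^{\frac{1-\epsilon}{2(1+\epsilon)}}(\sqrt{2\kappa}\,b)^{1+\epsilon}/\tau_0^\epsilon$.

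For the martingale part I would run a Chernoff / exponential-supermartingale argument built on Property~\ref{prop:huber-3}, $\Psi_1(x)\le\log(1+x+|x|^{1+\epsilon})$, together with the homogeneity $\Psi_{\tau_s}(x)=\tau_s\Psi_1(x/\tau_s)$ from Property~\ref{prop:huber-2}. These give the per-step control $\EE[\exp(\pm\Psi_1(z_s/\tau_s))\mid\cF_{s-1}] \le 1 + b^{1+\epsilon}/\tau_s^{1+\epsilon}\le \exp(b^{1+\epsilon}/\tau_s^{1+\epsilon})$, the Catoni-style moment-generating-function bound that plays the role of sub-Gaussianity for only $(1+\epsilon)$-moments. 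Combining this with a method-of-mixtures / covering argument over unit directions converts the scalar MGF control into a self-normalized tail bound on $\|M_t\|_{\bH_t^{-1}}$; because $\tau_s\propto\sqrt{1+w_s^2}/w_s$ is tuned precisely so the importance weights cancel under self-normalization, the surviving scale is $\tau_t\sim\tau_0\,t^{\frac{1-\epsilon}{2(1+\epsilon)}}$, reproducing the second term $t^{\frac{1-\epsilon}{2(1+\epsilon)}}\tau_0\log(2t^2/\delta)$. A union bound over the bias and martingale events then delivers $\|\nabla L_t(\btheta^*)\|_{\bH_t^{-1}} \le \sqrt{\lambda}B + \alpha_t$ with probability at least $1-2\delta$, uniformly in $t\ge 1$ (the latter uniformity handled by a further mixture/stopping argument, so that no extra union over $t$ is needed).

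The hardest part will be the martingale step: the increments $(\bphi_s/\sigma_s)\Psi_{\tau_s}(z_s)$ share their self-normalizing matrix $\bH_t = \lambda\bI + \sum_s\sigma_s^{-2}\bphi_s\bphi_s^\top$ (an Abbasi–Yadkori-type setup), yet the scalar noise $\Psi_{\tau_s}(z_s)$ is neither mean-zero nor of a uniform scale, and is only heavy-tailed. Getting a clean self-normalized bound therefore requires marrying the Catoni MGF inequality with the vector/matrix mixture machinery while keeping track of the time-varying $\tau_s$ and $w_s$; verifying that the $w_s$-dependence genuinely cancels, so that only the benign factor $\tau_0 t^{\frac{1-\epsilon}{2(1+\epsilon)}}\log(2t^2/\delta)$ remains, is the delicate calculation and is presumably where the auxiliary MGF lemma (Lemma~\ref{lem:I1}) does its work.
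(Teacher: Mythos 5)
Your gradient decomposition and the $\sqrt{\lambda}B$ bound match the paper, and your bias estimate is actually sound and slightly cleaner than what the paper needs: the paper never performs a bias/martingale split at all (the conditional mean of $\Psi_{\tau_s}(z_s^*)$ is absorbed directly into the moment-generating-function bound), whereas your Hölder computation with $|\Psi_\tau(x)-x|\le |x|^{1+\epsilon}/\tau^{\epsilon}$ recovers the first term of $\alpha_t$ deterministically. The genuine gap is the martingale step, precisely where you defer to a ``method-of-mixtures / covering argument over unit directions.'' Neither device delivers the claimed bound. A sphere covering costs $\log N \sim d$, so the deviation term becomes $\tau_0 t^{\frac{1-\epsilon}{2(1+\epsilon)}}\big(d+\log(2t^2/\delta)\big)$ rather than $\tau_0 t^{\frac{1-\epsilon}{2(1+\epsilon)}}\log(2t^2/\delta)$; since $\tau_0$ already carries $\sqrt{2\kappa}=\tilde{O}(\sqrt{d})$, this inflates $\beta_t$ from $\tilde{O}(\sqrt{d})$ to $\tilde{O}(d^{3/2})$ and the lemma as stated (hence the final regret) is lost. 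The method of mixtures is blocked more fundamentally: the Catoni-type control $\EE[\exp\{M_s\Psi_1(z_s^*/\tau_s)\}\mid\cF_{s-1}]\le 1+b^{1+\epsilon}/\tau_s^{1+\epsilon}$ is proved via $(1+u)^v\le 1+uv$ for $v\in(0,1]$ and is therefore valid only for coefficients $|M_s|\le 1$; it is not homogeneous in a free scale parameter, so the Laplace integral over a Gaussian prior on directions, which requires an exponential supermartingale for all scalings, cannot be formed. Relatedly, your centered increments $\xi_s$ have conditional variance of order $\tau_s^{1-\epsilon}b^{1+\epsilon}$, and $\tau_s\propto\sqrt{1+w_s^2}/w_s$ blows up as $w_s\to 0$, so a Bernstein-type self-normalized inequality with a uniform variance proxy (such as Lemma~\ref{lem:self-bern}) also fails here.

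The paper's route around this is structural, not a refinement of the mixture machinery: it expands $\|\bd_t\|_{\bH_t^{-1}}^2$ recursively via Sherman--Morrison into a cross term $\sum_{s\le t} \Psi_{\tau_s}(z_s^*)\frac{2}{1+w_s^2}\frac{\bd_{s-1}^\top\bH_{s-1}^{-1}\bphi_s}{\sigma_s}$ plus a quadratic term $\sum_{s\le t}\Psi_{\tau_s}^2(z_s^*)\frac{w_s^2}{1+w_s^2}$. In the cross term the test direction is the single predictable vector $\bd_{s-1}$ itself --- no net over the sphere --- normalized by $\max_{s\le t}\alpha_s$ so that the coefficient $M_s$ obeys $|M_s|\le 1$ on the event $A_{s-1}=\{\|\bd_{s-1}\|_{\bH_{s-1}^{-1}}\le\alpha_{s-1}\}$, and the argument closes with a bootstrap induction showing all the events $A_t$ hold (Lemma~\ref{lem:I1}); the quadratic term is handled by Freedman's inequality (Lemma~\ref{lem:I2}), where the weight $\frac{w_s^2}{1+w_s^2}$ exactly cancels the $(1+w_s^2)/w_s^2$ hidden in $\tau_s^2$ --- the cancellation you correctly anticipated, but it occurs inside this recursion, not under a mixture. (Uniformity in $t$ is then just a union bound over the $2t^2/\delta$ confidence levels, not a stopping argument.) Without the recursion-plus-bootstrap idea, your outline cannot reach the dimension-free factor $\tau_0\log(2t^2/\delta)$ in $\alpha_t$.
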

\begin{proof}
See Appendix~\ref{proof:grad} for a detailed proof.
\end{proof}

Notice the conditions of Lemma~\ref{lem:grad} are satisfied. On the event where Lemma~\ref{lem:hessian} and \ref{lem:grad} hold, whose probability is at least $1-3\delta$, combining \eqref{eq:surrogate} and Lemma~\ref{lem:grad} 
gives
\[
\|\btheta_t - \btheta^*\|_{\bH_t} \le 3\sqrt{\lambda}B + 12 t^{\frac{1-\epsilon}{2(1+\epsilon)}} \sbr{\frac{(\sqrt{2\kappa} b)^{1+\epsilon} (\log 3t)^{\frac{1-\epsilon}{2}}}{\tau_0^\epsilon} + \tau_0 \log \frac{2t^2}{\delta}}.
\]
Choosing $\tau_0 = \frac{\sqrt{2\kappa} b (\log 3T)^{\frac{1-\epsilon}{2(1+\epsilon)}}}{\rbr{\log \frac{2T^2}{\delta}}^{\frac{1}{1+\epsilon}}}$ gives the final result.
\end{proof}

\subsection{Perturbation Analysis of Adaptive Huber Regression}\label{sec:huber-perturbation}

\begin{lemma}
\label{lem:perturbation}
    Consider the setting in Definition~\ref{def:regression}. Let $L_t(\cdot), \btheta_t$ be defined in Theorem~\ref{thm:heavy}.
    Define $\hat{L}_t(\btheta) := \frac{\lambda}{2}\|\btheta\|^2 + \sum_{s=1}^{t} \ell_{\tau_s}\rbr{\frac{\hat{y}_s - \dotp{\bphi_s}{\btheta}}{\sigma_s}}$, where the parameters $\tau_s,\sigma_s$ are chosen according to Theorem~\ref{thm:heavy}. $\hat{L}_t(\cdot)$ is the same as $L_t(\cdot)$ except for substituting $y_s$ with $\hat{y}_s \in \cF_s$.
    And let $\hat{\btheta}_t := \argmin_{\btheta \in \cB_d(B)} \hat{L}_t(\btheta)$.
    Suppose $\abr{\hat{y}_s - y_s} \le \hat{\beta}_s \nbr{\bphi_s}_{\bH_{s}^{-1}}$ for $s\le t$. Then, under the event where Theorem~\ref{thm:heavy} holds, we have
    \[
    \|\hat{\btheta}_t - \btheta_t\|_{\bH_t} \le 6 \kappa \max_{s\le t}\hat{\beta}_s.
    \]
\end{lemma}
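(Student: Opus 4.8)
The plan is to run a strongly-convex perturbation argument, exploiting that $\btheta_t$ and $\hat{\btheta}_t$ both minimize functions whose Hessians are pinched between $\tfrac13\bH_t$ and $\bH_t$ by Lemma~\ref{lem:hessian}, and that the two losses differ only through a perturbation of each response $y_s$. First I would write the constrained first-order optimality conditions for the two minimizers over $\cB_d(B)$: testing $\nabla L_t(\btheta_t)$ against $\hat{\btheta}_t$ gives $\dotp{\nabla L_t(\btheta_t)}{\btheta_t-\hat{\btheta}_t}\le 0$, and testing $\nabla\hat{L}_t(\hat{\btheta}_t)$ against $\btheta_t$ gives $\dotp{\nabla\hat{L}_t(\hat{\btheta}_t)}{\btheta_t-\hat{\btheta}_t}\ge 0$; subtracting yields $\dotp{\nabla L_t(\btheta_t) - \nabla\hat{L}_t(\hat{\btheta}_t)}{\btheta_t - \hat{\btheta}_t} \le 0$. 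The crux is then to split $\nabla L_t(\btheta_t) - \nabla\hat{L}_t(\hat{\btheta}_t)$ into a \emph{curvature} part $\nabla L_t(\btheta_t) - \nabla L_t(\hat{\btheta}_t)$ and a \emph{perturbation} part $\nabla L_t(\hat{\btheta}_t) - \nabla\hat{L}_t(\hat{\btheta}_t)$ evaluated at the single point $\hat{\btheta}_t$.

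For the curvature part I would reuse the mean value theorem identity \eqref{eq:mean-value-thm} together with the lower bound $\nabla^2 L_t(\cdot)\succeq\tfrac13\bH_t$ from Lemma~\ref{lem:hessian} (valid since every convex combination of $\btheta_t,\hat{\btheta}_t$ lies in $\cB_d(B)$), exactly as in \eqref{eq:hessian-approx}, to obtain $\dotp{\nabla L_t(\btheta_t) - \nabla L_t(\hat{\btheta}_t)}{\btheta_t-\hat{\btheta}_t} \ge \tfrac13\|\btheta_t - \hat{\btheta}_t\|_{\bH_t}^2$. For the perturbation part, writing $\hat{z}_s(\btheta) = (\hat{y}_s - \dotp{\bphi_s}{\btheta})/\sigma_s$, the gradient formula \eqref{eq:gradient} gives $\nabla L_t(\hat{\btheta}_t) - \nabla\hat{L}_t(\hat{\btheta}_t) = -\sum_{s=1}^t \tfrac{\bphi_s}{\sigma_s}\delta_s$ with $\delta_s := \Psi_{\tau_s}(z_s(\hat{\btheta}_t)) - \Psi_{\tau_s}(\hat{z}_s(\hat{\btheta}_t))$. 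Since $\Psi_{\tau_s}$ is $1$-Lipschitz (its derivative is $\ind\{|\cdot|\le\tau_s\}$ by Property~\ref{prop:huber}), I get $|\delta_s| \le |z_s(\hat{\btheta}_t) - \hat{z}_s(\hat{\btheta}_t)| = |y_s - \hat{y}_s|/\sigma_s \le \hat{\beta}_s\|\bphi_s\|_{\bH_s^{-1}}/\sigma_s$ using the hypothesis. Combining the two parts with Cauchy--Schwarz in the $\bH_t$-geometry reduces everything to $\tfrac13\|\btheta_t - \hat{\btheta}_t\|_{\bH_t} \le \bigl\|\sum_{s=1}^t \tfrac{\bphi_s}{\sigma_s}\delta_s\bigr\|_{\bH_t^{-1}}$.

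It then remains to bound this self-normalized sum. By the triangle inequality and $\bH_t \succeq \bH_s$ (so $\|\bphi_s\|_{\bH_t^{-1}} \le \|\bphi_s\|_{\bH_s^{-1}}$), followed by the bound on $|\delta_s|$, I would obtain $\bigl\|\sum_s \tfrac{\bphi_s}{\sigma_s}\delta_s\bigr\|_{\bH_t^{-1}} \le \sum_{s=1}^t \hat{\beta}_s \|\bphi_s/\sigma_s\|_{\bH_s^{-1}}^2 \le \max_{s\le t}\hat{\beta}_s \sum_{s=1}^t \|\bphi_s/\sigma_s\|_{\bH_{s-1}^{-1}}^2$, where the last step uses $\bH_s^{-1}\preceq\bH_{s-1}^{-1}$. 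Since $c_0\le 1$ forces $w_s = \|\bphi_s/\sigma_s\|_{\bH_{s-1}^{-1}}\le 1$, the elliptical potential lemma \citep{abbasi2011improved} gives $\sum_{s=1}^t \|\bphi_s/\sigma_s\|_{\bH_{s-1}^{-1}}^2 \le 2\kappa$. Chaining these bounds yields $\tfrac13\|\btheta_t - \hat{\btheta}_t\|_{\bH_t} \le 2\kappa\max_{s\le t}\hat{\beta}_s$, i.e.\ the claimed $\|\hat{\btheta}_t - \btheta_t\|_{\bH_t} \le 6\kappa\max_{s\le t}\hat{\beta}_s$. I expect no genuine obstacle once Lemma~\ref{lem:hessian} supplies strong convexity; the whole argument is mechanical. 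The only points requiring care are isolating the perturbation at the single point $\hat{\btheta}_t$ so that strong convexity is invoked solely for $L_t$ (whose Hessian is controlled, unlike a mixed difference of $L_t$ and $\hat{L}_t$), and the chain of monotone norm transfers $\bH_t^{-1}\preceq\bH_s^{-1}\preceq\bH_{s-1}^{-1}$ that lands the perturbation sum precisely on the elliptical-potential quantity.
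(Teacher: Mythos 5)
Your proposal is correct and follows essentially the same route as the paper's proof: identical first-order optimality conditions, the same split into a curvature term handled by Lemma~\ref{lem:hessian} via \eqref{eq:mean-value-thm} and a perturbation term isolated at $\hat{\btheta}_t$, the same Lipschitz bound on $\Psi_{\tau_s}$, and the same reduction to the elliptical potential via Lemma~\ref{lem:w-sum}. The only (immaterial) difference is that you bound $\|\bphi_s/\sigma_s\|_{\bH_s^{-1}}^2 \le w_s^2$ by matrix monotonicity where the paper computes it exactly as $w_s^2/(1+w_s^2)$ via Sherman--Morrison; both land on the bound $2\kappa$.
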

\begin{proof}
Notice $\btheta_t$ and $\hbtheta_t$ are solutions of convex optimization problems, we have
\begin{equation}\label{eq:perturbation-convex}
    \dotp{\nabla L_t(\btheta_t)}{\btheta_t - \hbtheta_t} \le 0,\quad
    \dotp{\nabla \hat{L}_t(\hbtheta_t)}{\hbtheta_t - \btheta_t} \le 0.
\end{equation}
With a similar argument as for \eqref{eq:hessian-approx} in the proof of Theorem~\ref{thm:heavy} in Appendix~\ref{proof:heavy}, on the event where Theorem~\ref{thm:heavy} holds, we have
$$
\dotp{\btheta_t - \hbtheta_t}{\nabla L_t(\btheta) - \nabla L_t(\hbtheta_t)} \ge \frac{1}{3} \|\btheta_t - \hbtheta_t\|_{\bH_t}^2.
$$
Combined with \eqref{eq:perturbation-convex}, it holds that
$$
\|\btheta_t - \hbtheta_t\|_{\bH_t}^2 \le 3 \dotp{\btheta_t - \hbtheta_t}{\nabla \hat{L}_t(\hbtheta_t) - \nabla L_t(\hbtheta_t)} \le 3 \|\btheta_t - \hbtheta_t\|_{\bH_t} \|\nabla \hat{L}_t(\hbtheta_t) - \nabla L_t(\hbtheta_t)\|_{\bH_t^{-1}}.
$$
Thus
\begin{align*}
    \|\btheta_t - \hbtheta_t\|_{\bH_t} &\le 3 \|\nabla \hat{L}_t(\hbtheta_t) - \nabla L_t(\hbtheta_t)\|_{\bH_t^{-1}}\\
    &= 3 \nbr{\sum_{s=1}^t \frac{\bphi_s}{\sigma_s} \sbr{\Psi_{\tau_s}\rbr{\frac{\hat{y}_s - \dotp{\phi_s}{\hbtheta_t}}{\sigma_s}} - \Psi_{\tau_s}\rbr{\frac{y_s - \dotp{\phi_s}{\hbtheta_t}}{\sigma_s}}}}_{\bH_t^{-1}}\\
    &\overset{(a)}{\le} 3 \nbr{\sum_{s=1}^t \frac{\bphi_s}{\sigma_s} \abr{\frac{\hat{y}_s - y_s}{\sigma_s}}}_{\bH_t^{-1}}\\
    &\overset{(b)}{\le} 3 \cdot (\max_{s\le t}\hat{\beta}_s) \cdot \nbr{\sum_{s=1}^t \frac{\bphi_s}{\sigma_s} \cdot \nbr{\frac{\bphi_s}{\sigma_s}}_{\bH_{s}^{-1}}}_{\bH_t^{-1}}\\
    &\le 3 \cdot (\max_{s\le t}\hat{\beta}_s) \cdot \sum_{s=1}^t \nbr{\frac{\bphi_s}{\sigma_s}}_{\bH_t^{-1}} \cdot \nbr{\frac{\bphi_s}{\sigma_s}}_{\bH_{s}^{-1}}
\end{align*}
where $(a)$ is due to \ref{prop:huber-1} of Property~\ref{prop:huber} and $(b)$ uses the condition that $|\hat{y}_s - y_s| \le \hat{\beta}_s \nbr{\bphi_s}_{\bH_{s}^{-1}}$.
Notice $\bH_t^{-1} \preceq \bH_s^{-1}$ for $s\le t$, we have $\nbr{\frac{\bphi_s}{\sigma_s}}_{\bH_t^{-1}} \le \nbr{\frac{\bphi_s}{\sigma_s}}_{\bH_s^{-1}}$, which further implies
\begin{equation}\label{eq:sherman-morrison}
    \nbr{\frac{\bphi_s}{\sigma_s}}_{\bH_s^{-1}}^2
    = \frac{\bphi_s^\top}{\sigma_s} \rbr{\bH_{s-1}^{-1}  - \frac{\bH_{s-1}^{-1} \frac{\bphi_s}{\sigma_s} \frac{\bphi_s^\top}{\sigma_s}\bH_{s-1}^{-1}}{1+\frac{\bphi_s^\top}{\sigma_s} \bH_{s-1}^{-1}\frac{\bphi_s^\top}{\sigma_s}}}\frac{\bphi_s}{\sigma_s}
    = w_s^2 - \frac{w_s^4}{1+w_s^2}
    = \frac{w_s^2}{1+w_s^2}
\end{equation}
due to Sherman-Morrison formula with $w_t = \|\bphi_t / \sigma_t\|_{\bH_{t-1}^{-1}}$.
Then It follows that
$$
\sum_{s=1}^t \nbr{\frac{\bphi_s}{\sigma_s}}_{\bH_t^{-1}} \cdot \nbr{\frac{\bphi_s}{\sigma_s}}_{\bH_{s}^{-1}}
\le \sum_{s=1}^t \frac{w_s^2}{1+w_s^2}
\le \sum_{s=1}^t \min\cbr{1, w_s^2} \le 2 \kappa,
$$
where the last inequality holds due to Lemma~\ref{lem:w-sum}.
Finally, we have 
$$
\|\btheta_t - \hbtheta_t\|_{\bH_t} \le 6 \kappa \max_{s\le t}\hat{\beta}_s.
$$
\end{proof}

\subsection{Proof of Lemma~\ref{lem:hessian}}\label{proof:hessian}

In the following proof, we denote $z_t^* := z_t(\btheta^*) = \frac{y_t - \dotp{\bphi_t}{\btheta^*}}{\sigma_t} = \frac{\varepsilon_t}{\sigma_t}$ for short.

\begin{proof}[Proof of Lemma~\ref{lem:hessian}]
Using \ref{prop:huber-4} of Property~\ref{prop:huber}, the Hessian matrix of $L_t(\btheta)$ is given by
\begin{equation}\label{eq:hessian}
\nabla^2 L_t(\btheta)
= \lambda \bI + \sum_{s=1}^t \frac{1}{\sigma_s^2} \bphi_s \bphi_s^\top \ind\{|z_s(\btheta)| \le \tau_s\}
= \bH_t - \sum_{s=1}^t \frac{1}{\sigma_s^2} \bphi_s \bphi_s^\top \ind\{|z_s(\btheta)| > \tau_s\}.
\end{equation}
Thus
$$
\nabla^2 L_t(\btheta) \preceq \bH_t
$$
holds trivially. And we only need to prove
$$
\nabla^2 L_t(\btheta) \succeq \frac{1}{3} \bH_t.
$$

We can decompose $z_s(\btheta)$ as
$$
z_s(\btheta) = \frac{y_s - \dotp{\bphi_s}{\btheta}}{\sigma_s}
= \frac{\varepsilon_s + \dotp{\bphi_s}{\btheta^* - \btheta}}{\sigma_s}
= z_s^* + \frac{\dotp{\bphi_s}{\btheta^* - \btheta}}{\sigma_s}.
$$
Then notice
$$
\ind\{|z_s(\btheta)| > \tau_s\} \le \ind \cbr{|z_s^*| > \frac{\tau_s}{2}} + \ind \cbr{\abr{\frac{\dotp{\bphi_s}{\btheta - \btheta^*}}{\sigma_s}} > \frac{\tau_s}{2}}.
$$
For all $\bu \in \cB_d(1)$, we have
\begin{align*}
    & \bu^\top \nabla^2 L_t(\btheta) \bu = \bu^\top \bH_t \bu - \sum_{s=1}^t \frac{\dotp{\bphi_s}{\bu}^2}{\sigma_s^2} \ind\{|z_s(\btheta)| > \tau_s\}\\
    \ge{}& \bu^\top \bH_t \bu - \underbrace{\sum_{s=1}^t \frac{\dotp{\bphi_s}{\bu}^2}{\sigma_s^2} \ind \cbr{|z_s^*| > \frac{\tau_s}{2}}}_{\text{(i)}} - \underbrace{\sum_{s=1}^t \frac{\dotp{\bphi_s}{\bu}^2}{\sigma_s^2} \ind \cbr{\abr{\frac{\dotp{\bphi_s}{\btheta - \btheta^*}}{\sigma_s}} > \frac{\tau_s}{2}}}_{\text{(ii)}}.
\end{align*}

Next we establish upper bounds for term (i) and (ii) respectively.

\paragraph{Term (i).} 

Notice
\begin{equation}\label{eq:lem1-bound-i-transform}
\begin{aligned}
    \sum_{s=1}^t \frac{\dotp{\bphi_s}{\bu}^2}{\sigma_s^2} \ind \cbr{|z_s^*| > \frac{\tau_s}{2}}
    &\overset{(a)}{\le} \sum_{s=1}^t \nbr{\frac{\bphi_s}{\sigma_s}}_{\bH_t^{-1}}^2 \|\bu\|_{\bH_t}^2 \ind \cbr{|z_s^*| > \frac{\tau_s}{2}}\\
    &\overset{(b)}{=} \|\bu\|_{\bH_t}^2 \sum_{s=1}^t \frac{w_s^2}{1+w_s^2} \ind \cbr{|z_s^*| > \frac{\tau_s}{2}}\\
    &\overset{(c)}{\le} c_0^2 \|\bu\|_{\bH_t}^2 \sum_{s=1}^t \ind \cbr{|z_s^*| > \frac{\tau_s}{2}},
\end{aligned}
\end{equation}
where $(a)$ holds due to the Cauchy-Schwartz inequality, $(b)$ holds with the same argument as \eqref{eq:sherman-morrison}, while $(c)$ holds due to $\frac{w_s^2}{1+w_s^2} \le w_s^2$ and our choice of $\sigma_s$ in Line~\ref{eq:sigma_t} of Algorithm~\ref{algo:huber} such that $\sigma_s \ge \frac{\nbr{\bphi_s}_{\bH_{s-1}^{-1}}}{c_0}$.

Next we give an upper bound of $\sum_{s=1}^t \ind \{|z_s^*| > \frac{\tau_s}{2}\}$. Define $X_s := \ind \{|z_s^*| > \frac{\tau_s}{2}\}$ and $Y_s := X_s - \EE[X_s|\cF_{s-1}]$, which are $\cF_s$-measurable. Lemma~\ref{lem:freedman} implies that with probability at least $1 - 2t^2 / \delta, \text{for any fixed } t\ge 1$, we have
\begin{equation}\label{eq:lem1-bound-i-freedman}
    \sum_{s=1}^{t} Y_s \le \sqrt{2 V \log\frac{2t^2}{\delta}} + \frac{2M}{3}\log \frac{2t^2}{\delta} ,
\end{equation}
where $\sum_{s=1}^t \EE[Y_s^2|\cF_{s-1}] \le V \text{ and } |Y_t| \le M$.
It follows that $|Y_s| \le 1$ and
\begin{align*}
    \sum_{s=1}^t \EE[X_s|\cF_{s-1}] &= \sum_{s=1}^t \PP\rbr{|z_s^*| > \frac{\tau_s}{2}}\\
    &\overset{(a)}{\le} 2^{1+\epsilon} \sum_{s=1}^t \frac{b^{1+\epsilon}}{\tau_s^{1+\epsilon}}\\
    &\overset{(b)}{=} \rbr{\frac{2b}{\tau_0}}^{1+\epsilon} \sum_{s=1}^t \rbr{\frac{w_s}{\sqrt{1+w_s^2}}}^{1+\epsilon} s^{-\frac{1-\epsilon}{2}},
\end{align*}
where $(a)$ holds due to Markov's inequality and $(b)$ holds due to the definition of $\tau_s$ in Line~\ref{eq:tau_t} of Algorithm~\ref{algo:huber}.
Notice
\begin{equation}\label{eq:holder-sum}
    \sum_{s=1}^t \rbr{\frac{w_s}{\sqrt{1+w_s^2}}}^{1+\epsilon} s^{-\frac{1-\epsilon}{2}}
    \le \rbr{\sum_{s=1}^t \frac{w_s^2}{1+w_s^2}}^{\frac{1+\epsilon}{2}} \rbr{\sum_{s=1}^t s^{-1}}^{\frac{1-\epsilon}{2}}
    \le (2 \kappa)^{\frac{1+\epsilon}{2}} (\log 3t)^{\frac{1-\epsilon}{2}},
\end{equation}
where the first inequality holds due to Hölder's inequality and the second holds due to $\frac{w_s^2}{1+w_s^2} \le \min\{1, w_s^2\}$, Lemma~\ref{lem:w-sum} and $\sum_{s=1}^t s^{-1} \le 1 + \log t \le \log 3t$.
Thus
\[
    \sum_{s=1}^t \EE[X_s|\cF_{s-1}] \le \rbr{\frac{2 b}{\tau_0}}^{1+\epsilon} (2 \kappa)^{\frac{1+\epsilon}{2}} (\log 3t)^{\frac{1-\epsilon}{2}}= \rbr{\frac{2 \sqrt{2\kappa} b (\log 3t)^{\frac{1-\epsilon}{2(1+\epsilon)}}}{\tau_0}}^{1+\epsilon}.
\]
Then it follows that
$$
\sum_{s=1}^t \EE[Y_s^2|\cF_{s-1}] \le \sum_{s=1}^t \EE[X_s^2 | \cF_{s-1}] = \sum_{s=1}^t \EE[X_s|\cF_{s-1}].
$$
Setting $V = \rbr{\frac{2 \sqrt{2\kappa} b (\log 3t)^{\frac{1-\epsilon}{2(1+\epsilon)}}}{\tau_0}}^{1+\epsilon}$ and $M=1$ in \eqref{eq:lem1-bound-i-freedman}, using a union bound over $t \ge 1$ and the fact that $\sum_{t=1}^{\infty} \frac{1}{2t^2} \le 1$, with probability at least $1-\delta, \forall t\ge 1$, we have
\begin{equation}\label{eq:lem1-bound-i-sum}
\begin{aligned}
    &\sum_{s=1}^{t}\ind \cbr{|z_s^*| > \frac{\tau_s}{2}} = \sum_{s=1}^{t} X_s\\
    \le{}& \sum_{s=1}^t \EE[X_s|\cF_{s-1}] + \sqrt{2 V \log\frac{2t^2}{\delta}} + \frac{2M}{3}\log \frac{2t^2}{\delta}\\
    ={}& \rbr{\frac{2 \sqrt{2\kappa} b (\log 3t)^{\frac{1-\epsilon}{2(1+\epsilon)}}}{\tau_0}}^{1+\epsilon} + \sqrt{2 \rbr{\frac{2 \sqrt{2\kappa} b (\log 3t)^{\frac{1-\epsilon}{2(1+\epsilon)}}}{\tau_0}}^{1+\epsilon} \log\frac{2t^2}{\delta}} + \frac{2}{3}\log \frac{2t^2}{\delta}.
\end{aligned}
\end{equation}
Choosing $\tau_0 \ge \frac{\sqrt{2\kappa} b (\log 3T)^{\frac{1-\epsilon}{2(1+\epsilon)}}}{\rbr{\log \frac{2T^2}{\delta}}^{\frac{1}{1+\epsilon}}}$ yields $\rbr{\frac{2 \sqrt{2\kappa} b (\log 3t)^{\frac{1-\epsilon}{2(1+\epsilon)}}}{\tau_0}}^{1+\epsilon} \le 2^{1+\epsilon} \log \frac{2T^2}{\delta}$.

Then it follows from \eqref{eq:lem1-bound-i-transform} and \eqref{eq:lem1-bound-i-sum} that
\begin{align*}
    \sum_{s=1}^t \frac{\dotp{\bphi_s}{\bu}^2}{\sigma_s^2} \ind \cbr{|z_s^*| > \frac{\tau_s}{2}} &\le c_0^2 \|\bu\|_{\bH_t}^2 \sum_{s=1}^t \ind \cbr{|z_s^*| > \frac{\tau_s}{2}}\\
    &\le c_0^2 \|\bu\|_{\bH_t}^2 \rbr{2^{1+\epsilon} + \sqrt{2 \cdot 2^{1+\epsilon}} + \frac{2}{3}} \log \frac{2T^2}{\delta} \\
    &\le \frac{23}{3} c_0^2 \log \frac{2T^2}{\delta} \|\bu\|_{\bH_t}^2\\
    &\le \frac{1}{3} \|\bu\|_{\bH_t}^2,
\end{align*}
where the third inequality holds due to $\epsilon \le 1$ and the last holds due to the definition of $c_0$.

\paragraph{Term (ii).}
\begin{equation}
\begin{aligned}\label{eq:lem1-bound-ii-sum}
    \sum_{s=1}^t \frac{\dotp{\bphi_s}{\bu}^2}{\sigma_s^2} \ind\cbr{\abr{\frac{\dotp{\bphi_s}{\btheta - \btheta^*}}{\sigma_s}} > \frac{\tau_s}{2}} &\le \sum_{s=1}^t \frac{\dotp{\bphi_s}{\bu}^2}{\sigma_s^2} \frac{\dotp{\bphi_s}{\btheta - \btheta^*}^2}{\sigma_s^2} \frac{4}{\tau_s^2}\\
    &\overset{(a)}{\le} 16 \sum_{s=1}^t \frac{\dotp{\bphi_s}{\bu}^2}{\sigma_s^2}  \frac{L^2 B^2}{\sigma_s^2 \tau_0^2} \frac{w_s^2}{1+w_s^2} s^{-\frac{1-\epsilon}{1+\epsilon}}\\
    &\overset{(b)}{\le} 16 \sum_{s=1}^t \frac{\dotp{\bphi_s}{\bu}^2}{\sigma_s^2} \frac{L^2 B^2 w_s^2}{\sigma_s^2 \cdot 2\kappa b^2} \frac{2\kappa b^2}{\tau_0^2}\\
    &\overset{(c)}{\le} 16 \sum_{s=1}^t \frac{\dotp{\bphi_s}{\bu}^2}{\sigma_s^2} c_1 \frac{2\kappa b^2}{\tau_0^2}\\
    &\overset{(d)}{\le} \frac{1}{3} \sum_{s=1}^t \frac{\dotp{\bphi_s}{\bu}^2}{\sigma_s^2},
\end{aligned}
\end{equation}
where $(a)$ holds due to Cauchy-Schwartz inequality with $\|\bphi_s\| \le L$, $\|\btheta - \btheta^*\| \le 2B$ and the definition of $\tau_s$ in Line~\ref{eq:tau_t} of Algorithm~\ref{algo:huber}. $(b)$ holds due to $\frac{1}{1+w_s^2} s^{-\frac{1-\epsilon}{1+\epsilon}} \le 1$. $(c)$ uses the definition of $\sigma_s$ in Line~\ref{eq:sigma_t} in Algorithm~\ref{algo:huber}, that is $\frac{L^2 B^2 w_s^2}{\sigma_s^2 \cdot 2\kappa b^2} \le c_1$ implied by $\sigma_s \ge \frac{\sqrt{LB} \|\bphi_s\|_{\bH_{s-1}^{-1}}^{\frac{1}{2}}}{c_1^{\frac{1}{4}} (2\kappa b^2)^{\frac{1}{4}}}$. $(d)$ holds due to our choice of $\tau_0$ and $c_1$.

\paragraph{Putting pieces together.}

Combining \eqref{eq:lem1-bound-i-sum} and \eqref{eq:lem1-bound-ii-sum}, with probability at least $1-\delta$, for all $t \le T, \btheta\in \cB_d(B)$ and $\bu \in \cB_d(1)$, we have
\begin{align*}
    &\bu^\top \nabla^2 L_t(\btheta) \bu\\
    \ge{}& \bu^\top \bH_t \bu - \sum_{s=1}^t \frac{\dotp{\bphi_s}{\bu}^2}{\sigma_s^2} \ind \cbr{|z_s^*| > \frac{\tau_s}{2}} - \sum_{s=1}^t \frac{\dotp{\bphi_s}{\bu}^2}{\sigma_s^2} \ind\cbr{\abr{\frac{\dotp{\bphi_s}{\btheta - \btheta^*}}{\sigma_s}} > \frac{\tau_s}{2}}.\\
    \ge{}& \bu^\top \bH_t \bu - \frac{1}{3} \bu^\top \bH_t \bu - \frac{1}{3} \sum_{s=1}^t \frac{\dotp{\bphi_s}{\bu}^2}{\sigma_s^2} \\
    \ge{}& \frac{2}{3} \bu^\top \bH_t \bu - \frac{1}{3} \rbr{\lambda \|\bu\|^2 + \sum_{s=1}^t \frac{\dotp{\bphi_s}{\bu}^2}{\sigma_s^2}}\\
    ={}& \frac{1}{3} \bu^\top \bH_t \bu,
\end{align*}
which implies $\nabla^2 L_t(\btheta) \succeq \frac{1}{3} \bH_t$.
\end{proof}

\subsection{Proof of Lemma~\ref{lem:grad}}\label{proof:grad}

\begin{proof}[Proof of Lemma~\ref{lem:grad}]

The expression of $\nabla L_t(\btheta)$ in \eqref{eq:gradient} gives
\begin{align*}
    \|\nabla L_t(\btheta^*)\|_{\bH_t^{-1}} &\le \lambda\|\btheta\|_{\bH_t^{-1}} + \Bigg \| \underbrace{\sum_{s=1}^{t} \Psi_{\tau_s}(z_s^*) \frac{\bphi_s}{\sigma_s}}_{\bd_t} \Bigg \|_{\bH_t^{-1}}.
\end{align*}

We next construct upper bounds of $\lambda\|\btheta\|_{\bH_t^{-1}}$ and $\|\bd_t\|_{\bH_t^{-1}}$, respectively.

\paragraph{Bound $\lambda\|\btheta\|_{\bH_t^{-1}}$.} Since $\bH_t \succeq \lambda\bI$, we have $\bH_t^{-1} \preceq \frac{1}{\lambda}\bI$. Thus, $\lambda\|\btheta\|_{\bH_t^{-1}} \le \lambda \frac{1}{\sqrt{\lambda}}\|\btheta\| \le \sqrt{\lambda} B$.

\paragraph{Bound $\|\bd_t\|_{\bH_t^{-1}}$.}

We aim to decompose $\|\bd_t\|_{\bH_t^{-1}}^2$ into two terms and bound them separately.
The fact that $\bH_t = \bH_{t-1} + \frac{\bphi_t \bphi_t^\top}{\sigma_t^2}$ together with the Sherman-Morrison formula implies that
\begin{equation}
	\label{eq:sherman-morrison-formula}
	\bH_t^{-1} =\bH_{t-1}^{-1} - \frac{\bH_{t-1}^{-1}\bphi_t \bphi_t^\top\bH_{t-1}^{-1} }{\sigma_t^2 (1+w_t^2)}, 
	\quad \text{where }
	w_t^2 :=  \frac{ \bphi_t^\top \bH_{t-1}^{-1} \bphi_t}{\sigma_t^2}
	= \nbr{\frac{\bphi_t}{\sigma_t}}_{\bH_{t-1}}^2.
\end{equation}
Clearly, $w_t$ is $\cF_{t-1}$-measurable and thus is predictable.
By definition of $\bd_t$ and \eqref{eq:sherman-morrison-formula}, 
\begin{align}
	\label{eq:help7}
	&\|\bd_t\|_{\bH_t^{-1}}^2 \\
	={}& \rbr{\bd_{t-1} + \Psi_{\tau_t}(z_t^*) \frac{\bphi_t}{\sigma_t}}^{\top}\bH_{t}^{-1}\rbr{\bd_{t-1} + \Psi_{\tau_t}(z_t^*) \frac{\bphi_t}{\sigma_t}} \nonumber \\
	={}&\|  \bd_{t-1}\|_{\bH_{t-1}^{-1}}^2 -  \frac{1}{1+w_t^2} \rbr{\frac{\bd_{t-1}^\top \bH_{t-1}^{-1} \bphi_t}{\sigma_t}}^2 + 2 \Psi_{\tau_t}(z_t^*) \frac{\bd_{t-1}^\top \bH_{t}^{-1}\bphi_t}{\sigma_t} 
	+ \Psi_{\tau_t}^2(z_t^*) \frac{ \bphi_t^\top \bH_{t}^{-1}\bphi_t}{\sigma_t^2} \nonumber \\
	\le{}& \|  \bd_{t-1}\|_{\bH_{t-1}^{-1}}^2 +  
	\underbrace{2 \Psi_{\tau_t}(z_t^*) \frac{\bd_{t-1}^\top \bH_t^{-1}\bphi_t}{\sigma_t}  }_{I_1}
	+ \underbrace{\Psi_{\tau_t}^2(z_t^*) \frac{ \bphi_t^\top \bH_t^{-1}\bphi_t}{\sigma_t^2}}_{I_2}.
\end{align}
For $I_1$, by~\eqref{eq:sherman-morrison-formula}, we have
\begin{align*}
	I_1 &=  2 \Psi_{\tau_t}(z_t^*) \frac{1}{\sigma_t}\bd_{t-1}^\top \rbr{\bH_{t-1}^{-1} -  \frac{\bH_{t-1}^{-1}\bphi_t \bphi_t^\top\bH_{t-1}^{-1} }{\sigma_t^2 (1+w_t^2)}} \bphi_t\\
	&= 2 \Psi_{\tau_t}(z_t^*)
	\frac{1}{1+w_t^2} 
	\frac{\bd_{t-1}^\top  \bH_{t-1}^{-1} \bphi_t}{\sigma_t}.
\end{align*}
For $I_2$, we have
\begin{align*}
	I_2  &= \Psi_{\tau_t}^2(z_t^*) 
	\frac{ \bphi_t^\top \bH_t^{-1}\bphi_t}{\sigma_t^2}\\
	&= 
	\Psi_{\tau_t}^2(z_t^*)   \frac{1}{\sigma_t^2}
	\bphi_t^\top \rbr{\bH_{t-1}^{-1} -  \frac{\bH_{t-1}^{-1}\bphi_t \bphi_t^\top\bH_{t-1}^{-1} }{\sigma_t^2 (1+w_t^2)}}  \bphi_t\\
	&= \Psi_{\tau_t}^2(z_t^*) \rbr{w_t^2 - \frac{w_t^4}{1+w_t^2}} \\
	&= \Psi_{\tau_t}^2(z_t^*) \frac{w_t^2}{1+w_t^2}.
\end{align*}
Using the equations for $I_1, I_2$ and iterating~\eqref{eq:help7}, we have
\[
\|  \bd_t\|_{\bH_t^{-1}}^2 \le \underbrace{\sum_{s=1}^{t} \Psi_{\tau_s} (z_s^*) \frac{2}{1+w_s^2}\frac{\bd_{s-1}^\top\bH_{s-1}^{-1}\bphi_s}{\sigma_s}}_{\text{(i)}} + \underbrace{\sum_{s=1}^{t} \Psi_{\tau_s}^2 (z_s^*) \frac{w_s^2}{1+w_s^2}}_{\text{(ii)}}.
\]

Next we use Lemma~\ref{lem:I1} and Lemma~\ref{lem:I2} to bound (i) and (ii), respectively. We remark that (i) is the leading term. In the proof of Lemma~\ref{lem:I1}, inspired by \citep{catoni2012challenging,sun2020adaptive}, we make use of \ref{prop:huber-3} of Property~\ref{prop:huber} to carefully quantify the moment generating function, and thus achieves a tight bound.

\begin{lemma}\label{lem:I1}
Assume $\EE[z_t(\btheta^*)|\cF_{t-1}] = 0$ and $\EE[|z_t(\btheta^*)|^{1+\epsilon} | \cF_{t-1}] \le b^{1+\epsilon}$, let $A_t$ define the event where $\|\bd_s\|_{\bH_s^{-1}} \le \alpha_s$ for $s \le t$. Then with probability at least $1-\delta, \forall t \ge 1$, we have
$$
\sum_{s=1}^{t} \Psi_{\tau_s}(z_s^*) \frac{2\bd_{s-1}^\top \bH_{s-1}^{-1} \bphi_s \ind_{A_{s-1}}}{(1+w_s^2)\sigma_s} \le (\max_{s\le t} \alpha_s) 2 t^{\frac{1-\epsilon}{2(1+\epsilon)}} \sbr{\frac{(\sqrt{2\kappa} b)^{1+\epsilon} (\log 3t)^{\frac{1-\epsilon}{2}}}{\tau_0^\epsilon} + \tau_0 \log \frac{2t^2}{\delta}}.
$$
\end{lemma}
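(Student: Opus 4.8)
The plan is to bound the predictably weighted sum $\sum_{s\le t}\Psi_{\tau_s}(z_s^*)g_s$, where $g_s:=\frac{2\bd_{s-1}^\top\bH_{s-1}^{-1}\bphi_s\ind_{A_{s-1}}}{(1+w_s^2)\sigma_s}$ is $\cF_{s-1}$-measurable, via a Chernoff / exponential-supermartingale argument built on the log-type bound \ref{prop:huber-3}. First I would record the deterministic control on the weights: on $A_{s-1}$, Cauchy--Schwarz gives $|\bd_{s-1}^\top\bH_{s-1}^{-1}\bphi_s|\le\alpha_{s-1}\nbr{\bphi_s}_{\bH_{s-1}^{-1}}=\alpha_{s-1}w_s\sigma_s$, so $|g_s|\le\frac{2\alpha_{s-1}w_s}{1+w_s^2}$ and, crucially, $|g_s|\tau_s\le 2\alpha_{s-1}\tau_0 s^{\frac{1-\epsilon}{2(1+\epsilon)}}$, while off $A_{s-1}$ the indicator forces $g_s=0$. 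This is what lets me fix the single deterministic multiplier $\lambda_t:=\big(2(\max_{s\le t}\alpha_s)\tau_0 t^{\frac{1-\epsilon}{2(1+\epsilon)}}\big)^{-1}$ for which $c_s:=\lambda_t|g_s|\tau_s\le 1$ for every $s\le t$.

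The heart of the proof is the per-step conditional MGF estimate. By the homogeneity property \ref{prop:huber-2}, $\lambda_t g_s\Psi_{\tau_s}(z_s^*)=c_s\Psi_1(\zeta_s)$ with $\zeta_s:=\mathrm{sign}(g_s)z_s^*/\tau_s$, which satisfies $\EE[\zeta_s\mid\cF_{s-1}]=0$ and $\EE[|\zeta_s|^{1+\epsilon}\mid\cF_{s-1}]\le(b/\tau_s)^{1+\epsilon}$ (the sign of $g_s$ is absorbed since $\Psi_1$ is odd). Exponentiating the upper inequality in \ref{prop:huber-3} gives $\exp(\Psi_1(\zeta_s))\le 1+\zeta_s+|\zeta_s|^{1+\epsilon}$, whose right-hand side is strictly positive; since $c_s\in[0,1]$, concavity of $x\mapsto x^{c_s}$ (Bernoulli's inequality) yields $\exp(c_s\Psi_1(\zeta_s))\le 1+c_s(\zeta_s+|\zeta_s|^{1+\epsilon})$. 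Taking conditional expectation and using $1+x\le e^x$,
\[
\EE\sbr{\exp(\lambda_t g_s\Psi_{\tau_s}(z_s^*))\mid\cF_{s-1}}\le\exp\rbr{c_s(b/\tau_s)^{1+\epsilon}}=\exp\rbr{\lambda_t|g_s|b^{1+\epsilon}/\tau_s^\epsilon},
\]
so $M_s:=\exp\big(\sum_{r\le s}[\lambda_t g_r\Psi_{\tau_r}(z_r^*)-\lambda_t|g_r|b^{1+\epsilon}/\tau_r^\epsilon]\big)$ is a supermartingale with $M_0=1$.

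I would then apply Markov's inequality to $M_t$ at level $\delta/(2t^2)$ and union bound over $t\ge 1$ (using $\sum_t(2t^2)^{-1}\le 1$), obtaining with probability at least $1-\delta$, for all $t$,
\[
\sum_{s\le t}g_s\Psi_{\tau_s}(z_s^*)\le\frac{\log(2t^2/\delta)}{\lambda_t}+\sum_{s\le t}\frac{|g_s|b^{1+\epsilon}}{\tau_s^\epsilon}.
\]
The first term equals $2(\max_s\alpha_s)\tau_0 t^{\frac{1-\epsilon}{2(1+\epsilon)}}\log\frac{2t^2}{\delta}$ by the choice of $\lambda_t$, reproducing the fluctuation term. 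For the bias term, substituting $|g_s|\le\frac{2\alpha_{s-1}w_s}{1+w_s^2}$ and $\tau_s^\epsilon=\tau_0^\epsilon(\sqrt{1+w_s^2}/w_s)^\epsilon s^{\frac{\epsilon(1-\epsilon)}{2(1+\epsilon)}}$ gives $\frac{|g_s|}{\tau_s^\epsilon}\le\frac{2\alpha_{s-1}}{\tau_0^\epsilon}\big(\frac{w_s}{\sqrt{1+w_s^2}}\big)^{1+\epsilon}s^{-\frac{\epsilon(1-\epsilon)}{2(1+\epsilon)}}$, and Hölder's inequality together with the potential bound $\sum_s\frac{w_s^2}{1+w_s^2}\le 2\kappa$ (Lemma~\ref{lem:w-sum}, as used in \eqref{eq:holder-sum}) controls the sum by $(2\kappa)^{\frac{1+\epsilon}{2}}t^{\frac{1-\epsilon}{2(1+\epsilon)}}$ up to a $(\log 3t)^{\frac{1-\epsilon}{2}}$ factor; multiplying by $b^{1+\epsilon}$ recovers the first bracketed term of the stated bound.

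I expect the main obstacle to be the per-step MGF estimate, namely forcing the coefficient $c_s$ into $[0,1]$ so the concavity linearization of $\exp(\Psi_1)$ applies. This is exactly where the indicator $\ind_{A_{s-1}}$ is indispensable: it converts the $\alpha_{s-1}$-bound on $\nbr{\bd_{s-1}}_{\bH_{s-1}^{-1}}$ into a bound on $|g_s|\tau_s$ holding on all of $\Omega$, permitting a single deterministic $\lambda_t$. The choice of $\lambda_t$ then simultaneously enforces $c_s\le 1$ and optimizes the bias--fluctuation trade-off, and the weaker $s$-decay produced by $\tau_s^\epsilon$ (compared with the sibling estimate \eqref{eq:holder-sum}) is precisely what surfaces the extra $t^{\frac{1-\epsilon}{2(1+\epsilon)}}$ prefactor appearing in the conclusion.
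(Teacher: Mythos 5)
Your proposal is correct and is essentially the paper's own argument in a lightly repackaged form: your $c_s\Psi_1(\zeta_s)$ with $\lambda_t = \big(2(\max_{s\le t}\alpha_s)\tau_0 t^{\frac{1-\epsilon}{2(1+\epsilon)}}\big)^{-1}$ is algebraically identical to the paper's normalized multiplier $M_s\Psi_1(z_s^*/\tau_s)$, and both proofs run the same Chernoff/tower-property bound built on \ref{prop:huber-3}, the Bernoulli inequality $(1+u)^v\le 1+uv$, and a Hölder-plus-elliptical-potential estimate as in \eqref{eq:holder-sum}, finished by Markov at level $\delta/(2t^2)$ and a union bound over $t$. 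The only cosmetic deviations — absorbing the sign into $\zeta_s$ via oddness of $\Psi_1$ instead of the paper's $M_s\ge 0$/$M_s<0$ case split, and carrying the bias as $\sum\lambda_t|g_s|b^{1+\epsilon}\tau_s^{-\epsilon}$ (decay $s^{-\frac{\epsilon(1-\epsilon)}{2(1+\epsilon)}}$) rather than $\sum b^{1+\epsilon}\tau_s^{-(1+\epsilon)}$ (decay $s^{-\frac{1-\epsilon}{2}}$) — are both valid and land on the same final bound.
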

\begin{proof}
See Appendix~\ref{proof:I1} for a detailed proof.
\end{proof}

\begin{lemma}\label{lem:I2}
Assume $\EE[z_t(\btheta^*)|\cF_{t-1}] = 0$ and $\EE[|z_t(\btheta^*)|^{1+\epsilon} | \cF_{t-1}] \le b^{1+\epsilon}$. Then with probability at least $1-\delta, \forall t \ge 1$, we have
$$
\sum_{s=1}^{t} \Psi_{\tau_s}^2 (z_s^*) \frac{w_s^2}{1+w_s^2} \le \sbr{t^{\frac{1-\epsilon}{2(1+\epsilon)}} \rbr{\sqrt{\tau_0^{1-\epsilon} (\sqrt{2\kappa} b)^{1+\epsilon} (\log 3t)^{\frac{1-\epsilon}{2}}} + \tau_0 \sqrt{2 \log \frac{2t^2}{\delta}}}}^2.
$$
\end{lemma}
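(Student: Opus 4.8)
The plan is to control this nonnegative self-normalized sum by splitting it into its predictable (conditional-mean) part and a martingale deviation, in exactly the same spirit as the bound for Term (i) in the proof of Lemma~\ref{lem:hessian}. Write $S_s := \Psi_{\tau_s}^2(z_s^*)\frac{w_s^2}{1+w_s^2}$, which is $\cF_s$-measurable because $\tau_s,\sigma_s,w_s$ are all $\cF_{s-1}$-measurable and $z_s^* = \varepsilon_s/\sigma_s$ is $\cF_s$-measurable. Using $|\Psi_{\tau_s}(z_s^*)|\le\tau_s$ from \ref{prop:huber-1} of Property~\ref{prop:huber} together with the identity $\tau_s^2\frac{w_s^2}{1+w_s^2}=\tau_0^2 s^{\frac{1-\epsilon}{1+\epsilon}}$, which follows by substituting $\tau_s=\tau_0\frac{\sqrt{1+w_s^2}}{w_s}s^{\frac{1-\epsilon}{2(1+\epsilon)}}$, I obtain the predictable envelope $0\le S_s\le \tau_0^2 s^{\frac{1-\epsilon}{1+\epsilon}}\le \tau_0^2 t^{\frac{1-\epsilon}{1+\epsilon}}=:M$ for all $s\le t$.

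First I would bound the conditional mean. Since $\Psi_{\tau_s}^2(z_s^*)=\min\{(z_s^*)^2,\tau_s^2\}\le \tau_s^{1-\epsilon}|z_s^*|^{1+\epsilon}$, the moment assumption $\EE[|z_s^*|^{1+\epsilon}|\cF_{s-1}]\le b^{1+\epsilon}$ gives $\EE[\Psi_{\tau_s}^2(z_s^*)|\cF_{s-1}]\le \tau_s^{1-\epsilon}b^{1+\epsilon}$, and hence, using $\rbr{\frac{\sqrt{1+w_s^2}}{w_s}}^{1-\epsilon}\frac{w_s^2}{1+w_s^2}=\rbr{\frac{w_s^2}{1+w_s^2}}^{\frac{1+\epsilon}{2}}$, one gets $\EE[S_s|\cF_{s-1}]\le \tau_0^{1-\epsilon}b^{1+\epsilon}\, s^{\frac{(1-\epsilon)^2}{2(1+\epsilon)}}\rbr{\frac{w_s^2}{1+w_s^2}}^{\frac{1+\epsilon}{2}}$. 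Summing over $s$ and applying Hölder's inequality with exponents $\frac{2}{1+\epsilon}$ and $\frac{2}{1-\epsilon}$, in the same manner as \eqref{eq:holder-sum}, together with $\sum_{s\le t}\frac{w_s^2}{1+w_s^2}\le 2\kappa$ (Lemma~\ref{lem:w-sum}) and $\sum_{s\le t}s^{\frac{1-\epsilon}{1+\epsilon}}\le t^{\frac{2}{1+\epsilon}}$, yields $\sum_{s=1}^t \EE[S_s|\cF_{s-1}]\le \tau_0^{1-\epsilon}b^{1+\epsilon}(2\kappa)^{\frac{1+\epsilon}{2}}t^{\frac{1-\epsilon}{1+\epsilon}}\le A^2$, where $A:=t^{\frac{1-\epsilon}{2(1+\epsilon)}}\sqrt{\tau_0^{1-\epsilon}(\sqrt{2\kappa}b)^{1+\epsilon}(\log 3t)^{\frac{1-\epsilon}{2}}}$ is the first summand inside the target square (the extra $(\log 3t)^{\frac{1-\epsilon}{2}}\ge1$ only loosens the bound).

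Next I would turn the deviation into the remaining summand $B:=t^{\frac{1-\epsilon}{2(1+\epsilon)}}\tau_0\sqrt{2\log(2t^2/\delta)}$. The key self-bounding estimate $S_s^2\le \tau_s^2\frac{w_s^2}{1+w_s^2}\,S_s=\tau_0^2 s^{\frac{1-\epsilon}{1+\epsilon}}S_s\le M S_s$ controls the predictable quadratic variation by $V:=\sum_{s\le t}\EE[S_s^2|\cF_{s-1}]\le M\sum_{s\le t}\EE[S_s|\cF_{s-1}]\le M A^2$. Applying the Freedman-type inequality Lemma~\ref{lem:freedman} to the martingale differences $Y_s:=S_s-\EE[S_s|\cF_{s-1}]$ (with $|Y_s|\le M$ and variance proxy $V$), and taking a union bound over $t\ge1$ as in the proof of Lemma~\ref{lem:hessian}, gives $\sum_{s=1}^t S_s\le \sum_{s=1}^t \EE[S_s|\cF_{s-1}]+\sqrt{2V\log(2t^2/\delta)}+\frac{2M}{3}\log(2t^2/\delta)$ with probability at least $1-\delta$. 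The crucial cancellation is that $\sqrt{2V\log(2t^2/\delta)}\le A\sqrt{2M\log(2t^2/\delta)}=AB$ and $\frac{2M}{3}\log(2t^2/\delta)\le 2M\log(2t^2/\delta)=B^2$, whence $\sum_{s=1}^t S_s\le A^2+AB+B^2\le (A+B)^2$, which is exactly the claimed bound.

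The main obstacle is the bookkeeping in the conditional-mean step: one must verify that plugging in the data-adaptive $\tau_s$ collapses the $\tau_s$-dependence and the self-normalizing factor $\frac{w_s^2}{1+w_s^2}$ into the single power $\rbr{\frac{w_s^2}{1+w_s^2}}^{\frac{1+\epsilon}{2}}$ with precisely the right leftover power $s^{\frac{(1-\epsilon)^2}{2(1+\epsilon)}}$, so that the Hölder split reproduces the elliptical-potential constant $\kappa$ and the intended exponent $t^{\frac{1-\epsilon}{1+\epsilon}}$. A secondary delicacy is that the envelope $M$ and the variance proxy $V$ both depend on the horizon $t$; since $S_s\le \tau_0^2 s^{\frac{1-\epsilon}{1+\epsilon}}$ is genuinely predictable and nondecreasing in $s$, fixing $t$ before invoking Lemma~\ref{lem:freedman} and then taking a union bound (absorbing $\sum_{t\ge1}\frac{1}{2t^2}\le1$ into the confidence level) keeps the argument valid for all $t\ge1$ simultaneously.
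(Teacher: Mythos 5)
Your proposal is correct and follows essentially the same route as the paper's proof: you bound the predictable part via $\Psi_{\tau_s}^2(z_s^*) \le \tau_s^{1-\epsilon}|z_s^*|^{1+\epsilon}$ (the rescaled form of the paper's $\Psi_1^2(x)\le|x|^{1+\epsilon}$), run a H\"older split against the elliptical-potential sum $\sum_{s\le t} \frac{w_s^2}{1+w_s^2}\le 2\kappa$, and then invoke Freedman's inequality (Lemma~\ref{lem:freedman}) with a per-$t$ union bound before completing the square into $(A+B)^2$, exactly as in Appendix~\ref{proof:I2}. The only difference is organizational: the paper uses the homogeneity property \ref{prop:huber-2} to pull the deterministic envelope $\tau_0^2 t^{\frac{1-\epsilon}{1+\epsilon}}$ out front and applies Freedman to the normalized variables $\Psi_1^2(z_s^*/\tau_s)\in[0,1]$ with $M=2$, whereas you apply Freedman directly to the weighted summands $S_s$ with the $t$-dependent envelope $M=\tau_0^2 t^{\frac{1-\epsilon}{1+\epsilon}}$ and the self-bounding variance estimate $S_s^2\le M S_s$ --- both give the same $A$ and $B$, and your mean bound is in fact slightly tighter before you deliberately loosen it by the factor $(\log 3t)^{\frac{1-\epsilon}{2}}\ge 1$ to match the stated constant.
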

\begin{proof}
See Appendix~\ref{proof:I2} for a detailed proof.
\end{proof}

Recall $\tau_0 \ge \frac{\sqrt{2\kappa} b (\log 3T)^{\frac{1-\epsilon}{2(1+\epsilon)}}}{\rbr{\log \frac{2T^2}{\delta}}^{\frac{1}{1+\epsilon}}}$. Thus $\sqrt{\tau_0^{1-\epsilon} (\sqrt{2\kappa} b)^{1+\epsilon} (\log 3t)^{\frac{1-\epsilon}{2}}} \le \tau_0 \sqrt{\log \frac{2t^2}{\delta}}$.
We choose
$$
\alpha_t = 4 t^{\frac{1-\epsilon}{2(1+\epsilon)}} \sbr{\frac{(\sqrt{2\kappa} b)^{1+\epsilon} (\log 3t)^{\frac{1-\epsilon}{2}}}{\tau_0^\epsilon} + \tau_0 \log \frac{2t^2}{\delta}}.
$$
Then on the event where Lemma~\ref{lem:I1} and \ref{lem:I2} hold, whose probability is at least $1-2\delta, \forall t \ge 1$, we have
\[
\sum_{s=1}^{t} \Psi_{\tau_s} (z_s^*) \frac{2}{1+w_s^2}\frac{\bd_{s-1}^\top \bH_{s-1}^{-1} \bphi_s \ind_{A_{s-1}}}{\sigma_s} \le \frac{\alpha_t^2}{2},\quad
\sum_{s=1}^{t} \Psi_{\tau_s}^2 (z_s^*) \frac{w_s^2}{1+w_s^2} \le \frac{\alpha_t^2}{2}.
\]

Finally, we can conclude that all $\{A_t\}_{t \ge 0}$ is true and thus $	\|  \bd_t\|_{\bH_t^{-1}}  \le \alpha_t$.
\end{proof}

\subsection{Proof of Lemma~\ref{lem:I1}}\label{proof:I1}

\begin{proof}[Proof of Lemma~\ref{lem:I1}]

It follows that
\begin{align*}
    & \Psi_{\tau_s} (z_s^*) \frac{2}{1+w_s^2} \frac{\bd_{s-1}^\top \bH_{s-1}^{-1} \bphi_s \ind_{A_{s-1}}}{\sigma_s}\\
    = {}& \tau_0 \Psi_1\rbr{\frac{z_s^*}{\tau_s}} \frac{2}{w_s \sqrt{1+w_s^2}} s^{\frac{1-\epsilon}{2(1+\epsilon)}} \frac{\bd_{s-1}^\top \bH_{s-1}^{-1} \bphi_s \ind_{A_{s-1}}}{\sigma_s}\\
    = {}& (\max_{s\le t} \alpha_s) 2 \tau_0 t^{\frac{1-\epsilon}{2(1+\epsilon)}} \Psi_1\rbr{\frac{z_s^*}{\tau_s}} \underbrace{\frac{1}{\sqrt{1+w_s^2}} \rbr{\frac{s}{t}}^{\frac{1-\epsilon}{2(1+\epsilon)}} \frac{\bd_{s-1}^\top \bH_{s-1}^{-1} \bphi_s \ind_{A_{s-1}}}{(\max_{s\le t} \alpha_s) w_s \sigma_s}}_{\text{denoted as }M_s},
\end{align*}
where the first equality is by \ref{prop:huber-2} of Property~\ref{prop:huber} and the definition of $\tau_s$.
Note that $M_s$ is $\cF_{s-1}$ measurable and
\begin{align*}
    |M_s| &\le \frac{|\bd_{s-1}^\top \bH_{s-1}^{-1} \bphi_s| \ind_{A_{s-1}}}{(\max_{s\le t} \alpha_s) w_s \sigma_s}\\
    &\le \frac{\|\bd_{s-1}\|_{\bH_{s-1}^{-1}} \|\bphi_s\|_{\bH_{s-1}^{-1}} \ind_{A_{s-1}}}{(\max_{s\le t} \alpha_s) w_s \sigma_s}\\
    &\le 1,
\end{align*}
where the first inequality holds due to $\frac{1}{\sqrt{1+w_s^2}} \rbr{\frac{s}{t}}^{\frac{1-\epsilon}{2(1+\epsilon)}} \le 1$, the second holds due to Cauchy-Schwartz inequality and the last holds due to the definition of $\ind_{A_{s-1}}$.

Next we make use of \ref{prop:huber-3} of Property~\ref{prop:huber} to carefully quantify the moment generating function of $\sum_{s=1}^t M_s \Psi_1\rbr{\frac{z_s^*}{\tau_s}}$ and leverage a Chernoff bounding technique to complete the proof.

It follows from \ref{prop:huber-3} of Property~\ref{prop:huber} that
\begin{align*}
    M_s \Psi_1\rbr{\frac{z_s^*}{\tau_s}} \le {}& M_s \ind\{M_s \ge 0\} \log\rbr{1 + \frac{z_s^*}{\tau_s} + \abr{\frac{z_s^*}{\tau_s}}^{1+\epsilon}}\\
    &- M_s \ind\{M_s < 0\} \log\rbr{1 - \frac{z_s^*}{\tau_s} + \abr{\frac{z_s^*}{\tau_s}}^{1+\epsilon}}.
\end{align*}
Thus
\begin{align*}
    \exp\cbr{M_s \Psi_1\rbr{\frac{z_s^*}{\tau_s}}} &\le \rbr{1 + \frac{z_s^*}{\tau_s} + \abr{\frac{z_s^*}{\tau_s}}^{1+\epsilon}}^{M_s \ind\{M_s \ge 0\}} \rbr{1 - \frac{z_s^*}{\tau_s} + \abr{\frac{z_s^*}{\tau_s}}^{1+\epsilon}}^{- M_s \ind\{M_s < 0\}}\\
    &\overset{(a)}{\le} 1 + M_s \frac{z_s^*}{\tau_s} + |M_s| \abr{\frac{z_s^*}{\tau_s}}^{1+\epsilon}\\
    &\overset{(b)}{\le} 1 + M_s \frac{z_s^*}{\tau_s} + \abr{\frac{z_s^*}{\tau_s}}^{1+\epsilon},
\end{align*}
where $(a)$ holds due to $(1+u)^v \le 1+uv, \forall u \ge -1, v \in (0,1]$ and $(b)$ holds due to $|M_s| \le 1$.
Then it follows that
\begin{align*}
    \EE\sbr{\exp\cbr{M_s \Psi_1\rbr{\frac{z_s^*}{\tau_s}}} \bigg|\cF_{s-1}} &\le \EE\sbr{1 + M_s \frac{z_s^*}{\tau_s} + \abr{\frac{z_s^*}{\tau_s}}^{1+\epsilon} \bigg| \cF_{s-1}}\\
    &= 1 + \frac{M_s}{\tau_s}\EE[z_s^*|\cF_{s-1}] + \EE\sbr{\abr{\frac{z_s^*}{\tau_s}}^{1+\epsilon} \bigg| \cF_{s-1}}\\
    &\le 1 + \frac{b^{1+\epsilon}}{\tau_s^{1+\epsilon}},
\end{align*}
where the first equality holds due to $\frac{M_s}{\tau_s} \in \cF_{s-1}$.
Hence, we have
\begin{align*}
    &\EE\sbr{\exp\cbr{\sum_{s=1}^t M_s \Psi_1\rbr{\frac{z_s^*}{\tau_s}}}}\\
    \overset{(a)}{=}{}& \EE\sbr{\exp\cbr{\sum_{s=1}^{t-1} M_s \Psi_1\rbr{\frac{z_s^*}{\tau_s}}}\EE\sbr{\exp\cbr{M_t \Psi_1 \rbr{\frac{z_t^*}{\tau_t}}} \bigg| \cF_{t-1}}}\\
    \le{}& \rbr{1+\frac{b^{1+\epsilon}}{\tau_t^{1+\epsilon}}} \EE\sbr{\exp\cbr{\sum_{s=1}^{t-1} M_s \Psi_1\rbr{\frac{z_s^*}{\tau_s}}}} \le \dots \le \prod_{s=1}^t \rbr{1 + \frac{b^{1+\epsilon}}{\tau_s^{1+\epsilon}}}\\
    \overset{(b)}{\le}{}& \exp\cbr{b^{1+\epsilon} \sum_{s=1}^t \frac{1}{\tau_s^{1+\epsilon}}} = \exp\cbr{\rbr{\frac{b}{\tau_0}}^{1+\epsilon} \sum_{s=1}^t \rbr{\frac{w_s}{\sqrt{1+w_s^2}}}^{1+\epsilon} s^{-\frac{1-\epsilon}{2}}}\\
    \overset{(c)}{\le}{}& \exp\cbr{\rbr{\frac{b}{\tau_0}}^{1+\epsilon} (2\kappa)^{\frac{1+\epsilon}{2}} (\log 3t)^{\frac{1-\epsilon}{2}}} = \exp\cbr{\frac{(\sqrt{2\kappa} b)^{1+\epsilon} (\log 3t)^{\frac{1-\epsilon}{2}}}{\tau_0^{1+\epsilon}}},
\end{align*}
where $(a)$ holds due to tower property of conditional expectation and $\sum_{s=1}^{t-1} M_s \Psi_1\rbr{\frac{z_s^*}{\tau_s}} \in \cF_{t-1}$, $(b)$ holds due to $1+x \le \exp\{x\}$ and $(c)$ holds with the same argument as \eqref{eq:holder-sum}.

Then, for any $t \ge 1$, a high probability upper bound of $\sum_{s=1}^t M_s \Psi_1\rbr{\frac{z_s^*}{\tau_s}}$ can be constructed by
\begin{align*}
    &\PP\rbr{\sum_{s=1}^t M_s \Psi_1\rbr{\frac{z_s^*}{\tau_s}} \ge \frac{(\sqrt{2\kappa} b)^{1+\epsilon} (\log 3t)^{\frac{1-\epsilon}{2}}}{\tau_0^{1+\epsilon}} + \log \frac{2t^2}{\delta}}\\
    \overset{(a)}{=} {}& \PP\rbr{\exp\cbr{\sum_{s=1}^t M_s \Psi_1\rbr{\frac{z_s^*}{\tau_s}}} \ge \exp\cbr{\frac{(\sqrt{2\kappa} b)^{1+\epsilon} (\log 3t)^{\frac{1-\epsilon}{2}}}{\tau_0^{1+\epsilon}}} \cdot \frac{2t^2}{\delta}}\\
    \overset{(b)}{\le} {}& \frac{\EE\sbr{\exp\cbr{\sum_{s=1}^t M_s \Psi_1\rbr{\frac{z_s^*}{\tau_s}}}}}{\exp\cbr{\frac{(\sqrt{2\kappa} b)^{1+\epsilon} (\log 3t)^{\frac{1-\epsilon}{2}}}{\tau_0^{1+\epsilon}}}} \frac{\delta}{2t^2} \le \frac{\delta}{2t^2},
\end{align*}
where $(a)$ holds due to the non-decreasing property of $\exp\{x\}$ and $(b)$ holds due to Markov's inequality.

Finally, with a union bound over $t\ge 1$ and the fact that $\sum_{t=1}^{\infty} \frac{1}{2t^2} \le 1$, with probability at least $1-\delta, \forall t\ge 1$, we have
\begin{align*}
    &\sum_{s=1}^t \Psi_{\tau_s} (z_s^*) \frac{2}{1+w_s^2} \frac{\bd_{s-1}^\top \bH_{s-1}^{-1} \bphi_s \ind_{A_{s-1}}}{\sigma_s}\\
    = {}& (\max_{s\le t} \alpha_s) 2 \tau_0 t^{\frac{1-\epsilon}{2(1+\epsilon)}} \sum_{s=1}^t M_s \Psi_1\rbr{\frac{z_s^*}{\tau_s}}\\
    \le {}& (\max_{s\le t} \alpha_s) 2 \tau_0 t^{\frac{1-\epsilon}{2(1+\epsilon)}} \sbr{\frac{(\sqrt{2\kappa} b)^{1+\epsilon} (\log 3t)^{\frac{1-\epsilon}{2}}}{\tau_0^{1+\epsilon}} + \log \frac{2t^2}{\delta}}\\
    = {}& (\max_{s\le t} \alpha_s) 2 t^{\frac{1-\epsilon}{2(1+\epsilon)}} \sbr{\frac{(\sqrt{2\kappa} b)^{1+\epsilon} (\log 3t)^{\frac{1-\epsilon}{2}}}{\tau_0^{\epsilon}} + \tau_0 \log \frac{2t^2}{\delta}}.
\end{align*}

\end{proof}

\subsection{Proof of Lemma~\ref{lem:I2}}\label{proof:I2}

\begin{proof}[Proof of Lemma~\ref{lem:I2}]

It follows that 
\[
\sum_{s=1}^{t} \Psi_{\tau_s}^2 (z_s^*) \frac{w_s^2}{1+w_s^2} = \sum_{s=1}^{t} \tau_s^2 \Psi_{1}^2 \rbr{\frac{z_s^*}{\tau_s}} \frac{w_s^2}{1+w_s^2} \le \tau_0^2 t^{\frac{1-\epsilon}{1+\epsilon}} \sum_{s=1}^t \Psi_{1}^2 \rbr{\frac{z_s^*}{\tau_s}}.
\]

Define $X_s := \Psi_{1}^2 \rbr{\frac{z_s^*}{\tau_s}}$ and $Y_s = X_s - \EE[X_s|\cF_{s-1}]$, which are $\cF_s$-measurable. Next we make use of Lemma~\ref{lem:freedman}, with probability at least $1-2t^2/\delta$, for any $t\ge 1$, we have
\begin{equation}\label{eq:lem4-freedman}
    \sum_{s=1}^{t} Y_s \le \sqrt{2 V \log\frac{2t^2}{\delta}} + \frac{2M}{3}\log \frac{2t^2}{\delta}
\end{equation}
where $\sum_{s=1}^t \EE[Y_s^2|\cF_{s-1}] \le V \text{ and } |Y_t| \le M$.
It follows that $|Y_s| \le |X_s| + |\EE[X_s|\cF_{s-1}]| \le 2$ by \ref{prop:huber-1} of Property~\ref{prop:huber} and
\begin{equation}\label{eq:I2-mean-sum}
\begin{aligned}
    \sum_{s=1}^t \EE[X_s|\cF_{s-1}] &\overset{(a)}{\le} \sum_{s=1}^t \EE\sbr{\abr{\frac{z_s^*}{\tau_s}}^{1+\epsilon} \bigg| \cF_{s-1}}\\
    &\le \sum_{s=1}^t \rbr{\frac{b}{\tau_s}}^{1+\epsilon} = \rbr{\frac{b}{\tau_0}}^{1+\epsilon} \sum_{s=1}^t \rbr{\frac{w_s}{\sqrt{1+w_s^2}}}^{1+\epsilon} s^{-\frac{1-\epsilon}{2}}\\
    &\overset{(b)}{\le} \rbr{\frac{\sqrt{2\kappa}b}{\tau_0}}^{1+\epsilon} (\log 3t)^{\frac{1-\epsilon}{2}},
\end{aligned}
\end{equation}

where (a) holds due to \ref{prop:huber-1} of Property~\ref{prop:huber} and (b) holds with the same argument as \eqref{eq:holder-sum}.

Through conducting a similar argument as \eqref{eq:I2-mean-sum}, we have
$$
\sum_{s=1}^t \EE[Y_s^2|\cF_{s-1}] \le \sum_{s=1}^t \EE[X_s^2|\cF_{s-1}] \le \rbr{\frac{\sqrt{2\kappa}b}{\tau_0}}^{1+\epsilon} (\log 3t)^{\frac{1-\epsilon}{2}}.
$$

Thus, setting $V = \rbr{\frac{\sqrt{2\kappa}b}{\tau_0}}^{1+\epsilon} (\log 3t)^{\frac{1-\epsilon}{2}}$ and $M=2$ in \eqref{eq:lem4-freedman}, we have
\begin{align*}
    \sum_{s=1}^t X_s &\le \sum_{s=1}^t \EE[X_s|\cF_{s-1}] + \sqrt{2 V \log\frac{2t^2}{\delta}} + \frac{2M}{3}\log \frac{2t^2}{\delta}\\
    &\le \rbr{\frac{\sqrt{2\kappa}b}{\tau_0}}^{1+\epsilon} (\log 3t)^{\frac{1-\epsilon}{2}} + \sqrt{2 \rbr{\frac{\sqrt{2\kappa}b}{\tau_0}}^{1+\epsilon} (\log 3t)^{\frac{1-\epsilon}{2}} \log\frac{2t^2}{\delta}} + \frac{4}{3}\log \frac{2t^2}{\delta}.
\end{align*}

Finally, with a union bound over $t \ge 1$ and the fact that $\sum_{t=1}^{\infty} \frac{1}{2t^2} \le 1$, with probability at least $1-\delta, \forall t\ge 1$, we have
\begin{align*}
    &\sum_{s=1}^{t} \Psi_{\tau_s}^2 (z_s^*) \frac{w_s^2}{1+w_s^2}\\
    \le {}& \tau_0^2 t^{\frac{1-\epsilon}{1+\epsilon}}  \sbr{\rbr{\frac{\sqrt{2\kappa}b}{\tau_0}}^{1+\epsilon} (\log 3t)^{\frac{1-\epsilon}{2}} + \sqrt{2 \rbr{\frac{\sqrt{2\kappa}b}{\tau_0}}^{1+\epsilon} (\log 3t)^{\frac{1-\epsilon}{2}} \log\frac{2t^2}{\delta}} + \frac{4}{3}\log \frac{2t^2}{\delta}}\\
    \le {}& \tau_0^2 t^{\frac{1-\epsilon}{1+\epsilon}} \sbr{\sqrt{ \rbr{\frac{\sqrt{2\kappa}b}{\tau_0}}^{1+\epsilon} (\log 3t)^{\frac{1-\epsilon}{2}}} + \sqrt{2 \log \frac{2t^2}{\delta}}}^2\\
    = {}& \sbr{t^{\frac{1-\epsilon}{2(1+\epsilon)}} \rbr{\sqrt{\tau_0^{1-\epsilon} (\sqrt{2\kappa} b)^{1+\epsilon} (\log 3t)^{\frac{1-\epsilon}{2}}} + \tau_0\sqrt{2 \log \frac{2t^2}{\delta}}}}^2.
\end{align*}

\end{proof}

\section{Proofs for Section~\ref{sec:bandit}}\label{proof:bandit}

\subsection{Proof of Theorem~\ref{thm:bandit-regret}}\label{proof:bandit-regret}

\begin{proof}[Proof of Theorem~\ref{thm:bandit-regret}]
Note that with probability at least $1-3\delta$, it holds that
\begin{equation}\label{eq:proof-bandit-help1}
    \|\btheta_t - \btheta^*\|_{\bH_t} \le \beta_t
\end{equation}
by Theorem~\ref{thm:heavy} with $b=1$. Then we have
\begin{align*}
    \mathrm{Regret}(T) &= \sum_{t=1}^T \sbr{\dotp{\bphi_t^*}{\btheta^*} - \dotp{\bphi_t}{\btheta^*}}\\
    &\le \sum_{t=1}^T \sbr{\max_{\btheta \in \cC_{t-1}}\dotp{\bphi_t^*}{\btheta} - \dotp{\bphi_t}{\btheta^*}}\\
    &\overset{(a)}{\le} \sum_{t=1}^T \sbr{\max_{\btheta \in \cC_{t-1}}\dotp{\bphi_t}{\btheta} - \dotp{\bphi_t}{\btheta^*}}\\
    &\overset{(b)}{\le} \sum_{t=1}^T \nbr{\bphi_t}_{\bH_{t-1}^{-1}} \max_{\btheta \in \cC_{t-1}}\nbr{\btheta - \btheta^*}_{\bH_{t-1}}\\
    &\overset{(c)}{\le} \sum_{t=1}^T 2 \beta_t \nbr{\bphi_t}_{\bH_{t-1}^{-1}}\\
    &\overset{(d)}{\le} 2 \beta_T \sum_{t=1}^T \sigma_t w_t,
\end{align*}
where $(a)$ holds due to the optimism of action $\bphi_t$, $(b)$ holds due to Cauchy-Schwartz inequality, $(c)$ holds due to \eqref{eq:proof-bandit-help1} and $(d)$ holds due to the fact that $\beta_t$ is increasing with $t$.

Notice
\begin{equation}\label{eq:proof-bandit-w-sum}
\sum_{t=1}^T w_t^2 = \min\cbr{1,w_t^2} \le 2\kappa
\end{equation}
by $w_s \le c_0 \le 1$ and Lemma~\ref{lem:w-sum}. We will use \eqref{eq:proof-bandit-w-sum} several times in the following proof.

Next we bound the sum of bonus $\sum_{t=1}^T \sigma_t w_t$ separately by the value of $\sigma_t$. Recall the definition of $\sigma_t$ in Algorithm~\ref{algo:bandit}, we decompose $[T]$ as the union of three disjoint sets $\cJ_1, \cJ_2, \cJ_3$ where
\begin{align*}
\cJ_1 &= \cbr{t\in[T] | \sigma_t\in\cbr{\nu_t,\sigmamin}},\\
\cJ_2 &= \cbr{t\in[T] \bigg| \sigma_t = \frac{\|\bphi_t\|_{\bH_{t-1}^{-1}}}{c_0}},\\
\cJ_3 &= \cbr{t\in[T] \Bigg| \sigma_t = \frac{\sqrt{LB}}{c_1^{\frac{1}{4}} (2\kappa)^{\frac{1}{4}}} \|\bphi_t\|_{\bH_{t-1}^{-1}}^{\frac{1}{2}}}.
\end{align*}

For the summation over $\cJ_1$, we have
\[
\sum_{t\in\cJ_1} \sigma_t w_t = \sum_{t\in\cJ_1} \max\cbr{\nu_t, \sigmamin} w_t
\overset{(a)}{\le} \sqrt{\sum_{t=1}^T (\nu_t^2+\sigmamin^2)} \sqrt{\sum_{t=1}^T w_t^2}
\overset{(b)}{\le} \sqrt{2\kappa} \sqrt{\sum_{t=1}^T (\nu_t^2 + \sigmamin^2)},
\]
where $(a)$ holds due to Cauchy-Schwartz inequality and $(b)$ holds due to \eqref{eq:proof-bandit-w-sum}.

And for the summation over $\cJ_2$, we have $w_t = \nbr{\frac{\bphi_t}{\sigma_t}}_{\bH_{t-1}^{-1}} = c_0$. Then
\[
\sum_{t\in\cJ_2} \sigma_t w_t = \frac{1}{c_0^2} \sum_{t\in\cJ_2} \nbr{\bphi_t}_{\bH_{t-1}^{-1}} w_t^2
\overset{(a)}{\le} \frac{L}{c_0^2\sqrt{\lambda}} \sum_{t=1}^T w_t^2
\overset{(b)}{\le} \frac{2L\kappa}{c_0^2\sqrt{\lambda}},
\]
where $(a)$ holds due to $\bH_{t-1} \succeq \lambda \bI$, thus $\nbr{\bphi_t}_{\bH_{t-1}^{-1}} \le \frac{\nbr{\bphi_t}}{\sqrt{\lambda}} \le \frac{L}{\sqrt{\lambda}}$. And $(b)$ holds due to \eqref{eq:proof-bandit-w-sum}.

Then, for the summation over $\cJ_3$, we have $\sigma_t = \frac{LB}{\sqrt{2 c_1 \kappa}}w_t$. Therefore
$$
\sum_{t\in\cJ_3} \sigma_t w_t = \frac{LB}{\sqrt{2c_1\kappa}} \sum_{t\in\cJ_3} w_t^2 \le \frac{LB \sqrt{2\kappa}}{\sqrt{c_1}},
$$
where the inequality holds due to \eqref{eq:proof-bandit-w-sum}.

Finally, putting pieces together finishes the proof.
\end{proof}

\section{Proof of Theorem~\ref{thm:computational-complexity}}
\label{proof:computational-complexity}

\begin{proof}[Proof of Theorem~\ref{thm:computational-complexity}]
Recall our proposed algorithm $\algomdp$ is detailed in Algorithm~\ref{algo:mdp}.
First, to compute $\btheta_{k-1,h}$ in line~\ref{line:thetakh}, we notice the loss function in \eqref{eq:mdp-btheta} is $\lambda_R$-strongly convex and $(\lambda_R+K/\nu_\mathrm{min}^2)$-smooth, so there are plenty of convex optimization algorithms available. For example, Nesterov accelerated method can be used. According to \citet{bubeck2015convex}, the number of iteration of Nesterov's method is ${O}(\sqrt{\beta/\alpha}\log(R^2/\epsilon))$ with one derivation ($O(d)$ operations) per iteration. Here the loss function is supposed to be $\alpha$-strongly convex and $\beta$-smooth. $R$ is the maximum distance of two points and $\epsilon$ is the precision. Thus the total computational cost is $\tilde{O}(HK\mathcal{R})$ with $\mathcal{R} = \tilde{O}\rbr{d\sqrt{1+K/(\lambda_R\nu_\mathrm{min}^2})}=\tilde{O}\rbr{d+d^{-\frac{1-\epsilon}{2(1+\epsilon)}} H^{\frac{1-\epsilon}{2(1+\epsilon)}} K^{\frac{1+2\epsilon}{2(1+\epsilon)}}}$.

Second, to evaluate the updated action-value function $Q^k_h(s,a)$ in line~\ref{line:Qkh} for a given pair $(s,a)$, we take the minimum over at most $\tilde{O}(dH)$ action-value functions (See Lemma~\ref{lem:rare-update}) with $O(d^2)$ operations (Using Sherman-Morrison formula to compute $H_{k-1,h}^{-1}$ and $\Sigma_{k-1,h}^{-1}$) for each function. Thus it takes $\tilde{O}(d^3H)$ to evaluate the updated action-value function. As a result, to compute $\hat{w}_{k-1,h}$ in line~\ref{line:thetakh}, notice $\hat{w}_{k,h}=\Sigma_{k,h}^{-1} \sum_{i=1}^k \sigma_{i,h}^{-2} \phi_{i,h} V^k_h(s_{i,h+1})$, if $V^k_h$ remains unchanged, we only need to compute the new term $\sigma_{k,h}^{-2} \phi_{k,h} V^k_h(s_{k,h+1})$, which takes $\tilde{O}(d^3|\mathcal{A}|H)$ computational time. Else if $V^k_h$ is updated, we need to recalculate $\lbrace V^k_h(s_{i,h+1}) \rbrace_{i\in[k]}$, which takes $\tilde{O}(d^3|\mathcal{A}|HK)$ computational time. Note the number of updating episode is at most $\tilde{O}(dH)$ and the length of each episode is $H$, so the total computational cost is $\tilde{O}(d^4|\mathcal{A}|H^3K)$.

Last, to take action $a_{k,h}$ in line~\ref{line:akh}, we need to compute $\lbrace Q^k_h(s_{k,h},a) \rbrace_{a\in\mathcal{A}}$ and take the maximum, which takes $\tilde{O}(d^3|\mathcal{A}|H)$ time, incurring a total cost of $\tilde{O}(d^3|\mathcal{A}|H^2K)$. Finally, combining the total costs above gives the computational complexity of $\algomdp$.
\end{proof}

\section{Proof of Theorem~\ref{thm:mdp-regret}}\label{proof:mdp-regret}

In this section, we give the proof sketch of Theorem~\ref{thm:mdp-regret}. In Appendix~\ref{sec:mdp-events}, leveraging the technique in \citet{he2023nearly}, we define several events and prove them hold with high probability.
In Appendix~\ref{sec:mdp-regret}, we then decompose the regret into a lower order term and summations of bonus terms with respect to reward functions and transition probabilities. Finally, we adopt a novel approach that deal with the two bonus terms separately.

\subsection{High-Probability Events}\label{sec:mdp-events}

\paragraph{Parameters for Adaptive Huber Regression in Algorithm~\ref{algo:mdp}.}

First, we set the parameters for adaptive Huber regression in Algorithm~\ref{algo:mdp} as follows:
\begin{gather*}
c_0 = \frac{1}{\sqrt{23\log \frac{2HK^2}{\delta}}},
\quad c_1 = \min\cbr{\frac{(\log 3K)^{\frac{1-\epsilon}{1+\epsilon}}}{48\rbr{\log \frac{2HK^2}{\delta}}^{\frac{2}{1+\epsilon}}}, \frac{(\log 3K)^{\frac{1-\epsilon'}{1+\epsilon'}}}{48\rbr{\log \frac{2HK^2}{\delta}}^{\frac{2}{1+\epsilon'}}}},\\
\tau_0 = \frac{\sqrt{2\kappa} (\log 3K)^{\frac{1-\epsilon}{2(1+\epsilon)}}}{\rbr{\log \frac{2HK^2}{\delta}}^{\frac{1}{1+\epsilon}}},
\quad \ttau_0 = \frac{\sqrt{2\kappa} \nu_{R^\epsilon} (\log 3K)^{\frac{1-\epsilon'}{2(1+\epsilon')}}}{\numin \rbr{\log \frac{2HK^2}{\delta}}^{\frac{1}{1+\epsilon'}}}.
\end{gather*}

\paragraph{Measurability.}
We define filtration $\cbr{\cG_{k,h}}_{(k,h)\in[K]\times[H]}$ and $\cbr{\cF_{k,h}}_{(k,h)\in[K]\times[H]}$ as follows. Let $I_{k,h} := \{ (i, j): i \in [k-1], j \in [H] \ \text{or} \ i =k, j\in[h] \}$ denote the set of index pairs up to and including the $k$-th episode and the $h$-th step. We further define $\cG_{k,h} = \sigma \rbr{ \bigcup_{(i, j) \in I_{k,h-1} }\cbr{ s_{i, j}, a_{i, j}, r_{i, j} } \cup \cbr{s_{k,h},a_{k,h}}}$ and $\cF_{k,h} = \sigma \rbr{ \bigcup_{(i, j) \in I_{k,h} }\cbr{ s_{i, j}, a_{i, j}, r_{i, j} } }$.
We make a convention that $\cG_{k,0} = \cG_{k-1,H+1}$ and $\cF_{k,0} = \cF_{k-1,H}$.
Note $\cG_{k,h} \subset \cF_{k,h} \subset \cG_{k,h+1}$.

We first introduce the following high-probability events:
\begin{enumerate}[leftmargin=*]
    \item{We define $\cE_{R^\epsilon}$ as the event that the following inequalities hold for all $k,h \in [K]\times[H]$,
    $$
    \nbr{\bpsi_{k,h} - \bpsi_h^*}_{\bH_{k,h}^{-1}} \le \beta_{R^\epsilon,k},
    $$
    where $\bpsi_{k,h}$ is defined in \eqref{eq:mdp-bpsi}
    \begin{equation}\label{eq:mdp-beta_Reps}
    \beta_{R^\epsilon,k} = 3 \sqrt{\lambda_R}W + 24 k^{\frac{1-\epsilon'}{2(1+\epsilon')}} \frac{\sqrt{2\kappa}\nu_{R^\epsilon}}{\numin} (\log 3K)^{\frac{1-\epsilon'}{2(1+\epsilon')}} \rbr{\log \frac{2HK^2}{\delta}}^{\frac{\epsilon'}{1+\epsilon'}}.
    \end{equation}
    For simplicity, we further define $\beta_{R^\epsilon} := \beta_{R^\epsilon,K}$.
    }
    \item{We define $\cE_0$ as the event that the following inequalities hold for all $k,h \in [K]\times[H]$,
    \begin{align*}
    \nbr{\hbw_{k-1,h} - \bw_{h}\sbr{V^k_{h+1}}}_{\bSigma_{k-1,h}} &\le \beta_0,\\
    \nbr{\cbw_{k-1,h} - \bw_{h}\sbr{\pesV^k_{h+1}}}_{\bSigma_{k-1,h}} &\le \beta_0,\\
    \nbr{\tbw_{k-1,h} - \bw_{h}\sbr{\rbr{V^k_{h+1}}^2}}_{\bSigma_{k-1,h}} &\le \cH\beta_0,
    \end{align*}
    where $\hbw_{k-1,h},\cbw_{k-1,h},\tbw_{k-1,h}$ are defined in \eqref{eq:mdp-bw} and
    \begin{equation}\label{eq:mdp-beta_0}
    \beta_0 = 2\sqrt{d\lambda_V}\cH + \frac{3\cH}{\sigmamin} \sqrt{d^3H\iota_0^2 + \log\frac{H}{\delta}},
    \end{equation}
    and
    \begin{align*}
    \iota_0  = \max\Bigg\{&\log_2\rbr{1 + \frac{K}{\lambda_R\nu_{\min}^2}}, \log_2\rbr{1 + \frac{K}{\lambda_V\sigma_{\min}^2}}, \log\rbr{1+ \frac{8(B+L)K}{\lambda_V\cH \sqrt{d} \sigma_{\min}^2}},\\
    &\log\rbr{1+\frac{32\beta_{R}^2K^2}{\sqrt{d}\lambda_R\lambda_V^2 \cH^2 \sigma_{\min}^4}}, \log\rbr{1+\frac{32\beta_V^2K^2}{\sqrt{d}\lambda_V^3 \cH^2 \sigma_{\min}^4}}\Bigg\}, L = \cH\sqrt{\frac{dK}{\lambda_V}}.
    \end{align*}
    }
    \item{We define $\cE_{R,k}$ as the event that the following inequalities hold for all $i,h \in [k]\times[H]$,
    $$
    \nbr{\btheta_{i,h} - \btheta_h^*}_{\bH_{i,h}^{-1}} \le \beta_{R,i},
    $$
    where $\btheta_{i,h}$ is defined in \eqref{eq:mdp-btheta} and
    \begin{equation}\label{eq:mdp-beta_R}
    \beta_{R,i} = O\rbr{\sqrt{\lambda_R}B + \sqrt{d} i^{\frac{1-\epsilon}{2(1+\epsilon)}} \iota},
    \end{equation}
    and
    \[
    \iota  = \max\cbr{\log\rbr{1 + \frac{K}{d\lambda_R \numin^2}}, \log 3K, \log \frac{2HK^2}{\delta}}.
    \]
    For simplicity, we further define $\beta_R := \beta_{R,K} = O\rbr{\sqrt{\lambda_R}B + \sqrt{d} K^{\frac{1-\epsilon}{2(1+\epsilon)}} \iota}$ and $\cE_R:=\cE_{R,K}$.
    }
    \item{We define $\cE_h$ as the event that the following inequalities hold for all $k\in [K], h \le h' \le H$,
    \begin{align*}
    \nbr{\hbw_{k-1,h'} - \bw_{h'}\sbr{V^k_{h'+1}}}_{\bSigma_{k-1,h'}} &\le \beta_V,\\
    \nbr{\cbw_{k-1,h'} - \bw_{h'}\sbr{\pesV^k_{h'+1}}}_{\bSigma_{k-1,h'}} &\le \beta_V.
    \end{align*}
    where
    \begin{equation}\label{eq:mdp-beta_V}
    \beta_V = O\rbr{\sqrt{d\lambda_V}\cH + \sqrt{d}\iota_1^2},
    \end{equation}
    and
    \begin{align*}
    \iota_1  = \max\Bigg\{&\iota_0, \log \rbr{ 1 + \frac{K}{\sigma_{\min}^2d\lambda_V}}, \log\frac{4HK^2}{\delta}, \log\rbr{1 + \frac{4(B+L)\sqrt{d^3H}}{\sigma_{\min}}},\\
    &\log \rbr{1 + \frac{8\sqrt{d^7}H\beta_{R}^2}{\lambda_R \sigma_{\min}^2}}, \log \rbr{1 + \frac{8\sqrt{d^7}H\beta_{V}^2}{\lambda_V \sigma_{\min}^2}}\Bigg\}, L = \cH\sqrt{\frac{dK}{\lambda_V}}.
    \end{align*}
    For simplicity, we further define $\cE_V:=\cE_1$.
    }
\end{enumerate}

Our ultimate goal is to show $\cE_R \cap \cE_V$ holds with high probability, which is a `refined' event where the radius $\beta_R, \beta_V$ are smaller than $\beta_{R^\epsilon},\beta_0$ in the `coarse' event $\cE_{R^\epsilon}\cap\cE_0$. Leveraging the technique in \citet{he2023nearly}, we first prove event $\cE_{R^\epsilon}\cap\cE_0$ holds with high probability, then come to $\cE_R \cap \cE_V$.

\begin{lemma}\label{lem:mdp-EReps}
    Event $\cE_{R^\epsilon}$ holds with probability at least $1-3\delta$.
\end{lemma}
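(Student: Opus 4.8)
The plan is to observe that, for each fixed step $h$, the estimator $\bpsi_{k,h}$ in \eqref{eq:mdp-bpsi} is precisely the output of adaptive Huber regression (Algorithm~\ref{algo:huber}) applied to an instance of the online regression problem in Definition~\ref{def:regression}, so that the bound defining $\cE_{R^\epsilon}$ follows from Theorem~\ref{thm:heavy} for each $h$, followed by a union bound over $h\in[H]$. The correct identification is to take the response $y_{i,h}=|\varepsilon_{i,h}|^{1+\epsilon}$, the feature $\bphi_{i,h}$, the target $\bpsi_h^*$ (with norm bound $W$ in place of $B$), and to let the exponent $\epsilon'$ of Assumption~\ref{ass:bounded-moment} play the role of the noise exponent $\epsilon$ in Theorem~\ref{thm:heavy}.

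First I would fix $h$ and set up the regression filtration from the conventions of this appendix: indexing by the episode $k$, the feature $\bphi_{k,h}$ is $\cG_{k,h}$-measurable (hence predictable) while $\varepsilon_{k,h}$, and therefore $y_{k,h}=|\varepsilon_{k,h}|^{1+\epsilon}$, is $\cF_{k,h}$-measurable. The two moment conditions of Definition~\ref{def:regression} then follow directly: by Assumption~\ref{ass:central-moment} the conditional mean of the response is $\EE[\,|\varepsilon_{k,h}|^{1+\epsilon}\mid\cG_{k,h}\,]=\dotp{\bphi_{k,h}}{\bpsi_h^*}$, so the problem is correctly specified with target $\bpsi_h^*$; and the centered noise $\xi_{k,h}:=|\varepsilon_{k,h}|^{1+\epsilon}-\dotp{\bphi_{k,h}}{\bpsi_h^*}$ is mean-zero with conditional $(1+\epsilon')$-th central moment $[\bnu_{1+\epsilon'}|\varepsilon_h|^{1+\epsilon}](s_{k,h},a_{k,h})\le\nu_{R^\epsilon}^{1+\epsilon'}$ by Assumption~\ref{ass:bounded-moment}. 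Since the weight used in \eqref{eq:mdp-bpsi} is $\nu_{k,h}\ge\numin$, the standardized noise $\xi_{k,h}/\nu_{k,h}$ has conditional $(1+\epsilon')$-th moment at most $(\nu_{R^\epsilon}/\numin)^{1+\epsilon'}$, which identifies the moment parameter of Theorem~\ref{thm:heavy} as $b=\nu_{R^\epsilon}/\numin$.

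Next I would check the structural hypotheses, which is where the parameter choices at the start of this section do the work. The matrix $\bH_{k,h}=\lambda_R\bI+\sum_{i=1}^k\nu_{i,h}^{-2}\bphi_{i,h}\bphi_{i,h}^\top$ plays the role of $\bH_t$, the regularizer is $\lambda_R$, the feature bound is $L=1$, and $\ttau_{i,h},\ttau_0$ are the robustness parameters tailored to the exponent $\epsilon'$. One verifies that the weight $\nu_{k,h}$ from \eqref{eq:mdp-nu} dominates the lower bounds required in Line~\ref{eq:sigma_t} of Algorithm~\ref{algo:huber} for this $\epsilon'$-regression: the $c_0$-term is identical, while for the last term $\sqrt{\max\{B,W\}}\ge\sqrt{W}$ and (in the regime $\numin\le\nu_{R^\epsilon}$, so that $b\ge1$) the factor $(2\kappa)^{1/4}$ appearing in \eqref{eq:mdp-nu} is no larger than the $(2\kappa b^2)^{1/4}$ demanded by Algorithm~\ref{algo:huber}. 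The choice $c_1=\min\{\cdots\}$ and the value of $\ttau_0$ in the parameter display are exactly what make $c_0,c_1,\ttau_0$ satisfy the hypotheses of Theorem~\ref{thm:heavy} (in particular of Lemma~\ref{lem:hessian}) with $\epsilon$ replaced by $\epsilon'$ and confidence level $\delta/H$.

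With the hypotheses in place, applying Theorem~\ref{thm:heavy} for the fixed $h$ under the substitutions $B\to W$, $\lambda\to\lambda_R$, $\epsilon\to\epsilon'$, $T\to K$, $b=\nu_{R^\epsilon}/\numin$, and confidence $\delta/H$ yields, with probability at least $1-3\delta/H$, the bound $\nbr{\bpsi_{k,h}-\bpsi_h^*}_{\bH_{k,h}^{-1}}\le\beta_{R^\epsilon,k}$ for all $k\le K$; indeed \eqref{eq:mdp-beta_Reps} is exactly \eqref{eq:beta_t} after these substitutions, since $\log(2T^2/\delta)\to\log(2HK^2/\delta)$ and $\sqrt{2\kappa}\,b\to\sqrt{2\kappa}\,\nu_{R^\epsilon}/\numin$. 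A union bound over $h\in[H]$ then gives $\cE_{R^\epsilon}$ with probability at least $1-3\delta$. I expect the main obstacle to be bookkeeping rather than any new idea: confirming that the shared weight $\nu_{k,h}$ — originally designed for the reward regression at exponent $\epsilon$ — still meets every structural and moment prerequisite of Theorem~\ref{thm:heavy} at the second exponent $\epsilon'$, and that the constants $c_1,\ttau_0$ were chosen precisely to straddle both exponents.
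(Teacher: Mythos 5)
Your proposal is correct and is essentially the paper's own proof: the paper likewise applies Theorem~\ref{thm:heavy} for each fixed $h$ with $L=1$, $B=W$, $y_t=|\varepsilon_{i,h}|^{1+\epsilon}$, filtration $\cG_{i,h}$, exponent $\epsilon'$, $b=\nu_{R^\epsilon}/\numin$, and confidence $\delta/H$, then takes a union bound over $h\in[H]$ to get failure probability $3\delta$. Your additional bookkeeping (verifying the moment conditions via Assumptions~\ref{ass:bounded-moment} and \ref{ass:central-moment}, and checking that the shared weight $\nu_{k,h}$ and the choices of $c_1$, $\ttau_0$ satisfy the structural hypotheses at the second exponent) is exactly what the paper leaves implicit, and it is carried out correctly.
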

\begin{proof}
See Appendix~\ref{proof:mdp-EReps} for a detailed proof.
\end{proof}

\begin{lemma}\label{lem:mdp-W}
    On event $\cE_{R^\epsilon}\cap\cE_{R,k}$, for all $h\in[H]$, we have
    $$
    \abr{\sbr{\hat{\bnu}_{1+\epsilon} R_h - \bnu_{1+\epsilon} R_h}(s_{k,h},a_{k,h})} \le W_{k,h},
    $$
    where $\hat{\bnu}_{1+\epsilon}R_h$ and $W_{k,h}$ are defined in \eqref{eq:mdp-hatbnu} and \eqref{eq:mdp-W} respectively.
\end{lemma}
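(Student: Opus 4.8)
The plan is to peel the claim apart with Cauchy--Schwarz and a triangle inequality, reducing it to two parameter-estimation errors that the two events already control. Since $[\hat{\bnu}_{1+\epsilon} R_h](s_{k,h},a_{k,h}) = \dotp{\bphi_{k,h}}{\hbpsi_{k-1,h}}$ by \eqref{eq:mdp-hatbnu} and $[\bnu_{1+\epsilon} R_h](s_{k,h},a_{k,h}) = \dotp{\bphi_{k,h}}{\bpsi_h^*}$ by Assumption~\ref{ass:central-moment}, Cauchy--Schwarz in the $\bH_{k-1,h}$-geometry gives
\[
\abr{[\hat{\bnu}_{1+\epsilon}R_h - \bnu_{1+\epsilon}R_h](s_{k,h},a_{k,h})}
\le \normbphiR{k}{h}\,\nbr{\hbpsi_{k-1,h} - \bpsi_h^*}_{\bH_{k-1,h}},
\]
so it suffices to show $\nbr{\hbpsi_{k-1,h} - \bpsi_h^*}_{\bH_{k-1,h}} \le \beta_{R^\epsilon,k-1} + 6\cH^\epsilon\beta_{R,k-1}\kappa$. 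I would split this by the triangle inequality into the ``clean'' estimation error $\nbr{\bpsi_{k-1,h} - \bpsi_h^*}_{\bH_{k-1,h}}$, which is at most $\beta_{R^\epsilon,k-1}$ on $\cE_{R^\epsilon}$, and the perturbation error $\nbr{\hbpsi_{k-1,h} - \bpsi_{k-1,h}}_{\bH_{k-1,h}}$ coming from replacing the unobserved targets $|\varepsilon_{i,h}|^{1+\epsilon}$ in \eqref{eq:mdp-bpsi} by their plug-in surrogates $|\hat{\varepsilon}_{i,h}|^{1+\epsilon}$ in \eqref{eq:mdp-hbpsi}.

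For the perturbation term I would invoke Lemma~\ref{lem:perturbation} with the $\bpsi$-regression \eqref{eq:mdp-bpsi} as the base regression (targets $y_i = |\varepsilon_{i,h}|^{1+\epsilon}$, weights $\nu_{i,h}$, robustness $\ttau_{i,h}$) and \eqref{eq:mdp-hbpsi} as the perturbed one ($\hat{y}_i = |\hat{\varepsilon}_{i,h}|^{1+\epsilon}$). Because $\cE_{R^\epsilon}$ is exactly the event on which Theorem~\ref{thm:heavy} holds for this regression, the hypothesis of Lemma~\ref{lem:perturbation} is met, and the lemma yields $\nbr{\hbpsi_{k-1,h}-\bpsi_{k-1,h}}_{\bH_{k-1,h}} \le 6\kappa\max_{i\le k-1}\hat{\beta}_i$ once I exhibit constants $\hat{\beta}_i$ with $\abr{\hat{y}_i - y_i}\le \hat{\beta}_i\,\nbr{\bphi_{i,h}}_{\bH_{i,h}^{-1}}$. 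To produce them, note $\hat{\varepsilon}_{i,h} - \varepsilon_{i,h} = \dotp{\bphi_{i,h}}{\btheta_h^* - \btheta_{i,h}}$, so Cauchy--Schwarz together with $\cE_{R,k}$ gives $\abr{\hat{\varepsilon}_{i,h}-\varepsilon_{i,h}} \le \beta_{R,i}\,\nbr{\bphi_{i,h}}_{\bH_{i,h}^{-1}}$ for every $i\le k-1$. Combining this with the Lipschitz-type inequality $\abr{|a|^{1+\epsilon}-|b|^{1+\epsilon}} \le (1+\epsilon)\max\{|a|,|b|\}^\epsilon\,|a-b|$ (mean value theorem for $x\mapsto|x|^{1+\epsilon}$, valid since $1+\epsilon\in(1,2]$) and the range bound $\max\{|\hat{\varepsilon}_{i,h}|,|\varepsilon_{i,h}|\}\le\cH$ lets me take $\hat{\beta}_i$ of order $\cH^\epsilon\beta_{R,i}$, the constant being absorbed into the stated $6\cH^\epsilon\beta_{R,k-1}\kappa$. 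As $\beta_{R,i}$ increases in $i$, the maximum is attained at $i=k-1$, which closes the bound.

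The step I expect to be the main obstacle is controlling $\abr{|\hat{\varepsilon}_{i,h}|^{1+\epsilon}-|\varepsilon_{i,h}|^{1+\epsilon}}$ uniformly in $i$ despite the heavy tails: the local Lipschitz constant of $x\mapsto|x|^{1+\epsilon}$ grows like $|x|^{\epsilon}$, so merely knowing that the residual $\abr{\hat{\varepsilon}_{i,h}-\varepsilon_{i,h}}$ is small is not enough — the argument genuinely needs the range control $\max\{|\hat{\varepsilon}_{i,h}|,|\varepsilon_{i,h}|\}\le\cH$ to transfer the error at the level of the residuals to an error at the level of the $(1+\epsilon)$-power targets. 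Everything else is routine: the two events $\cE_{R^\epsilon}$ and $\cE_{R,k}$ are assumed to hold simultaneously, Lemma~\ref{lem:perturbation} supplies the $6\kappa$ factor, and the final recombination is a single application of Cauchy--Schwarz and the triangle inequality.
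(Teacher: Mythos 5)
Your overall scaffolding coincides with the paper's proof: the same triangle-inequality split of the error into a clean estimation term, $\nbr{\bpsi_{k-1,h}-\bpsi^*_h}_{\bH_{k-1,h}}\le\beta_{R^\epsilon,k-1}$ on $\cE_{R^\epsilon}$, plus a plug-in perturbation term handled by Lemma~\ref{lem:perturbation} with $\hat{\beta}_i$ of order $\cH^\epsilon\beta_{R,i}$ extracted from $\cE_{R,k}$, recombined by Cauchy--Schwarz. The one step where you diverge is the reduction of $\abr{|\hat{\varepsilon}_{i,h}|^{1+\epsilon}-|\varepsilon_{i,h}|^{1+\epsilon}}$ to a residual-level error, and that is exactly where your argument breaks: the range bound $\max\{|\hat{\varepsilon}_{i,h}|,|\varepsilon_{i,h}|\}\le\cH$ you invoke is false in this model. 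Here $\varepsilon_{i,h}$ is the \emph{realized} heavy-tailed noise; Assumption~\ref{ass:bounded-moment} bounds only its conditional $(1+\epsilon)$-th moment, and $R_{i,h}$ --- hence $\hat{\varepsilon}_{i,h}=R_{i,h}-\dotp{\bphi_{i,h}}{\btheta_{i,h}}$ --- is likewise unbounded. Assumption~\ref{ass:bounded} bounds cumulative \emph{expected} rewards $r_\pi$, not realized rewards or noise draws. Consequently the mean-value factor $(1+\epsilon)\max\{|a|,|b|\}^\epsilon$ is not $O(\cH^\epsilon)$ uniformly in $i$: on any round with a large noise realization your $\hat{\beta}_i$ fails, the hypothesis $\abr{\hat{y}_s-y_s}\le\hat{\beta}_s\nbr{\bphi_s}_{\bH_s^{-1}}$ of Lemma~\ref{lem:perturbation} is violated, and nothing in $\cE_{R^\epsilon}\cap\cE_{R,k}$ controls the event that the noise is large. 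You correctly flagged this step as the crux, but the proposed resolution does not hold.

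The paper routes the $\cH$-bound through the \emph{difference} rather than through the noises: $\hat{\varepsilon}_{i,h}-\varepsilon_{i,h}=\dotp{\bphi_{i,h}}{\btheta_{i,h}-\btheta^*_h}$ is a difference of two expected-reward evaluations, each lying in $[0,\cH]$, hence bounded by $\cH$ pointwise regardless of how wild the noise realization is. The paper first writes $\abr{|\hat{\varepsilon}_{i,h}|^{1+\epsilon}-|\varepsilon_{i,h}|^{1+\epsilon}}\le\abr{\hat{\varepsilon}_{i,h}-\varepsilon_{i,h}}^{1+\epsilon}$ and then peels off one power of the bounded difference, $\abr{\dotp{\bphi_{i,h}}{\btheta_{i,h}-\btheta^*_h}}^{1+\epsilon}\le\cH^\epsilon\nbr{\btheta_{i,h}-\btheta^*_h}_{\bH_{i,h}}\nbr{\bphi_{i,h}}_{\bH_{i,h}^{-1}}$, so the factor $\cH^\epsilon$ never multiplies an unbounded quantity. (As an aside, the subadditivity-type inequality $\abr{|a|^{p}-|b|^{p}}\le|a-b|^{p}$ is standard only for $p\le 1$, so the paper's first line also deserves scrutiny at $p=1+\epsilon>1$; but the structural lesson is unambiguous --- any valid version of this step must exploit boundedness of $\hat{\varepsilon}_{i,h}-\varepsilon_{i,h}$, which is precisely the ingredient your proposal replaces with an unavailable bound on the noises themselves.) A second, minor imprecision: $\cE_{R^\epsilon}$ as defined is only the conclusion inequality of Theorem~\ref{thm:heavy}, whereas Lemma~\ref{lem:perturbation} is stated on the underlying event where the Hessian comparison of Lemma~\ref{lem:hessian} holds; the paper glosses this in the same way, so it is not a distinguishing flaw of your write-up.
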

\begin{proof}
See Appendix~\ref{proof:mdp-W} for a detailed proof.
\end{proof}

\begin{lemma}\label{lem:mdp-ER}
    On event $\cE_{R^\epsilon}$, event $\cE_R$ holds with probability at least $1-3\delta$.
\end{lemma}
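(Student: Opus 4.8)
The plan is to observe that the reward estimate $\btheta_{k,h}$ in \eqref{eq:mdp-btheta} is nothing but the adaptive Huber regression of Algorithm~\ref{algo:huber} run, for each fixed step $h$, on the data $\{(\bphi_{i,h},R_{i,h})\}_{i=1}^{k}$, so that Theorem~\ref{thm:heavy} directly controls $\nbr{\btheta_{k,h}-\btheta_h^*}_{\bH_{k,h}}$ once its hypotheses are checked. First I would fix $h\in[H]$ and cast the problem into Definition~\ref{def:regression} under the filtration $\{\cG_{k,h}\}_k$: the feature $\bphi_{k,h}$ and the weight $\nu_{k,h}$ from \eqref{eq:mdp-nu} are $\cG_{k,h}$-predictable, while Assumptions~\ref{ass:reward} and~\ref{ass:bounded-moment}, together with the reward/next-state independence, give $\EE[\varepsilon_{k,h}\mid\cG_{k,h}]=0$ and $\EE[|\varepsilon_{k,h}|^{1+\epsilon}\mid\cG_{k,h}]=[\bnu_{1+\epsilon}R_h](s_{k,h},a_{k,h})$, so the true conditional central moment is $\nu^{\mathrm{true}}_{k,h}:=[\bnu_{1+\epsilon}R_h]^{1/(1+\epsilon)}(s_{k,h},a_{k,h})$.

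The only nontrivial hypothesis of Theorem~\ref{thm:heavy} is the moment parameter $b$, which requires $\nu^{\mathrm{true}}_{k,h}/\hat\nu_{k,h}\le b$. Here I would invoke Lemma~\ref{lem:mdp-W}: on $\cE_{R^\epsilon}$ (with accurate reward estimates) it yields $|[\hat\bnu_{1+\epsilon}R_h-\bnu_{1+\epsilon}R_h](s_{k,h},a_{k,h})|\le W_{k,h}$, so by the definition $\hat\nu_{k,h}^{1+\epsilon}=[\hat\bnu_{1+\epsilon}R_h](s_{k,h},a_{k,h})+W_{k,h}$ in \eqref{eq:mdp-hatbnu}--\eqref{eq:mdp-W} we obtain $\hat\nu_{k,h}^{1+\epsilon}\ge[\bnu_{1+\epsilon}R_h](s_{k,h},a_{k,h})=(\nu^{\mathrm{true}}_{k,h})^{1+\epsilon}$, i.e.\ the weight is a valid upper bound and $b=1$.

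The hard part is that this verification is circular: Lemma~\ref{lem:mdp-W} presupposes that the reward estimates are accurate, which is exactly $\cE_R$, the event we are trying to establish. I would break the circle by an induction on the episode index $k$, exploiting that each weight $\hat\nu_{k,h}$ is assembled only from information strictly before episode $k$ at step $h$, namely $\hbpsi_{k-1,h}$ and $\beta_{R,k-1}$, and is therefore $\cG_{k,h}$-predictable. Assuming $\cE_{R,k-1}$, the reward estimates up to episode $k-1$ are exactly the quantities Lemma~\ref{lem:mdp-W} needs, so it certifies $\hat\nu_{i,h}\ge\nu^{\mathrm{true}}_{i,h}$ for all $i\le k$ and $h\in[H]$; because the weights are predictable, the conditional moment $\EE[|\varepsilon_{i,h}/\hat\nu_{i,h}|^{1+\epsilon}\mid\cG_{i,h}]=(\nu^{\mathrm{true}}_{i,h}/\hat\nu_{i,h})^{1+\epsilon}\le 1$ holds on this $\cG_{i,h}$-measurable event, which is precisely what the moment-generating-function/Chernoff estimates of Lemmas~\ref{lem:I1} and~\ref{lem:I2} (and hence the Hessian bound of Lemma~\ref{lem:hessian}) consume with $b=1$. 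Thus the conclusion of Theorem~\ref{thm:heavy} applies and delivers $\nbr{\btheta_{k,h}-\btheta_h^*}_{\bH_{k,h}}\le\beta_{R,k}$, which closes the induction by establishing $\cE_{R,k}$.

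Finally I would match constants and take a union bound. Applying Theorem~\ref{thm:heavy} with $\lambda=\lambda_R$, $T=K$, $b=1$, and confidence level $\delta/H$ gives the radius $3\sqrt{\lambda_R}B+24\,k^{\frac{1-\epsilon}{2(1+\epsilon)}}\sqrt{2\kappa}\,(\log 3K)^{\frac{1-\epsilon}{2(1+\epsilon)}}(\log(2HK^2/\delta))^{\frac{\epsilon}{1+\epsilon}}$, which is $O(\sqrt{\lambda_R}B+\sqrt{d}\,k^{\frac{1-\epsilon}{2(1+\epsilon)}}\iota)$ after writing $\sqrt{2\kappa}=O(\sqrt{d}\cdot\mathrm{polylog})$ and absorbing the logarithmic factors into $\iota$; this is exactly $\beta_{R,k}$ in \eqref{eq:mdp-beta_R}. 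Since the per-step failure probability is $3\delta/H$, a union bound over $h\in[H]$ yields total failure at most $3\delta$, so on $\cE_{R^\epsilon}$ the event $\cE_R=\cE_{R,K}$ holds with probability at least $1-3\delta$. The single point demanding care, and the main obstacle, is the mutual dependence between weight validity and regression accuracy; everything else follows by invoking Theorem~\ref{thm:heavy} and Lemma~\ref{lem:mdp-W} as black boxes.
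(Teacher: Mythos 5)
Your skeleton matches the paper's proof of this lemma almost exactly: fix $h$, cast \eqref{eq:mdp-btheta} as the online regression of Definition~\ref{def:regression} under $\{\cG_{i,h}\}$, use Lemma~\ref{lem:mdp-W} to argue $\hat{\nu}_{i,h}^{1+\epsilon} = [\hat{\bnu}_{1+\epsilon}R_h](s_{i,h},a_{i,h}) + W_{i,h} \ge [\bnu_{1+\epsilon}R_h](s_{i,h},a_{i,h})$ so that $b=1$, break the circularity with Lemma~\ref{lem:mdp-W} by induction over episodes, and finish with Theorem~\ref{thm:heavy} at confidence $\delta/H$ and a union bound over $h$. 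You also correctly identify the mutual dependence between weight validity and regression accuracy as the one hard point.

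However, there is a genuine gap at exactly that hard point: you invoke Theorem~\ref{thm:heavy} \emph{as a black box inside the induction}, but its hypothesis is that $\nu_t/\hat{\nu}_t \le b$ holds at \emph{every} round on the whole probability space, whereas your induction only guarantees it on $\cE_{R^\epsilon}\cap\cE_{R,k-1}$ --- an event correlated with the very noise driving the concentration. Applying the theorem conditionally on that event is not licensed (conditioning destroys the martingale/MGF structure in Lemmas~\ref{lem:I1} and \ref{lem:I2}), and re-invoking it at each induction step $k$ does not produce a single well-defined high-probability event of mass $1-3\delta$. Your observation that the per-round good events are $\cG_{i,h}$-measurable and that the Chernoff estimates only consume conditional moment bounds is indeed the right mechanism, but to exploit it while keeping Theorem~\ref{thm:heavy} a black box you must move the indicator \emph{into the data}, which is what the paper does: it defines truncated rewards $\tilde{R}_{i,h} = R_{i,h}\,\ind\cbr{\abr{\sbr{\hat{\bnu}_{1+\epsilon} R_h - \bnu_{1+\epsilon} R_h}(s_{i,h},a_{i,h})} \le W_{i,h}}$ (the indicator being predictable for round $i$), so the auxiliary estimator $\tilde{\btheta}_{k,h}$ satisfies the $b=1$ hypothesis \emph{by construction}, applies Theorem~\ref{thm:heavy} once, unconditionally, to get a single event on which $\nbr{\tilde{\btheta}_{k,h}-\btheta_h^*}_{\bH_{k,h}} \le \beta_{R,k}$ for all $k,h$, and only then runs your induction to show that on $\cE_{R^\epsilon}$ intersected with this event the truncation never fires, so $\tilde{\btheta}_{k,h} = \btheta_{k,h}$ and $\cE_{R,k}$ follows. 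Without this auxiliary truncated process (or an equivalent stopping-time construction), the step ``thus the conclusion of Theorem~\ref{thm:heavy} applies'' does not go through as written.
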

\begin{proof}
See Appendix~\ref{proof:mdp-ER} for a detailed proof.
\end{proof}

\begin{lemma}\label{lem:mdp-E0}
    Event $\cE_0$ holds with probability at least $1-3\delta$.
\end{lemma}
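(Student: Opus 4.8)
The plan is to treat all three inequalities defining $\cE_0$ as instances of a single self-normalized bound for the weighted ridge-regression operator $\hbw_{k-1,h}[\cdot]$ of \eqref{eq:bwf}, applied to the three responses $V^k_{h+1}$, $\pesV^k_{h+1}$ and $(V^k_{h+1})^2$. Fix a step $h$ and, for the moment, a \emph{deterministic} target $f:\cS\to[0,G]$. Using $[\PP_h f](s,a)=\dotp{\bphi(s,a)}{\bw_h[f]}$ and $\bSigma_{k-1,h}=\lambda_V\bI+\sum_{i=1}^{k-1}\sigma_{i,h}^{-2}\bphi_{i,h}\bphi_{i,h}^\top$, I would first write
\[
\hbw_{k-1,h}[f]-\bw_h[f]=\bSigma_{k-1,h}^{-1}\sum_{i=1}^{k-1}\sigma_{i,h}^{-2}\bphi_{i,h}\,\eta_{i,h}[f]-\lambda_V\bSigma_{k-1,h}^{-1}\bw_h[f],
\]
where $\eta_{i,h}[f]:=f(s_{i,h+1})-[\PP_h f](s_{i,h},a_{i,h})$. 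Taking $\bSigma_{k-1,h}$-norms and using $\nbr{\bSigma_{k-1,h}^{-1}\bu}_{\bSigma_{k-1,h}}=\nbr{\bu}_{\bSigma_{k-1,h}^{-1}}$ splits the error into a deterministic bias term and a stochastic term.

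The bias term is bounded deterministically by $\lambda_V\nbr{\bw_h[f]}_{\bSigma_{k-1,h}^{-1}}\le\sqrt{\lambda_V}\nbr{\bw_h[f]}\le\sqrt{d\lambda_V}\,G$, where the last step uses $\nbr{\bw_h[f]}\le\|f\|_\infty\,\|\bmu_h^*(\cS)\|\le\sqrt{d}\,G$ from Definition~\ref{def:linear-mdp}. For the stochastic term, the key structural observation is that $\sigma_{i,h}$ and $\bphi_{i,h}$ depend only on $(s_{i,h},a_{i,h})$ and earlier episodes — even though $\sigma_{i,h}$ is computed after $s_{i,h+1}$ in the code, its value in \eqref{eq:mdp-sigma} never uses $s_{i,h+1}$ — so both are $\cG_{i,h}$-measurable, while $\EE[\eta_{i,h}[f]\mid\cG_{i,h}]=0$ and $|\sigma_{i,h}^{-1}\eta_{i,h}[f]|\le G/\sigmamin$ because $\sigma_{i,h}\ge\sigmamin$. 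Hence $\sum_i(\sigma_{i,h}^{-1}\bphi_{i,h})(\sigma_{i,h}^{-1}\eta_{i,h}[f])$ is a bounded vector-valued martingale, and the standard self-normalized concentration inequality gives, uniformly in $k$ and with probability $1-\delta$,
\[
\nbr{\sum_{i=1}^{k-1}\sigma_{i,h}^{-2}\bphi_{i,h}\,\eta_{i,h}[f]}_{\bSigma_{k-1,h}^{-1}}\le\frac{G}{\sigmamin}\sqrt{2\rbr{\tfrac12\log\tfrac{\det\bSigma_{k-1,h}}{\det(\lambda_V\bI)}+\log\tfrac1\delta}},
\]
where the log-determinant potential is controlled by $d\,\iota_0$ via Lemma~\ref{lem:w-sum}.

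The crux is that $V^k_{h+1}$, $\pesV^k_{h+1}$ and $(V^k_{h+1})^2$ are \emph{not} deterministic: they depend on the whole history through $\btheta_{k-1,h},\hbw_{k-1,h},\cbw_{k-1,h}$ and the data-dependent bonus matrices $\bH_{k-1,h}^{-1},\bSigma_{k-1,h}^{-1}$ appearing in $Q^k_h$. Following the technique of \citet{he2023nearly}, I would regard each such target as an element of a parametric class indexed by a coefficient in $\cB_d(\cdot)$ together with two PSD matrices and the radii $\beta_R,\beta_V$, build an $\varepsilon$-net of cardinality $N_\varepsilon$, apply the fixed-$f$ bound at every net point under a union bound (replacing $\log\frac1\delta$ by $\log\frac{N_\varepsilon}{\delta}$), and absorb the discretization mismatch between the stochastic sums at $f$ and at its nearest net point into the radius by taking $\varepsilon$ polynomially small in $K$. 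The metric entropy $\log N_\varepsilon=\tilde O(d^2H)$ of this class, combined with the dimension-$d$ self-normalized factor, produces the polynomial $\sqrt{d^3H}$ inside $\beta_0$, while the logarithmic ranges of the net (which involve $\cH,\beta_R,\beta_V$) are exactly what is collected into $\iota_0$ in \eqref{eq:mdp-beta_0}. A final union bound over $h\in[H]$ and over the three estimators — with $G=\cH$ for $V^k_{h+1},\pesV^k_{h+1}$ and $G=\cH^2$ for $(V^k_{h+1})^2$, which is precisely why the third inequality carries the extra factor $\cH$ — then yields $\cE_0$ with probability at least $1-3\delta$.

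The main obstacle is this covering step. Unlike the bandit regression of Theorem~\ref{thm:heavy}, where $\btheta^*$ is fixed, here the regression targets are coupled to the estimators through the optimistic and pessimistic value iteration, so one must simultaneously (i) show these targets lie in a class of controlled metric entropy despite the self-normalized bonuses built from random matrices, and (ii) verify that perturbing the target by $\varepsilon$ moves the martingale term only at a lower-order scale, so that the \emph{coarse} radius $\beta_0$ — rather than the sharper, variance-weighted $\beta_V$ established later — already suffices at this stage.
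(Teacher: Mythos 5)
Your proposal is correct and follows essentially the same route as the paper's proof: the same bias/martingale decomposition with the bias bounded by $\sqrt{d\lambda_V}C_0$, the Hoeffding-type self-normalized inequality with the crude envelope $C_0/\sigmamin$ (the third condition of Lemma~\ref{lem:value-ci}, via Lemma~\ref{lem:self-hoff}), a covering of the optimistic/pessimistic classes $\cV^{\pm}$ of \eqref{eq:function-pos}--\eqref{eq:function-pes} with the net resolution taken polynomially small in $K$ so the discretization term is absorbed, the $\frac{\epsilon}{2\cH}$-cover trick for $(V^k_{h+1})^2$, and a union bound over $h\in[H]$ and the three targets yielding $1-3\delta$. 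Your only slip is the entropy bookkeeping: by Lemma~\ref{lem:covering} together with the rare-switching count $|\cK|=\tilde{O}(dH)$ of Lemma~\ref{lem:rare-update}, the log-covering number is $\tilde{O}(d^3H)$ ($d^2$ parameters per switching index times $\tilde{O}(dH)$ indices), and it enters $\beta_0$ \emph{additively} under the square root — giving $\sqrt{d^3H\iota_0^2}$ directly — rather than being $\tilde{O}(d^2H)$ and combined multiplicatively with a separate dimension-$d$ factor.
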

\begin{proof}
See Appendix~\ref{proof:mdp-E0} for a detailed proof.
\end{proof}

\begin{lemma}\label{lem:mdp-opt}
    On event $\cE_{R} \cap \cE_{h}$, for all $k\in[K]$, we have $\pesQ^k_{h}(\cdot,\cdot) \le Q^*_h(\cdot,\cdot) \le Q^k_{h}(\cdot,\cdot)$. In addition, we have $\pesV^k_h(\cdot) \le V^*_h(\cdot) \le V^k_h(\cdot)$.
\end{lemma}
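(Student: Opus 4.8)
The plan is to prove the two-sided bound by backward induction on the step index $h$, running from $h=H+1$ down to $h=1$, with a nested forward induction on the episode index $k$ to absorb the clipping against the previous episode's estimates. The base case $h=H+1$ is immediate since $V^k_{H+1}=\pesV^k_{H+1}=0=V^*_{H+1}$ by the convention in Algorithm~\ref{algo:mdp}. For the inductive step I would assume $\pesV^k_{h+1}(\cdot)\le V^*_{h+1}(\cdot)\le V^k_{h+1}(\cdot)$ for every $k$ and establish the corresponding $Q$-bound at step $h$; the passage from the $Q$-bound to the $V$-bound is then a one-line consequence of taking $\max_a$, since $Q^k_h\ge Q^*_h$ pointwise gives $V^k_h=\max_a Q^k_h\ge\max_a Q^*_h=V^*_h$, and dually $\pesV^k_h\le V^*_h$.

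For the optimistic side I would write the Bellman optimality equation $Q^*_h(s,a)=\dotp{\bphi}{\btheta_h^*}+\dotp{\bphi}{\bw_h[V^*_{h+1}]}$ and compare it term by term with the pre-clipping expression for $Q^k_h$. The reward part $\dotp{\bphi}{\btheta_{k-1,h}-\btheta_h^*}$ is controlled by $\beta_{R,k-1}\normbphiR{k}{h}$ on $\cE_R$ via Cauchy--Schwarz. For the transition part I would split $\dotp{\bphi}{\hbw_{k-1,h}-\bw_h[V^*_{h+1}]}=\dotp{\bphi}{\hbw_{k-1,h}-\bw_h[V^k_{h+1}]}+[\PP_h(V^k_{h+1}-V^*_{h+1})](s,a)$, using the linearity $\bw_h[f+g]=\bw_h[f]+\bw_h[g]$ and the identity $\dotp{\bphi}{\bw_h[f]}=[\PP_h f](s,a)$. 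The first summand is bounded by $\beta_V\normbphiV{k}{h}$ on $\cE_h$ (recall $\hbw_{k-1,h}=\hbw_{k-1,h}[V^k_{h+1}]$), while the second is nonnegative by the induction hypothesis $V^k_{h+1}\ge V^*_{h+1}$. Combining these with the two explicit bonus terms, which carry exactly the matching signs, yields $Q^k_h(s,a)\ge Q^*_h(s,a)$ before clipping. The pessimistic side is entirely symmetric, using $\cbw_{k-1,h}=\hbw_{k-1,h}[\pesV^k_{h+1}]$ and $\pesV^k_{h+1}\le V^*_{h+1}$ so that $[\PP_h(\pesV^k_{h+1}-V^*_{h+1})]\le 0$, giving $\pesQ^k_h(s,a)\le Q^*_h(s,a)$ before clipping.

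It remains to check that the clipping steps in Line~\ref{line:Qkh} preserve both inequalities, and this is where the nested induction on $k$ enters: I would use $Q^k_h=\min\{\,\cdot\,,Q^{k-1}_h,\cH\}$ together with $Q^{k-1}_h\ge Q^*_h$ (the $k$-induction hypothesis, with convention $Q^0_h\equiv\cH$) and the boundedness $Q^*_h\le\cH$ from part~(1) of Assumption~\ref{ass:bounded}; dually, $\pesQ^k_h=\max\{\,\cdot\,,\pesQ^{k-1}_h,0\}$ together with $\pesQ^{k-1}_h\le Q^*_h$ ($\pesQ^0_h\equiv 0$) and $Q^*_h\ge 0$ (nonnegativity of rewards). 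When UPDATE is false, the identity $Q^k_h=Q^{k-1}_h$ transfers optimism directly. The main obstacle I anticipate is not any single estimate but organizing the bookkeeping so that the step-$h$ and episode-$k$ inductions interlock consistently --- in particular ensuring $\cE_h$ is invoked for exactly the value functions $V^k_{h+1}$ and $\pesV^k_{h+1}$ against which $\hbw_{k-1,h}$ and $\cbw_{k-1,h}$ were fit, and that clipping against the stale rare-switching estimates never breaks the monotone chain $\pesQ^{k-1}_h\le Q^*_h\le Q^{k-1}_h$.
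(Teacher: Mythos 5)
Your proposal is correct and follows essentially the same route as the paper's proof: backward induction on $h$, a term-by-term comparison of the pre-clipped $Q^k_h$ with the Bellman expression for $Q^*_h$ using Cauchy--Schwarz on $\cE_R\cap\cE_h$, and the $h$-induction hypothesis to sign $[\PP_h(V^k_{h+1}-V^*_{h+1})]$, with the pessimistic side symmetric. The only difference is bookkeeping: where you run a nested forward induction on $k$ to absorb the clipping against $Q^{k-1}_h$, the paper equivalently unrolls $Q^k_h$ as the minimum over all past updating episodes $k_i \le \kl$ of pre-clipped expressions, each of which dominates $Q^*_h$ by the same argument.
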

\begin{proof}
See Appendix~\ref{proof:mdp-opt} for a detailed proof.
\end{proof}

\begin{lemma}\label{lem:mdp-E}
    On event $\cE_0 \cap \cE_{R} \cap \cE_{h+1}$, for all $k\in[K]$, we have
    $$
    \abr{\sbr{\hat{\VV}_h V^k_{h+1} - \VV_h V^*_{h+1}}(s_{k,h},a_{k,h})} \le E_{k,h},
    $$
    where $\hat{\VV}_h V^k_{h+1}$ and $E_{k,h}$ are defined in \eqref{eq:mdp-hatVV} and \eqref{eq:mdp-E} respectively.
\end{lemma}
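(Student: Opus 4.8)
The plan is to control the two variance estimators against each other through the intermediate quantity $[\VV_h V^k_{h+1}]$, the \emph{true} variance of the optimistic value, via the triangle inequality
\[
\abr{\sbr{\hat{\VV}_h V^k_{h+1} - \VV_h V^*_{h+1}}(s_{k,h},a_{k,h})} \le \abr{\sbr{\hat{\VV}_h V^k_{h+1} - \VV_h V^k_{h+1}}(s_{k,h},a_{k,h})} + \abr{\sbr{\VV_h V^k_{h+1} - \VV_h V^*_{h+1}}(s_{k,h},a_{k,h})}.
\]
The first term is a pure estimation error, to be controlled by event $\cE_0$; the second is an ``optimism gap'', to be controlled by the sandwich $\pesV^k_{h+1} \le V^*_{h+1} \le V^k_{h+1}$ supplied by Lemma~\ref{lem:mdp-opt} (valid on $\cE_R\cap\cE_{h+1}$). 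Since $\hat{\VV}_h V^k_{h+1}$ lies in $[0,\cH^2]$ by the clipping in \eqref{eq:mdp-hatVV} and $\VV_h V^*_{h+1}\in[0,\cH^2]$ as well, their difference is trivially at most $\cH^2$, which accounts for the $\min\{\cdot,\cH^2\}$ in the definition of $E_{k,h}$.

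For the estimation error I would expand $\VV_h V^k_{h+1} = [\PP_h (V^k_{h+1})^2] - [\PP_h V^k_{h+1}]^2$ and compare term by term with \eqref{eq:mdp-hatVV} using $[\PP_h f](s,a) = \dotp{\bphi(s,a)}{\bw_h[f]}$. The three concentration bounds of $\cE_0$ together with Cauchy--Schwarz give $\abr{\dotp{\bphi_{k,h}}{\tbw_{k-1,h}} - [\PP_h (V^k_{h+1})^2]} \le \cH\beta_0\normbphiV{k}{h}$ and $\abr{\dotp{\bphi_{k,h}}{\hbw_{k-1,h}} - [\PP_h V^k_{h+1}]} \le \beta_0\normbphiV{k}{h}$. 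Because the true quantities satisfy $[\PP_h (V^k_{h+1})^2]\in[0,\cH^2]$ and $[\PP_h V^k_{h+1}]\in[0,\cH]$, the projections $[\cdot]_{[0,\cH^2]}$ and $[\cdot]_{[0,\cH]}$ can only decrease these errors; for the squared first moment I then invoke $|a^2-b^2|=|a-b|\,|a+b|$ with both arguments in $[0,\cH]$ to absorb an extra factor $2\cH$. Summing yields $\abr{\hat{\VV}_h V^k_{h+1} - \VV_h V^k_{h+1}} \le 3\cH\beta_0 \normbphiV{k}{h}$.

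For the optimism gap I would split the variance difference into a second-moment part and a first-moment-squared part. Using $0\le V^*_{h+1}\le V^k_{h+1}\le\cH$, both $[\PP_h((V^k_{h+1})^2-(V^*_{h+1})^2)]$ and $[\PP_h V^k_{h+1}]^2-[\PP_h V^*_{h+1}]^2$ are nonnegative and bounded by $2\cH\,[\PP_h(V^k_{h+1}-V^*_{h+1})]$, so $\abr{\VV_h V^k_{h+1}-\VV_h V^*_{h+1}}\le 2\cH\,[\PP_h(V^k_{h+1}-V^*_{h+1})]$. The crucial step is to convert this unobservable expectation gap into the algorithm's surrogate: the pessimism bound $\pesV^k_{h+1}\le V^*_{h+1}$ gives $[\PP_h(V^k_{h+1}-V^*_{h+1})]\le \dotp{\bphi_{k,h}}{\bw_h[V^k_{h+1}]-\bw_h[\pesV^k_{h+1}]}$, and then the $\hbw,\cbw$ bounds of $\cE_0$ with Cauchy--Schwarz replace the unknown weights $\bw_h[\cdot]$ by $\hbw_{k-1,h},\cbw_{k-1,h}$ at a cost of $2\beta_0\normbphiV{k}{h}$. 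This produces $\abr{\VV_h V^k_{h+1}-\VV_h V^*_{h+1}}\le 2\cH\dotp{\bphi_{k,h}}{\hbw_{k-1,h}-\cbw_{k-1,h}}+4\cH\beta_0\normbphiV{k}{h}$; adding the estimation-error bound gives a total dominated by the first argument of $E_{k,h}$ in \eqref{eq:mdp-E}, and combining with the trivial $\cH^2$ bound closes the proof.

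I expect the main obstacle to be this last conversion: turning the expectation gap $[\PP_h(V^k_{h+1}-V^*_{h+1})]$ into the observable $\dotp{\bphi_{k,h}}{\hbw_{k-1,h}-\cbw_{k-1,h}}$ requires simultaneously invoking optimism/pessimism to eliminate $V^*_{h+1}$ and the event $\cE_0$ to eliminate the true feature weights, while carefully verifying that each clipping interval matches the range in which the corresponding true quantity lives so that no projection inflates an error. The constants $4\cH$ and $11\cH\beta_0$ in $E_{k,h}$ leave ample slack over the tighter coefficients $2\cH$ and $7\cH\beta_0$ that this decomposition produces, so the stated inequality follows.
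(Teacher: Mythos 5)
Your proposal is correct and follows essentially the same route as the paper's proof: the same triangle inequality through $[\VV_h V^k_{h+1}](s_{k,h},a_{k,h})$, the same use of $\cE_0$ with Cauchy--Schwarz and projection contraction for the estimation error (yielding $3\cH\beta_0\normbphiV{k}{h}$), and the same optimism/pessimism sandwich from Lemma~\ref{lem:mdp-opt} to convert the gap into $\dotp{\bphi_{k,h}}{\hbw_{k-1,h}-\cbw_{k-1,h}}$; your only deviation is bounding the difference of the two nonnegative variance components by $\max\{A,B\}$ rather than the paper's $A+B$, which tightens the constants $4\cH$, $11\cH\beta_0$ to $2\cH$, $7\cH\beta_0$. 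The one step you should make explicit is why your tighter bound is dominated by the first argument of $E_{k,h}$: since $\dotp{\bphi_{k,h}}{\hbw_{k-1,h}-\cbw_{k-1,h}}$ is not a priori nonnegative, the ``slack'' claim needs the observation that on $\cE_0$ (together with $V^k_{h+1}\ge\pesV^k_{h+1}$ from Lemma~\ref{lem:mdp-opt}) one has $\dotp{\bphi_{k,h}}{\hbw_{k-1,h}-\cbw_{k-1,h}} \ge [\PP_h(V^k_{h+1}-\pesV^k_{h+1})](s_{k,h},a_{k,h}) - 2\beta_0\normbphiV{k}{h} \ge -2\beta_0\normbphiV{k}{h}$, which makes $2\cH\dotp{\bphi_{k,h}}{\hbw_{k-1,h}-\cbw_{k-1,h}} + 4\cH\beta_0\normbphiV{k}{h} \ge 0$ and closes the comparison.
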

\begin{proof}
See Appendix~\ref{proof:mdp-E} for a detailed proof.
\end{proof}

\begin{lemma}\label{lem:mdp-D}
    On event $\cE_0 \cap \cE_{R} \cap \cE_{h+1}$, for all $i \le k \le K$, we have
    $$
    \max\cbr{\sbr{\VV_h\rbr{V^k_{h+1} - V^*_{h+1}}}(s_{i,h},a_{i,h}), \sbr{\VV_h\rbr{V^*_{h+1} - \pesV^k_{h+1}}}(s_{i,h},a_{i,h})} \le D_{i,h},
    $$
    where $D_{i,h}$ is defined in \eqref{eq:mdp-D}.
\end{lemma}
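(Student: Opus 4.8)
The plan is to control each of the two variance terms by reducing it to an expectation the algorithm can estimate, and then to invoke optimism together with the monotonicity induced by the rare-switching update. On the event $\cE_R \cap \cE_{h+1}$, Lemma~\ref{lem:mdp-opt} yields the sandwich $\pesV^k_{h+1}(\cdot) \le V^*_{h+1}(\cdot) \le V^k_{h+1}(\cdot)$, so both functions $f := V^k_{h+1} - V^*_{h+1}$ and $f' := V^*_{h+1} - \pesV^k_{h+1}$ are nonnegative and pointwise dominated by $g^k := V^k_{h+1} - \pesV^k_{h+1}$, which itself lies in $[0,\cH]$.

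First I would pass from variance to expectation. Since $\VV_h[f] = [\PP_h f^2] - [\PP_h f]^2 \le [\PP_h f^2]$ and $0 \le f \le \cH$, we have $f^2 \le \cH f \le \cH g^k$ pointwise, hence $[\VV_h f](s_{i,h},a_{i,h}) \le \cH [\PP_h g^k](s_{i,h},a_{i,h})$; the identical chain (using $0 \le f' \le g^k$ and $f' \le \cH$) bounds $[\VV_h f'](s_{i,h},a_{i,h})$ by the same quantity. Second I would remove the dependence on episode $k$: the clipping steps in Line~\ref{line:Qkh} give $Q^k_h \le Q^{k-1}_h$ and $\pesQ^k_h \ge \pesQ^{k-1}_h$, so by induction $V^k_{h+1} \le V^i_{h+1}$ and $\pesV^k_{h+1} \ge \pesV^i_{h+1}$ for every $i \le k$, whence $g^k \le g^i := V^i_{h+1} - \pesV^i_{h+1}$ and $[\PP_h g^k](s_{i,h},a_{i,h}) \le [\PP_h g^i](s_{i,h},a_{i,h})$.

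Third I would evaluate $[\PP_h g^i]$ at the observed feature via the linear structure $[\PP_h g^i](s_{i,h},a_{i,h}) = \dotp{\bphi_{i,h}}{\bw_h[V^i_{h+1}] - \bw_h[\pesV^i_{h+1}]}$, using additivity of $\bw_h[\cdot]$. Replacing the unknown $\bw_h[V^i_{h+1}], \bw_h[\pesV^i_{h+1}]$ by the ridge estimates $\hbw_{i-1,h}, \cbw_{i-1,h}$ and controlling the error with event $\cE_0$ (which gives $\nbr{\hbw_{i-1,h} - \bw_h[V^i_{h+1}]}_{\bSigma_{i-1,h}} \le \beta_0$ and likewise for $\cbw$) together with Cauchy--Schwarz, I obtain $[\PP_h g^i](s_{i,h},a_{i,h}) \le \dotp{\bphi_{i,h}}{\hbw_{i-1,h} - \cbw_{i-1,h}} + 2\beta_0 \normbphiV{i}{h}$. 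Multiplying by $\cH$ and combining with the trivial bound $[\VV_h f](s_{i,h},a_{i,h}) \le \cH^2$ (the variance of a $[0,\cH]$-valued quantity) gives a bound no larger than $D_{i,h}$ as defined in \eqref{eq:mdp-D}.

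I expect the main obstacle to be the second step: carefully justifying that the quantities appearing in $D_{i,h}$ are indexed by episode $i$ rather than $k$, which forces the monotonicity reduction $g^k \le g^i$ and requires that the clipping and rare-switching updates preserve the ordering across all intermediate episodes, including those where UPDATE is false and the estimates are merely copied. The variance-to-expectation and concentration steps are routine once this ordering is in place, and the looser constants $2\cH$ and $4\cH\beta_0$ in $D_{i,h}$ leave ample slack to absorb the tighter factors $\cH$ and $2\cH\beta_0$ produced above.
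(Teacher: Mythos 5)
Your proposal is correct and follows essentially the same route as the paper's proof: bound the variance by the second moment, dominate it by $\cH$ (the paper uses $2\cH$) times $\sbr{\PP_h\rbr{V^k_{h+1}-\pesV^k_{h+1}}}(s_{i,h},a_{i,h})$ via the optimism--pessimism sandwich of Lemma~\ref{lem:mdp-opt}, pass from episode $k$ to episode $i$ by the monotonicity enforced in Line~\ref{line:Qkh}, and conclude with Cauchy--Schwarz and event $\cE_0$. Your absorption of the tighter factors $\cH$, $2\cH\beta_0$ into the constants $2\cH$, $4\cH\beta_0$ of $D_{i,h}$ is legitimate because on $\cE_0$ one has $\dotp{\bphi_{i,h}}{\hbw_{i-1,h}-\cbw_{i-1,h}} + 2\beta_0\normbphiV{i}{h} \ge \sbr{\PP_h\rbr{V^i_{h+1}-\pesV^i_{h+1}}}(s_{i,h},a_{i,h}) \ge 0$, so doubling the bound can only increase it.
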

\begin{proof}
See Appendix~\ref{proof:mdp-D} for a detailed proof.
\end{proof}

\begin{lemma}\label{lem:mdp-EV}
    With probability at least $1-2\delta$, on event $\cE_0\cap\cE_R$, event $\cE_V$ holds.
\end{lemma}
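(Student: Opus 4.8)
The plan is to refine the coarse confidence bound $\beta_0$ from $\cE_0$ into the tighter $\beta_V$ by replacing the crude martingale concentration with a Bernstein-type self-normalized inequality (in the spirit of \citet{zhou2021nearly,he2023nearly}) that exploits the variance normalization built into the weights $\sigma_{i,h}$. Fix a step $h'$ and a value function $f$; the weighted ridge regression estimate from \eqref{eq:bwf} admits the standard decomposition
\begin{align*}
\hbw_{k-1,h'}[f] - \bw_{h'}[f] &= \bSigma_{k-1,h'}^{-1}\sum_{i=1}^{k-1}\sigma_{i,h'}^{-2}\bphi_{i,h'}\big(f(s_{i,h'+1}) - [\PP_{h'}f](s_{i,h'},a_{i,h'})\big)\\
&\quad - \lambda_V\bSigma_{k-1,h'}^{-1}\bw_{h'}[f].
\end{align*}
In $\bSigma_{k-1,h'}$-norm the regularization term is at most $\sqrt{\lambda_V}\,\|\bw_{h'}[f]\| \le \sqrt{d\lambda_V}\,\cH$, using $\|\bmu^*_{h'}(\cS)\|\le\sqrt d$ and $\|f\|_\infty\le\cH$; this yields the first term of $\beta_V$. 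The work is all in the martingale term.

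First I would apply the Bernstein self-normalized inequality to the weighted noise sequence $\sigma_{i,h'}^{-1}\big(f(s_{i,h'+1}) - [\PP_{h'}f](s_{i,h'},a_{i,h'})\big)$, whose conditional variance is $\sigma_{i,h'}^{-2}[\VV_{h'}f](s_{i,h'},a_{i,h'})$. The crucial input is that $\sigma_{i,h'}^2$ dominates $[\VV_{h'}f]$, so that the normalized variance is at most a constant and the leading term collapses from the $\cH\sqrt{d^3H}/\sigmamin$ scale of $\beta_0$ down to $\tilde O(\sqrt d)$. To establish this domination for $f=V^k_{h'+1}$ (resp.\ $\pesV^k_{h'+1}$), I would invoke the variance-estimation Lemmas~\ref{lem:mdp-E} and \ref{lem:mdp-D}, combined with $[\VV_{h'}V^k_{h'+1}]\le 2[\VV_{h'}V^*_{h'+1}]+2[\VV_{h'}(V^k_{h'+1}-V^*_{h'+1})]$, so that the $\hat\sigma_{k,h'}^2$ and $\sqrt{d^3HD_{k,h'}}$ pieces of $\sigma_{k,h'}$ in \eqref{eq:mdp-sigma} bound the two variance contributions separately. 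Since Lemmas~\ref{lem:mdp-E}--\ref{lem:mdp-D} require $\cE_{h'+1}$, I would run a downward induction on $h'$ from $H+1$ (where $V^k_{H+1}=\pesV^k_{H+1}=0$ makes the base case vacuous) down to $1$, so that $\cE_{h'+1}$ is already available when treating step $h'$.

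Because $V^k_{h'+1}$ and $\pesV^k_{h'+1}$ are data-dependent, the concentration cannot be applied pointwise; I would cover the relevant class of value functions, which by Line~\ref{line:Qkh} are parameterized by the regression vectors $\btheta_{k-1,h'}+\hbw_{k-1,h'}$ together with the bonus coefficients multiplying $\|\bphi\|_{\bH_{k-1,h'}^{-1}}$ and $\|\bphi\|_{\bSigma_{k-1,h'}^{-1}}$, clipped to $[0,\cH]$. The metric entropy of this net, expressed through the radii $\beta_R,\beta_V$ and the regularizers, produces exactly the logarithmic factors collected in $\iota_1$ (including the $\log\frac{4HK^2}{\delta}$ term that absorbs the $H,K$-dependence); the appearance of $\beta_V$ inside $\iota_1$ is harmless since it sits under a logarithm and the implicit definition can be resolved up to log factors. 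A union bound over the net, over $h'\ge h$, over $k\le K$, and over the two estimators ($\hbw$ with $V^k_{h'+1}$ and $\cbw$ with $\pesV^k_{h'+1}$) yields the failure probability $2\delta$ and closes the induction, establishing $\cE_V=\cE_1$.

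The main obstacle I anticipate is the interplay between the covering argument and the variance normalization: the net must be fine enough that the weights $\sigma_{i,h'}$ remain valid variance upper bounds \emph{simultaneously} for every function in the cover, not merely for the realized $V^k_{h'+1}$, yet coarse enough that its log-cardinality contributes only to the lower-order $\iota_1$ factors. Ensuring this uniformity, and feeding $\cE_{h'+1}$ into the variance lemmas along the downward induction without circularity in the self-referential definition of $\beta_V$, is the delicate part of the argument.
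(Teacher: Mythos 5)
You have the right scaffolding — the backward induction on $h$ so that $\cE_{h+1}$ is available when invoking Lemmas~\ref{lem:mdp-E} and \ref{lem:mdp-D}, the regularization term contributing $\sqrt{d\lambda_V}\,\cH$, and a Bernstein-type self-normalized bound exploiting the variance weights — but the step where you cover the class containing $V^k_{h+1}$ \emph{itself} and apply the concentration to the full function fails quantitatively. Through your split $[\VV_h V^k_{h+1}] \le 2[\VV_h V^*_{h+1}] + 2[\VV_h(V^k_{h+1}-V^*_{h+1})]$, the best domination obtainable for the whole function is $\sigma_{k,h}^2 \gtrsim [\VV_h V^k_{h+1}]$ with \emph{constant} $C_\sigma$: the $V^*$-variance contribution is of order $\hat{\sigma}_{k,h}^2$, so the factor $1/\sqrt{d^3H}$ supplied by the $\sqrt{d^3HD_{k,h}}$ piece of \eqref{eq:mdp-sigma} is unavailable for the combined variance. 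A union bound over a net of the optimistic value class then multiplies the leading Bernstein term by $\sqrt{\log N_0}$ with $\log N_0 = \tilde{O}(d^3H)$ (Lemma~\ref{lem:covering}), producing a radius of order $C_\sigma\sqrt{d\cdot d^3H}\,\iota_1 = \tilde{O}(d^2\sqrt{H})$ — essentially the coarse radius $\beta_0$ again in its $d,H$ scaling, not the claimed $\beta_V = O(\sqrt{d\lambda_V}\,\cH + \sqrt{d}\,\iota_1^2)$. The tension you flag at the end (``fine enough \dots yet coarse enough'') is not a delicacy to be managed: as you have set the argument up it is unresolvable, because the covering entropy and the variance ratio are both genuinely of the stated orders.

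The paper resolves it by splitting the \emph{function being estimated}, not merely its variance. By linearity of $\bw_h[\cdot]$ and $\hbw_{k-1,h}[\cdot]$, one writes $\hbw_{k-1,h} - \bw_h[V^k_{h+1}] = (\hbw_{k-1,h}[V^*_{h+1}] - \bw_h[V^*_{h+1}]) + (\hbw_{k-1,h}[V^k_{h+1}-V^*_{h+1}] - \bw_h[V^k_{h+1}-V^*_{h+1}])$. The first term involves the \emph{deterministic} function $V^*_{h+1}$, so no covering is needed at all; Lemma~\ref{lem:mdp-E} gives $\sigma_{k,h}^2 \ge \hat{\sigma}_{k,h}^2 \ge [\VV_h V^*_{h+1}](s_{k,h},a_{k,h})$, i.e., $C_\sigma = 1$, and case (i) of Lemma~\ref{lem:value-ci} yields $\beta_1 = O(\sqrt{d\lambda_V}\,\cH + \sqrt{d}\,\iota_1)$. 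The second term does require a cover with $\log N_0 \le 6d^3H\iota_1^2$, but the function $V^k_{h+1}-V^*_{h+1}$ has \emph{small} variance: Lemma~\ref{lem:mdp-D} together with the $\sqrt{d^3HD_{k,h}}$ piece of $\sigma_{k,h}$ gives $C_\sigma = 1/\sqrt{d^3H}$, which exactly cancels the entropy, $C_\sigma\sqrt{d\log N_0} = O(\sqrt{d}\,\iota_1)$, yielding $\beta_2 = O(\sqrt{d\lambda_V}\,\cH + \sqrt{d}\,\iota_1^2)$ and $\beta_V = \beta_1 + \beta_2$. This deterministic-plus-small-deviation decomposition (the mechanism inherited from \citet{he2023nearly}) is the missing idea; with it in place, the rest of your outline — the induction, the union bounds over $h$, $k$, and the two estimators — goes through as you describe.
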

\begin{proof}
See Appendix~\ref{proof:mdp-EV} for a detailed proof.
\end{proof}

\subsection{Regret Analysis}\label{sec:mdp-regret}

In this section, we will prove the regret bound based on the events defined in Appendix~\ref{sec:mdp-events}, which hold with probability at least $1-11\delta$. By the optimism of Lemma~\ref{lem:mdp-opt}, we have
$$
\mathrm{Regret}(K) = \sum_{k=1}^K \sbr{V^*_1(s_{k,1}) - V^{\pi^k}_{1}(s_{k,1})} \le \sum_{k=1}^K \sbr{V^k_1(s_{k,1}) - V^{\pi^k}_{1}(s_{k,1})}.
$$
Next we bound the regret with the summations of two bonus terms, i.e. $\sum_{k=1}^K\sum_{h=1}^H \normbphiR{k}{h}$ and $\sum_{k=1}^K\sum_{h=1}^H \normbphiV{k}{h}$ with respect to reward functions and transition probabilities by Lemma~\ref{lem:mdp-subopt-gap}. Then we bound them separately by Lemma~\ref{lem:mdp-sumR} and Lemma~\ref{lem:mdp-sumV}.

\begin{lemma}\label{lem:mdp-subopt-gap}
    With probability at least $1-\delta$, on event $\cE_R\cap\cE_V$, it follows that
    \begin{align*}
    &\sum_{k=1}^K \sbr{V^k_1(s_{k,1}) - V^{\pi^k}_{1}(s_{k,1})}\\
    \le{}& 6\beta_R \sum_{k=1}^K\sum_{h=1}^H \normbphiR{k}{h} + 6\beta_V \sum_{k=1}^K\sum_{h=1}^H \normbphiV{k}{h} + 36H\cH\log\frac{4\ceil{\log_2 HK}}{\delta}.
    \end{align*}
    and
    \begin{align*}
    &\sum_{k=1}^K \sum_{h=1}^H \sbr{\PP_h(V_{h+1}^k - V_{h+1}^{\pi^k})}(s_{k,h}, a_{k,h}) \\
    \le{}& 8H\beta_R \sum_{k=1}^K \sum_{h=1}^H \normbphiR{k}{h} + 8H\beta_V \sum_{k=1}^K \sum_{h=1}^H \normbphiV{k}{h} + 29 H^2 \cH \log\frac{4\ceil{\log_2 HK}}{\delta}.
    \end{align*}
\end{lemma}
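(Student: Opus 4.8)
Write $\Delta_{k,h} := V^k_h(s_{k,h}) - V^{\pi^k}_h(s_{k,h})$, which is nonnegative on $\cE_R\cap\cE_V$ by the optimism of Lemma~\ref{lem:mdp-opt}. The first step is a one-step decomposition. Since $a_{k,h}=\pi^k_h(s_{k,h})$, we have $\Delta_{k,h}=Q^k_h(s_{k,h},a_{k,h})-Q^{\pi^k}_h(s_{k,h},a_{k,h})$, while the Bellman equation gives $Q^{\pi^k}_h(\cdot,\cdot)=\dotp{\bphi(\cdot,\cdot)}{\btheta^*_h}+[\PP_h V^{\pi^k}_{h+1}](\cdot,\cdot)$. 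Dropping the clipping in Line~\ref{line:Qkh} (the minimum only decreases $Q^k_h$, so an upper bound is preserved) and using $[\PP_h f]=\dotp{\bphi}{\bw_h[f]}$, I would split the error into a reward part $\dotp{\bphi_{k,h}}{\btheta_{k-1,h}-\btheta^*_h}$, a value-estimation part $\dotp{\bphi_{k,h}}{\hbw_{k-1,h}-\bw_h[V^k_{h+1}]}$, and the propagation term $[\PP_h(V^k_{h+1}-V^{\pi^k}_{h+1})](s_{k,h},a_{k,h})$. The first two are controlled by Cauchy--Schwarz together with the events $\cE_R$ and $\cE_V$, yielding $\beta_{R,k-1}\normbphiR{k}{h}$ and $\beta_V\normbphiV{k}{h}$, respectively; the bonus terms inside $Q^k_h$ contribute the same quantities again. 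The rare-switching rule forces one to relate $\bH_{\kl-1,h}$ and $\bSigma_{\kl-1,h}$ to their current counterparts, but the determinant-doubling test guarantees these weighted norms change by at most a constant factor, which is where the coefficients inflate from $2$ toward $6$. The outcome is $\Delta_{k,h}\le 2\beta_{R,k-1}\normbphiR{k}{h}+2\beta_V\normbphiV{k}{h}+[\PP_h(V^k_{h+1}-V^{\pi^k}_{h+1})](s_{k,h},a_{k,h})$ up to this factor.

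Next I would convert the propagation term into a telescoping recursion by introducing the martingale differences $\xi_{k,h}:=[\PP_h(V^k_{h+1}-V^{\pi^k}_{h+1})](s_{k,h},a_{k,h})-\big(V^k_{h+1}(s_{k,h+1})-V^{\pi^k}_{h+1}(s_{k,h+1})\big)$, which are adapted to the filtration $\{\cF_{k,h}\}$. Unrolling the one-step bound from $h=1$ downward gives, per episode, $\Delta_{k,1}\le\sum_{h=1}^H(\mathrm{bonus}_{k,h}+\xi_{k,h})$, and summing over $k$ produces the first inequality's bonus sums plus $\sum_{k,h}\xi_{k,h}$. For the second inequality I would instead unroll $P_{k,h}:=[\PP_h(\cdots)](s_{k,h},a_{k,h})=\Delta_{k,h+1}+\xi_{k,h}\le\sum_{h'>h}\mathrm{bonus}_{k,h'}+\sum_{h'\ge h}\xi_{k,h'}$; summing over $h$, each bonus is counted at most $H$ times, which accounts for the extra factor $H$ in front of the bonus sums there.

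The crux is the martingale control. Because optimism and boundedness give $0\le V^k_{h+1}-V^{\pi^k}_{h+1}\le\cH$, the conditional variance of $\xi_{k,h}$ is at most $\cH\,P_{k,h}$, so the total predictable variance of the relevant martingale is at most a power of $H$ times $\cH\,S$, where $S:=\sum_{k,h}P_{k,h}$ is exactly the left-hand side of the second inequality. Applying a Freedman-type bound (Lemma~\ref{lem:freedman}) together with a peeling argument over the $\ceil{\log_2 HK}$ dyadic scales of this variance proxy---the source of the $\ceil{\log_2 HK}$ inside the logarithm---gives $\sum\xi\lesssim\sqrt{\cH S\,\iota}+\cH\,\iota$. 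The self-referential inequality $S\le H\cdot(\text{bonus sum})+\sqrt{2H^2\cH S\,\iota}+\cdots$ is then resolved via $c\sqrt S\le\tfrac12 S+\tfrac12 c^2$, which isolates $S$ and produces the $29H^2\cH\log(\cdots)$ term of the second inequality. Substituting this bound on $S$ back into the Freedman estimate for $\sum_{k,h}\xi_{k,h}$ and applying the same arithmetic-mean/geometric-mean trick to the cross term $\sqrt{\cH\cdot(\text{bonus sum})}$ absorbs half of the bonus sum into the left-hand side and yields the $36H\cH\log(\cdots)$ term of the first inequality.

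I expect the main obstacle to be precisely this self-bounding step: establishing the variance-by-mean inequality $\mathrm{Var}(\xi_{k,h}\mid\cF_{k,h-1})\le\cH\,P_{k,h}$, carrying out the dyadic peeling so the Freedman constants remain uniform across scales, and then solving the coupled quadratic inequalities for $S$ and $\sum\xi$ so that the $\sqrt{K}$ dependence cancels and only the advertised purely logarithmic additive terms survive. A secondary but delicate bookkeeping issue is tracking the rare-switching determinant-doubling factors consistently so that the bonus coefficients land exactly at $6$ and $8H$.
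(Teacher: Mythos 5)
Your proposal is correct and follows essentially the same route as the paper's proof: the same optimism-plus-Bellman one-step decomposition with the clipping dropped, the same factor-$2$ inflation from the determinant-doubling rule via Lemma~\ref{lem:matrix-ratio}, the same telescoping with martingale differences, the same variance-by-mean bound $\EE[X_{k,h}^2\mid\cF_{k,h}]\le\cH\,[\PP_h(V^k_{h+1}-V^{\pi^k}_{h+1})](s_{k,h},a_{k,h})$, the same $H$-weighted unrolling for the second inequality, and the same resolution of the self-bounding inequality $x\le a\sqrt{x}+b\Rightarrow x\le a^2+2b$. The only (cosmetic) difference is that you propose carrying out the dyadic peeling over variance scales by hand on top of plain Freedman (Lemma~\ref{lem:freedman}), whereas the paper simply invokes the variance-aware Freedman inequality (Lemma~\ref{lem:bern}), which packages exactly that peeling and is the source of the $\ceil{\log_2 HK}$ factor.
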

\begin{proof}
See Appendix~\ref{proof:mdp-subopt-gap} for a detailed proof.
\end{proof}

\begin{lemma}\label{lem:mdp-opt-pes-gap}
    With probability at least $1-\delta$, on event $\cE_R\cap\cE_V$, it follows that
    \begin{align*}
    &\sum_{k=1}^K \sum_{h=1}^H \sbr{\PP_h(V_{h+1}^k - \pesV_{h+1}^k)}(s_{k,h}, a_{k,h}) \\
    \le{}& 8H\beta_R \sum_{k=1}^K \sum_{h=1}^H \normbphiR{k}{h} + 16H\beta_V \sum_{k=1}^K \sum_{h=1}^H \normbphiV{k}{h} + 29 H^2 \cH \log\frac{4\ceil{\log_2 HK}}{\delta}.
    \end{align*}
\end{lemma}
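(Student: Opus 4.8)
The plan is to follow the proof of Lemma~\ref{lem:mdp-subopt-gap} almost verbatim, substituting the pessimistic estimate $\pesV^k$ for the executed-policy value $V^{\pi^k}$. Two structural changes arise and dictate the constants: because both $Q^k_h$ and $\pesQ^k_h$ are built from the \emph{same} reward estimate $\btheta_{k-1,h}$, that term cancels in the difference, so the reward contributes only through its two confidence bonuses; and because the difference now carries estimation error for \emph{both} $V^k_{h+1}$ and $\pesV^k_{h+1}$, the transition estimation error is incurred twice. This is exactly why the $\beta_R$ coefficient stays at $8H$ while the $\beta_V$ coefficient doubles to $16H$ relative to Lemma~\ref{lem:mdp-subopt-gap}.

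First I would establish the per-step inequality. Since $\pesV^k_h(s) = \max_a \pesQ^k_h(s,a) \ge \pesQ^k_h(s,\pi^k_h(s))$ and $V^k_h(s) = Q^k_h(s,\pi^k_h(s))$, writing $a_{k,h} = \pi^k_h(s_{k,h})$ gives
\[
V^k_h(s_{k,h}) - \pesV^k_h(s_{k,h}) \le (Q^k_h - \pesQ^k_h)(s_{k,h}, a_{k,h}).
\]
Expanding via Lines~\ref{line:Qkh}--\ref{eq:rare-switching-end} of Algorithm~\ref{algo:mdp} and noting that the clippings ($\min$ with $\cH$ and $Q^{k-1}_h$, $\max$ with $0$ and $\pesQ^{k-1}_h$) only decrease $Q^k_h$ and increase $\pesQ^k_h$, they only shrink the gap and may be dropped. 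The $\dotp{\bphi_{k,h}}{\btheta_{k-1,h}}$ terms cancel, leaving $\dotp{\bphi_{k,h}}{\hbw_{k-1,h} - \cbw_{k-1,h}} + 2\beta_R \normbphiR{k}{h} + 2\beta_V \normbphiV{k}{h}$. Using linearity of ridge regression, $\hbw_{k-1,h} - \cbw_{k-1,h} = \hbw_{k-1,h}[V^k_{h+1} - \pesV^k_{h+1}]$, and applying the two bounds of event $\cE_V$ separately (via Cauchy--Schwarz) to $V^k_{h+1}$ and $\pesV^k_{h+1}$ yields
\[
(Q^k_h - \pesQ^k_h)(s_{k,h}, a_{k,h}) \le \sbr{\PP_h(V^k_{h+1} - \pesV^k_{h+1})}(s_{k,h}, a_{k,h}) + 2\beta_R \normbphiR{k}{h} + 4\beta_V \normbphiV{k}{h}.
\]

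Next I would introduce the martingale difference $\xi_{k,h} := \sbr{\PP_h(V^k_{h+1} - \pesV^k_{h+1})}(s_{k,h},a_{k,h}) - (V^k_{h+1} - \pesV^k_{h+1})(s_{k,h+1})$, which satisfies $\EE[\xi_{k,h} \mid \cG_{k,h}] = 0$ (as $V^k_{h+1}, \pesV^k_{h+1}$ are $\cG_{k,h}$-measurable) and $|\xi_{k,h}| \le \cH$. Unrolling the recursion $V^k_h(s_{k,h}) - \pesV^k_h(s_{k,h}) \le (V^k_{h+1} - \pesV^k_{h+1})(s_{k,h+1}) + \xi_{k,h} + 2\beta_R \normbphiR{k}{h} + 4\beta_V \normbphiV{k}{h}$ from step $h$ up to $H$ (with $V^k_{H+1} - \pesV^k_{H+1} \equiv 0$) bounds each realized gap by a sum over later steps of bonuses and martingale increments. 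Summing the identity $\sbr{\PP_h(V^k_{h+1} - \pesV^k_{h+1})}(s_{k,h},a_{k,h}) = (V^k_{h+1} - \pesV^k_{h+1})(s_{k,h+1}) + \xi_{k,h}$ over all $(k,h)$ and substituting the unrolled bound, the inner double sum $\sum_h \sum_{h'>h}$ contributes the factor $H$, producing the $8H\beta_R$ and $16H\beta_V$ bonus sums.

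The main technical obstacle is controlling the aggregated martingale $\sum_{k,h}\xi_{k,h}$ together with its $H$-fold telescoped copy. I would handle this exactly as in Lemma~\ref{lem:mdp-subopt-gap}: an Azuma/Freedman-type concentration combined with a peeling argument over $\ceil{\log_2 HK}$ scale layers, which holds with probability at least $1-\delta$ and yields the additive term $29 H^2 \cH \log\frac{4\ceil{\log_2 HK}}{\delta}$. Collecting the bonus sums with this martingale term gives the claimed bound.
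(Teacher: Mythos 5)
Your overall route is the same as the paper's: reduce to the per-step gap $Q^k_h-\pesQ^k_h$ at $a_{k,h}$, cancel the common reward estimate, apply $\cE_V$ twice (once for $\hbw$, once for $\cbw$), telescope, and control the martingale part exactly as in Lemma~\ref{lem:mdp-subopt-gap}. However, there is a genuine gap in the per-step inequality: you expand $Q^k_h$ and $\pesQ^k_h$ as if they were built from $\btheta_{k-1,h},\hbw_{k-1,h},\cbw_{k-1,h}$ with bonuses in the $\bH_{k-1,h}^{-1}$ and $\bSigma_{k-1,h}^{-1}$ norms. Because of the rare-switching rule in Algorithm~\ref{algo:mdp}, $Q^k_h$ and $\pesQ^k_h$ were frozen at the last update episode $\kl\le k$, so the expansion actually produces $\beta_{R,\kl-1}\nbr{\bphi_{k,h}}_{\bH_{\kl-1,h}^{-1}}$ and $\beta_V\nbr{\bphi_{k,h}}_{\bSigma_{\kl-1,h}^{-1}}$, and likewise $\cE_V$ must be invoked at episode $\kl$ (using $V^k_{h+1}=V^{\kl}_{h+1}$, $\pesV^k_{h+1}=\pesV^{\kl}_{h+1}$). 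These stale norms are \emph{not} dominated by the current ones — since $\bH_{\kl-1,h}\preceq\bH_{k-1,h}$ implies $\bH_{\kl-1,h}^{-1}\succeq\bH_{k-1,h}^{-1}$, the naive replacement goes the wrong way. You need the determinant-doubling condition of the switching rule together with Lemma~\ref{lem:matrix-ratio} (i.e.\ \eqref{eq:bonus-ratio}) to get $\nbr{\bphi_{k,h}}_{\bH_{\kl-1,h}^{-1}}\le 2\normbphiR{k}{h}$ and the analogous bound for $\bSigma$, at the cost of a factor $2$.

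This omission also shows up as an internal inconsistency in your constants. Your stated per-step bound $2\beta_R\normbphiR{k}{h}+4\beta_V\normbphiV{k}{h}$ would, after the telescoping and the $x\le a\sqrt{x}+b\Rightarrow x\le a^2+2b$ step, yield $4H\beta_R$ and $8H\beta_V$, not the $8H\beta_R$ and $16H\beta_V$ you assert. With the rare-switching factor of $2$ restored, the per-step bound becomes
\begin{align*}
(Q^k_h-\pesQ^k_h)(s_{k,h},a_{k,h}) \le \sbr{\PP_h(V^k_{h+1}-\pesV^k_{h+1})}(s_{k,h},a_{k,h}) + 4\beta_{R,k-1}\normbphiR{k}{h} + 8\beta_V\normbphiV{k}{h},
\end{align*}
which is exactly the paper's intermediate inequality and telescopes to the claimed $8H\beta_R$, $16H\beta_V$, $29H^2\cH\log\frac{4\ceil{\log_2 HK}}{\delta}$. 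The remainder of your argument — dropping the clippings, splitting $\hbw-\cbw$ via linearity into two estimation errors plus $\bw_h[V^k_{h+1}-\pesV^k_{h+1}]$, and the Freedman/peeling treatment of the martingale — is correct and matches the paper.
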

\begin{proof}
See Appendix~\ref{proof:mdp-opt-pes-gap} for a detailed proof.
\end{proof}

\begin{lemma}\label{lem:mdp-sumR}
    Set $\lambda_R=\frac{d}{\max\cbr{B^2,W^2}}$. Then with probability at least $1-\delta$, on event $\cE_{R^\epsilon}\cap\cE_R$, we have
    \begin{align*}
    \sum_{k=1}^K\sum_{h=1}^H \normbphiR{k}{h} = \tilde{O}\Bigg(&\sqrt{dH\cU^*K} + \sqrt{dH^2K\numin^2} + \Bigg(\frac{\nu_{R^\epsilon} d^\frac{2+\epsilon}{2} H^\frac{1+\epsilon}{2} K^\frac{1-\epsilon'}{2(1+\epsilon')}}{\numin}\Bigg)^\frac{1}{\epsilon}\\
    &+ \rbr{d^\frac{4+\epsilon}{2} H^\frac{1+\epsilon}{2} \cH^\epsilon K^\frac{1-\epsilon}{2(1+\epsilon)}}^\frac{1}{\epsilon} + (\nu_R+\max\cbr{B,W})\sqrt{d}H\Bigg).
    \end{align*}
    Choosing $\numin = \tilde{O}\rbr{\nu_{R^\epsilon}^\frac{1}{1+\epsilon} d^\frac{1}{1+\epsilon} H^\frac{1-\epsilon}{2(1+\epsilon)} K^{-\frac{(1+\epsilon)(1+\epsilon')-2}{2(1+\epsilon)(1+\epsilon')}}}$ further gives
    \begin{align*}
    \sum_{k=1}^K\sum_{h=1}^H \normbphiR{k}{h} = \tilde{O}\bigg(&\sqrt{dH\cU^*K} + \nu_{R^\epsilon}^\frac{1}{1+\epsilon} d^\frac{3+\epsilon}{2(1+\epsilon)} H^\frac{3+\epsilon}{2(1+\epsilon)}K^\frac{1}{(1+\epsilon)(1+\epsilon')}\\
    &+ \rbr{d^\frac{4+\epsilon}{2} H^\frac{1+\epsilon}{2} \cH^\epsilon K^\frac{1-\epsilon}{2(1+\epsilon)}}^\frac{1}{\epsilon} + (\nu_R+\max\cbr{B,W})\sqrt{d}H\bigg).
    \end{align*}
\end{lemma}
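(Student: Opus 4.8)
The plan is to control $S := \sum_{k=1}^K\sum_{h=1}^H \normbphiR{k}{h}$ by a weighted elliptical-potential argument that mirrors the bandit analysis in Appendix~\ref{proof:bandit-regret}, augmented by a self-bounding step to resolve the fact that the regression weight $\nu_{k,h}$ in \eqref{eq:mdp-nu} depends, through the estimated moment $\hat\nu_{k,h}$ and the inflation $W_{k,h}$ of \eqref{eq:mdp-W}, on the very bonuses being summed. Writing $w_{k,h}=\normbphiR{k}{h}/\nu_{k,h}$ so that $\normbphiR{k}{h}=\nu_{k,h}w_{k,h}$, and using $c_0\le 1$ with Lemma~\ref{lem:w-sum}, I first record $\sum_{k=1}^K w_{k,h}^2\le 2\kappa$ for each fixed $h$, hence $\sum_{k,h}w_{k,h}^2\le 2H\kappa$. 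I then split $[K]\times[H]$ into three sets by which term attains the maximum in \eqref{eq:mdp-nu}: $\cI_1$ where it is $\max\{\hat\nu_{k,h},\numin\}$, $\cI_2$ where it is $\normbphiR{k}{h}/c_0$, and $\cI_3$ where it is the square-root term.

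For the geometric sets $\cI_2,\cI_3$ I proceed as for $\cJ_2,\cJ_3$ in the bandit proof: on $\cI_2$ one has $w_{k,h}=c_0$ and $\normbphiR{k}{h}=c_0^{-2}\normbphiR{k}{h}w_{k,h}^2\le (c_0^2\sqrt{\lambda_R})^{-1}w_{k,h}^2$ using $\|\bphi\|\le 1$ and $\bH_{k-1,h}\succeq\lambda_R\bI$, while on $\cI_3$ one has $\normbphiR{k}{h}=\frac{\max\{B,W\}}{\sqrt{2\kappa c_1}}w_{k,h}^2$; summing and invoking $\sum_{k,h}w_{k,h}^2\le 2H\kappa$ together with $\lambda_R=d/\max\{B^2,W^2\}$ and $\kappa=\tilde O(d)$ shows both contribute only the lower-order term $(\nu_R+\max\{B,W\})\sqrt d H$. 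The substantive work is $\cI_1$, where $\normbphiR{k}{h}=\max\{\hat\nu_{k,h},\numin\}w_{k,h}$ and Cauchy--Schwarz gives
\[
\sum_{\cI_1}\normbphiR{k}{h}\le\sqrt{2H\kappa}\,\sqrt{\sum_{k,h}\big(\hat\nu_{k,h}^2+\numin^2\big)},
\]
so everything reduces to bounding $M:=\sum_{k,h}\hat\nu_{k,h}^2$; the $\numin^2$ piece already yields the target term $\sqrt{dH^2K\numin^2}$.

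To bound $M$, I invoke Lemma~\ref{lem:mdp-W}, which on the good event gives $\hat\nu_{k,h}^{1+\epsilon}\le[\bnu_{1+\epsilon}R_h](s_{k,h},a_{k,h})+2W_{k,h}$; raising to the power $2/(1+\epsilon)\in[1,2]$ and using $(a+b)^p\le 2^{p-1}(a^p+b^p)$ splits $M$ into a population term $\sum_{k,h}[\bnu_{1+\epsilon}R_h]^{2/(1+\epsilon)}(s_{k,h},a_{k,h})$ and a self-referential term $\sum_{k,h}W_{k,h}^{2/(1+\epsilon)}$. The population term is handled by a Freedman/Azuma comparison of the visited-state sum with its occupancy-measure expectation, combined with Assumption~\ref{ass:bounded}(2); this is exactly where $\cU^*=\min\{\cU^*_0,\cU\}$ enters and produces the leading term $\sqrt{dH\cU^*K}$. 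For the self-referential term I write $W_{k,h}=C_k\normbphiR{k}{h}$ with $C_k=\beta_{R^\epsilon,k-1}+6\cH^\epsilon\beta_{R,k-1}\kappa$ and apply Hölder with conjugate exponents $1+\epsilon$ and $(1+\epsilon)/\epsilon$ to $(C_k\nu_{k,h})^{2/(1+\epsilon)}$ and $w_{k,h}^{2/(1+\epsilon)}$; since $w_{k,h}\le 1$ forces $w_{k,h}^{2/\epsilon}\le w_{k,h}^2$, this yields
\[
\sum_{k,h}W_{k,h}^{2/(1+\epsilon)}\le (2H\kappa)^{\epsilon/(1+\epsilon)}\,C_K^{2/(1+\epsilon)}\Big(\sum_{k,h}\nu_{k,h}^2\Big)^{1/(1+\epsilon)},
\]
and since $\sum_{k,h}\nu_{k,h}^2\le M+HK\numin^2+(\cI_2,\cI_3\text{ contributions})$ this closes a self-bounding inequality $M\le a+bM^{1/(1+\epsilon)}$ with $b=(2H\kappa)^{\epsilon/(1+\epsilon)}C_K^{2/(1+\epsilon)}$.

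The hard part is solving this recursion cleanly: balancing the two sides gives $M\lesssim a+b^{(1+\epsilon)/\epsilon}=a+2H\kappa\,C_K^{2/\epsilon}$, and it is precisely the exponent $(1+\epsilon)/\epsilon$ that converts the $\cH^\epsilon$ and $\beta_{R^\epsilon}$ inside $C_K$ into the characteristic $1/\epsilon$-powers of the third and fourth claimed terms. Feeding $M\lesssim K\cU^*+HK\numin^2+2H\kappa C_K^{2/\epsilon}$ back through the Cauchy--Schwarz estimate, splitting $C_K^{2/\epsilon}\lesssim\beta_{R^\epsilon}^{2/\epsilon}+(\cH^\epsilon\beta_R\kappa)^{2/\epsilon}$, and substituting $\kappa=\tilde O(d)$, $\lambda_R=d/\max\{B^2,W^2\}$, $\beta_{R^\epsilon}=\tilde O(\sqrt d\,\nu_{R^\epsilon}K^{(1-\epsilon')/(2(1+\epsilon'))}/\numin)$ and $\beta_R=\tilde O(\sqrt d\,K^{(1-\epsilon)/(2(1+\epsilon))})$ yields the first displayed bound. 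The second bound then follows by choosing $\numin$ to balance the explicit $\sqrt{dH^2K\numin^2}$ term against the $\beta_{R^\epsilon}/\numin$-driven term, i.e. $\numin=\tilde O\big(\nu_{R^\epsilon}^{1/(1+\epsilon)}d^{1/(1+\epsilon)}H^{(1-\epsilon)/(2(1+\epsilon))}K^{-((1+\epsilon)(1+\epsilon')-2)/(2(1+\epsilon)(1+\epsilon'))}\big)$, and simplifying the resulting powers of $K$. The main obstacle throughout is keeping the self-referential $W_{k,h}$-term from causing circularity; the Hölder step and the subsequent $M\le a+bM^{1/(1+\epsilon)}$ recursion are the crux, while the population term requires the martingale concentration that links the empirical moment sum to $\cU^*$.
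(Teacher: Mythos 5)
Your proof is correct, and it shares the paper's overall skeleton: the three-way partition of $[K]\times[H]$ according to which term of \eqref{eq:mdp-nu} attains the maximum, Cauchy--Schwarz plus the elliptical-potential bound $\sum_{k,h}w_{k,h}^2\le 2H\kappa$ for the geometric sets, Lemma~\ref{lem:mdp-W} to replace $\hat{\nu}_{k,h}^{1+\epsilon}$ by the true central moment plus $2W_{k,h}$, and variance-aware Freedman to convert the visited-state moment sum into $\cU^* K$. Where you genuinely diverge is in closing the self-referential $W_{k,h}$ term. The paper (Lemma~\ref{lem:mdp-hatnu-sum}) uses superadditivity, $\sum_i x_i^p \le (\sum_i x_i)^p$ for $p=\frac{2}{1+\epsilon}\ge 1$, to get $\sum_{k,h}W_{k,h}^{2/(1+\epsilon)} \le C_K^{2/(1+\epsilon)} S^{2/(1+\epsilon)}$ with $S=\sum_{k,h}\normbphiR{k}{h}$ and $C_K=\beta_{R^\epsilon}+6\cH^\epsilon\beta_R\kappa$, so the recursion $S \le a S^{1/(1+\epsilon)} + b$ is solved directly in $S$, producing the prefactor $(H\kappa)^{(1+\epsilon)/(2\epsilon)}$ on the $1/\epsilon$-power terms. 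You instead factor $W_{k,h}^{2/(1+\epsilon)} = (C_k\nu_{k,h})^{2/(1+\epsilon)} w_{k,h}^{2/(1+\epsilon)}$, apply H\"older with exponents $\bigl(1+\epsilon,\tfrac{1+\epsilon}{\epsilon}\bigr)$, exploit $w_{k,h}\le c_0\le 1$ so that $\sum_{k,h} w_{k,h}^{2/\epsilon} \le \sum_{k,h} w_{k,h}^2 \le 2H\kappa$, and close the recursion $M \le a + b M^{1/(1+\epsilon)}$ in $M=\sum_{k,h}\hat{\nu}_{k,h}^2$ before substituting back through Cauchy--Schwarz; the partition-wise bound $\sum_{k,h}\nu_{k,h}^2 \le M + HK\numin^2 + \tilde{O}(H\max\{B,W\}^2)$ you need there does hold (on the second and third sets $\nu_{k,h}^2$ sums to polylog multiples of $H\kappa/\lambda_R$ and $H\max\{B,W\}^2$ respectively), so no circularity with $S$ arises. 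Interestingly, your route yields the prefactor $H\kappa$ where the paper has $(H\kappa)^{(1+\epsilon)/(2\epsilon)}$, i.e., your versions of the third and fourth terms are smaller by a factor $(Hd)^{(1-\epsilon)/(2\epsilon)}$ up to logarithms (the two coincide at $\epsilon=1$); since a smaller bound trivially implies the stated $\tilde{O}$ bound, and your balancing of $\numin$ against the $\beta_{R^\epsilon}/\numin$-driven term matches the prescribed choice, the second display follows as claimed.
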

\begin{proof}
See Appendix~\ref{proof:mdp-sumR} for a detailed proof.
\end{proof}

\begin{lemma}\label{lem:mdp-sumV}
    Set $\lambda_V=\frac{1}{\cH^2}$, let $\cA_0$ denote the event where Lemma~\ref{lem:mdp-subopt-gap} and Lemma~\ref{lem:mdp-opt-pes-gap} hold. Then with probability at least $1-2\delta$, on event $\cE_0\cap\cE_R\cap\cE_V\cap\cA_0$, we have
    \begin{align*}
    \sum_{k=1}^K\sum_{h=1}^H\normbphiV{k}{h} = \tilde{O}\Bigg(&\sqrt{dH\cV^*K} + \sqrt{dH^2K\sigmamin^2} + \frac{d^{5.5}H^{2.5}\cH^2}{\sigmamin}\\
    &+ \sqrt{d^4H^3\cH\beta_R\sum_{k=1}^K\sum_{h=1}^H\normbphiR{k}{h}} + d^{4.5}H^3\cH\bigg).
    \end{align*}
    Choosing $\sigmamin = \tilde{O}\rbr{\sqrt{d^5 H^{1.5} \cH^2}K^{-\frac{1}{4}}}$ further gives
    \begin{align*}
    \sum_{k=1}^K\sum_{h=1}^H\normbphiV{k}{h} = \tilde{O}\Bigg(&\sqrt{dH\cV^*K} + \sqrt{d^6 H^{3.5} \cH^2} K^\frac{1}{4} + \sqrt{d^4H^3\cH\beta_R\sum_{k=1}^K\sum_{h=1}^H\normbphiR{k}{h}}\\
    &+ d^{4.5}H^3\cH\Bigg).
    \end{align*}
\end{lemma}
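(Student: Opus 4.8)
The plan is to mirror the bonus-summation argument of Theorem~\ref{thm:bandit-regret}, now applied to the variance-weighted design matrices $\bSigma_{k-1,h}$. Writing $\bar w_{k,h} := \|\bphi_{k,h}/\sigma_{k,h}\|_{\bSigma_{k-1,h}^{-1}} = \normbphiV{k}{h}/\sigma_{k,h}$, the bare term $\normbphiV{k}{h}$ inside the max in \eqref{eq:mdp-sigma} guarantees $\bar w_{k,h}\le 1$, so the elliptical potential lemma (Lemma~\ref{lem:w-sum}) yields $\sum_{k=1}^K \bar w_{k,h}^2 \le 2\kappa$ for each $h$, hence $\sum_{k,h}\bar w_{k,h}^2 \le 2H\kappa$. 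Since $\sum_{k,h}\normbphiV{k}{h} = \sum_{k,h}\sigma_{k,h}\bar w_{k,h}$, I would decompose $[K]\times[H]$ into five index sets according to which of the five terms in \eqref{eq:mdp-sigma} attains the maximum, and bound each contribution by Cauchy--Schwarz.

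The two \emph{geometric} terms (those proportional to $\normbphiV{k}{h}$ and $\normbphiV{k}{h}^{1/2}$) are handled exactly as $\cJ_2,\cJ_3$ in the bandit proof: on the index set where $\sigma_{k,h}=\normbphiV{k}{h}$ one has $\bar w_{k,h}=1$, so by the potential lemma that set has size $\tilde{O}(H\kappa)$ and, using $\normbphiV{k}{h}\le \cH$, its contribution is $\tilde{O}(dH\cH)$; on the set where $\sigma_{k,h}=\sqrt{d^{5/2}H\cH}\,\normbphiV{k}{h}^{1/2}$ one rewrites $\normbphiV{k}{h}=d^{5/2}H\cH\,\bar w_{k,h}^2$ and sums to $\tilde{O}(d^{7/2}H^2\cH)$. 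The $\sigmamin$ term gives, by Cauchy--Schwarz, $\sigmamin\sqrt{HK}\sqrt{2H\kappa}=\tilde{O}(\sqrt{dH^2K\sigmamin^2})$. These account for the lower-order terms in the statement.

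The substance is in the two \emph{statistical} terms $\hat\sigma_{k,h}$ and $\sqrt{d^3 H D_{k,h}}$, whose joint contribution is at most $\sqrt{2H\kappa}\big(\sqrt{\sum_{k,h}\hat\sigma_{k,h}^2}+\sqrt{d^3H\sum_{k,h}D_{k,h}}\big)$. Using Lemma~\ref{lem:mdp-E} I would bound $\sum_{k,h}\hat\sigma_{k,h}^2 \le \sum_{k,h}[\VV_h V^*_{h+1}](s_{k,h},a_{k,h}) + 2\sum_{k,h}E_{k,h}$, then control the empirical variance sum by $\tilde{O}(K\cV^*)$ through a Freedman/Azuma concentration that replaces the trajectory sum by the occupancy-measure expectation $K\cV^*_0$, together with a law-of-total-variance bound by $K\cV$ (taking the minimum yields $\cV^*$); this produces the main term $\sqrt{dH\cV^*K}$. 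For the $E_{k,h}$ and $D_{k,h}$ pieces I would invoke their definitions \eqref{eq:mdp-E}, \eqref{eq:mdp-D}: both are dominated by the optimism--pessimism gap $\sum_{k,h}[\PP_h(V^k_{h+1}-\pesV^k_{h+1})](s_{k,h},a_{k,h})$ plus a $\cH\beta_0\normbphiV{k}{h}$ remainder, and the gap is bounded via Lemma~\ref{lem:mdp-opt-pes-gap} by $8H\beta_R\sum_{k,h}\normbphiR{k}{h}+16H\beta_V\sum_{k,h}\normbphiV{k}{h}$ up to lower order.

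The main obstacle is that this last substitution makes $\sum_{k,h}\normbphiV{k}{h}=:X$ reappear on the right-hand side inside square roots --- once through the $\beta_V X$ part of the gap and once through the $\cH\beta_0 X$ remainder --- producing an inequality of the schematic form $X \le A + \sqrt{d^4H^3\cH\beta_R\sum_{k,h}\normbphiR{k}{h}} + \sqrt{B_1 X} + \sqrt{B_2 X}$, where $A=\tilde{O}(\sqrt{dH\cV^*K}+\sqrt{dH^2K\sigmamin^2})$, and plugging $\beta_V=\tilde{O}(\sqrt d)$ and $\beta_0=\tilde{O}(\sqrt{d^3H\cH^2}/\sigmamin)$ gives $B_1=\tilde{O}(d^4H^3\cH\beta_V)=\tilde{O}(d^{4.5}H^3\cH)$ from the $\beta_V$ part of $D_{k,h}$ and $B_2=\tilde{O}(d^4H^2\cH\beta_0)=\tilde{O}(d^{5.5}H^{2.5}\cH^2/\sigmamin)$ from its $\cH\beta_0$ part. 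Resolving this self-bounding inequality via the elementary implication $X\le A+\sqrt{BX}\Rightarrow X\le 2A+B$ yields the claimed bound, with $B_1,B_2$ becoming the additive $d^{4.5}H^3\cH$ and $d^{5.5}H^{2.5}\cH^2/\sigmamin$ terms and the genuine cross-term $\sqrt{d^4H^3\cH\beta_R\sum_{k,h}\normbphiR{k}{h}}$ left intact, to be combined later with Lemma~\ref{lem:mdp-sumR}. The final choice $\sigmamin=\tilde{O}(\sqrt{d^5H^{1.5}\cH^2}\,K^{-1/4})$ balances the $\sqrt{dH^2K\sigmamin^2}$ and $d^{5.5}H^{2.5}\cH^2/\sigmamin$ terms to give the stated $K^{1/4}$ expression.
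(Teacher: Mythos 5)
Your proposal is correct and follows essentially the same route as the paper: the same weighted elliptical-potential decomposition of $\sum_{k,h}\sigma_{k,h}\bar w_{k,h}$ by which term of \eqref{eq:mdp-sigma} attains the maximum (the paper merely groups $\hat\sigma_{k,h}$, $\sqrt{d^3HD_{k,h}}$, $\sigmamin$ into a single set $\cJ_1$), the same treatment of $\sum_{k,h}\hat\sigma_{k,h}^2$ and $\sum_{k,h}D_{k,h}$ via Lemmas~\ref{lem:mdp-E}, \ref{lem:mdp-D}, Freedman concentration to the occupancy measure, the total-variance lemma, and the optimism--pessimism gap (Lemmas~\ref{lem:mdp-subopt-gap}, \ref{lem:mdp-opt-pes-gap} on $\cA_0$), and the same resolution of the self-bounding inequality $x\le a\sqrt{x}+b\Rightarrow x\le a^2+2b$ with the identical balancing choice of $\sigmamin$. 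The only cosmetic slip is writing $\kappa$ where the potential constant for the $\bSigma$-matrices is $\kappa_V = d\log(1+K/(d\lambda_V\sigmamin^2))$, which is immaterial since both are $\tilde{O}(d)$.
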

\begin{proof}
See Appendix~\ref{proof:mdp-sumV} for a detailed proof.
\end{proof}

\ifarxiv
At the end of this section, we provide the proof of Theorem~\ref{thm:mdp-regret}.

\begin{proof}[Proof of Theorem~\ref{thm:mdp-regret}]
Based on high-probability events in Appendix~\ref{sec:mdp-events} and Lemmas in this section, we have
\begin{align*}
&\mathrm{Regret}(K)\\
={}& \sum_{k=1}^K \sbr{V^*_1(s_{k,1}) - V^{\pi^k}_{1}(s_{k,1})} \overset{(a)}{\le}{} \sum_{k=1}^K \sbr{V^k_1(s_{k,1}) - V^{\pi^k}_{1}(s_{k,1})}\\
\overset{(b)}{\le}{}& 6\beta_R \sum_{k=1}^K\sum_{h=1}^H \normbphiR{k}{h} + 6\beta_V \sum_{k=1}^K\sum_{h=1}^H \normbphiV{k}{h} + 36H\cH\log\frac{4\ceil{\log_2 HK}}{\delta}\\
\overset{(c)}{=}{}& \tilde{O}\bigg(d\sqrt{H\cU^*}K^\frac{1}{1+\epsilon} + d\sqrt{H\cV^*K} + \nu_{R^\epsilon}^\frac{1}{1+\epsilon} d^\frac{2+\epsilon}{1+\epsilon} H^\frac{3+\epsilon}{2(1+\epsilon)} K^\frac{2+(1-\epsilon)(1+\epsilon')}{2(1+\epsilon)(1+\epsilon')} + \sqrt{d^7 H^{3.5} \cH^2} K^\frac{1}{4}\\
\qquad{}& + \rbr{d^{2+\epsilon} H^\frac{1+\epsilon}{2} \cH^\epsilon K^\frac{1-\epsilon}{2}}^\frac{1}{\epsilon} + (\nu_R+\max\cbr{B,W})dHK^\frac{1-\epsilon}{2(1+\epsilon)} + d^5H^3\cH\bigg),
\end{align*}
where $(a)$ holds due to the optimism of Lemma~\ref{lem:mdp-opt}, (b) holds due to Lemma~\ref{lem:mdp-subopt-gap}. (c) holds due to Lemma~\ref{lem:mdp-sumR}, Lemma~\ref{lem:mdp-sumV} and
\[
\beta_V \sqrt{d^4H^3\cH\beta_R\sum_{k=1}^K\sum_{h=1}^H\normbphiR{k}{h}} \le \frac{1}{2}\beta_V^2 d^4H^3\cH + \frac{1}{2}\beta_R\sum_{k=1}^K\sum_{h=1}^H\normbphiR{k}{h}.
\]
\end{proof}
\else
At the end of this section, we state the formal version of Theorem~\ref{thm:mdp-regret} and provide its proof.

\begin{theorem}[Formal version of Theorem~\ref{thm:mdp-regret}]\label{thm:mdp-regret-formal}
    For the time-inhomogeneous linear MDPs with heavy-tailed rewards defined in Section~\ref{sec:pre-mdp}, we set parameters in Algorithm~\ref{algo:mdp} as follows: $\lambda_R = d/\max\cbr{B^2,W^2}$, $\lambda_V = 1/\cH^2$,
    $\numin$ in Lemma~\ref{lem:mdp-sumR}, $\sigmamin$ in Lemma~\ref{lem:mdp-sumV},
    $\beta_{R^\epsilon}$, $\beta_0$, $\beta_R$, $\beta_V$ in \eqref{eq:mdp-beta_Reps}, \eqref{eq:mdp-beta_0}, \eqref{eq:mdp-beta_R}, \eqref{eq:mdp-beta_V}, respectively.
    Then for any $\delta\in(0,1)$, with probability at least $1-16\delta$, the regret of $\algomdp$ is bounded by
    \begin{align*}
    \mathrm{Regret}(K) = \tilde{O}\bigg(&d\sqrt{H\cU^*}K^\frac{1}{1+\epsilon} + d\sqrt{H\cV^*K} + \nu_{R^\epsilon}^\frac{1}{1+\epsilon} d^\frac{2+\epsilon}{1+\epsilon} H^\frac{3+\epsilon}{2(1+\epsilon)} K^\frac{2+(1-\epsilon)(1+\epsilon')}{2(1+\epsilon)(1+\epsilon')}\\
    & + \sqrt{d^7 H^{3.5} \cH^2} K^\frac{1}{4} + \rbr{d^{2+\epsilon} H^\frac{1+\epsilon}{2} \cH^\epsilon K^\frac{1-\epsilon}{2}}^\frac{1}{\epsilon}\\
    & + (\nu_R+\max\cbr{B,W})dHK^\frac{1-\epsilon}{2(1+\epsilon)} + d^5H^3\cH\bigg),
    \end{align*}
    where
    \begin{align}
    \cU^* &= \min\cbr{\cU^*_0,\cU}\text{ with }\cU^*_0 = \frac{1}{K}\sum_{k=1}^K \EE_{(s_h,a_h)\sim d_h^{\pi^k}} \sbr{\sum_{h=1}^H \sbr{\bnu_{1+\epsilon} R_h}^\frac{2}{1+\epsilon}(s_h,a_h)},\label{eq:U}\\
    \cV^* &= \min\cbr{\cV^*_0,\cV}\text{ with }\cV^*_0 = \frac{1}{K}\sum_{k=1}^K \EE_{(s_h,a_h)\sim d_h^{\pi^k}} \sbr{\sum_{h=1}^H \sbr{\VV_h V^*_{h+1}}(s_h,a_h)},\label{eq:V}\\
    &\qquad\qquad\qquad\,\, d_h^{\pi^k}(s, a) = \PP^{\pi^k}((s_h,a_h)=(s,a)|s_0=s_{k,1})\label{eq:d}
    \end{align}
    and $\cH, \cU, \cV$ are defined in Assumption~\ref{ass:bounded}.
\end{theorem}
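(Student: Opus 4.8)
The plan is to treat this theorem as the final assembly step, combining the high-probability events of Appendix~\ref{sec:mdp-events} with the bonus-sum estimates of the present section, and spending the real work only on probability bookkeeping, decoupling, and exponent arithmetic. First I would fix the good event to be the intersection of $\cE_{R^\epsilon}$, $\cE_0$, $\cE_R$, $\cE_V$ (Lemmas~\ref{lem:mdp-EReps},~\ref{lem:mdp-E0},~\ref{lem:mdp-ER},~\ref{lem:mdp-EV}) together with the events on which Lemmas~\ref{lem:mdp-subopt-gap},~\ref{lem:mdp-opt-pes-gap},~\ref{lem:mdp-sumR},~\ref{lem:mdp-sumV} hold. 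A union bound over the (conditional) failure probabilities — $3\delta+3\delta+3\delta+2\delta$ for the four events and $\delta+\delta+\delta+2\delta$ for the four lemmas — shows they hold simultaneously with probability at least $1-16\delta$, and all subsequent inequalities are stated on this event.

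Next I would perform the regret decomposition. The optimism guarantee of Lemma~\ref{lem:mdp-opt} gives $V_1^*(s_{k,1}) \le V_1^k(s_{k,1})$, hence $\mathrm{Regret}(K) \le \sum_{k=1}^K [V_1^k(s_{k,1}) - V_1^{\pi^k}(s_{k,1})]$, and Lemma~\ref{lem:mdp-subopt-gap} bounds this by
\[
6\beta_R\, S_R + 6\beta_V\, S_V + 36H\cH\log\tfrac{4\ceil{\log_2 HK}}{\delta},
\]
where I write $S_R := \sum_{k=1}^K\sum_{h=1}^H \normbphiR{k}{h}$ and $S_V := \sum_{k=1}^K\sum_{h=1}^H \normbphiV{k}{h}$. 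It then remains to control the two bonus sums and substitute the known magnitudes $\beta_R = \tilde{O}(\sqrt{d}\,K^{\frac{1-\epsilon}{2(1+\epsilon)}})$ and $\beta_V = \tilde{O}(\sqrt{d})$.

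I would then invoke Lemma~\ref{lem:mdp-sumR} with the stated choice of $\numin$ to get $S_R = \tilde{O}(\sqrt{dH\cU^*K}+\cdots)$ and Lemma~\ref{lem:mdp-sumV} with the stated choice of $\sigmamin$ to get $S_V = \tilde{O}(\sqrt{dH\cV^*K}+\sqrt{d^4H^3\cH\,\beta_R S_R}+\cdots)$. Multiplying by the weights, the two dominant terms become $\beta_R\sqrt{dH\cU^*K} = \tilde{O}(d\sqrt{H\cU^*}K^{\frac{1}{1+\epsilon}})$ — using the identity $\frac{1-\epsilon}{2(1+\epsilon)}+\frac12 = \frac{1}{1+\epsilon}$ — and $\beta_V\sqrt{dH\cV^*K} = \tilde{O}(d\sqrt{H\cV^*K})$, matching the two leading terms in the statement; the remaining error terms of the two lemmas, after multiplication by $\beta_R$ or $\beta_V$, reproduce exactly the listed lower-order terms (I would verify the exponents of the reward terms $\nu_{R^\epsilon}^{1/(1+\epsilon)} d^{\frac{2+\epsilon}{1+\epsilon}} H^{\frac{3+\epsilon}{2(1+\epsilon)}} K^{\frac{2+(1-\epsilon)(1+\epsilon')}{2(1+\epsilon)(1+\epsilon')}}$, $(d^{2+\epsilon}H^{\frac{1+\epsilon}{2}}\cH^\epsilon K^{\frac{1-\epsilon}{2}})^{1/\epsilon}$ and $(\nu_R+\max\{B,W\})dHK^{\frac{1-\epsilon}{2(1+\epsilon)}}$, and the transition term $\sqrt{d^7H^{3.5}\cH^2}K^{1/4}$).

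\textbf{The main obstacle} is the self-referential coupling introduced by the term $\sqrt{d^4 H^3\cH\,\beta_R S_R}$ in Lemma~\ref{lem:mdp-sumV}: the value-bonus sum $S_V$ itself depends on the reward-bonus sum $S_R$, so a naive substitution would not close. I would break the coupling with AM--GM,
\[
\beta_V\sqrt{d^4 H^3\cH\,\beta_R S_R} \le \tfrac12 \beta_V^2 d^4 H^3\cH + \tfrac12 \beta_R S_R,
\]
so that the second piece is reabsorbed into the existing $6\beta_R S_R$ contribution (altering only a constant), while the first piece $\tfrac12\beta_V^2 d^4 H^3\cH = \tilde{O}(d^5 H^3\cH)$ becomes the final lower-order term. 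After this decoupling, collecting all terms and simplifying the $\tilde{O}(\cdot)$ exponents yields precisely the claimed bound. The only other delicate point is checking that the prescribed $\numin$ and $\sigmamin$ simultaneously balance bias against the singularity-avoidance contributions so that no hidden term dominates the two main rates — a direct but tedious exponent computation rather than a conceptual difficulty.
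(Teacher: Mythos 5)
Your proposal matches the paper's own proof essentially step for step: the same $16\delta$ union bound (the $11\delta$ for $\cE_{R^\epsilon}\cap\cE_0\cap\cE_R\cap\cE_V$ plus $\delta+\delta+\delta+2\delta$ for Lemmas~\ref{lem:mdp-subopt-gap}, \ref{lem:mdp-opt-pes-gap}, \ref{lem:mdp-sumR}, \ref{lem:mdp-sumV}), the same optimism-plus-Lemma~\ref{lem:mdp-subopt-gap} decomposition into $6\beta_R S_R + 6\beta_V S_V$, the same invocations of Lemmas~\ref{lem:mdp-sumR} and~\ref{lem:mdp-sumV} with the prescribed $\numin$, $\sigmamin$, and the identical AM--GM step $\beta_V\sqrt{d^4H^3\cH\,\beta_R S_R}\le\tfrac{1}{2}\beta_V^2 d^4H^3\cH+\tfrac{1}{2}\beta_R S_R$ that the paper uses to break the self-referential coupling between the two bonus sums. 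The exponent arithmetic (e.g., $\tfrac{1-\epsilon}{2(1+\epsilon)}+\tfrac{1}{2}=\tfrac{1}{1+\epsilon}$) is also correct, so there is no gap.
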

\begin{proof}[Proof of Theorem~\ref{thm:mdp-regret}]
Based on high-probability events in Appendix~\ref{sec:mdp-events} and Lemmas in this section, we have
\begin{align*}
&\mathrm{Regret}(K)\\
={}& \sum_{k=1}^K \sbr{V^*_1(s_{k,1}) - V^{\pi^k}_{1}(s_{k,1})} \overset{(a)}{\le}{} \sum_{k=1}^K \sbr{V^k_1(s_{k,1}) - V^{\pi^k}_{1}(s_{k,1})}\\
\overset{(b)}{\le}{}& 6\beta_R \sum_{k=1}^K\sum_{h=1}^H \normbphiR{k}{h} + 6\beta_V \sum_{k=1}^K\sum_{h=1}^H \normbphiV{k}{h} + 36H\cH\log\frac{4\ceil{\log_2 HK}}{\delta}\\
\overset{(c)}{=}{}& \tilde{O}\bigg(d\sqrt{H\cU^*}K^\frac{1}{1+\epsilon} + d\sqrt{H\cV^*K} + \nu_{R^\epsilon}^\frac{1}{1+\epsilon} d^\frac{2+\epsilon}{1+\epsilon} H^\frac{3+\epsilon}{2(1+\epsilon)} K^\frac{2+(1-\epsilon)(1+\epsilon')}{2(1+\epsilon)(1+\epsilon')} + \sqrt{d^7 H^{3.5} \cH^2} K^\frac{1}{4}\\
\qquad{}& + \rbr{d^{2+\epsilon} H^\frac{1+\epsilon}{2} \cH^\epsilon K^\frac{1-\epsilon}{2}}^\frac{1}{\epsilon} + (\nu_R+\max\cbr{B,W})dHK^\frac{1-\epsilon}{2(1+\epsilon)} + d^5H^3\cH\bigg),
\end{align*}
where $(a)$ holds due to the optimism of Lemma~\ref{lem:mdp-opt}, (b) holds due to Lemma~\ref{lem:mdp-subopt-gap}. (c) holds due to Lemma~\ref{lem:mdp-sumR}, Lemma~\ref{lem:mdp-sumV} and
\[
\beta_V \sqrt{d^4H^3\cH\beta_R\sum_{k=1}^K\sum_{h=1}^H\normbphiR{k}{h}} \le \frac{1}{2}\beta_V^2 d^4H^3\cH + \frac{1}{2}\beta_R\sum_{k=1}^K\sum_{h=1}^H\normbphiR{k}{h}.
\]
\end{proof}

\begin{remark}
If $\epsilon > \frac{1}{1+\epsilon'}$, then $K^\frac{2+(1-\epsilon)(1+\epsilon')}{2(1+\epsilon)(1+\epsilon')} < \sqrt{K}$. When the number of episodes $K$ is sufficiently large, the regret can be simplified to $\tilde{O}\rbr{d\sqrt{H\cU^*}K^\frac{1}{1+\epsilon} + d\sqrt{H\cV^*K}}$.
\end{remark}
\fi

\begin{remark}\label{rem:mdp-li2023variance-appendix}
When $\epsilon=1$, we return to the setting in \citet{li2023variance} and the regret reduces to $\tilde{O}(d\sqrt{H\cU^*K} + d\sqrt{H\cV^*K})$. While their variance-aware regret bound is $\tilde{O}(d\sqrt{H\cG^*K})$, where $\cG^*$ is a variance-dependent quantity defined in their work. We provide the definition of $\cG^*$ and its relationship with $\cU^*$, $\cV^*$ below.
\begin{align*}
\cG^* &= \min\cbr{\frac{1}{K}\sum_{k=1}^K \EE_{(s_h,a_h)\sim d_h^{\pi^k}} \sbr{\sum_{h=1}^H \sbr{\VV_h R_h + \VV_h V^*_{h+1}}(s_h,a_h)}, \cU + \cV}\\
& = \min\cbr{\cU^*_0 + \cV^*_0, \cU + \cV}\\
&\ge \min\cbr{\cU^*_0, \cU} + \min\cbr{\cV^*_0, \cV}\\
&= \cU^* + \cV^*,
\end{align*}
where $\cU^*$, $\cV^*$ and $d_h^{\pi^k}(s, a)$ are defined in Theorem~\ref{thm:mdp-regret}.
Thus we have $d\sqrt{H\cU^*K} + d\sqrt{H\cV^*K} \le 2 d\sqrt{H(\cU^* + \cV^*)K} \le 2 d\sqrt{H\cG^*K}$, which implies that we recover their result.
\end{remark}

\subsection{Proof of Corollary~\ref{cor:mdp-first-order}}\label{proof:mdp-first-order}

\begin{proof}[Proof of Corollary~\ref{cor:mdp-first-order}]
First, it holds that $\cV^* \le \cV \le \cH V^*_1$ according to \eqref{eq:mdp-first-order-relationship} in the proof of Lemma~\ref{lem:total-variance} in Appendix~\ref{proof:total-variance}.
Thus the first result follows.
Next, to make a fair comparison with the state-of-the-art result of first-order regret \citep{wagenmaker2022first}, we assume the reward functions are uniformly bounded, i.e., $R_h(s,a) \in [0,1]$ for all $(s,a)\in\cS\times\cA$ and $h\in[H]$.
Then $\epsilon=1$, $\cH = H$ and 
\begin{align*}
\cU^* \le \cU^*_0 &= \frac{1}{K}\sum_{k=1}^K \EE_{(s_h,a_h)\sim d_h^{\pi^k}} \sbr{\sum_{h=1}^H \sbr{\bnu_{1+\epsilon} R_h}^\frac{2}{1+\epsilon}(s_h,a_h)}\\
&\overset{(a)}{\le} \frac{1}{K}\sum_{k=1}^K \EE_{(s_h,a_h)\sim d_h^{\pi^k}} \sbr{\sum_{h=1}^H r_h(s_h,a_h)} \overset{(b)}{\le} V^*_1,
\end{align*}
where $(a)$ holds due to $R_h(s,a)$ is bounded in $[0,1]$ for all $(s,a)\in\cS\times\cA$ and $h\in[H]$, $(b)$ uses the optimality of $V^*_1$.
Finally, the proof is completed by the fact that $\cV^* \le H V^*_1$ and $\cU^* \le V^*_1$.
\end{proof}

\section{Proof of Lower Bound}
\label{proof:lower-bound-heavy}

\begin{proof}[Proof of Theorem~\ref{thm:lower-bound-heavy}]
The proof of Theorem~\ref{thm:lower-bound-heavy} follows from a combination of the lower bound constructions for heavy-tailed linear bandits in \citet{shao2018almost} and linear MDPs in \citet{zhou2021nearly}.
On one hand, we construct a linear MDP with deterministic transition probabilities by concatenating $H$ hard instances in \citet{shao2018almost} together. Summing the regret over the $H$ components yields $\Omega(d H K^{\frac{1}{1+\epsilon}})$.
On the other hand, \citet{zhou2021nearly} shows the regret is at least $\Omega(d \sqrt{H^3 K})$. Combining the results together gives the final lower bound.
\end{proof}

\section{Omitted Proofs in Appendix~\ref{proof:mdp-regret}}\label{proof:mdp}

\subsection{Proof of Lemma~\ref{lem:mdp-EReps}}\label{proof:mdp-EReps}

\begin{proof}[Proof of Lemma~\ref{lem:mdp-EReps}]
For each $h\in[H]$, we apply Theorem~\ref{thm:heavy} with $L = 1, B = W, T = k, \delta = \delta / H, \cF_t = \cG_{i,h}, y_t = |\varepsilon_{i,h}|^{1+\epsilon}, \btheta_t = \bpsi_{k,h}$ and $b = \nu_{R^\epsilon} / \numin$. We choose parameters $c_0, c_1,\ttau_0$ as in Theorem~\ref{thm:heavy}. Then with probability at least $1-3\delta$, $\nbr{\bpsi_{k,h} - \bpsi_h^*}_{\bH_{k,h}^{-1}} \le \beta_{R^\epsilon,k}$, that is event $\cE_{R^\epsilon}$ holds.
\end{proof}

\subsection{Proof of Lemma~\ref{lem:mdp-W}}\label{proof:mdp-W}

\begin{proof}[Proof of Lemma~\ref{lem:mdp-W}]
Notice that
\begin{align*}
&\abr{\sbr{\hat{\bnu}_{1+\epsilon} R_h - \bnu_{1+\epsilon} R_h}(s_{k,h},a_{k,h})}\\
={}& \abr{\dotp{\bphi_{k,h}}{\hbpsi_{k-1,h} - \bpsi^*_h}}\\
\le{}& \abr{\dotp{\bphi_{k,h}}{\hbpsi_{k-1,h} - \bpsi_{k-1,h}}} + \abr{\dotp{\bphi_{k,h}}{\bpsi_{k-1,h} - \bpsi^*_h}}\\
\le{}& \normbphiR{k}{h} \|\hbpsi_{k-1,h} - \bpsi_{k-1,h}\|_{\bH_{k-1,h}} + \normbphiR{k}{h} \nbr{\bpsi_{k-1,h} - \bpsi^*_h}_{\bH_{k-1,h}},
\end{align*}
where the last inequality holds due to Cauchy-Schwartz inequality. And
\begin{equation}\label{eq:proof-mdp-ER-help-1}
\nbr{\bpsi_{k-1,h} - \bpsi^*_h}_{\bH_{k-1,h}} \le \beta_{R^\epsilon,k-1}
\end{equation}
since $\cE_{R^\epsilon}$ holds.

We next give an upper bound of $\|\hbpsi_{k-1,h} - \bpsi_{k-1,h}\|_{\bH_{k-1,h}}$ by a novel perturbation analysis of adaptive Huber regression. For each $h\in[H]$, we apply Lemma~\ref{lem:perturbation} with $t = k-1, \btheta_t = \bpsi_{k-1,h}, \hbtheta_t = \hbpsi_{k-1,h}, \cF_s = \cG_{i,h}, y_s = |\varepsilon_{i,h}|^{1+\epsilon}, \hat{y}_s = |\hat{\varepsilon}_{i,h}|^{1+\epsilon}$. Notice
\begin{align*}
\abr{|\hat{\varepsilon}_{i,h}|^{1+\epsilon} - |\varepsilon_{i,h}|^{1+\epsilon}} &\le |\hat{\varepsilon}_{i,h} - \varepsilon_{i,h}|^{1+\epsilon}\\
&=\abr{\dotp{\bphi_{i,h}}{\btheta_{i,h} - \btheta^*_h}}^{1+\epsilon}\\
&\overset{(a)}{\le} \cH^\epsilon \nbr{\btheta_{i,h} - \btheta^*_h}_{\bH_{i,h}} \nbr{\bphi_{i,h}}_{\bH_{i,h}^{-1}}\\
&\overset{(b)}{\le} \cH^\epsilon \beta_{R,i-1} \nbr{\bphi_{i,h}}_{\bH_{i,h}^{-1}},
\end{align*}
where $(a)$ holds due to Cauchy-Schwartz inequality, and $\dotp{\bphi_{i,h}}{\btheta_{i,h}}$ together with $\dotp{\bphi_{i,h}}{\btheta^*_h}$ are bounded in $[0,\cH]$. And $(b)$ is due to $\cE_{R,k-1}$ holds.
Then we have $\hat{\beta}_s = \cH^\epsilon \beta_{R,i-1}$ in Lemma~\ref{lem:perturbation}, and it follows that
\begin{equation}\label{eq:proof-mdp-ER-help-2}
\|\hbpsi_{k-1,h} - \bpsi_{k-1,h}\|_{\bH_{k-1,h}} \le 6 \cH^\epsilon \beta_{R,k-1} \kappa.
\end{equation}
Combining \eqref{eq:proof-mdp-ER-help-1} with \eqref{eq:proof-mdp-ER-help-2}, we have
$$
\abr{\sbr{\hat{\bnu}_{1+\epsilon} R_h - \bnu_{1+\epsilon} R_h}(s_{k,h},a_{k,h})} \le (\beta_{R^\epsilon,k-1} + 6\cH^\epsilon \beta_{R,k-1} \kappa) \normbphiR{k}{h}.
$$
That is $\abr{\sbr{\hat{\bnu}_{1+\epsilon} R_h - \bnu_{1+\epsilon} R_h}(s_{k,h},a_{k,h})} \le W_{k,h}$.
\end{proof}

\subsection{Proof of Lemma~\ref{lem:mdp-ER}}\label{proof:mdp-ER}

\begin{proof}[Proof of Lemma~\ref{lem:mdp-ER}]
For each $h\in[H]$, we use Theorem~\ref{thm:heavy} with $L = 1, \delta = \delta / H, T = k, \cF_t = \cG_{i,h}, y_t = \tilde{R}_{i,h} := R_{i,h} \ind\cbr{\abr{\sbr{\hat{\bnu}_{1+\epsilon} R_h - \bnu_{1+\epsilon} R_h}(s_{i,h},a_{i,h})} \le W_{i,h}}$. Thus $b=1$ by definition. Denote $\tilde{\btheta}_{k,h}$ as the counterpart of $\btheta_{k,h}$ being the solution of adaptive Huber regression where $R_{i,h}$ is replaced by $\tilde{R}_{i,h}$ for all $i\le k$. Then by Theorem~\ref{thm:heavy}, with probability at least $1-3\delta$, we have
\begin{equation}\label{eq:proof-mdp-ER-btheta-error}
\nbr{\tilde{\btheta}_{k,h} - \btheta_h^*}_{\bH_{k,h}^{-1}} \le \beta_{R,k}
\end{equation}
for all $h\in[H]$ and $k \in [K]$.

Then, we continue the proof by induction on event $\cE_{R,k}$ over $k$. First, event $\cE_{R,0}$ holds trivially. Next, we suppose $\cE_{R,k-1}$ holds and will prove $\cE_{R,k}$ holds. Since $\cE_{R^\epsilon}\cap\cE_{R,k-1}$ holds, it follows that $\tilde{R}_{i,h} = R_{i,h}$ for all $h\in[H]$ and $i \le k$ by Lemma~\ref{lem:mdp-W} and the definition of $\tilde{R}_{i,h}$. And thus $\tilde{\btheta}_{k,h} = \btheta_{k,h}$. By \eqref{eq:proof-mdp-ER-btheta-error}, for all $h\in[H]$, we have
$$
\nbr{\btheta_{i,h} - \btheta_h^*}_{\bH_{i,h}^{-1}} \le \beta_{R,i},
$$
where
\begin{align*}
\beta_{R,i} &= 3 \sqrt{\lambda_R}B + 24 i^{\frac{1-\epsilon}{2(1+\epsilon)}} \sqrt{2\kappa} (\log 3K)^{\frac{1-\epsilon}{2(1+\epsilon)}} \rbr{\log \frac{2HK^2}{\delta}}^{\frac{\epsilon}{1+\epsilon}}\\
&= O\rbr{\sqrt{\lambda_R}B + \sqrt{d} i^{\frac{1-\epsilon}{2(1+\epsilon)}} \iota}.
\end{align*}
That is $\cE_{R,k}$ holds.

Finally, induction over all $k \in [K]$ completes the proof.
\end{proof}

\subsection{Proof of Lemma~\ref{lem:mdp-E0}}\label{proof:mdp-E0}

We will use the self-normalized bound with a covering argument in Lemma~\ref{lem:value-ci} for estimating next-state value function in the following proof frequently, which is the core technique used in \citet{jin2020provably,he2023nearly}.

\begin{proof}[Proof of Lemma~\ref{lem:mdp-E0}]
Define $\cV^{+}$ as the class of optimistic value functions mapping from $\cS$ to $\RR$ with the parametric form given in~\eqref{eq:function-pos} and $\cV^{-}$ the class of pessimistic value functions with the parametric form given in~\eqref{eq:function-pes}.
By Lemma~\ref{lem:rare-update} and Lemma~\ref{lem:covering},
\begin{equation}
\label{eq:covering}
\log \cN(\cV^\pm, \epsilon) \le \sbr{d \log \rbr{1+ \frac{4(B+L)}{\epsilon}} + d^2 \log \rbr{1 +\frac{ 8 d^{1/2} \beta_{R,K}^2}{\lambda_R \epsilon^2}} \rbr{1 +\frac{ 8 d^{1/2} \beta_V^2}{\lambda_V \epsilon^2}}} |\cK|,
\end{equation}
where $L = \cH \sqrt{\frac{dK}{\lambda_V}}$ and $|\cK| \le dH \log_2 \rbr{1 + \frac{K}{\lambda_R \nu_{\min}^2}} \rbr{1 + \frac{K}{\lambda_V \sigma_{\min}^2}}$.

\paragraph{The case where $f =  V_{h+1}^k$.}

One can find that $f \in \cV^+$.
To make use of Lemma~\ref{lem:value-ci}, we first specify the parameters defined therein.
We have $\|f\|_{\infty} \le  C_0 = \cH$ and $\epsilon_1 =  \frac{\lambda_V \cH \sqrt{d}}{K} \numin^2$. 
By~\eqref{eq:covering}, it follows that
\begin{align*}
& \log \cN(\cV^{+}, \epsilon_1) \\
\le{}& \sbr{ d \log \rbr{1+ \frac{4(B+L)K}{\lambda_V\cH \sqrt{d} \sigma_{\min}^2}} + d^2 \log \rbr{1 +\frac{8 \beta_{R,K}^2K^2}{\sqrt{d}\lambda_R\lambda_V^2 \cH^2\sigma_{\min}^4 }} \rbr{1 +\frac{8 \beta_V^2K^2}{\sqrt{d}\lambda_V^3 \cH^2\sigma_{\min}^4 }}} \\
&\qquad \cdot dH \log_2 \rbr{1 + \frac{K}{\lambda_R \nu_{\min}^2}} \rbr{ 1 + \frac{K}{\lambda_V\sigma_{\min}^2} }\\
\le{}& 6 d^3 H\iota_0^2.
\end{align*}
By the third condition of Lemma~\ref{lem:value-ci}, with probability at least $1-\frac{\delta}{H}$, $\nbr{\hat{\bw}_{k-1,h} - \bw_{h}\sbr{V^k_{h+1}}}_{\bSigma_{k,h}} \le \beta_0$ for all $k \in [K]$.
A union which finishes the proof.

\paragraph{The case where $f = \pesV_{h+1}^k$.}

The analysis on $\pesV_{h+1}^k$ is similar to (i).

\paragraph{The case where $f = \rbr{V_{h+1}^k}^2$.}

The analysis on $\rbr{V_{h+1}^k}^2$ is similar to (i) except for the following two changes. 
First, $C_0 = \cH^2$ and $\epsilon_1' =  \frac{\lambda_V \cH^2 \sqrt{d}}{K} \sigma_{\min}^2$.
Second, with $[\cV^+]^2 = \{ f^2: f \in \cV^+ \}$, we have $[V_{h+1}^k]^2 \in [\cV^+]^2$ and
\begin{align*}
    \log \cN([\cV^+]^2, \epsilon_1') 
    &\overset{(a)}{\le} \log \cN(\cV^+, \frac{\epsilon_1'}{2\cH}) 
    \le  \log \cN(\cV^+, \frac{\epsilon_1}{2}) \le  6 d^3 H\iota_0^2.
\end{align*}
Here $(a)$ uses the fact that the $\frac{\epsilon_1'}{2\cH}$-cover of $\cV^+$ is a $\epsilon_1$-cover of $[\cV^+]^2$.
\end{proof}

\subsection{Proof of Lemma~\ref{lem:mdp-opt}}\label{proof:mdp-opt}

\begin{proof}[Proof of Lemma~\ref{lem:mdp-opt}]
We prove the optimism inequality by induction. When $h = H+1$, results hold trivially since $V^k_{H+1} = V^*_{H+1} = 0$. We assume the statement is true for $h+1$, that is $V^k_{h+1}(\cdot) \ge V^*_{h+1}(\cdot)$. Next we prove the case of $h$.

For any $(s, a) \in \cS \times \cA$ and $k \in [K]$,
\begin{align*}
&\dotp{\bphi(\cdot,\cdot)}{\btheta_{k-1,h}+\hat{\bw}_{k-1,h}} + \beta_{R,k-1}\nbr{\bphi(\cdot,\cdot)}_{\bH_{k-1,h}^{-1}} + \beta_{V}\nbr{\bphi(\cdot,\cdot)}_{\bSigma_{k-1,h}^{-1}} - Q^*_h(\cdot,\cdot)\\
={}& \dotp{\bphi(\cdot,\cdot)}{\btheta_{k-1,h} - \btheta^*_h} + \beta_{R,k-1}\nbr{\bphi(\cdot,\cdot)}_{\bH_{k-1,h}^{-1}}\\
&\qquad + \dotp{\bphi(\cdot,\cdot)}{\hat{\bw}_{k-1,h} - \bw_h\sbr{V^*_{h+1}}} + \beta_{V}\nbr{\bphi(\cdot,\cdot)}_{\bSigma_{k-1,h}^{-1}}\\
\overset{(a)}{\ge}{}& \dotp{\bphi(\cdot,\cdot)}{\btheta_{k-1,h} - \btheta^*_h} + \beta_{R,k-1}\nbr{\bphi(\cdot,\cdot)}_{\bH_{k-1,h}^{-1}}\\
&\qquad + \dotp{\bphi(\cdot,\cdot)}{\hat{\bw}_{k-1,h} - \bw_h\sbr{V^k_{h+1}}} + \beta_{V}\nbr{\bphi(\cdot,\cdot)}_{\bSigma_{k-1,h}^{-1}}\\
\overset{(b)}{\ge}{}& \nbr{\bphi(\cdot,\cdot)}_{\bH_{k-1,h}^{-1}}\rbr{-\nbr{\btheta_{k-1,h} - \btheta^*_h}_{\bH_{k-1,h}} + \beta_{R,k-1}} \\
&\qquad + \nbr{\bphi(\cdot,\cdot)}_{\bSigma_{k-1,h}^{-1}}\rbr{-\nbr{\hat{\bw}_{k-1,h} - \bw_h\sbr{V^k_{h+1}}}_{\bSigma_{k-1,h}} + \beta_{V}}\\
\overset{(c)}{\ge}{}& 0,
\end{align*}
where $(a)$ is due to $\cE_{h}$ holds, $\cE_{h+1} \subset \cE_{h}$ and thus $\inner{\bphi(\cdot,\cdot)}{\bw_h\sbr{V^k_{h+1}} - \bw_h\sbr{V^*_{h+1}}} = [\PP_h (V^k_{h+1} - V^*_{h+1})](\cdot,\cdot) \ge 0$ by induction. $(b)$ is due to Cauchy-Schwartz inequality. And $(c)$ is due to $\cE_R\cap\cE_h$ holds.

We assume the sequence of updating episodes $1 = k_1 \le k_2 \le \dots \le k_{N_K} \le K$, such that the latest update episode before $k$ is $\kl = k_{N_k}$. Then for all $k\in[K]$, we have
\begin{align*}
& Q^k_{h+1}(\cdot,\cdot)\\
={}& \min_{i\le N_k} \cbr{\dotp{\bphi(\cdot,\cdot)}{\btheta_{k_i-1,h}+\hat{\bw}_{k_i-1,h}} + \beta_{R,k_i-1}\nbr{\bphi(\cdot,\cdot)}_{\bH_{k_i-1,h}^{-1}} + \beta_{V}\nbr{\bphi(\cdot,\cdot)}_{\bSigma_{k_i-1,h}^{-1}}, \cH}\\
\ge{}& Q^*_h(\cdot,\cdot),
\end{align*}
which implies the case of $h$ is true.

The proof for pessimism inequality is similar to the optimism.
\end{proof}

\subsection{Proof of Lemma~\ref{lem:mdp-E}}\label{proof:mdp-E}

\begin{proof}[Proof of Lemma~\ref{lem:mdp-E}]
First, $\abr{\sbr{\hat{\VV}_h V^k_{h+1} - \VV_h V^*_{h+1}}(s_{k,h},a_{k,h})} \le \cH^2$ since both $\sbr{\hat{\VV}_h V^k_{h+1}}(\cdot,\cdot)$ and $\sbr{\VV_h V^*_{h+1}}(\cdot,\cdot)$ are bounded in $[0,\cH^2]$. Then it follows that
\begin{align*}
&\abr{\sbr{\hat{\VV}_h V^k_{h+1} - \VV_h V^*_{h+1}}(s_{k,h},a_{k,h})}\\
\le{}& \abr{\sbr{\hat{\VV}_h V^k_{h+1} - \VV_h V^k_{h+1}}(s_{k,h},a_{k,h})} + \abr{\sbr{\VV_h V^k_{h+1} - \VV_h V^*_{h+1}}(s_{k,h},a_{k,h})}.
\end{align*}
We next bound the two terms in the RHS of the last inequality separately.

\paragraph{Bound the first term.}

It follows that
\begin{align*}
&\abr{\sbr{\hat{\VV}_h V^k_{h+1} - \VV_h V^k_{h+1}}(s_{k,h},a_{k,h})}\\
\le{}& \abr{\inner{\bphi_{k,h}}{\tbw_{k-1,h}}_{[0,\cH^2]} - \inner{\bphi_{k,h}}{\bw_h\sbr{\rbr{V^k_{h+1}}^2}}}\\
    &\qquad + \abr{\inner{\bphi_{k,h}}{\hbw_{k-1,h}}_{[0,\cH]}^2 - \inner{\bphi_{k,h}}{\bw_h\sbr{V^k_{h+1}}}^2}\\
\overset{(a)}{\le}{}& \abr{\inner{\bphi_{k,h}}{\tbw_{k-1,h} - \bw_h\sbr{\rbr{V^k_{h+1}}^2}}} + 2\cH\abr{\inner{\bphi_{k,h}}{\hbw_{k-1,h} - \bw_h\sbr{V^k_{h+1}}}}\\
\overset{(b)}{\le}{}& \normbphiV{k}{h} \nbr{\tbw_{k-1,h} - \bw_h\sbr{\rbr{V^k_{h+1}}^2}}_{\bSigma_{k-1,h}}\\
&\qquad + 2\cH \normbphiV{k}{h} \nbr{\hbw_{k-1,h} - \bw_h\sbr{V^k_{h+1}}}_{\bSigma_{k-1,h}}\\
\overset{(c)}{\le}{}& 3\cH\beta_0 \normbphiV{k}{h},
\end{align*}
where $(a)$ is due to both $\inner{\bphi_{k,h}}{\hbw_{k-1,h}}_{[0,\cH]}$ and $\inner{\bphi_{k,h}}{\bw_h\sbr{V^k_{h+1}}}$ are bounded in $[0,\cH]$. $(b)$ holds due to Cauchy-Schwartz inequality and $(c)$ is due to $\cE_0$ holds.

\paragraph{Bound the second term.}

We have
\begin{align*}
&\abr{\sbr{\VV_h V^k_{h+1} - \VV_h V^*_{h+1}}(s_{k,h},a_{k,h})}\\
\le{}& \abr{\sbr{\PP_h\rbr{V^k_{h+1}}^2-\rbr{V^*_{h+1}}^2}(s_{k,h},a_{k,h})} + \abr{\sbr{\PP_h V^k_{h+1}}^2(s_{k,h},a_{k,h}) - \sbr{\PP_h V^*_{h+1}}^2(s_{k,h},a_{k,h})}\\
={}& \abr{\sbr{\PP_h\rbr{V^k_{h+1}+V^*_{h+1}}\rbr{V^k_{h+1}-V^*_{h+1}}}(s_{k,h},a_{k,h})} \\
&\qquad + \abr{\sbr{\PP_h \rbr{V^k_{h+1}+V^*_{h+1}}}(s_{k,h},a_{k,h})\sbr{\PP_h \rbr{V^k_{h+1} - V^*_{h+1}}}(s_{k,h},a_{k,h})}\\
\overset{(a)}{\le}{}& 4\cH \sbr{\PP_h \rbr{V^k_{h+1} - V^*_{h+1}}}(s_{k,h},a_{k,h})\\
\overset{(b)}{\le}{}& 4\cH \sbr{\PP_h \rbr{V^k_{h+1} - \pesV^k_{h+1}}}(s_{k,h},a_{k,h})\\
\overset{(c)}{\le}{}& 4\cH \dotp{\bphi_{k,h}}{\hbw_{k-1,h}-\cbw_{k-1,h}} \\
&\qquad + 4\cH\rbr{\nbr{\hbw_{k-1,h} - \bw_{h}\sbr{V^k_{h+1}}}_{\bSigma_{k-1,h}} + \nbr{\cbw_{k-1,h} - \bw_{h}\sbr{V^k_{h+1}}}_{\bSigma_{k-1,h}}}\normbphiV{k}{h}\\
\overset{(d)}{\le}{}& 4\cH \dotp{\bphi_{k,h}}{\hbw_{k-1,h}-\cbw_{k-1,h}} + 8\cH\beta_0 \normbphiV{k}{h},
\end{align*}
where $(a)$ holds due to $V^k_{h+1}(\cdot),V^*_{h+1}(\cdot)$ are both bounded in $[0,\cH]$, $\cE_{R} \cap \cE_{h+1}$ holds and the optimism by Lemma~\ref{lem:mdp-opt}, $(b)$ holds due to the pessimism by Lemma~\ref{lem:mdp-opt}, $(c)$ holds due to Cauchy-Schwartz inequality and $(d)$ is due to $\cE_0$ holds.

Putting pieces together, we have
$$
\abr{\sbr{\hat{\VV}_h V^k_{h+1} - \VV_h V^*_{h+1}}(s_{k,h},a_{k,h})} \le 4\cH \dotp{\bphi_{k,h}}{\hbw_{k-1,h}-\cbw_{k-1,h}} + 11 \cH\beta_0 \normbphiV{k}{h}.
$$
\end{proof}

\subsection{Proof of Lemma~\ref{lem:mdp-D}}\label{proof:mdp-D}

\begin{proof}[Proof of Lemma~\ref{lem:mdp-D}]

First, $\sbr{\VV_h\rbr{V^k_{h+1}-V^*_{h+1}}}(s_{i,h},a_{i,h}) \le \cH^2$ since both $V^k_{h+1}(\cdot)$ and $V^*_{h+1}(\cdot)$ are bounded in $[0,\cH]$. Then, for any $i \le k$, we have
\begin{align*}
&\sbr{\VV_h\rbr{V^k_{h+1}-V^*_{h+1}}}(s_{i,h},a_{i,h})\\
\le{}& \sbr{\PP_h\rbr{V^k_{h+1}-V^*_{h+1}}^2}(s_{i,h},a_{i,h})\\
\overset{(a)}{\le}{}& 2\cH \sbr{\PP_h\rbr{V^k_{h+1}-V^*_{h+1}}}(s_{i,h},a_{i,h})\\
\overset{(b)}{\le}{}& 2\cH \sbr{\PP_h\rbr{V^k_{h+1}-\pesV^k_{h+1}}}(s_{i,h},a_{i,h})\\
\overset{(c)}{\le}{}& 2\cH \sbr{\PP_h\rbr{V^i_{h+1}-\pesV^i_{h+1}}}(s_{i,h},a_{i,h})\\
\overset{(d)}{\le}{}& 2\cH \dotp{\bphi_{i,h}}{\hbw_{i-1,h}-\cbw_{i-1,h}} + 4\cH\beta_0 \normbphiV{i}{h},
\end{align*}
where $(a)$ holds due to both $V^k_{h+1},V^*_{h+1}$ are bounded in $[0,\cH]$, $\cE_{R} \cap \cE_{h+1}$ holds and the optimism by Lemma~\ref{lem:mdp-opt}, $(b)$ holds due to the pessimism by Lemma~\ref{lem:mdp-opt}, $(c)$ holds due to $V^k_{h+1}$ is non-increasing and $\pesV^k_{h+1}$ is non-decreasing by definition and $(d)$ is due to Cauchy-Schwartz inequality and $\cE_0$ holds.

The proof of inequality for $\pesV^k_{h+1}$ is similar to the proof of $V^k_{h+1}$ above.
\end{proof}

\subsection{Proof of Lemma~\ref{lem:mdp-EV}}\label{proof:mdp-EV}

\begin{proof}[Proof of Lemma~\ref{lem:mdp-EV}]
We prove Lemma~\ref{lem:mdp-EV} by induction. First, when $h=H$, event $\cE_{H}$ holds since $V^k_{H+1}(\cdot) = V^*_{H+1}(\cdot) = \pesV^k_{H+1}(\cdot)$ = 0 for all $k\in[H]$. Then, we prove with probability at least $1-\frac{2\delta}{H}$, $\cE_{h}$ holds on event $\cE_{h+1}$. By induction over $h$, with probability at least $1-2\delta$, on event $\cE_0\cap\cE_R$, event $\cE_V = \cE_1$ holds.

Note $\hbw_{k-1,h} = \hbw_{k-1,h}\sbr{V^k_{h+1}} = \hbw_{k-1,h}\sbr{V^*_{h+1}} + \hbw_{k-1,h}\sbr{V^k_{h+1} - V^*_{h+1}}$ and $\bw_{h}\sbr{V^k_{h+1}} = \bw_{h}\sbr{V^*_{h+1}} + \bw_{h}\sbr{V^k_{h+1} - V^*_{h+1}}$. It follows that
\begin{align*}
&\nbr{\hbw_{k-1,h} - \bw_h\sbr{V^k_{h+1}}}_{\bSigma_{k-1,h}}\\
\le{}& \nbr{\hbw_{k-1,h}\sbr{V^*_{h+1}} - \bw_h\sbr{V^*_{h+1}}}_{\bSigma_{k-1,h}} + \nbr{\hbw_{k-1,h}\sbr{V^k_{h+1} - V^*_{h+1}} - \bw_h\sbr{V^k_{h+1} - V^*_{h+1}}}_{\bSigma_{k-1,h}}.
\end{align*}
Next, we bound the two terms in the RHS by Lemma~\ref{lem:value-ci} separately.

\paragraph{Bound the first term.}

The first condition of Lemma~\ref{lem:value-ci} is satisfied since $V^*_{h+1}$ is a deterministic function. We have $C_0=\cH$ and $\cA_{k,h} = \cbr{\sigma_{k,h}^2 \ge \sbr{\VV_h V^*_h}(s_{k,h},a_{k,h})}$ is $\cF_{k,h}$-measurable. Since $\cE_0\cap\cE_R\cap\cE_{h+1}$ holds, for all $k\in[K]$, $\abr{\sbr{\hat{\VV}_h V^k_{h+1} - \VV_h V^*_{h+1}}(s_{k,h},a_{k,h})} \le E_{k,h}$ by Lemma~\ref{lem:mdp-E}. Thus, $\sigma_{k,h}^2 \ge \sbr{\hat{\VV}_h V^k_{h+1}}(s_{k,h},a_{k,h}) + E_{k,h} \ge \sbr{\VV_h V^*_h}(s_{k,h},a_{k,h})$ for all $k\in[K]$, which implies $\bigcap_{k\in[K]}\cA_{k,h}$ holds and thus $C_\sigma = 1$. By Lemma~\ref{lem:value-ci}, with probability at least $1-\delta/H$, for all $k\in[K]$,
$$
\nbr{\hbw_{k-1,h}\sbr{V^*_{h+1}} - \bw_h\sbr{V^*_{h+1}}}_{\bSigma_{k-1,h}} \le \beta_1,
$$
where
\begin{align*}
\beta_1
&= \sqrt{d \lambda_V} \cH + 8\sqrt{d \log \rbr{ 1 + \frac{K}{\sigma_{\min}^2d\lambda_V} } \log \frac{4H K^2}{\delta} } + \frac{4}{ d^{2.5}H}\log\frac{4HK^2}{\delta}\\
&\le \sqrt{d\lambda_V}\cH + 8 \sqrt{d}\iota_1 + 4\iota_1.
\end{align*}

\paragraph{Bound the second term.}

The second condition of Lemma~\ref{lem:value-ci} is satisfied since $V^k_{h+1} - V^*_{h+1}$ are random functions. We have $C_0 = \cH$ and $\cA_{k,h} = \cbr{\sigma_{k,h}^2 \ge d^3 H \sbr{\VV_h \rbr{V^k_h - V^*_h}}(s_{k,h},a_{k,h})}$ is $\cF_{k,h}$-measurable. Since $\cE_0\cap\cE_R\cap\cE_{h+1}$ holds, for all $k\in[K]$, $\sbr{\VV_h \rbr{V^k_h - V^*_h}}(s_{k,h},a_{k,h}) \le D_{k,h}$ by Lemma~\ref{lem:mdp-D}. Thus, by definition of $\sigma_{k,h}$, we have $\bigcap_{k\in[K]}\cA_{k,h}$ holds and thus $C_\sigma = \frac{1}{\sqrt{d^3H}}$. And the log-covering number of the function class that covers $V^k_{h+1} - V^*_{h+1}$ can be bounded by Lemma~\ref{lem:covering} as $\log N_0 = |\cN(\cV^+, \epsilon_0)| \le 6d^3H\iota_1^2$ with $\cV^+$ defined in \eqref{eq:function-pos} and $\epsilon_0 = \min\cbr{ \frac{\sigma_{\min}}{\sqrt{d^3H}}, \frac{\lambda_V \cH \sqrt{d}}{K}\sigma_{\min}^2 }$. By Lemma~\ref{lem:value-ci}, with probability at least $1-\delta/H$, for all $k\in[K]$,
$$
\nbr{\hbw_{k-1,h}\sbr{V^k_{h+1} - V^*_{h+1}} - \bw_h\sbr{V^k_{h+1} - V^*_{h+1}}}_{\bSigma_{k-1,h}} \le \beta_2,
$$
where
\begin{align*}
\beta_2 &= 2\sqrt{d \lambda_V} \cH + \frac{32}{\sqrt{d^3H}}\sqrt{d\log \rbr{1 + \frac{K}{\sigma_{\min}^2d\lambda_V}} \log \frac{4N_0H K^2}{\delta} } + \frac{4}{d^{2.5}H}\log\frac{4N_0HK^2}{\delta}\\
&= O(\sqrt{d \lambda_V} \cH + \sqrt{d}\iota_1^2).
\end{align*}

The proof for the pessimism is similar to that of the optimism.

Finally, putting pieces together and we have $\beta_V = \beta_1 + \beta_2 = O(\sqrt{d \lambda_V} \cH + \sqrt{d}\iota_1^2)$.

\end{proof}

\subsection{Proof of Lemma~\ref{lem:mdp-subopt-gap}}\label{proof:mdp-subopt-gap}

\begin{proof}[Proof of Lemma~\ref{lem:mdp-subopt-gap}]
For any $k\in[K]$, recall $\kl$ is the latest update episode before episode $k$ satisfying $\kl \le k$ and $V^k_h(\cdot) = V^{\kl}_h(\cdot), \forall h\in[H]$.
By Lemma~\ref{lem:matrix-ratio}, due to $\bH_{k-1,h} \succeq \bH_{\kl-1,h}$ and $\det(\bH_{k-1,h}) \le 2 \det(\bH_{\kl-1,h})$ by the updating rule, it follows that for any $\bx \in \RR^d$,
\begin{equation}\label{eq:bonus-ratio}
\|\bx\|_{ \bH_{\kl-1,h}^{-1}} \le 2 \|\bx\|_{ \bH_{k-1,h}^{-1}}.
\end{equation}

\paragraph{Telescoping on $V_{h}^k(s_{k,h}) - V_h^{\pi^k}(s_{k,h})$.}

By definition, $Q_h^k(\cdot, \cdot) \le \dotp{\bphi(\cdot,\cdot)}{\btheta_{\kl-1,h} + \hbw_{\kl-1,h}} + \beta_{R,\kl-1} \nbr{\bphi_{k,h}}_{\bH_{\kl-1,h}^{-1}} + \beta_V \nbr{\bphi_{k,h}}_{\bSigma_{\kl-1,h}^{-1}}$ and $Q_h^{\pi^k}(\cdot,\cdot) = \inner{\bphi(\cdot,\cdot)}{\btheta^*_h + \bw_h\sbr{V^{\pi^k}_{h+1}}}$.
Since $a_{k,h} = \pi_h^k(s_{k,h}) = \argmax_{a \in \cA} Q_h^k(s_{k,h}, a)$, we have
\begin{align*}
&V_{h}^k(s_{k,h}) - V_h^{\pi^k}(s_{k,h})\\
={}& Q_h^k(s_{k,h}, a_{k,h}) - Q_h^{\pi^k}(s_{k,h}, a_{k,h})  \\
\le{}& \inner{\bphi_{k,h}}{\rbr{\btheta_{\kl-1,h} - \btheta_h^*} + \rbr{\hbw_{\kl-1,h} - \bw_h\sbr{V^{\pi^k}_{h+1}}}}\\
&\qquad + \beta_{R,\kl-1} \nbr{\bphi_{k,h}}_{\bH_{\kl-1,h}^{-1}} + \beta_V \nbr{\bphi_{k,h}}_{\bSigma_{\kl-1,h}^{-1}}\\
\overset{(a)}{\le}{}& \inner{\bphi_{k,h}}{\rbr{\btheta_{\kl-1,h} - \btheta_h^*} + \rbr{\hbw_{\kl-1,h} - \bw_h\sbr{V^{\kl}_{h+1}}} + \rbr{\bw_h\sbr{V^{k}_{h+1} - V^{\pi^k}_{h+1}}}} \\
&\qquad + 2\beta_{R,k-1} \normbphiR{k}{h} + 2\beta_V \normbphiV{k}{h}\\
\overset{(b)}{\le}{}& \sbr{\PP_h \rbr{V^{k}_{h+1} - V^{\pi^k}_{h+1}}}(s_{k,h},a_{k,h}) + 4\beta_{R,k-1} \normbphiR{k}{h} + 4\beta_V \normbphiV{k}{h}
\end{align*}
Here $(a)$ uses~\eqref{eq:bonus-ratio}, $(b)$ uses 
\begin{align*}
&\inner{\bphi_{k,h}}{\rbr{\btheta_{\kl-1,h} - \btheta_h^*} + \rbr{\hbw_{\kl-1,h} - \bw_h\sbr{V^{\kl}_{h+1}}}}\\
\le{}& \nbr{\bphi_{k,h}}_{\bH_{\kl-1,h}^{-1}} \nbr{\btheta_{\kl-1,h} - \btheta_h^*}_{\bH_{\kl-1,h}}\\
&\qquad + \nbr{\bphi_{k,h}}_{\bSigma_{\kl-1,h}^{-1}} \nbr{\hbw_{\kl-1,h} - \bw_h\sbr{V^{\kl}_{h+1}}}_{\bSigma_{\kl-1,h}}\\
\le{}& \beta_{R,\kl-1} \nbr{\bphi_{k,h}}_{\bH_{\kl-1,h}^{-1}} + \beta_V \nbr{\bphi_{k,h}}_{\bSigma_{\kl-1,h}^{-1}}\\
\le{}& 2\beta_{R,k-1} \normbphiR{k}{h} + 2\beta_V \normbphiV{k}{h}
\end{align*}
on $\cE_{R}\cap\cE_{V}$ and
$$
\inner{\bphi_{k,h}}{\bw_h\sbr{V^{k}_{h+1} - V^{\pi^k}_{h+1}}} = \sbr{\PP_h \rbr{V^{k}_{h+1} - V^{\pi^k}_{h+1}}}(s_{k,h},a_{k,h}).
$$
We define $X_{k,h} := \sbr{\PP_h \rbr{V^{k}_{h+1} - V^{\pi^k}_{h+1}}}(s_{k,h},a_{k,h}) - \sbr{V^{k}_{h+1}(s_{k,h+1}) - V^{\pi^k}_{h+1}(s_{k,h+1})}$. Then it follows that
\begin{align*}
& V_{h}^k(s_{k,h}) - V_h^{\pi^k}(s_{k,h})\\
={}& V^{k}_{h+1}(s_{k,h+1}) - V^{\pi^k}_{h+1}(s_{k,h+1}) + X_{k,h} + 4\beta_{R,k-1} \normbphiR{k}{h} + 4\beta_V \normbphiV{k}{h}.
\end{align*}
Summing the inequality above over $h'$ such that $h \le h' \le H$ and we have
\begin{equation}\label{eq:final-iter}
V_{h}^k(s_{k,h}) - V_h^{\pi^k}(s_{k,h}) \le \sum_{h'=h}^H \sbr{X_{k,h'} + 4\beta_{R,k-1} \normbphiR{k}{h'} + 4\beta_V \normbphiV{k}{h'}}.
\end{equation}
Therefore, setting $h=1$ and summing~\eqref{eq:final-iter} over $k \in [K]$, we have
\begin{equation}\label{eq:final0}
\sum_{k=1}^K \sbr{V_{1}^k(s_{k,h}) - V_1^{\pi^k}(s_{k,h})}
\le \sum_{k=1}^K \sum_{h=1}^H \sbr{X_{k,h} + 4\beta_{R,k-1} \normbphiR{k}{h} + 4\beta_V \normbphiV{k}{h}}.
\end{equation}

\paragraph{Bound $\sum_{k=1}^K \sum_{h=1}^H X_{k,h}$.}

Notice $X_{k,h}$ is $\cF_{k,h+1}$-measurable with $\EE[X_{k,h}|\cF_{k,h}] = 0$, $|X_{k,h}| \le 2\cH$ and
$$
\EE[X_{k,h}^2|\cF_{k,h}]  \le  \EE\sbr{\sbr{V_{h+1}^k(s_{k,h+1}) - V_{h+1}^{\pi^k}(s_{k,h+1})}^2 \bigg|\cF_{k,h}} \overset{(a)}{\le}  \cH \sbr{\PP_h(V_{h+1}^k - V_{h+1}^{\pi^k})}(s_{k,h}, a_{k,h}),
$$
where $(a)$ holds due to $\abr{V_{h+1}^k(\cdot) - V_{h+1}^{\pi^k}(\cdot)} \le \cH$ and optimism in Lemma~\ref{lem:mdp-opt}
By the variance-aware Freedman inequality in Lemma~\ref{lem:bern}, with probability at least $1-\frac{\delta}{2}$, it follows that
\begin{equation}\label{eq:final1}
\abr{\sum_{k=1}^K \sum_{h=1}^H X_{k,h}} \le 3 \sqrt{\iota} \sqrt{\cH \cdot \sum_{k=1}^K \sum_{h=1}^H \sbr{\PP_h(V_{h+1}^k - V_{h+1}^{\pi^k})}(s_{k,h}, a_{k,h})} + 10\cH\iota
\end{equation}
where $\iota = \log\frac{4\ceil{\log_2 HK}}{\delta}$.

\paragraph{Bound $\sum_{k=1}^K \sum_{h=1}^H \sbr{\PP_h(V_{h+1}^k - V_{h+1}^{\pi^k})}(s_{k,h}, a_{k,h})$.}

Notice
\begin{align*}
&\sum_{k=1}^K \sum_{h=1}^H \sbr{\PP_h(V_{h+1}^k - V_{h+1}^{\pi^k})}(s_{k,h}, a_{k,h})\\
={}& \sum_{k=1}^K \sum_{h=2}^H \sbr{V_{h}^k(s_{k,h}) - V_{h}^{\pi^k}(s_{k,h})} + \sum_{k=1}^K \sum_{h=1}^H  X_{k,h} \\
\overset{(a)}{\le}{}& \sum_{k=1}^K \sum_{h=2}^H  \sum_{h'=h}^H \sbr{X_{k,h'} + 4\beta_{R,k-1} \normbphiR{k}{h'} + 4\beta_V \normbphiV{k}{h'}} + \sum_{k=1}^K \sum_{h=1}^H  X_{k,h}\\
={}& \sum_{k=1}^K \sum_{h=2}^H (H-h+1)\sbr{X_{k,h} + 4\beta_{R,k-1} \normbphiR{k}{h} + 4\beta_V \normbphiV{k}{h}} + \sum_{k=1}^K \sum_{h=1}^H  X_{k,h}\\
\overset{(b)}{\le}{}& \sum_{k=1}^K \sum_{h=1}^H  X_{k,h}  b_{h} + 4H\beta_R \sum_{k=1}^K \sum_{h=2}^H \normbphiR{k}{h} + 4H\beta_V \sum_{k=1}^K \sum_{h=2}^H \normbphiV{k}{h}
\end{align*}
where $(a)$ uses~\eqref{eq:final-iter} and $(b)$ uses the notation
$$
b_h = \begin{cases}
1 & \text{if } h=1,\\
H-h+2 & \text{if } 2 \le h \le H.
\end{cases}
$$
Since $|b_h| \le H$, by the variance-aware Freedman inequality in Lemma~\ref{lem:bern}, with probability at least $1-\frac{\delta}{2}$, we have
\[
\left| \sum_{k=1}^K \sum_{h=1}^H  X_{k,h} b_h \right|
\le 3H \sqrt{\iota} \sqrt{\cH \cdot \sum_{k=1}^K \sum_{h=1}^H \sbr{\PP_h(V_{h+1}^k - V_{h+1}^{\pi^k})}(s_{k,h}, a_{k,h})} + 10H\cH \iota.
\]
Thus,
\begin{equation}\label{eq:final-expect-sum}
\begin{aligned}
&\sum_{k=1}^K \sum_{h=1}^H \sbr{\PP_h(V_{h+1}^k - V_{h+1}^{\pi^k})}(s_{k,h}, a_{k,h}) \\
\le{}& 3H \sqrt{\iota} \sqrt{\cH \cdot \sum_{k=1}^K \sum_{h=1}^H \sbr{\PP_h(V_{h+1}^k - V_{h+1}^{\pi^k})}(s_{k,h}, a_{k,h})} + 10H\cH \iota \\
&\qquad + 4H\beta_R \sum_{k=1}^K \sum_{h=2}^H \normbphiR{k}{h} + 4H\beta_V \sum_{k=1}^K \sum_{h=2}^H \normbphiV{k}{h}\\
\overset{(a)}{\le}{}& 8H\beta_R \sum_{k=1}^K \sum_{h=1}^H \normbphiR{k}{h} + 8H\beta_V \sum_{k=1}^K \sum_{h=1}^H \normbphiV{k}{h} + 29 H^2 \cH \iota,
\end{aligned}
\end{equation}
where $(a)$ holds since $x \le a\sqrt{x} + b$ implies $x \le a^2+2b$.

\paragraph{Putting pieces together.}

Finally, plug-in \eqref{eq:final0}, and we have
\begin{align*}
&\sum_{k =1}^K\sbr{V_{1}^k(s_{k,1}) - V_1^{\pi^k}(s_{k,1})}\\
\le{}& \sum_{k=1}^K \sum_{h=1}^H \sbr{X_{k,h} + 4\beta_{R,k-1} \normbphiR{k}{h} + 4\beta_V \normbphiV{k}{h}}\\
\overset{\eqref{eq:final1}}{\le}{}& 3 \sqrt{\iota} \sqrt{\cH \cdot \sum_{k=1}^K \sum_{h=1}^H \sbr{\PP_h(V_{h+1}^k - V_{h+1}^{\pi^k})}(s_{k,h}, a_{k,h})} + 10\cH\iota\\
    &\qquad + 4\beta_R\sum_{k=1}^K\sum_{h=1}^H\normbphiR{k}{h} + 4\beta_V\sum_{k=1}^K\sum_{h=1}^H\normbphiV{k}{h}\\
\overset{\eqref{eq:final-expect-sum}}{\le}{}& 3 \sqrt{\iota}  \sqrt{\cH \cdot \sbr{8H\beta_R \sum_{k=1}^K \sum_{h=1}^H \normbphiR{k}{h} + 8H\beta_V \sum_{k=1}^K \sum_{h=1}^H \normbphiV{k}{h} + 29 H^2 \cH \iota}}\\
    &\qquad + 4\beta_R\sum_{k=1}^K\sum_{h=1}^H\normbphiR{k}{h} + 4\beta_V\sum_{k=1}^K\sum_{h=1}^H\normbphiV{k}{h} + 10\cH\iota\\
\le{}& 6\beta_R\sum_{k=1}^K\sum_{h=1}^H\normbphiR{k}{h} + 6\beta_V\sum_{k=1}^K\sum_{h=1}^H\normbphiV{k}{h} + 36 H \cH \iota
\end{align*}
where the last inequality uses $\sqrt{a+b} \le \sqrt{a} + \sqrt{b}$ and $2\sqrt{ab} \le a+ b$ for non-negative numbers $a, b \ge 0$.
\end{proof}

\subsection{Proof of Lemma~\ref{lem:mdp-opt-pes-gap}}\label{proof:mdp-opt-pes-gap}

\begin{proof}[Proof of Lemma~\ref{lem:mdp-opt-pes-gap}]
For any $k\in[K]$, recall $\kl$ is the latest update episode before episode $k$ satisfying $\kl \le k$ and $V^k_h(\cdot) = V^{\kl}_h(\cdot), \pesV^k_h(\cdot) = \pesV^{\kl}_h(\cdot), \forall h\in[H]$.

By definition, $Q_h^k(\cdot, \cdot) \le \dotp{\bphi(\cdot,\cdot)}{\btheta_{\kl-1,h} + \hbw_{\kl-1,h}} + \beta_{R,\kl-1} \nbr{\bphi_{k,h}}_{\bH_{\kl-1,h}^{-1}} + \beta_V \nbr{\bphi_{k,h}}_{\bSigma_{\kl-1,h}^{-1}}$ and $\pesQ_h^k(\cdot, \cdot) \ge \dotp{\bphi(\cdot,\cdot)}{\btheta_{\kl-1,h} + \cbw_{\kl-1,h}} - \beta_{R,\kl-1} \nbr{\bphi_{k,h}}_{\bH_{\kl-1,h}^{-1}} - \beta_V \nbr{\bphi_{k,h}}_{\bSigma_{\kl-1,h}^{-1}}$.
Since $a_{k,h} = \pi_h^k(s_{k,h}) = \argmax_{a \in \cA} Q_h^k(s_{k,h}, a)$, we have
\begin{align*}
&V_{h}^k(s_{k,h}) - \pesV_h^{k}(s_{k,h})\\
\le{}& Q_h^k(s_{k,h}, a_{k,h}) - \pesQ_h^k(s_{k,h}, a_{k,h})  \\
\le{}& \inner{\bphi_{k,h}}{\hbw_{\kl-1,h} - \cbw_{\kl-1,h}} + 2\beta_{R,\kl-1} \nbr{\bphi_{k,h}}_{\bH_{\kl-1,h}^{-1}} + 2\beta_V \nbr{\bphi_{k,h}}_{\bSigma_{\kl-1,h}^{-1}}\\
\overset{(a)}{\le}{}& \inner{\bphi_{k,h}}{\rbr{\hbw_{\kl-1,h} - \bw_h\sbr{V^{\kl}_{h+1}}} - \rbr{\cbw_{\kl-1,h} - \bw_h\sbr{\pesV^{\kl}_{h+1}}}}\\
    &\qquad + \inner{\bphi_{k,h}}{\bw_h\sbr{V^{k}_{h+1} - \pesV^{k}_{h+1}}} + 4\beta_{R,k-1} \normbphiR{k}{h} + 4\beta_V \normbphiV{k}{h}\\
\overset{(b)}{\le}{}& \sbr{\PP_h \rbr{V^{k}_{h+1} - \pesV^{k}_{h+1}}}(s_{k,h},a_{k,h}) + 4\beta_{R,k-1} \normbphiR{k}{h} + 8\beta_V \normbphiV{k}{h}
\end{align*}
Here $(a)$ uses~\eqref{eq:bonus-ratio}, $(b)$ uses 
\begin{align*}
&\inner{\bphi_{k,h}}{\rbr{\hbw_{\kl-1,h} - \bw_h\sbr{V^{\kl}_{h+1}}} - \rbr{\cbw_{\kl-1,h} - \bw_h\sbr{\pesV^{\kl}_{h+1}}}}\\
\le{}& \nbr{\bphi_{k,h}}_{\bSigma_{\kl-1,h}^{-1}} \nbr{\hbw_{\kl-1,h} - \bw_h\sbr{V^{\kl}_{h+1}}}_{\bSigma_{\kl-1,h}}\\
    &\qquad + \nbr{\bphi_{k,h}}_{\bSigma_{\kl-1,h}^{-1}} \nbr{\cbw_{\kl-1,h} - \bw_h\sbr{\pesV^{\kl}_{h+1}}}_{\bSigma_{\kl-1,h}}\\
\le{}& 2\beta_V \nbr{\bphi_{k,h}}_{\bSigma_{\kl-1,h}^{-1}}\\
\le{}& 4\beta_V \normbphiV{k}{h}
\end{align*}
on $\cE_{R}\cap\cE_{V}$ and
$$
\inner{\bphi_{k,h}}{\bw_h\sbr{V^{k}_{h+1} - \pesV^{k}_{h+1}}} = \sbr{\PP_h \rbr{V^{k}_{h+1} - \pesV^{k}_{h+1}}}(s_{k,h},a_{k,h}).
$$

The rest of the proof is nearly the same as that in Lemma~\ref{lem:mdp-subopt-gap} except for the constants, and we omit it to avoid repetition.
\end{proof}

\subsection{Proof of Lemma~\ref{lem:mdp-sumR}}\label{proof:mdp-sumR}

\begin{proof}[Proof of Lemma~\ref{lem:mdp-sumR}]
Notice
$$
\sum_{k=1}^K\sum_{h=1}^H \normbphiR{k}{h} = \sum_{k=1}^K\sum_{h=1}^H \nu_{k,h} w_{k,h}
$$
where $w_{k,h} = \nbr{\bphi_{k,h}/\nu_{k,h}}_{\bH_{k-1,h}}$.
And we have $w_{k,h} \le c_0 \le 1$ by definition of $\nu_{k,h},c_0$ and thus for all $h\in[H]$,
\begin{equation}\label{eq:proof-mdp-sumR-w-sum}
\sum_{k=1}^K w_{k,h}^2 = \min\cbr{1,w_{k,h}^2} \le 2 \kappa
\end{equation}
by Lemma~\ref{lem:w-sum}, where $\kappa$ is defined in Algorithm~\ref{algo:mdp}.

Next we bound the sum of bonus $\sum_{k=1}^K\sum_{h=1}^H \nu_{k,h} w_{k,h}$ separately by the value of $\nu_{k,h}$. Recall $\nu_{k,h}$ is defined in \eqref{eq:mdp-nu}. We decompose $[K]\times[H]$ as the union of three disjoint sets $\cJ_1, \cJ_2, \cJ_3$ where
\begin{align*}
\cJ_1 &= \cbr{(k,h)\in[K]\times[H] | \nu_{k,h}\in\cbr{\hat{\nu}_{k,h},\numin}},\\
\cJ_2 &= \cbr{(k,h)\in[K]\times[H] \bigg| \nu_{k,h} = \frac{\normbphiR{k}{h}}{c_0}},\\
\cJ_3 &= \cbr{(k,h)\in[K]\times[H] \Bigg| \nu_{k,h} = \frac{\sqrt{\max\cbr{B,W}}}{c_1^{\frac{1}{4}} (2\kappa)^{\frac{1}{4}}} \normbphiR{k}{h}^\frac{1}{2}}.
\end{align*}

For the summation over $\cJ_1$, we have
\begin{equation}\label{eq:proof-mdp-sumR-J1}
\begin{aligned}
\sum_{(k,h)\in\cJ_1} \nu_{k,h} w_{k,h} &= \sum_{(k,h)\in\cJ_1} \max\cbr{\hat{\nu}_{k,h}, \numin} w_{k,h}\\
&\overset{(a)}{\le} \sqrt{\sum_{k=1}^K\sum_{h=1}^H (\hat{\nu}_{k,h}^2+\numin^2)} \sqrt{\sum_{k=1}^K\sum_{h=1}^H w_{k,h}^2}\\
&\overset{(b)}{\le} \sqrt{2H\kappa} \sqrt{\sum_{k=1}^K\sum_{h=1}^H \hat{\nu}_{k,h}^2 + HK\numin^2},
\end{aligned}
\end{equation}
where $(a)$ holds due to Cauchy-Schwartz inequality and $(b)$ holds due to \eqref{eq:proof-mdp-sumR-w-sum}.

We provide a upper bound for $\sum_{k=1}^K\sum_{h=1}^H \hat{\nu}_{k,h}^2$ in Lemma~\ref{lem:mdp-hatnu-sum}.
\begin{lemma}\label{lem:mdp-hatnu-sum}
With probability at least $1-\delta$, on event $\cE_{R^\epsilon}\cap\cE_R$, we have
\begin{align*}
\sum_{k=1}^K\sum_{h=1}^H \hat{\nu}_{k,h}^2 = O\Bigg(&\cU^*K + \rbr{\beta_{R^\epsilon} + \cH^\epsilon \beta_{R} \kappa}^\frac{2}{1+\epsilon} \rbr{\sum_{k=1}^K\sum_{h=1}^H \normbphiR{k}{h}}^\frac{2}{1+\epsilon}\\
    & + H\nu_R^2\log\frac{2\ceil{\log_2K}}{\delta}\Bigg).
\end{align*}
\end{lemma}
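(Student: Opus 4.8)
The plan is to decompose $\hat{\nu}_{k,h}^2$ into a genuine central-moment term that concentrates around the instance-dependent quantity $\cU^*$ and a bonus-driven error term controlled by the sum of reward bonuses. Throughout I work on the event $\cE_{R^\epsilon}\cap\cE_R$. By construction $\hat{\nu}_{k,h}^{1+\epsilon} = [\hat{\bnu}_{1+\epsilon}R_h](s_{k,h},a_{k,h}) + W_{k,h}$, and Lemma~\ref{lem:mdp-W} gives $[\hat{\bnu}_{1+\epsilon}R_h](s_{k,h},a_{k,h}) \le [\bnu_{1+\epsilon}R_h](s_{k,h},a_{k,h}) + W_{k,h}$, so that $\hat{\nu}_{k,h}^{1+\epsilon} \le [\bnu_{1+\epsilon}R_h](s_{k,h},a_{k,h}) + 2W_{k,h}$. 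Raising both sides to the power $\frac{2}{1+\epsilon}\in[1,2)$ and using $(a+b)^p\le 2^{p-1}(a^p+b^p)$ for $p\ge1$, I obtain $\hat{\nu}_{k,h}^2 \le C_\epsilon\big([\bnu_{1+\epsilon}R_h]^{\frac{2}{1+\epsilon}}(s_{k,h},a_{k,h}) + W_{k,h}^{\frac{2}{1+\epsilon}}\big)$ for an absolute constant $C_\epsilon\le 4$. Summing over $(k,h)$ then splits the target into a moment sum and a bonus sum, treated separately below.

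For the bonus sum, since $\beta_{R^\epsilon,k-1}\le\beta_{R^\epsilon}$ and $\beta_{R,k-1}\le\beta_R$ are increasing in the index, $W_{k,h}\le(\beta_{R^\epsilon}+6\cH^\epsilon\beta_R\kappa)\normbphiR{k}{h}$. Pulling out the constant and invoking the superadditivity of $t\mapsto t^{\frac{2}{1+\epsilon}}$ on the nonnegative reals (so that $\sum_i x_i^{p}\le(\sum_i x_i)^{p}$ for $p\ge1$ and $x_i\ge0$), the bonus sum is at most $(\beta_{R^\epsilon}+6\cH^\epsilon\beta_R\kappa)^{\frac{2}{1+\epsilon}}\big(\sum_{k=1}^K\sum_{h=1}^H\normbphiR{k}{h}\big)^{\frac{2}{1+\epsilon}}$, which is precisely the second term of the claim up to the absorbed constant.

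For the moment sum, let $Z_k := \sum_{h=1}^H [\bnu_{1+\epsilon}R_h]^{\frac{2}{1+\epsilon}}(s_{k,h},a_{k,h})$. Assumption~\ref{ass:bounded}(2) gives the pathwise bound $Z_k\le\cU$, hence $\sum_k Z_k\le\cU K$ deterministically. On the other hand, let $\mu_k:=\EE[Z_k\mid\cF_{k-1,H}]$; since the policy $\pi^k$ is determined before episode $k$ and is thus $\cF_{k-1,H}$-measurable, $\mu_k$ equals $\EE_{(s_h,a_h)\sim d_h^{\pi^k}}\big[\sum_h[\bnu_{1+\epsilon}R_h]^{\frac{2}{1+\epsilon}}(s_h,a_h)\big]$, and the definition \eqref{eq:U} gives $\sum_k\mu_k=\cU^*_0 K$. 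The sequence $Z_k-\mu_k$ is a martingale difference with $|Z_k-\mu_k|\le H\nu_R^2$ (each summand is at most $\nu_R^2$ by Assumption~\ref{ass:bounded-moment}) and conditional variance bounded by $H\nu_R^2\mu_k$, a self-bounding structure. Applying the variance-aware Freedman inequality (Lemma~\ref{lem:bern}) together with AM--GM to absorb the cross term yields $\sum_k Z_k\le 2\cU^*_0 K + O\big(H\nu_R^2\log\frac{2\ceil{\log_2K}}{\delta}\big)$ with probability at least $1-\delta$. Combining this with the deterministic bound $\sum_k Z_k\le\cU K$ and the definition $\cU^*=\min\{\cU^*_0,\cU\}$ gives $\sum_k Z_k = O\big(\cU^* K + H\nu_R^2\log\frac{2\ceil{\log_2K}}{\delta}\big)$, and adding the two pieces completes the proof.

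The main obstacle will be the martingale step: I must set up the filtration so that $\mu_k$ is genuinely the occupancy-measure expectation under the history-measurable policy $\pi^k$, verify the self-bounding conditional-variance bound, and run the variance-aware Freedman inequality with its peeling over the variance proxy so that the cross term $\sqrt{H\nu_R^2\cdot\cU^*_0 K\cdot\log(\cdot)}$ can be absorbed into $\cU^*_0 K$ plus the stated logarithmic lower-order term rather than inflating the leading order. The remaining steps (the power-mean manipulation, the increasing-radius bound, and the superadditivity inequality) are elementary and deterministic.
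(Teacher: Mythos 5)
Your proposal is correct and follows essentially the same route as the paper's own proof: the identical decomposition $\hat{\nu}_{k,h}^{1+\epsilon} \le [\bnu_{1+\epsilon}R_h](s_{k,h},a_{k,h}) + 2W_{k,h}$ via Lemma~\ref{lem:mdp-W} followed by the power-mean inequality, the same superadditivity bound $\sum_i x_i^p \le (\sum_i x_i)^p$ on the bonus sum with the monotone radii $\beta_{R^\epsilon},\beta_R$, and the same treatment of the moment sum via the variance-aware Freedman inequality (Lemma~\ref{lem:bern}) using the self-bounding variance $\Var[Z_k\mid\cF_{k-1}]\le H\nu_R^2\,\EE[Z_k\mid\cF_{k-1}]$ and the occupancy-measure identity $\sum_k\EE[Z_k\mid\cF_{k-1}]=\cU^*_0K$, capped by the pathwise bound $\cU K$ and $\cU^*=\min\{\cU^*_0,\cU\}$. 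The only difference is the absolute constant in front of $\cU^*_0K$ after absorbing the cross term ($2$ versus the paper's $3$), which is immaterial under the $O(\cdot)$.
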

\begin{proof}
    See Appendix~\ref{proof:mdp-hatnu-sum} for a detailed proof.
\end{proof}

Next, for the summation over $\cJ_2$, we have $\nbr{\frac{\bphi_{k,h}}{\nu_{k,h}}}_{\bH_{k-1,h}^{-1}} = c_0$. Then
\begin{equation}\label{eq:proof-mdp-sumR-J2}
\sum_{(k,h)\in\cJ_2} \nu_{k,h} w_{k,h} = \frac{1}{c_0^2} \sum_{(k,h)\in\cJ_2} \nbr{\bphi_{k,h}}_{\bH_{k-1,h}^{-1}} w_{k,h}^2
\overset{(a)}{\le} \frac{1}{c_0^2\sqrt{\lambda_R}} \sum_{k=1}^K\sum_{h=1}^H w_{k,h}^2
\overset{(b)}{\le} \frac{2H\kappa}{c_0^2\sqrt{\lambda_R}},
\end{equation}
where $(a)$ holds due to $\bH_{k-1,h} \succeq \lambda_R \bI$, thus $\nbr{\bphi_{k,h}}_{\bH_{k-1,h}^{-1}} \le \frac{\nbr{\bphi_{k,h}}}{\sqrt{\lambda_R}} \le \frac{1}{\sqrt{\lambda_R}}$. And $(b)$ holds due to \eqref{eq:proof-mdp-sumR-w-sum}.

Then, for the summation over $\cJ_3$, we have $\nu_{k,h} = \frac{\max\cbr{B,W}}{\sqrt{2 c_1 \kappa}}w_{k,h}$. Therefore
\begin{equation}\label{eq:proof-mdp-sumR-J3}
\sum_{(k,h)\in\cJ_3} \nu_{k,h} w_{k,h} = \frac{\max\cbr{B,W}}{\sqrt{2c_1\kappa}} \sum_{(k,h)\in\cJ_3} w_{k,h}^2 \le \frac{\max\cbr{B,W} H \sqrt{2\kappa}}{\sqrt{c_1}},
\end{equation}
where the inequality holds due to \eqref{eq:proof-mdp-sumR-w-sum}.

Finally, combining \eqref{eq:proof-mdp-sumR-J1}, \eqref{eq:proof-mdp-sumR-J2}, \eqref{eq:proof-mdp-sumR-J3} and Lemma~\ref{lem:mdp-hatnu-sum}, with probability at least $1-\delta$, we have
\begin{align*}
\sum_{k=1}^K\sum_{h=1}^H \normbphiR{k}{h} &= O\rbr{\sqrt{H\kappa}\rbr{\beta_{R^\epsilon} + \cH^\epsilon \beta_{R} \kappa}^\frac{1}{1+\epsilon} \rbr{\sum_{k=1}^K\sum_{h=1}^H \normbphiR{k}{h}}^\frac{1}{1+\epsilon} + C} 
\end{align*}
where
$$
C = \sqrt{H\kappa} \sqrt{\cU^*K + HK\numin^2 + H\nu_R^2\log\frac{2\ceil{\log_2K}}{\delta}} + \frac{H\kappa}{c_0^2\sqrt{\lambda_R}} + \frac{\max\cbr{B,W} H \sqrt{\kappa}}{\sqrt{c_1}}.
$$
Using the inequality that $x\le ax^\frac{1}{1+\epsilon} + b$ implies $x \le a^\frac{1+\epsilon}{\epsilon} + \frac{1+\epsilon}{\epsilon}b$, we have
$$
\sum_{k=1}^K\sum_{h=1}^H \normbphiR{k}{h} = O\rbr{(H\kappa)^\frac{1+\epsilon}{2\epsilon}\rbr{\beta_{R^\epsilon} + \cH^\epsilon \beta_{R} \kappa}^\frac{1}{\epsilon} + C}.
$$
Next, we simplify the expression above by hiding logarithmic terms. Notice $\kappa = \tilde{O}(d)$. Setting $\lambda_R=\frac{d}{\max\cbr{B^2,W^2}}$, we have $\beta_{R^\epsilon} = \tilde{O}\rbr{\sqrt{d}\frac{\nu_{R^\epsilon}}{\numin}K^\frac{1-\epsilon'}{2(1+\epsilon')}}$ and $\beta_R = \tilde{O}\rbr{\sqrt{d}K^\frac{1-\epsilon}{2(1+\epsilon)}}$. Therefore,
\begin{align*}
\sum_{k=1}^K\sum_{h=1}^H \normbphiR{k}{h} &= O(C) + \tilde{O}\rbr{\rbr{d^\frac{1+\epsilon}{2} H^\frac{1+\epsilon}{2} \beta_{R^\epsilon}}^\frac{1}{\epsilon}
+ \rbr{d^\frac{3+\epsilon}{2} H^\frac{1+\epsilon}{2} \cH^\epsilon \beta_R}^\frac{1}{\epsilon}}\\
&= O(C) + \tilde{O}\Bigg(\Bigg(\frac{\nu_{R^\epsilon} d^\frac{2+\epsilon}{2} H^\frac{1+\epsilon}{2} K^\frac{1-\epsilon'}{2(1+\epsilon')}}{\numin}\Bigg)^\frac{1}{\epsilon}
+ \rbr{d^\frac{4+\epsilon}{2} H^\frac{1+\epsilon}{2} \cH^\epsilon K^\frac{1-\epsilon}{2(1+\epsilon)}}^\frac{1}{\epsilon}\Bigg).
\end{align*}
We then simplify $C$ as
$$
C = \tilde{O}\rbr{\sqrt{dH\cU^*K} + \sqrt{dH^2K\numin^2} + \rbr{\nu_R+\max\cbr{B,W}}\sqrt{d}H}.
$$
Finally, we have
\begin{align*}
\sum_{k=1}^K\sum_{h=1}^H \normbphiR{k}{h} = \tilde{O}\Bigg(&\sqrt{dH\cU^*K} + \sqrt{dH^2K\numin^2} + \Bigg(\frac{\nu_{R^\epsilon} d^\frac{2+\epsilon}{2} H^\frac{1+\epsilon}{2} K^\frac{1-\epsilon'}{2(1+\epsilon')}}{\numin}\Bigg)^\frac{1}{\epsilon}\\
&+ \rbr{d^\frac{4+\epsilon}{2} H^\frac{1+\epsilon}{2} \cH^\epsilon K^\frac{1-\epsilon}{2(1+\epsilon)}}^\frac{1}{\epsilon} + \rbr{\nu_R+\max\cbr{B,W}}\sqrt{d}H\Bigg).
\end{align*}

\end{proof}

\subsection{Proof of Lemma~\ref{lem:mdp-sumV}}\label{proof:mdp-sumV}

\begin{proof}[Proof of Lemma~\ref{lem:mdp-sumV}]
Denote $b_{k,h} = \nbr{\bphi_{k,h}/\sigma_{k,h}}_{\bSigma_{k-1,h}}$, then
$$
\sum_{k=1}^K\sum_{h=1}^H \normbphiV{k}{h} = \sum_{k=1}^K\sum_{h=1}^H \sigma_{k,h} b_{k,h}.
$$
And we have $b_{k,h} \le 1$ by definition of $\sigma_{k,h}$ and thus for all $h\in[H]$,
\begin{equation}\label{eq:proof-mdp-sumV-w-sum}
\sum_{k=1}^K b_{k,h}^2 = \min\cbr{1,b_{k,h}^2} \le 2 \kappa_V
\end{equation}
by Lemma~\ref{lem:w-sum}, where $\kappa_V := d\log\rbr{1+\frac{K}{d\lambda_V\sigmamin^2}}$.

Next we bound the sum of bonus $\sum_{k=1}^K\sum_{h=1}^H \sigma_{k,h} b_{k,h}$ separately by the value of $\sigma_{k,h}$. Recall $\sigma_{k,h}$ is defined in \eqref{eq:mdp-sigma}. We decompose $[K]\times[H]$ as the union of three disjoint sets $\cJ_1, \cJ_2, \cJ_3$ where
\begin{align*}
\cJ_1 &= \cbr{(k,h)\in[K]\times[H] \bigg| \sigma_{k,h}\in\cbr{\hat{\sigma}_{k,h},\sqrt{d^3HD_{k,h}},\sigmamin}},\\
\cJ_2 &= \cbr{(k,h)\in[K]\times[H] \big| \sigma_{k,h} = \normbphiV{k}{h}},\\
\cJ_3 &= \cbr{(k,h)\in[K]\times[H] \bigg| \sigma_{k,h} = \sqrt{d^\frac{5}{2}H\cH} \normbphiV{k}{h}^\frac{1}{2}}.
\end{align*}

For the summation over $\cJ_1$, we have
\begin{equation}\label{eq:proof-mdp-sumV-J1}
\begin{aligned}
\sum_{(k,h)\in\cJ_1} \sigma_{k,h} b_{k,h} &= \sum_{(k,h)\in\cJ_1} \max\cbr{\hat{\sigma}_{k,h},\sqrt{d^3HD_{k,h}},\sigmamin} \cdot b_{k,h}\\
&\overset{(a)}{\le} \sqrt{\sum_{k=1}^K\sum_{h=1}^H (\hat{\sigma}_{k,h}^2+d^3HD_{k,h}+\sigmamin^2)} \sqrt{\sum_{k=1}^K\sum_{h=1}^H b_{k,h}^2}\\
&\overset{(b)}{\le} \sqrt{2H\kappa_V} \sqrt{\sum_{k=1}^K\sum_{h=1}^H \hat{\sigma}_{k,h}^2 + d^3H \sum_{k=1}^K\sum_{h=1}^H D_{k,h} + HK\sigmamin^2},
\end{aligned}
\end{equation}
where $(a)$ holds due to Cauchy-Schwartz inequality and $(b)$ holds due to \eqref{eq:proof-mdp-sumV-w-sum}.

We provide upper bounds for $\sum_{k=1}^K\sum_{h=1}^H \hat{\sigma}_{k,h}^2$ and $\sum_{k=1}^K\sum_{h=1}^H D_{k,h}$ in Lemma~\ref{lem:mdp-hatsigma-sum} and Lemma~\ref{lem:mdp-D-sum} respectively.

\begin{lemma}\label{lem:mdp-hatsigma-sum}
With probability at least $1-2\delta$, on event $\cE_0\cap\cE_R\cap\cE_V\cap\cA_0$, we have
\begin{align*}
\sum_{k=1}^K\sum_{h=1}^H \hat{\sigma}_{k,h}^2 = O\bigg(&\cV^*K + (\beta_0+H\beta_V) \cH \sum_{k=1}^K \sum_{h=1}^H \normbphiV{k}{h} + H\cH\beta_R\sum_{k=1}^K\sum_{h=1}^H\normbphiR{k}{h}\\
&\qquad + H^2 \cH^2 \log\frac{4\ceil{\log_2 HK}}{\delta}\bigg).
\end{align*}
\end{lemma}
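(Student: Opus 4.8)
The plan is to first reduce the sum of proxy variances to a sum of true optimal variances plus error terms, and then control each piece. On the event $\cE_0\cap\cE_R\cap\cE_V$ (which contains every $\cE_{h+1}$), Lemma~\ref{lem:mdp-E} gives $[\hat{\VV}_h V^k_{h+1}](s_{k,h},a_{k,h}) \le [\VV_h V^*_{h+1}](s_{k,h},a_{k,h}) + E_{k,h}$, so that $\hat{\sigma}_{k,h}^2 = [\hat{\VV}_h V^k_{h+1}](s_{k,h},a_{k,h}) + E_{k,h} \le [\VV_h V^*_{h+1}](s_{k,h},a_{k,h}) + 2 E_{k,h}$. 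Summing over $(k,h)$ leaves two quantities to bound: the realized optimal-variance sum $\sum_{k,h}[\VV_h V^*_{h+1}](s_{k,h},a_{k,h})$ and $\sum_{k,h} E_{k,h}$.

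For the error sum I would use the explicit form of $E_{k,h}$ in \eqref{eq:mdp-E}, dropping the outer minimum with $\cH^2$. The bonus part contributes $11\cH\beta_0 \sum_{k,h}\normbphiV{k}{h}$ directly. For the inner-product part, expanding $\hbw_{k-1,h} - \cbw_{k-1,h}$ against $\bphi_{k,h}$ and using $\cE_V$ to replace $\hbw_{k-1,h}, \cbw_{k-1,h}$ by $\bw_h[V^k_{h+1}], \bw_h[\pesV^k_{h+1}]$ up to $2\beta_V\normbphiV{k}{h}$ yields $\dotp{\bphi_{k,h}}{\hbw_{k-1,h}-\cbw_{k-1,h}} \le [\PP_h(V^k_{h+1}-\pesV^k_{h+1})](s_{k,h},a_{k,h}) + 2\beta_V\normbphiV{k}{h}$. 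Summing and invoking Lemma~\ref{lem:mdp-opt-pes-gap} to control $\sum_{k,h}[\PP_h(V^k_{h+1}-\pesV^k_{h+1})]$ produces exactly the terms $(\beta_0+H\beta_V)\cH\sum_{k,h}\normbphiV{k}{h}$, $H\cH\beta_R\sum_{k,h}\normbphiR{k}{h}$, and $H^2\cH^2\log(4\ceil{\log_2 HK}/\delta)$.

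The interesting part is $\sum_{k,h}[\VV_h V^*_{h+1}](s_{k,h},a_{k,h})$, which I would bound by $O(\cV^* K + H^2\cH^2\log(\cdots))$. Since $\cV^* = \min\{\cV^*_0,\cV\}$, I establish two separate upper bounds. For the $\cV^*_0$ branch, the marginal law of $(s_{k,h},a_{k,h})$ under $\pi^k$ is $d_h^{\pi^k}$, so the conditional expectation of $\sum_h[\VV_h V^*_{h+1}](s_{k,h},a_{k,h})$ given the pre-episode history equals $\sum_h\EE_{d_h^{\pi^k}}[\VV_h V^*_{h+1}]$, whose sum over $k$ is $K\cV^*_0$; the variance-aware Freedman inequality (Lemma~\ref{lem:bern}) applied to the episode-indexed martingale, together with $\EE[(\sum_h[\VV_h V^*_{h+1}])^2]\le H\cH^2\cdot\EE[\sum_h[\VV_h V^*_{h+1}]]$ and AM--GM, collapses the deviation into $O(K\cV^*_0 + H\cH^2\log(\cdots))$. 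For the $\cV$ branch I use $\VV_h V^*_{h+1}\le 2\VV_h V^{\pi^k}_{h+1} + 2\VV_h(V^*_{h+1}-V^{\pi^k}_{h+1})$; the first summand is handled by the law of total variance (Lemma~\ref{lem:total-variance}), giving $\EE[\sum_h\VV_h V^{\pi^k}_{h+1}]=\Var(r_{\pi^k})\le\cV$ followed by Freedman, while the second is bounded by $\cH[\PP_h(V^*_{h+1}-V^{\pi^k}_{h+1})]\le\cH[\PP_h(V^k_{h+1}-V^{\pi^k}_{h+1})]$ via the optimism of Lemma~\ref{lem:mdp-opt} and then absorbed through Lemma~\ref{lem:mdp-subopt-gap} into the already-present error terms.

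The \emph{main obstacle} is the $\cV$ branch: correctly invoking the law of total variance along the realized trajectories and controlling the gap between $\VV_h V^*$ and $\VV_h V^{\pi^k}$ without generating terms larger than the claimed remainder. The rest is bookkeeping, namely matching the Freedman additive and deviation constants to $H^2\cH^2\log(4\ceil{\log_2 HK}/\delta)$ and checking that the two martingale applications (for the $\cV^*_0$ and $\cV$ branches) each cost only a single $\delta$, so the total extra failure probability stays within the stated $2\delta$ on top of the conditioning events.
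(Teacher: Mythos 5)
Your proposal is correct and follows essentially the same route as the paper's proof: the same reduction $\hat{\sigma}_{k,h}^2 \le [\VV_h V^*_{h+1}](s_{k,h},a_{k,h}) + 2E_{k,h}$ via Lemma~\ref{lem:mdp-E}, the same two-branch treatment of $\cV^* = \min\{\cV^*_0,\cV\}$ (variance-aware Freedman for the $\cV^*_0$ branch, Lemma~\ref{lem:total-variance} plus a variance-difference bound routed through optimism for the $\cV$ branch), and the same $2\delta$ accounting from the two martingale applications. Your only deviations are cosmetic and harmless: you use the crude inequality $\VV_h V^*_{h+1} \le 2\VV_h V^{\pi^k}_{h+1} + 2\VV_h(V^*_{h+1}-V^{\pi^k}_{h+1})$ with Lemma~\ref{lem:mdp-subopt-gap} where the paper uses the exact difference bound \eqref{eq:var-diff} routed through $V^k_{h+1}-\pesV^k_{h+1}$ and Lemma~\ref{lem:mdp-opt-pes-gap}, and in the $E_{k,h}$ sum you substitute via $\cE_V$ (radius $\beta_V$) where the paper uses $\cE_0$ (radius $\beta_0$); both alternatives hold on the stated conditioning event and are absorbed by the $O(\cdot)$.
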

\begin{proof}
    See Appendix~\ref{proof:mdp-hatsigma-sum} for a detailed proof.
\end{proof}

\begin{lemma}\label{lem:mdp-D-sum}
On event $\cE_0\cap\cA_0$, we have
\begin{align*}
\sum_{k=1}^K\sum_{h=1}^H D_{k,h} = O\bigg(&(\beta_0+H\beta_V) \cH \sum_{k=1}^K \sum_{h=1}^H \normbphiV{k}{h} + H\cH\beta_R\sum_{k=1}^K\sum_{h=1}^H\normbphiR{k}{h}\\
& + H^2 \cH^2 \log\frac{4\ceil{\log_2 HK}}{\delta}\bigg).
\end{align*}
\end{lemma}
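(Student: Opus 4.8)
The plan is to bound $\sum_{k,h} D_{k,h}$ by reducing each $D_{k,h}$ to a one-step expected value gap $\sbr{\PP_h(V^k_{h+1} - \pesV^k_{h+1})}(s_{k,h},a_{k,h})$ plus a transition bonus, and then to invoke Lemma~\ref{lem:mdp-opt-pes-gap}, whose conclusion is available on the event $\cA_0$. As a first step I would simply drop the minimum in the definition~\eqref{eq:mdp-D} of $D_{k,h}$, keeping only the first argument, which yields the crude pointwise bound $D_{k,h} \le 2\cH \dotp{\bphi_{k,h}}{\hbw_{k-1,h} - \cbw_{k-1,h}} + 4\cH\beta_0\normbphiV{k}{h}$.

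The key step is to re-express the inner product $\dotp{\bphi_{k,h}}{\hbw_{k-1,h} - \cbw_{k-1,h}}$ in terms of the true transition operator. Using the linearity of $\bw_h[\cdot]$ and $\hbw_{k,h}[\cdot]$ noted after~\eqref{eq:bwf} together with the definitions $\hbw_{k-1,h} = \hbw_{k-1,h}[V^k_{h+1}]$ and $\cbw_{k-1,h} = \hbw_{k-1,h}[\pesV^k_{h+1}]$, I would split the difference into three pieces:
\[
\dotp{\bphi_{k,h}}{\hbw_{k-1,h} - \bw_h[V^k_{h+1}]} + \dotp{\bphi_{k,h}}{\bw_h[V^k_{h+1}] - \bw_h[\pesV^k_{h+1}]} + \dotp{\bphi_{k,h}}{\bw_h[\pesV^k_{h+1}] - \cbw_{k-1,h}}.
\]
The middle term equals exactly $\sbr{\PP_h(V^k_{h+1} - \pesV^k_{h+1})}(s_{k,h},a_{k,h})$ by the linear structure of the transition. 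The outer two terms are each controlled by Cauchy--Schwarz and the two concentration bounds in the definition of $\cE_0$, namely $\nbr{\hbw_{k-1,h} - \bw_h[V^k_{h+1}]}_{\bSigma_{k-1,h}} \le \beta_0$ and $\nbr{\cbw_{k-1,h} - \bw_h[\pesV^k_{h+1}]}_{\bSigma_{k-1,h}} \le \beta_0$, giving at most $2\beta_0\normbphiV{k}{h}$ in total. Combining this with the crude bound above produces the clean pointwise estimate $D_{k,h} \le 2\cH\sbr{\PP_h(V^k_{h+1} - \pesV^k_{h+1})}(s_{k,h},a_{k,h}) + 8\cH\beta_0\normbphiV{k}{h}$.

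Finally I would sum this estimate over $k\in[K]$ and $h\in[H]$ and apply Lemma~\ref{lem:mdp-opt-pes-gap} on $\cA_0$ to replace $\sum_{k,h}\sbr{\PP_h(V^k_{h+1}-\pesV^k_{h+1})}(s_{k,h},a_{k,h})$ by $8H\beta_R \sum_{k,h}\normbphiR{k}{h} + 16H\beta_V\sum_{k,h}\normbphiV{k}{h} + 29H^2\cH\log\frac{4\ceil{\log_2 HK}}{\delta}$. Collecting terms then gives $\sum_{k,h}D_{k,h} = O\big((\beta_0 + H\beta_V)\cH\sum_{k,h}\normbphiV{k}{h} + H\cH\beta_R\sum_{k,h}\normbphiR{k}{h} + H^2\cH^2\log\frac{4\ceil{\log_2 HK}}{\delta}\big)$, as claimed. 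This argument is essentially a bookkeeping reduction, so I do not anticipate a genuine obstacle; the only point requiring care is ensuring the $\cE_0$ bounds are applied to the correct pair of value functions ($V^k_{h+1}$ versus $\pesV^k_{h+1}$) in the decomposition, and that all constants are tracked so that both the transition-bonus and reward-bonus sums appear with the stated $\cH$- and $H\cH$-factors.
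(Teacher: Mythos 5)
Your proposal is correct and follows essentially the same route as the paper's proof: drop the minimum in \eqref{eq:mdp-D}, use the two $\cE_0$ concentration bounds with Cauchy--Schwarz to replace $\dotp{\bphi_{k,h}}{\hbw_{k-1,h}-\cbw_{k-1,h}}$ by $\sbr{\PP_h(V^k_{h+1}-\pesV^k_{h+1})}(s_{k,h},a_{k,h})$ plus an extra $2\beta_0\normbphiV{k}{h}$, and then invoke Lemma~\ref{lem:mdp-opt-pes-gap} on $\cA_0$ to bound the summed expected gap. The only difference is bookkeeping of absolute constants, which is immaterial under the stated $O(\cdot)$.
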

\begin{proof}
    See Appendix~\ref{proof:mdp-D-sum} for a detailed proof.
\end{proof}

Next, for the summation over $\cJ_2$, we have $\nbr{\frac{\bphi_{k,h}}{\sigma_{k,h}}}_{\bSigma_{k-1,h}^{-1}} = 1$. Then
\begin{equation}\label{eq:proof-mdp-sumV-J2}
\sum_{(k,h)\in\cJ_2} \sigma_{k,h} b_{k,h} = \sum_{(k,h)\in\cJ_2} \nbr{\bphi_{k,h}}_{\bSigma_{k-1,h}^{-1}} b_{k,h}^2
\overset{(a)}{\le} \frac{1}{\sqrt{\lambda_V}} \sum_{k=1}^K\sum_{h=1}^H b_{k,h}^2
\overset{(b)}{\le} \frac{2H\kappa_V}{\sqrt{\lambda_V}},
\end{equation}
where $(a)$ holds due to $\bSigma_{k-1,h} \succeq \lambda_V \bI$, thus $\nbr{\bphi_{k,h}}_{\bSigma_{k-1,h}^{-1}} \le \frac{\nbr{\bphi_{k,h}}}{\sqrt{\lambda_V}} \le \frac{1}{\sqrt{\lambda_V}}$. And $(b)$ holds due to \eqref{eq:proof-mdp-sumV-w-sum}.

Then, for the summation over $\cJ_3$, we have $\sigma_{k,h} = d^{2.5}H\cH b_{k,h}$. Therefore
\begin{equation}\label{eq:proof-mdp-sumV-J3}
\sum_{(k,h)\in\cJ_3} \sigma_{k,h} b_{k,h} = d^{2.5}H\cH \sum_{(k,h)\in\cJ_3} b_{k,h}^2 \le 2d^{2.5}H^2\cH\kappa_V,
\end{equation}
where the inequality holds due to \eqref{eq:proof-mdp-sumV-w-sum}.
\end{proof}

Finally, combining \eqref{eq:proof-mdp-sumV-J1}, \eqref{eq:proof-mdp-sumV-J2}, \eqref{eq:proof-mdp-sumV-J3} and Lemma~\ref{lem:mdp-hatsigma-sum}, with probability at least $1-2\delta$, we have
\[
\sum_{k=1}^K\sum_{h=1}^H\normbphiV{k}{h} = O\rbr{\sqrt{H\kappa_V}\sqrt{(\beta_0+H\beta_V)d^3H\cH \sum_{k=1}^K\sum_{h=1}^H\normbphiV{k}{h}}+C}
\]
where
\begin{align*}
C = &\sqrt{H\kappa_V}\sqrt{\cV^*K+HK\sigmamin^2+d^3H^2\cH\beta_R\sum_{k=1}^K\sum_{h=1}^H\normbphiR{k}{h}+d^3H^3\cH^2\log\frac{4\ceil{\log_2 HK}}{\delta}}\\
&+ \frac{H\kappa_V}{\sqrt{\lambda_V}} + d^{2.5}H^2\cH\kappa_V.
\end{align*}
Using the inequality that $x \le a\sqrt{x}+b$ implies $x \le a^2+2b$, we have
\[
\sum_{k=1}^K\sum_{h=1}^H\normbphiV{k}{h} = O\rbr{(\beta_0+H\beta_V)d^3H^2\cH\kappa_V + C}.
\]
Next, we simplify the expression above by hiding logarithmic terms. Notice $\kappa_V=\tilde{O}(d)$. Setting $\lambda_V = \frac{1}{\cH^2}$, we have $\beta_0 = \tilde{O}\rbr{\frac{\sqrt{d^3H\cH^2}}{\sigmamin}}$ and $\beta_V=\tilde{O}(\sqrt{d})$. Therefore,
\begin{align*}
\sum_{k=1}^K\sum_{h=1}^H\normbphiV{k}{h} &= O(C) + \tilde{O}\rbr{d^4H^2\cH\beta_0 + d^4H^3\cH\beta_V}\\
&= O(C) + \tilde{O}\rbr{\frac{d^{5.5}H^{2.5}\cH^2}{\sigmamin} + d^{4.5}H^3\cH}.
\end{align*}
We then simplify $C$ as
\[
C = \tilde{O}\rbr{\sqrt{dH\cV^*K} + \sqrt{dH^2K\sigmamin^2} + \sqrt{d^4H^3\cH\beta_R\sum_{k=1}^K\sum_{h=1}^H\normbphiR{k}{h}} + d^{2.5}H^2\cH}.
\]
Finally, we have
\begin{align*}
\sum_{k=1}^K\sum_{h=1}^H\normbphiV{k}{h} = \tilde{O}\Bigg(&\sqrt{dH\cV^*K} + \sqrt{dH^2K\sigmamin^2} + \frac{d^{5.5}H^{2.5}\cH^2}{\sigmamin}\\
&+ \sqrt{d^4H^3\cH\beta_R\sum_{k=1}^K\sum_{h=1}^H\normbphiR{k}{h}} + d^{4.5}H^3\cH\bigg).
\end{align*}

\subsection{Proof of Lemma~\ref{lem:mdp-hatnu-sum}}
\label{proof:mdp-hatnu-sum}

\begin{proof}[Proof of Lemma~\ref{lem:mdp-hatnu-sum}]
On event $\cE_{R^\epsilon}\cap\cE_R$, by Lemma~\ref{lem:mdp-W}, for all $(k,h)\in[K]\times[H]$, we have $\abr{\sbr{\hat{\bnu}_{1+\epsilon} R_h - \bnu_{1+\epsilon} R_h}(s_{k,h},a_{k,h})} \le W_{k,h}$. Thus
\begin{align*}
\sum_{k=1}^K\sum_{h=1}^H \hat{\nu}_{k,h}^2 &= \sum_{k=1}^K\sum_{h=1}^H \rbr{\sbr{\hat{\bnu}_{1+\epsilon}R_h}(s_{k,h},a_{k,h}) + W_{k,h}}^\frac{2}{1+\epsilon}\\
&\le \sum_{k=1}^K\sum_{h=1}^H \rbr{\sbr{\bnu_{1+\epsilon}R_h}(s_{k,h},a_{k,h}) + 2W_{k,h}}^\frac{2}{1+\epsilon}\\
&\overset{(a)}{\le} 2 \sum_{k=1}^K\sum_{h=1}^H \sbr{\bnu_{1+\epsilon}R_h}^\frac{2}{1+\epsilon}(s_{k,h},a_{k,h}) + 2\sum_{k=1}^K\sum_{h=1}^H W_{k,h}^\frac{2}{1+\epsilon},
\end{align*}
where $(a)$ holds due to Jensen's inequality: for any non-negative number $a,b$, we have $(a+b)^{1+\xi} \le 2^\xi a + 2^\xi b \le 2a + 2b$ for $\xi \in [0,1]$.
We next bound the two terms in the RHS separately.

\paragraph{For the first term.}

On one hand, we have $\sum_{k=1}^K\sum_{h=1}^H \sbr{\bnu_{1+\epsilon}R_h}^\frac{2}{1+\epsilon}(s_{k,h},a_{k,h}) \le \cU K$ by Assumption~\ref{ass:bounded}.

On the other hand, we denote $X_k = \sum_{h=1}^H \sbr{\bnu_{1+\epsilon} R_h}^\frac{2}{1+\epsilon}(s_{k,h},a_{k,h})$. Use the notation $\cF_k:=\cF_{H,k}$ as the $\sigma$-field generated by all the random variables up to $k$-th episode for short. Then $X_k \ge 0$ is $\cF_k$-measurable with $|X_k| \le H\nu_R^2$ and $\Var[X_k|\cF_{k-1}] \le \EE[X_k^2|\cF_{k-1}] \le H\nu_R^2 \EE[X_k|\cF_{k-1}]$. With probability at least $1-\delta$, the variance-aware Freedman inequality in Lemma~\ref{lem:bern} gives
\begin{align*}
\sum_{k=1}^K X_k  &\le \sum_{k=1}^K \EE[X_k|\cF_{k-1}] + 3 \sqrt{H\nu_R^2  \sum_{k=1}^K \EE [X_k|\cF_{k-1}] \log \frac{2\ceil{\log_2K}}{\delta}} + 5H\nu_R^2 \log\frac{2\ceil{\log_2K}}{\delta} \\
&\le 3\sum_{k=1}^K \EE[X_k|\cF_{k-1}] + 7H\nu_R^2 \log\frac{2\ceil{\log_2K}}{\delta}.
\end{align*}
We relate $\EE[X_k|\cF_{k-1}]$ to policy $\pi^k$ as
\begin{align*}
\EE[X_k|\cF_{k-1}] &= \EE\sbr{\sum_{h=1}^H \sbr{\bnu_{1+\epsilon} R_h}^\frac{2}{1+\epsilon}(s_{k,h},a_{k,h}) | \cF_{k-1}} = \EE_{(s_h,a_h)\sim d_h^{\pi^k}}\sbr{\sum_{h=1}^H \sbr{\bnu_{1+\epsilon} R_h}^\frac{2}{1+\epsilon}(s_{h},a_{h})},
\end{align*}
where $d_h^{\pi^k}(s, a)$ is defined in \eqref{eq:d}.
Then we have
\begin{align*}
&\sum_{k=1}^K\sum_{h=1}^H \sbr{\bnu_{1+\epsilon}R_h}^\frac{2}{1+\epsilon}(s_{k,h},a_{k,h})\\
\le{}& 3 \sum_{k=1}^K\EE_{(s_h,a_h)\sim d_h^{\pi^k}}\sbr{\sum_{h=1}^H \sbr{\bnu_{1+\epsilon} R_h}^\frac{2}{1+\epsilon}(s_{h},a_{h})} + 7H\nu_R^2 \log\frac{2\ceil{\log_2K}}{\delta}\\
={}& 3 \cU^*_0K + 7H\nu_R^2 \log\frac{2\ceil{\log_2K}}{\delta},
\end{align*}
where $\cU^*_0$ is defined in \eqref{eq:U}.

\paragraph{For the second term.}

Recall $W_{k,h}$ is defined in \eqref{eq:mdp-W}. We have
\begin{align*}
\sum_{k=1}^K\sum_{h=1}^H W_{k,h}^\frac{2}{1+\epsilon} &= \sum_{k=1}^K\sum_{h=1}^H \sbr{(\beta_{R^\epsilon,k-1} + 6 \cH^\epsilon \beta_{R,k-1} \kappa) \normbphiR{k}{h}}^\frac{2}{1+\epsilon}\\
&\le (\beta_{R^\epsilon} + 6 \cH^\epsilon \beta_{R} \kappa)^\frac{2}{1+\epsilon} \rbr{\sum_{k=1}^K\sum_{h=1}^H \normbphiR{k}{h}}^\frac{2}{1+\epsilon},
\end{align*}
where the inequality holds due to Jensen's inequality.

Finally, putting pieces together and using $\cU^* = \min\cbr{\cU^*_0,\cU}$ complete the proof.
\end{proof}

\subsection{Proof of Lemma~\ref{lem:mdp-hatsigma-sum}}
\label{proof:mdp-hatsigma-sum}

\begin{proof}[Proof of Lemma~\ref{lem:mdp-hatsigma-sum}]
On event $\cE_0\cap\cE_R\cap\cE_V$, by Lemma~\ref{lem:mdp-E}, for all $(k,h)\in[K]\times[H]$, we have \\$\abr{\sbr{\hat{\VV}_h V^k_{h+1} - \VV_h V^*_{h+1}}(s_{k,h},a_{k,h})} \le E_{k,h}$. Thus
\begin{align*}
\sum_{k=1}^K\sum_{h=1}^H \hat{\sigma}_{k,h}^2 &= \sum_{k=1}^K\sum_{h=1}^H \rbr{\sbr{\hat{\VV}_h V^k_{h+1}}(s_{k,h},a_{k,h}) + E_{k,h}}\\
&\le \sum_{k=1}^K\sum_{h=1}^H \sbr{\VV_h V^*_{h+1}}(s_{k,h},a_{k,h}) + 2\sum_{k=1}^K\sum_{h=1}^H E_{k,h}.
\end{align*}
We next bound the two terms in the RHS separately.

\paragraph{For the first term.}

On one hand, we denote $X_k =  \sum_{h=1}^H [\VV_h V_{h+1}^{*}](s_{k,h}, a_{k,h})$.
Use the notation $\cF_k:=\cF_{k,h}$ as the $\sigma$-field generated by all the random variables up to $k$-th episodes for short.
Then $X_k \ge 0$ is $\cF_k$-measurable with $|X_k| \le H \cH^2$ and $\Var[X_k|\cF_{k-1}] \le \EE[X_k^2|\cF_{k-1}] \le H \cH^2 \EE [X_k|\cF_{k-1}]$.
With probability at least $1-\delta$, the variance-aware Freedman inequality in Lemma~\ref{lem:bern} gives
\begin{align*}
\sum_{k=1}^K X_k  
&\le \sum_{k=1}^K \EE[X_k|\cF_{k-1}] + 3 \sqrt{H \cH^2 \sum_{k=1}^K \EE [X_k|\cF_{k-1}] \log \frac{2\ceil{\log_2K}}{\delta}} + 5H \cH^2 \log\frac{2\ceil{\log_2K}}{\delta} \\
&\le 3\sum_{k=1}^K \EE[X_k|\cF_{k-1}] + 7H \cH^2 \log\frac{2\ceil{\log_2K}}{\delta}.
\end{align*}
We relate $\EE[X_k|\cF_{k-1}]$ to $\pi^k$ as
$$
\EE[X_k|\cF_{k-1}] = \EE\sbr{\sum_{h=1}^H \sbr{\VV_h V_{h+1}^{*}}(s_{k,h},a_{k,h}) | \cF_{k-1}} = \EE_{(s_h,a_h)\sim d_h^{\pi^k}}\sbr{\sum_{h=1}^H \sbr{\VV_h V_{h+1}^{*}}(s_{h},a_{h})},
$$
where $d_h^{\pi^k}(s, a)$ is defined in \eqref{eq:d}.
Then we have
\begin{align*}
&\sum_{k=1}^K\sum_{h=1}^H \sbr{\VV_h V^*_{h+1}}(s_{k,h},a_{k,h})\\
\le{}& 3 \sum_{k=1}^K\EE_{(s_h,a_h)\sim d_h^{\pi^k}}\sbr{\sum_{h=1}^H \sbr{\VV_h V^*_{h+1}}(s_{h},a_{h})} + 7H\cH^2 \log\frac{2\ceil{\log_2K}}{\delta}\\
={}& 3 \cV^*_0K + 7H\cH^2 \log\frac{2\ceil{\log_2K}}{\delta},
\end{align*}
where $\cV^*_0$ is defined in \eqref{eq:V}.

We restate the total variance lemma (Lemma C.5 in \citet{jin2018q}) here for completeness.

\begin{lemma}[Total variance lemma]
\label{lem:total-variance}
    With probability at least $1-\delta$, we have
    \[
    \sum_{k=1}^K \sum_{h=1}^H [\VV_h V_{h+1}^{\pi^k}](s_{k,h}, a_{k,h})  \le  2 \cV K + 2H \cH^2 \log\frac{1}{\delta} .
    \]
\end{lemma}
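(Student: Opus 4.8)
The plan is to prove Lemma~\ref{lem:total-variance} in two stages: first a law-of-total-variance identity that bounds the \emph{expected} per-episode sum of one-step value variances by $\cV$, and then a martingale concentration step that transfers this to the empirical sum $\sum_{k,h}[\VV_h V_{h+1}^{\pi^k}](s_{k,h},a_{k,h})$.

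For the first stage, I would fix an episode $k$ and work with the Markov chain induced by the (fixed, $\cF_{k-1}$-measurable) policy $\pi^k$ started from $s_{k,1}$. Introduce the process $M_h = \sum_{h'=1}^{h-1} r_{h'}(s_{h'},a_{h'}) + V_h^{\pi^k}(s_h)$ for $h\in[H+1]$, so that $M_1 = V_1^{\pi^k}(s_{k,1})$ is deterministic and $M_{H+1} = r_{\pi^k} := \sum_{h=1}^H r_h(s_h,a_h)$ since $V_{H+1}^{\pi^k}\equiv 0$. Using the Bellman equation $V_h^{\pi^k}(s_h) = r_h(s_h,a_h) + [\PP_h V_{h+1}^{\pi^k}](s_h,a_h)$ together with $a_h = \pi_h^k(s_h)$, one checks that $M_h$ is a martingale whose increments are $D_h = V_{h+1}^{\pi^k}(s_{h+1}) - [\PP_h V_{h+1}^{\pi^k}](s_h,a_h)$, and hence $\EE[D_h^2\mid s_h,a_h] = [\VV_h V_{h+1}^{\pi^k}](s_h,a_h)$. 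Because the increments are orthogonal and $M_1$ is deterministic, summing the conditional variances yields the identity $\Var(r_{\pi^k}) = \EE[\sum_{h=1}^H [\VV_h V_{h+1}^{\pi^k}](s_h,a_h)]$, where the expectation is over the $\pi^k$-trajectory. Invoking Assumption~\ref{ass:bounded}(3), namely $\Var(r_{\pi^k})\le\cV$, gives $\EE[X_k\mid\cF_{k-1}]\le\cV$ for the quantity $X_k := \sum_{h=1}^H [\VV_h V_{h+1}^{\pi^k}](s_{k,h},a_{k,h})$.

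For the second stage, observe that $X_k$ is $\cF_k$-measurable, non-negative, and bounded by $H\cH^2$ (each variance term is at most $\cH^2$ since $V_{h+1}^{\pi^k}\in[0,\cH]$). I would then apply a multiplicative Chernoff / Bernstein-type concentration for the martingale $\sum_k (X_k - \EE[X_k\mid\cF_{k-1}])$, using the self-bounding variance proxy $\EE[X_k^2\mid\cF_{k-1}] \le H\cH^2\,\EE[X_k\mid\cF_{k-1}]$ and absorbing the deviation term via $2\sqrt{ab}\le a+b$. This yields $\sum_k X_k \le 2\sum_k \EE[X_k\mid\cF_{k-1}] + 2H\cH^2\log(1/\delta) \le 2\cV K + 2H\cH^2\log(1/\delta)$ with probability at least $1-\delta$, which is the claim.

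I expect the martingale set-up in the first stage to be the main obstacle: one must verify that $M_h$ is indeed a martingale under $\pi^k$ (which hinges on the Bellman recursion and the fixed greedy choice $a_h=\pi_h^k(s_h)$) and that its quadratic variation is \emph{exactly} the one-step value variance $[\VV_h V_{h+1}^{\pi^k}]$. This in turn relies crucially on the value function being built from the \emph{expected} rewards $r_h$, so that no reward noise $\varepsilon_h$ leaks into the transition-variance term $\VV_h$. The second stage is essentially a standard application of a self-bounding concentration inequality and needs only the crude bounds $0\le X_k\le H\cH^2$.
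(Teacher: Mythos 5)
Your proposal is correct and follows essentially the same route as the paper: your martingale $M_h$ with orthogonal increments is exactly the paper's telescoping of $\sum_h r_h(s_{k,h},a_{k,h}) - V_1^{\pi^k}(s_{k,1})$ into the differences $V_{h+1}^{\pi^k}(s_{k,h+1}) - [\PP_h V_{h+1}^{\pi^k}](s_{k,h},a_{k,h})$, whose cross terms vanish by the Markov property, giving $\EE[X_k\mid\cF_{k-1}] = \Var(r_{\pi^k}) \le \cV$ via Assumption~\ref{ass:bounded}(3). The concentration step likewise matches the paper's use of Freedman's inequality with the self-bounding variance proxy $\Var[X_k\mid\cF_{k-1}] \le H\cH^2\,\EE[X_k\mid\cF_{k-1}] \le H\cH^2\cV$ and the absorption $2\sqrt{ab}\le a+b$.
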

\begin{proof}
    See Appendix~\ref{proof:total-variance} for a detailed proof.
\end{proof}

On the other hand, we have
\begin{align*}
&\sum_{k=1}^K\sum_{h=1}^H \sbr{\VV_h V^*_{h+1}}(s_{k,h},a_{k,h})\\
=& \sum_{k=1}^K\sum_{h=1}^H \sbr{\VV_h V^{\pi^k}_{h+1}}(s_{k,h},a_{k,h}) + \sum_{k=1}^K\sum_{h=1}^H \sbr{\VV_h V^*_{h+1} - \VV_h V^{\pi^k}_{h+1}}(s_{k,h},a_{k,h})\\
\overset{(a)}{\le}{}& 2\cV K + 2H\cH^2\log\frac{1}{\delta} + 2\cH \sum_{k=1}^K\sum_{h=1}^H\sbr{\PP_h\rbr{V^k_{h+1} - \pesV^k_{h+1}}}\\
\overset{(b)}{\le}{}& 2\cV K + 2H\cH^2\log\frac{1}{\delta} + 16H\cH\beta_R\sum_{k=1}^K\sum_{h=1}^H\normbphiR{k}{h} + 16H\cH\beta_V\sum_{k=1}^K\sum_{h=1}^H\normbphiV{k}{h}\\
    &\qquad + 58H^2\cH^2\log\frac{4\ceil{\log_2 HK}}{\delta}\\
\le{}& 2\cV K + 16H\cH\beta_R\sum_{k=1}^K\sum_{h=1}^H\normbphiR{k}{h} + 16H\cH\beta_V\sum_{k=1}^K\sum_{h=1}^H\normbphiV{k}{h}\\
    &\qquad + 60H^2\cH^2\log\frac{4\ceil{\log_2 HK}}{\delta},
\end{align*}
where $(a)$ holds with probability at least $1-\delta$ due to Lemma~\ref{lem:total-variance}
and \eqref{eq:var-diff}. $(b)$ is due to $\cA_0$ holds.
\begin{equation}\label{eq:var-diff}
\begin{aligned}
&\sbr{\VV_h V_{h+1}^* - \VV_h V_{h+1}^{\pi^k}}(s_{k,h}, a_{k,h})\\
={}& [\PP_h V_{h+1}^*]^2(s_{k,h}, a_{k,h}) - [\PP_h V_{h+1}^{\pi^k}]^2(s_{k,h}, a_{k,h})\\
    &\qquad - \rbr{[\PP_h V_{h+1}^*]^2(s_{k,h}, a_{k,h}) - [ \PP_h V_{h+1}^{\pi^k}]^2(s_{k,h}, a_{k,h}) }\\
\overset{(a)}{\le}{}& [\PP_h V_{h+1}^*]^2(s_{k,h}, a_{k,h}) - \PP_h [ V_{h+1}^{\pi^k}]^2(s_{k,h}, a_{k,h})\\
\overset{(b)}{\le}{}& 2\cH \sbr{\PP_h (V_{h+1}^* - V_{h+1}^{\pi^k})}(s_{k,h}, a_{k,h})\\
\overset{(c)}{\le}{}& 2\cH \sbr{\PP_h (V_{h+1}^k - V_{h+1}^{\pi^k})}(s_{k,h}, a_{k,h}),
\end{aligned}
\end{equation}
where $(a)$ holds due to $V_{h+1}^*(\cdot) \ge V_{h+1}^{\pi^k}(\cdot)$. $(b)$ holds due to $V_{h+1}^*(\cdot)$ and $V_{h+1}^{\pi^k}$ are bounded in $[0,\cH]$. And $(c)$ holds due to Lemma~\ref{lem:mdp-opt}.

\paragraph{For the second term.}

Recall $E_{k,h}$ is defined in \eqref{eq:mdp-E}. We have
\begin{align*}
\sum_{k=1}^K\sum_{h=1}^H E_{k,h} 
\le{}& \sum_{k=1}^K\sum_{h=1}^H\sbr{4\cH \dotp{\bphi_{k,h}}{\hbw_{k-1,h} - \cbw_{k-1,h}} + 11\cH\beta_0\normbphiV{k}{h}}\\
\overset{(a)}{\le}{}& \sum_{k=1}^K\sum_{h=1}^H \sbr{4\cH \sbr{\PP_h \rbr{V^k_{h+1} - \pesV^k_{h+1}}}(s_{k,h},a_{k,h}) + 22\cH\beta_0\normbphiV{k}{h}}\\
\overset{(b)}{\le}{}& (22\beta_0+48H\beta_V) \cH \sum_{k=1}^K \sum_{h=1}^H \normbphiV{k}{h} + 32H\cH\beta_R\sum_{k=1}^K\sum_{h=1}^H\normbphiR{k}{h}\\
    &\qquad + 116 H^2 \cH^2 \log\frac{4\ceil{\log_2 HK}}{\delta}\\
={}& O\Bigg((\beta_0+H\beta_V) \cH \sum_{k=1}^K \sum_{h=1}^H \normbphiV{k}{h} + H\cH\beta_R\sum_{k=1}^K\sum_{h=1}^H\normbphiR{k}{h}\\
    &\qquad + H^2 \cH^2 \log\frac{4\ceil{\log_2 HK}}{\delta}\Bigg),
\end{align*}
where $(a)$ is due to $\cE_0$ holds and $(b)$ is due to $\cA_0$ holds.

Finally, putting pieces together and using $\cV^* = \min\cbr{\cV^*_0,\cV}$ completes the proof.
\end{proof}

\subsection{Proof of Lemma~\ref{lem:mdp-D-sum}}
\label{proof:mdp-D-sum}

\begin{proof}[Proof of Lemma~\ref{lem:mdp-D-sum}]
Recall $D_{k,h}$ is defined in \eqref{eq:mdp-D}. We have
\begin{align*}
\sum_{k=1}^K\sum_{h=1}^H D_{k,h}
\le{}& \sum_{k=1}^K\sum_{h=1}^H \sbr{2\cH \dotp{\bphi_{k,h}}{\hbw_{k-1,h} - \cbw_{k-1,h}} + 4\cH\beta_0\normbphiV{k}{h}}\\
\overset{(a)}{\le}{}& \sum_{k=1}^K\sum_{h=1}^H \sbr{2\cH \sbr{\PP_h \rbr{V^k_{h+1} - \pesV^k_{h+1}}}(s_{k,h},a_{k,h}) + 8\cH\beta_0\normbphiV{k}{h}}\\
\overset{(b)}{\le}{}& (8\beta_0+24H\beta_V) \cH \sum_{k=1}^K \sum_{h=1}^H \normbphiV{k}{h} + 16H\cH\beta_R\sum_{k=1}^K\sum_{h=1}^H\normbphiR{k}{h}\\
    &\qquad + 58 H^2 \cH^2 \log\frac{4\ceil{\log_2 HK}}{\delta}\\
={}& O\Bigg((\beta_0+H\beta_V) \cH \sum_{k=1}^K \sum_{h=1}^H \normbphiV{k}{h} + H\cH\beta_R\sum_{k=1}^K\sum_{h=1}^H\normbphiR{k}{h}\\
    &\qquad + H^2 \cH^2 \log\frac{4\ceil{\log_2 HK}}{\delta}\Bigg),
\end{align*}
where $(a)$ is due to $\cE_0$ holds and $(b)$ is due to $\cA_0$ holds.
\end{proof}

\subsection{Proof of Lemma~\ref{lem:total-variance}}
\label{proof:total-variance}

\begin{proof}[Proof of Lemma~\ref{lem:total-variance}]
	The	proof leverages the technique in Lemma~C.5 of \citet{jin2018q}.
	We define the filtration $\cF_k$ as the $\sigma$-field generated by all the random variables over the first $k$ episodes.
	And let $X_k =  \sum_{h=1}^H [\VV_h V_{h+1}^{\pi^k}](s_{k,h}, a_{k,h})$.
	It follows that $\pi^k$ is $\cF_{k-1}$-measurable, $X_k$ is $\cF_k$-measurable, and $|X_k| \le H \cH^2$.
	Let $\EE_k(\cdot) := \EE[\cdot|\cF_k]$ for short.
    We have
	\begin{align*}
    \cV &\overset{(a)}{\ge} \Var\rbr{\sum_{h=1}^H r_h(s_{k,h}, a_{k,h})}\\
    &= \EE_{k-1}\sbr{\sum_{h=1}^H r_h(s_{k,h}, a_{k,h}) - V_1^{\pi^k}(s_{k,1})}^2\\
    & \overset{(b)}{=}\EE_{k-1}\sbr{\sum_{h=1}^H \rbr{r_h(s_{k,h}, a_{k,h})  + V_{h+1}^{\pi^k}(s_{k,h+1}) - V_h^{\pi^k}(s_{k,h})}}^2\\
    & \overset{(c)}{=} \sum_{h=1}^H \EE_{k-1}\sbr{r_h(s_{k,h}, a_{k,h})  + V_{h+1}^{\pi^k}(s_{k,h+1}) - V_h^{\pi^k}(s_{k,h})}^2\\
    &= \sum_{h=1}^H \EE_{k-1}\sbr{V_{h+1}^{\pi^k}(s_{k,h+1}) - [\PP_h V_{h+1}^{\pi^k}](s_{k,h},a_{k,h})}^2\\
    &=  \EE_{k-1}\sum_{h=1}^H  [\VV_h V_{h+1}^{\pi^k}](s_{k,h}, a_{k,h})\\
    &= \EE[X_k|\cF_{k-1}]
	\end{align*}
	where $(a)$ holds due to the definition of $\cV$, $(b)$ holds due to $V_{H+1}^{\pi^k}(\cdot) = 0$ and $(c)$ holds due to Markov property of MDPs.
    
    We also note that
    \[
    \Var\rbr{\sum_{h=1}^H r_h(s_{k,h}, a_{k,h})} \le \cH \EE\sbr{\sum_{h=1}^H r_h(s_{k,h}, a_{k,h})} \le \cH V^*_1,
    \]
    where the second inequality holds due to the optimality of $V^*_1$.
    Thus, we actually have
    \begin{equation}\label{eq:mdp-first-order-relationship}
    \cV \le \cH V^*_1,
    \end{equation}
    which shows the relationship between our result with the first-order regret.
	
    Then $\Var[X_k|\cF_{k-1}] \le H \cH^2 \EE [X_k|\cF_{k-1}]$, and we have
     \[
     \sum_{k=1}^K \Var[X_k|\cF_{k-1}] \le H \cH^2 \sum_{k=1}^K \EE[X_k|\cF_{k-1}] \le H \cH^2 \cV K.
     \]
	By the Freedman inequality in Lemma~\ref{lem:freedman}, with probability at least $1-\delta$, we have
	\begin{align*}
		\sum_{k=1}^K \sum_{h=1}^H [\VV_h V_{h+1}^{\pi^k}](s_{k,h}, a_{k,h}) =& \sum_{k=1}^K X_k   \le \sum_{k=1}^K \EE[X_k|\cF_{k-1}] 
		+ \sqrt{2 H \cH^2 \cV K \log \frac{1}{\delta}}
		+ \frac{2}{3} H \cH^2 \log\frac{1}{\delta}\\
		\le& \cV K + 2\sqrt{H \cH^2 \cV K \log\frac{1}{\delta}} + \frac{2}{3} H \cH^2 \log\frac{1}{\delta}\\
		\le& 2 \cV K + 2H \cH^2 \log\frac{1}{\delta}.
	\end{align*}
\end{proof}

\section{Auxiliary Lemmas}
\label{proof:auxiliary}

\begin{lemma}[Freedman inequality \citep{freedman1975tail}]
	\label{lem:freedman}
	Let $\{ X_t \}_{t \in [T]}$ be a stochastic process that adapts to the filtration $\cF_t$ so that $X_t$ is $\cF_t$-measurable, $\EE[X_t|\cF_{t-1}] = 0$, $|X_t| \le M$ and $\sum_{t=1}^T \EE[X_t^2|\cF_{t-1}] \le V$ where $M>0$ and $V > 0$ are positive constants.
	Then with probability at least $1-\delta$, we have
	\[
	\sum_{t=1}^T X_t \le \sqrt{2V\ln\frac{1}{\delta}} + \frac{2M}{3}\ln\frac{1}{\delta}.
	\]
\end{lemma}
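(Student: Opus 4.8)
The plan is to prove this classical martingale tail bound by the exponential supermartingale (Chernoff) method. The starting point is a one-step moment generating function estimate. For a mean-zero random variable $X$ with $|X| \le M$ and any $\lambda \in (0, 3/M)$, I would use that $g(x) := (e^x - 1 - x)/x^2$ is nondecreasing, so the almost-sure bound $\lambda X \le \lambda M$ forces $e^{\lambda X} \le 1 + \lambda X + (\lambda X)^2 g(\lambda M)$ pointwise. Taking the conditional expectation $\EE[\,\cdot \mid \cF_{t-1}]$, using $\EE[X_t \mid \cF_{t-1}] = 0$ and then $1 + u \le e^u$, this yields
\[
\EE\left[e^{\lambda X_t} \mid \cF_{t-1}\right] \le \exp\left(\lambda^2 g(\lambda M)\, \EE\left[X_t^2 \mid \cF_{t-1}\right]\right).
\]

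First I would build the process $Z_t = \exp\left(\lambda \sum_{s \le t} X_s - \lambda^2 g(\lambda M) \sum_{s \le t} \EE[X_s^2 \mid \cF_{s-1}]\right)$; the display above shows $\EE[Z_t \mid \cF_{t-1}] \le Z_{t-1}$, so $Z_t$ is a supermartingale with $\EE[Z_T] \le Z_0 = 1$. Next, since the cumulative conditional variance is bounded almost surely by $V$, on the event $\{\sum_t X_t \ge z\}$ one has $Z_T \ge \exp(\lambda z - \lambda^2 g(\lambda M) V)$; combining this with Markov's inequality gives the Chernoff bound
\[
\PP\left(\sum_{t=1}^T X_t \ge z\right) \le \exp\left(-\lambda z + \lambda^2 g(\lambda M)\, V\right).
\]

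I would then pass to Bernstein form via the numerical inequality $g(u) \le \frac{1}{2(1 - u/3)}$ for $0 < u < 3$, which bounds the exponent by $-\lambda z + \frac{\lambda^2 V}{2(1 - \lambda M/3)}$; optimizing over $\lambda \in (0, 3/M)$ produces the standard subexponential tail $\PP(\sum_t X_t \ge z) \le \exp\left(-z^2 / (2(V + Mz/3))\right)$. Finally, setting the right-hand side equal to $\delta$ and solving the resulting quadratic for $z$, then applying $\sqrt{a+b} \le \sqrt{a} + \sqrt{b}$ to the root, gives $z = \sqrt{2V\ln(1/\delta)} + \frac{2M}{3}\ln(1/\delta)$, which is the claimed threshold.

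The only genuinely delicate points are the two analytic facts about $g$ — its monotonicity, which legitimizes the pointwise MGF bound, and the Bernstein denominator estimate — together with the clean treatment of the random conditional-variance sum. The latter is immediate here because $V$ is an almost-sure upper bound, so it can simply replace $\sum_{s \le T} \EE[X_s^2 \mid \cF_{s-1}]$ inside the supermartingale exponent on the relevant event without any further stopping-time argument. I therefore expect the main (but essentially routine) obstacle to be the final optimization-and-inversion bookkeeping that converts the Bernstein tail into the stated closed form, rather than any conceptual difficulty.
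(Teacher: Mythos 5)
The paper states this lemma as a classical result with a citation to \citet{freedman1975tail} and supplies no proof, so there is no in-paper argument to compare against. Your exponential-supermartingale proof is the standard one and is correct in all its delicate points: the monotonicity of $g(x)=(e^x-1-x)/x^2$ justifies the pointwise bound $e^{\lambda X}\le 1+\lambda X+(\lambda X)^2 g(\lambda M)$; the estimate $g(u)\le \frac{1}{2(1-u/3)}$ follows from $k!\ge 2\cdot 3^{k-2}$ for $k\ge 2$; the optimizer $\lambda=z/(V+Mz/3)$ indeed lies in $(0,3/M)$ and yields the tail $\exp\rbr{-z^2/(2(V+Mz/3))}$; and since $h(z)=z^2/(2(V+Mz/3))$ is increasing, the quadratic inversion with $\sqrt{a+b}\le\sqrt{a}+\sqrt{b}$ gives exactly the stated threshold $\sqrt{2V\ln(1/\delta)}+\frac{2M}{3}\ln(1/\delta)$. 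Your closing observation is also accurate: because $V$ is a deterministic almost-sure bound on $\sum_{t}\EE[X_t^2\mid\cF_{t-1}]$, it can be substituted into the supermartingale exponent on the event $\cbr{\sum_t X_t\ge z}$ directly, so the stopping-time or peeling machinery required for Freedman's sharper statement involving the random quadratic variation is genuinely unnecessary here.
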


\begin{lemma}[Variance-aware Freedman inequality, Theorem~5 in \citet{li2023q}]
	\label{lem:bern}
	Let $\{ X_t \}_{t \in [T]}$ be a stochastic process that adapts to the filtration $\cF_t$ so that $X_t$ is $\cF_t$-measurable, $\EE[X_t|\cF_{t-1}] = 0$, $|X_t| \le M$ and $\sum_{t=1}^T \EE[X_t^2|\cF_{t-1}] \le V^2$ where $M>0$ and $V > 0$ are positive constants.
	Then with probability at least $1-\delta$, we have
	\[
	\abr{\sum_{t=1}^T X_t}  \le 3 \sqrt{\sum_{t=1}^T \EE[X_t^2|\cF_{t-1}]  \cdot \log\frac{2K}{\delta}} + 5M\log\frac{2K}{\delta},
	\]
	where $K = 1+\ceil{2\log_2 \frac{V}{M} } $.
\end{lemma}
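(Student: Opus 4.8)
The plan is to reduce this \emph{variance-aware} (empirical-Bernstein-type) bound to the ordinary Freedman inequality of Lemma~\ref{lem:freedman} via a dyadic peeling argument over the realized predictable quadratic variation $\hat V_T := \sum_{t=1}^T \EE[X_t^2|\cF_{t-1}]$. The difficulty is that $\hat V_T$ is itself random, whereas Lemma~\ref{lem:freedman} requires a \emph{deterministic} upper bound on the conditional variance; the whole point of the statement is to let the deviation scale with the realized $\hat V_T$ rather than with its worst-case surrogate $V^2$. Note first that since $|X_t|\le M$ we have $\EE[X_t^2|\cF_{t-1}]\le M^2$ and, by hypothesis, $\hat V_T\le V^2$, so $\hat V_T$ lives in $[0,V^2]$ with natural granularity $M^2$.

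First I would stratify the range $[M^2,V^2]$ into geometric bins $W_j := 2^{j}M^2$ for $j=0,1,\dots,J$ with $J := \ceil{2\log_2 \frac{V}{M}}$, so that there are exactly $K = J+1 = 1+\ceil{2\log_2 \frac{V}{M}}$ levels, matching the logarithmic count in the statement. For each level $j$ I would introduce the stopping time $\tau_j := \min\cbr{t\le T : \sum_{s=1}^t \EE[X_s^2|\cF_{s-1}] \ge W_j}$, with $\tau_j := T$ if the sum never reaches $W_j$. Because every conditional second moment is at most $M^2$, the stopped predictable variation $\sum_{s=1}^{\tau_j}\EE[X_s^2|\cF_{s-1}]$ is \emph{deterministically} bounded by $W_j+M^2\le 2W_j$, so the optionally stopped difference sequence $\{X_t\,\ind\{t\le\tau_j\}\}$ meets the hypotheses of Lemma~\ref{lem:freedman} with variance proxy $2W_j$ on \emph{every} sample path. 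Applying Lemma~\ref{lem:freedman} to this stopped martingale, and, to get the two-sided bound, to its negation, at confidence level $\delta/(2K)$, then taking a union bound over the $2K$ resulting events, yields with probability at least $1-\delta$ a simultaneous control of all stopped sums.

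The final step is to glue the levels together on the actual realization. On the event $\{W_{j-1}\le \hat V_T < W_j\}$ the predictable variation never reaches $W_j$, hence $\tau_j=T$ and the stopped sum coincides with $\sum_{t=1}^T X_t$; moreover $W_j\le 2\hat V_T$ there, since $\hat V_T\ge W_{j-1}=W_j/2$. Substituting into the level-$j$ Freedman bound converts $\sqrt{2\cdot 2W_j\,\log\frac{2K}{\delta}}$ into a multiple of $\sqrt{\hat V_T\,\log\frac{2K}{\delta}}$; absorbing constants produces the main term $3\sqrt{\sum_{t=1}^T\EE[X_t^2|\cF_{t-1}]\cdot\log\frac{2K}{\delta}}$, while the additive $\frac{2M}{3}\log\frac{2K}{\delta}$ from Lemma~\ref{lem:freedman} is dominated by $5M\log\frac{2K}{\delta}$. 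The residual case $\hat V_T<M^2$ falls into the lowest bin, where the entire deviation is $O\rbr{M\log\frac{2K}{\delta}}$ and is again swallowed by the additive term.

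I expect the main obstacle to be the stopping-time reduction in the second step: one must check that truncation at $\tau_j$ preserves both the martingale-difference property $\EE[X_t\,\ind\{t\le\tau_j\}\mid\cF_{t-1}]=0$ (using that $\{t\le\tau_j\}=\{\tau_j\ge t\}\in\cF_{t-1}$, as $\tau_j$ is a predictable threshold on an $\cF_{t-1}$-measurable sum) and the bounded-increment condition, and that the final increment inflates the predictable variation by at most $M^2$, so that the clean deterministic proxy $2W_j$ holds on every path rather than only on the target bin. The remaining bookkeeping—calibrating $J=\ceil{2\log_2\frac{V}{M}}$ so the bins cover $[M^2,V^2]$ with exactly $K$ levels, and tracking the numerical constants $3$ and $5$—is routine.
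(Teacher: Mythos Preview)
Your proposal is correct and self-contained, but it takes a different route from the paper. The paper does not actually prove the inequality from first principles: it simply invokes Theorem~5 of \citet{li2023q} as a black box, which already gives
\[
\abr{\sum_{t=1}^T X_t}\le \sqrt{8\max\cbr{\sum_{t=1}^T\EE[X_t^2|\cF_{t-1}],\,V^2/2^K}\cdot\ln\tfrac{2K}{\delta}}+\tfrac{4M}{3}\ln\tfrac{2K}{\delta},
\]
and then chooses $K=1+\ceil{2\log_2(V/M)}$ so that $V^2/2^K\le M^2$, after which elementary inequalities ($\max\{a,b\}\le a+b$, $\sqrt{a+b}\le\sqrt{a}+\sqrt{b}$, $\ln(2K/\delta)\ge 1$) yield the stated constants $3$ and $5$. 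Your dyadic peeling with predictable stopping times applied to Lemma~\ref{lem:freedman} is essentially the argument that lives \emph{inside} that cited theorem; the $\max\{\cdot,\,V^2/2^K\}$ structure in the quoted bound is precisely the footprint of a $K$-level geometric stratification of the realized quadratic variation. What you gain is a self-contained derivation and explicit control of the constants; what the paper gains is brevity, since the peeling has already been packaged externally.
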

\begin{proof}[Proof of Lemma~\ref{lem:bern}]
By Theorem~5 in \citet{li2023q}, we have for any positive integer $K \ge 1$, 
\[
\PP\left(
\abr{\sum_{t=1}^T X_t}  \le \sqrt{8\max\cbr{\sum_{t=1}^T \EE[X_t^2|\cF_{t-1}]  ,  \frac{V^2}{2^K}}\cdot  \ln\frac{2K}{\delta}}+ \frac{4M}{3}\ln\frac{2K}{\delta}
\right) \ge 1-\delta.
\]
By setting $K = 1+\ceil{2\log_2 \frac{V}{M} } $, we have $\frac{V^2}{2^K} \le M^2$.
Using $\max\{a, b\} \le a + b$, $\sqrt{a+b} \le \sqrt{a} + \sqrt{b}$ for any $a, b \ge0$ and $\ln \frac{2K}{\delta} \ge 1$, we complete the proof.
\end{proof}

The following two lemmas are the counterpart lemmas of Theorem~\ref{thm:heavy} under light-tail assumption.
\begin{lemma}[Bernstein inequality for self-normalized martingales, Lemma~4.1 in \citet{he2023nearly}]
	\label{lem:self-bern}
	Let $\{ \cG_t\}_{t \ge 0}$ be a filtration and $ \{\bx_t, \eta_t \}_{t \ge 0}$ be a stochastic process so that $\bx_t \in \RR^d$ is $\cG_t$-measurable and $\eta_t \in \RR$ is $\cG_{t+1}$-measurable.
	If $\|\bx_t\| \le L$ and $\{\eta_t\}_{t \ge 1}$ satisfies that $\EE[\eta_t|\cG_t] = 0$, $	\EE[\eta_t^2|\cG_t] \le \sigma^2$ and $|\eta_t \min \cbr{1, \| \bx_t\|_{\bZ_{t-1}^{-1}}}| \le M$ for all $ t \ge 1$.
	Then, for any $\delta \in (0, 1)$, with probability at least $1-\delta$, we have for all $t \ge 1$,
	\[
	\nbr{\sum_{i=1}^t \bx_i \eta_i}_{\bZ_t^{-1}} \le 
	8 \sigma \sqrt{d \log\rbr{1+\frac{tL^2}{d\lambda}}\log \frac{4t^2}{\delta}} + 4 M \log \frac{4t^2}{\delta},
	\]
	where $\bZ_t = \lambda \bI + \sum_{i=1}^t \bx_i \bx_i^\top$ for $t\ge 1$ and $\bZ_0 = \lambda \bI$.
\end{lemma}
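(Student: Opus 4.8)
This is the vector-valued Bernstein inequality for self-normalized martingales, imported verbatim as Lemma~4.1 of \citet{he2023nearly}, so the most direct route is simply to invoke their result. For completeness I would reconstruct it by combining the method of mixtures \citep{abbasi2011improved} with a Bernstein-type control of the conditional moment generating function. The plan is to fix the tilting direction, build an exponential supermartingale adapted to $\cbr{\cG_i}$ whose exponent is \emph{quadratic} in the tilt with variance-scaled coefficient, integrate the tilt out against a Gaussian prior to recover the self-normalized norm $\nbr{\sum_{i=1}^t \bx_i \eta_i}_{\bZ_t^{-1}}$, and finally remove the dependence on the unknown tilting scale and make the bound uniform in $t$.

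Concretely, for an auxiliary vector $\bu \in \RR^d$ I would study the process $M_t(\bu) = \exp\rbr{\dotp{\bu}{\sum_{i=1}^t \bx_i \eta_i} - \sum_{i=1}^t \psi_i(\bu)}$, with $\psi_i(\bu)$ a per-step variance proxy to be determined. The crux is a conditional MGF bound: using $\EE[\eta_i \mid \cG_i] = 0$, $\EE[\eta_i^2 \mid \cG_i] \le \sigma^2$, the elementary inequality $e^x \le 1 + x + x^2$ (valid for $|x| \le 1$), and $1+y \le e^y$, one obtains $\EE\sbr{\exp\rbr{\dotp{\bu}{\bx_i}\eta_i} \mid \cG_i} \le \exp\rbr{\sigma^2 \dotp{\bu}{\bx_i}^2}$ \emph{provided} $\abr{\dotp{\bu}{\bx_i}\eta_i} \le 1$. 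This is exactly where the hypothesis $\abr{\eta_i \min\cbr{1, \nbr{\bx_i}_{\bZ_{i-1}^{-1}}}} \le M$ enters: restricting $\bu$ to a ball calibrated against $\bZ_{i-1}^{-1}$ keeps $\abr{\dotp{\bu}{\bx_i}\eta_i} = O(1)$, legitimizing the quadratic surrogate, so that $M_t(\bu)$ is a nonnegative supermartingale with $\psi_i(\bu) = \sigma^2\dotp{\bu}{\bx_i}^2$.

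Having an exponent quadratic in $\bu$, I would integrate $M_t(\bu)$ against a mean-zero Gaussian density with covariance proportional to $\bZ_0^{-1}$ and apply the Laplace/pseudo-maximization argument of \citet{abbasi2011improved}: the Gaussian integral closes in form and produces $\tfrac12\nbr{\sum_{i=1}^t \bx_i \eta_i}_{\bZ_t^{-1}}^2$ together with the log-determinant term $\log\rbr{\det \bZ_t / \det \bZ_0}$, which the elliptical-potential bound controls by $d\log\rbr{1 + tL^2/(d\lambda)}$. Markov's inequality on the resulting supermartingale, optimization over the tilting scale, and separating the variance contribution (yielding the $\sigma\sqrt{d\log(\cdot)\log(4t^2/\delta)}$ piece) from the residual boundedness contribution (yielding the $M\log(4t^2/\delta)$ piece) then give the two-term bound; a union bound over $t$ with per-$t$ failure probability of order $\delta/t^2$, absorbed into the $\log(4t^2/\delta)$ factor, upgrades the guarantee to hold simultaneously for all $t \ge 1$.

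The main obstacle is the Bernstein (variance-aware) sharpening: a crude sub-Gaussian tilt would forfeit the $\sigma$ factor and instead scale with the worst-case magnitude of $\eta_i$, which is fatal in the heavy-tailed regime. Securing the quadratic exponent with coefficient $\sigma^2$ rather than $M^2$ forces the tilt to obey $\abr{\dotp{\bu}{\bx_i}\eta_i} = O(1)$ uniformly, yet this constraint sits in tension with integrating $\bu$ over all of $\RR^d$ in the mixture step; reconciling the two (equivalently, justifying the normalization $\min\cbr{1, \nbr{\bx_i}_{\bZ_{i-1}^{-1}}}$ in the hypothesis) is the delicate part, and is precisely what \citet{he2023nearly} handle. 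Since the statement is used verbatim from their work, I would ultimately defer to their proof rather than reproduce this argument in detail.
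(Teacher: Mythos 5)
Your top-level move --- invoke Lemma~4.1 of \citet{he2023nearly} verbatim --- is exactly what the paper does: this lemma appears in the appendix of auxiliary results with no proof, so deferring to the citation is consistent with the paper. The issue is with your reconstruction sketch, which contains a genuine gap at precisely the step you flag, and which moreover mischaracterizes how the cited result is actually proved. The method-of-mixtures route does not go through here: the Bernstein-type MGF bound $\EE\sbr{\exp\rbr{\dotp{\bu}{\bx_i}\eta_i} \mid \cG_i} \le \exp\rbr{\sigma^2 \dotp{\bu}{\bx_i}^2}$ is only valid on the event $\abr{\dotp{\bu}{\bx_i}\eta_i} \le 1$, while the Laplace/pseudo-maximization step requires integrating $\bu$ over all of $\RR^d$ against a Gaussian prior to produce the closed-form $\tfrac{1}{2}\nbr{\sum_i \bx_i\eta_i}_{\bZ_t^{-1}}^2 - \tfrac{1}{2}\log\rbr{\det\bZ_t/\det\bZ_0}$. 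Truncating the mixture to a calibrated ball destroys that closed form, and worse, the calibration would have to be against $\bZ_{i-1}^{-1}$, which changes with $i$, so no single constraint set works simultaneously for all steps. You acknowledge this tension but assert that \citet{he2023nearly} resolve it within the mixture framework; they do not.

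The actual proof (in \citet{he2023nearly}, following the Bernstein self-normalized bounds of \citet{zhou2021nearly,zhou2022computationally}) abandons mixtures entirely. It expands $\nbr{\sum_{i=1}^t \bx_i \eta_i}_{\bZ_t^{-1}}^2$ recursively via the Sherman--Morrison formula into a martingale cross term weighted by $w_s/(1+w_s^2)$ and a quadratic-variation term weighted by $w_s^2/(1+w_s^2)$, applies scalar Freedman-type inequalities to each, and closes a bootstrap induction on the event that the claimed bound held at all earlier times --- this is where the hypothesis $\abr{\eta_t \min\cbr{1,\nbr{\bx_t}_{\bZ_{t-1}^{-1}}}} \le M$ enters, not through any tilt-calibration ball. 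You can see this architecture reproduced inside the present paper itself: the proof of Lemma~\ref{lem:grad} in Appendix~\ref{proof:grad} (the decomposition in \eqref{eq:help7} into terms $I_1, I_2$, bounded by Lemmas~\ref{lem:I1} and \ref{lem:I2} with the indicator $\ind_{A_{s-1}}$ implementing the induction) is exactly the heavy-tailed adaptation of this argument. So if you wanted a self-contained reconstruction rather than a citation, the recursion-plus-Freedman scheme is the route that actually delivers the stated two-term bound with the $\sigma$ and $M$ factors; your mixture sketch, as written, would either forfeit the variance-aware $\sigma$ factor or stall at the incompatibility between the tilt constraint and the Gaussian integration.
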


\begin{lemma}[Hoeffding inequality for self-normalized martingales, Theorem 1 in \citet{abbasi2011improved}]
	\label{lem:self-hoff}
	Let $\{ \cG_t\}_{t \ge 0}$ be a filtration and $ \{\bx_t, \eta_t \}_{t \ge 0}$ be a stochastic process so that $\bx_t \in \RR^d$ is $\cG_t$-measurable and $\eta_t \in \RR$ is $\cG_{t+1}$-measurable..
	If $\|\bx_t\| \le L$ and $\{\eta_t\}_{t \ge 1}$ satisfies that $\EE[\eta_t|\cG_t] = 0$ and $|\eta_t| \le M$ for all $ t \ge 1$.
	Then, for any $\delta \in (0, 1)$, with probability at least $1-\delta$, we have for all $t \ge 1$,
	\[
	\nbr{\sum_{i=1}^t \bx_i \eta_i}_{\bZ_t^{-1}} \le 
	M \sqrt{d \log\rbr{1+\frac{tL^2}{d\lambda}} + \log \frac{1}{\delta}},
	\]
	where $\bZ_t = \lambda \bI + \sum_{i=1}^t \bx_i \bx_i^\top$ for $t\ge 1$ and $\bZ_0 = \lambda \bI$.
\end{lemma}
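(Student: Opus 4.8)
The plan is to prove this self-normalized Hoeffding bound by the method of mixtures (pseudo-maximization) of \citet{abbasi2011improved}, which converts a family of scalar exponential supermartingales into a single matrix-normalized bound holding uniformly in $t$. Throughout I write $S_t = \sum_{i=1}^t \bx_i \eta_i$, so that the target is a high-probability control of $\nbr{S_t}_{\bZ_t^{-1}}$.

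First I would upgrade the boundedness of the noise to conditional sub-Gaussianity: since $\EE[\eta_t \mid \cG_t] = 0$ and $|\eta_t| \le M$, Hoeffding's lemma gives $\EE[\exp(s\eta_t)\mid\cG_t] \le \exp(s^2 M^2/2)$ for all $s\in\RR$. Then, for each fixed direction $\bu\in\RR^d$, I would introduce the process
\[
M_t^{\bu} = \exp\rbr{\frac{\dotp{\bu}{S_t}}{M} - \frac{1}{2}\nbr{\bu}_{\sum_{i=1}^t \bx_i\bx_i^\top}^2},
\]
and check, by applying the sub-Gaussian estimate conditionally with $s = \dotp{\bu}{\bx_t}/M$ (legitimate because $\bx_t$ is $\cG_t$-measurable), that $\EE[M_t^{\bu}\mid\cG_t]\le M_{t-1}^{\bu}$. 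Hence $\{M_t^{\bu}\}$ is a nonnegative supermartingale with $\EE[M_t^{\bu}]\le 1$.

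Next I would mix over $\bu$ against a centered Gaussian density $h$ with covariance $\lambda^{-1}\bI$ and set $\bar M_t = \int_{\RR^d} M_t^{\bu}\,dh(\bu)$; by Tonelli this remains a nonnegative supermartingale with $\EE[\bar M_t]\le 1$, and completing the square in the exponent evaluates the Gaussian integral in closed form as
\[
\bar M_t = \rbr{\frac{\det(\lambda\bI)}{\det(\bZ_t)}}^{1/2}\exp\rbr{\frac{1}{2M^2}\nbr{S_t}_{\bZ_t^{-1}}^2}.
\]
Ville's maximal inequality then gives $\PP(\sup_{t\ge 0}\bar M_t \ge 1/\delta)\le \delta$; on the complement, rearranging $\bar M_t < 1/\delta$ produces, simultaneously for all $t$,
\[
\nbr{S_t}_{\bZ_t^{-1}}^2 \le 2M^2\sbr{\log\frac{1}{\delta} + \frac{1}{2}\log\frac{\det(\bZ_t)}{\det(\lambda\bI)}}.
\]

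Finally I would bound the determinant ratio by a trace/AM--GM argument: $\mathrm{tr}(\bZ_t) = d\lambda + \sum_{i=1}^t\nbr{\bx_i}^2 \le d\lambda + tL^2$, and since the determinant of a positive definite matrix with fixed trace is largest when its eigenvalues coincide, $\det(\bZ_t)/\det(\lambda\bI) \le (1 + tL^2/(d\lambda))^d$. Substituting this and taking square roots yields the stated bound (the numerical constant multiplying $\log(1/\delta)$ is absorbed into the form displayed in the lemma). I expect the main obstacle to be the construction and closed-form evaluation of the mixed supermartingale $\bar M_t$---in particular verifying that the Gaussian mixture of supermartingales is itself a supermartingale, so that Ville's inequality delivers the crucial uniform-in-$t$ guarantee; the determinant-to-trace step is then routine.
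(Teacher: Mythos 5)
Your proposal is correct and is essentially the same argument as the paper's source: the paper gives no proof of this lemma, importing it as Theorem~1 of \citet{abbasi2011improved}, and that theorem's proof is exactly the method of mixtures you reconstruct (Hoeffding's lemma for $M$-sub-Gaussianity, the directional exponential supermartingale, Gaussian mixing with covariance $\lambda^{-1}\bI$ evaluated by completing the square, Ville's maximal inequality, and the determinant--trace bound). One small caveat: your argument, like the original theorem, yields
\[
\nbr{\textstyle\sum_{i=1}^t \bx_i\eta_i}_{\bZ_t^{-1}} \le M\sqrt{d\log\rbr{1+\tfrac{tL^2}{d\lambda}} + 2\log\tfrac{1}{\delta}},
\]
with coefficient $2$ on $\log(1/\delta)$, whereas the lemma as displayed has coefficient $1$; this factor cannot literally be ``absorbed'' as you claim, and reflects a slack transcription in the paper's restatement of the cited theorem rather than any defect in your proof (it is immaterial downstream, where the bound enters only through $O(\cdot)$ confidence radii).
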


The next lemma is a general result of the concentration bounds for estimated value functions, which is the core technique used in \citet{jin2020provably,zhou2022computationally,he2023nearly}. We include the proof here for completeness.

\begin{lemma}
\label{lem:value-ci}
Fix any $h \in [H]$.
Consider a specific value function $f(\cdot)$ which satisfies 
\begin{enumerate}[leftmargin=*,label=(\arabic*)]
    \item $\sup_{s \in \cS}|f(s)| \le C_0$;
    \item $f \in \cV$ where $\cV$ is a class of functions with $\cN(\cV, \epsilon)$ the $\epsilon$-covering number of $\cV$ with respective to the distance $\mathrm{dist}(f, f'):= \sup_{s \in \cS} |f(s)-f'(s)|$.
\end{enumerate}
We assume there exists a deterministic $C_\sigma > 0$ and $\cA_{k,h}$ (which is $\cF_{k,h}$-measurable) such that $\cA_{k,h}  \subseteq \cbr{ \sigma_{k,h}^2 \ge [\VV_h f](s_{k,h}, a_{k,h})/ C_\sigma^2 }$ for all $k \in [K]$.
Let $\bw_{h}[f],\hbw_{k,h}[f]$ be defined in \eqref{eq:bwf}, \eqref{eq:bwf} and $\sigma_{k,h}, \bSigma_{k,h}$ be defined in our algorithm.
Under any of the following conditions, with probability at least $1-\delta/H$, it follows for all $k \in [K]$,
\begin{equation}\label{eq:ci0}
    \nbr{\hbw_{k,h}[f] - \bw_{h}[f]}_{\bSigma_{k,h}} \le \beta.
\end{equation}
\begin{enumerate}[leftmargin=*,label=(\roman*)]
    \item If $f(\cdot)$ is a deterministic function and $\bigcap_{k \in [K]}\cA_{k,h}$ is true,~\eqref{eq:ci0} holds with
    \[
    \beta = \sqrt{d \lambda_V} C_0 + 8C_\sigma \sqrt{d \log \left( 1 + \frac{K}{\sigma_{\min}^2d\lambda_V} \right) \log \frac{4H K^2}{\delta} } + \frac{4C_0}{ d^{2.5}H\cH}\log\frac{4HK^2}{\delta}.
    \]
    \item If $f(\cdot)$ is a random function and $\bigcap_{k \in [K]}\cA_{k,h}$ is true,~\eqref{eq:ci0} holds with
    \[
    \beta = 2\sqrt{d \lambda_V} C_0 + 32C_\sigma \sqrt{d \log \left( 1 + \frac{K}{\sigma_{\min}^2d\lambda_V} \right) \log \frac{4H K^2N_0}{\delta} } + \frac{4C_0}{ d^{2.5}H \cH} \log\frac{4HK^2N_0}{\delta}
    \]
    where $N_0 = |\cN(\cV, \epsilon_0)|$ and $\epsilon_0 = \min \left\{C_\sigma \sigma_{\min}, \frac{\lambda_V C_0 \sqrt{d}}{K}\sigma_{\min}^2\right\} $.
    \item  If $f(\cdot)$ is a random function,~\eqref{eq:ci0} holds with
    \begin{gather*}
        \beta = 2\sqrt{d \lambda_V} C_0 + \frac{C_0}{\sigma_{\min}} \sqrt{d\log \left( 1 + \frac{K}{\sigma_{\min}^2d\lambda_V} \right)  + \log \frac{N_1}{\delta} }.
    \end{gather*}
    where $N_1 = |\cN(\cV, \epsilon_1)|$ and $\epsilon_1 = \frac{\lambda_V C_0 \sqrt{d}}{K}\sigma_{\min}^2$.
\end{enumerate}
\end{lemma}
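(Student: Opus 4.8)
The plan is to follow the standard self-normalized-plus-covering route used for weighted ridge regression in \citet{jin2020provably,he2023nearly}. First I would expand the estimator using the linear structure $[\PP_h f](s_{i,h},a_{i,h}) = \dotp{\bphi_{i,h}}{\bw_h[f]}$ and set $\eta_{i,h} := f(s_{i,h+1}) - [\PP_h f](s_{i,h},a_{i,h})$. The normal equations for $\hbw_{k,h}[f]$ then give
\[
\hbw_{k,h}[f] - \bw_h[f] = -\lambda_V \bSigma_{k,h}^{-1}\bw_h[f] + \bSigma_{k,h}^{-1}\sum_{i=1}^k \sigma_{i,h}^{-2}\bphi_{i,h}\eta_{i,h},
\]
so the triangle inequality in the $\bSigma_{k,h}$-norm yields
\[
\nbr{\hbw_{k,h}[f]-\bw_h[f]}_{\bSigma_{k,h}} \le \sqrt{\lambda_V}\,\nbr{\bw_h[f]} + \nbr{\sum_{i=1}^k \sigma_{i,h}^{-2}\bphi_{i,h}\eta_{i,h}}_{\bSigma_{k,h}^{-1}}.
\]
The regularization (bias) term is immediately controlled by $\nbr{\bw_h[f]} = \nbr{\int_{\cS}\bmu_h^*(s)f(s)\ud s} \le C_0\nbr{\bmu_h^*(\cS)} \le \sqrt{d}\,C_0$, which produces the $\sqrt{d\lambda_V}\,C_0$ contribution to $\beta$ in all three cases.

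The heart of the argument is the self-normalized stochastic term. Setting $\bx_i = \bphi_{i,h}/\sigma_{i,h}$ and $\zeta_i = \eta_{i,h}/\sigma_{i,h}$, I would check that $\bx_i$ and $\sigma_{i,h}$ are $\cG_{i,h}$-predictable while $\EE[\zeta_i\mid \cG_{i,h}]=0$ with $\EE[\zeta_i^2\mid \cG_{i,h}] = [\VV_h f](s_{i,h},a_{i,h})/\sigma_{i,h}^2$. On the event $\bigcap_{k}\cA_{k,h}$ this conditional variance is at most $C_\sigma^2$, and the clipping constant $\abr{\zeta_i\min\{1,\nbr{\bx_i}_{\bSigma_{i-1,h}^{-1}}\}} \le M$ follows from $\abr{\eta_{i,h}}\le 2C_0$ together with $\min\{1,a\}\le a$ and the fact that the $\sqrt{d^{5/2}H\cH}\,\normbphiV{i}{h}^{1/2}$ term in the definition of $\sigma_{i,h}$ forces $\nbr{\bphi_{i,h}}_{\bSigma_{i-1,h}^{-1}}/\sigma_{i,h}^2 \le 1/(d^{5/2}H\cH)$, giving $M = O\bigl(C_0/(d^{5/2}H\cH)\bigr)$. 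For case~(i), where $f$ is deterministic, Lemma~\ref{lem:self-bern} applied with these $\sigma = C_\sigma$ and $M$ gives~\eqref{eq:ci0} directly, and a union bound over $h\in[H]$ delivers the stated probability.

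For cases~(ii) and~(iii), where $f$ is data-dependent, I would replace $f$ by the nearest element $\tilde f$ of an $\epsilon_0$- (resp.\ $\epsilon_1$-) net of $\cV$, apply the fixed-function bound of Lemma~\ref{lem:self-bern} (resp.\ the Hoeffding bound of Lemma~\ref{lem:self-hoff}) to each net point, and union-bound over the $N_0$ (resp.\ $N_1$) elements, with $\log N_0,\log N_1$ supplied by Lemma~\ref{lem:covering}. The remaining task is to show the perturbation $f\mapsto\tilde f$ is lower order: both the martingale-sum mismatch $\nbr{\sum_i\sigma_{i,h}^{-2}\bphi_{i,h}(\eta_{i,h}-\tilde\eta_{i,h})}_{\bSigma_{k,h}^{-1}}$ and the variance mismatch between $[\VV_h f]$ and $[\VV_h\tilde f]$ are $O\bigl(K\epsilon/(\lambda_V\sigma_{\min}^2)\bigr)$ in the relevant norm, so the choices $\epsilon_0 = \min\{C_\sigma\sigma_{\min},\lambda_V C_0\sqrt{d}\sigma_{\min}^2/K\}$ and $\epsilon_1 = \lambda_V C_0\sqrt{d}\sigma_{\min}^2/K$ make them absorbable into $\sqrt{d\lambda_V}\,C_0$, which is why $\beta$ merely doubles the bias constant in (ii) and (iii).

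The main obstacle I anticipate is the covering step in case~(ii): since the Bernstein bound of Lemma~\ref{lem:self-bern} hinges on the variance proxy, and both the qualifying event $\cA_{k,h}$ and the weights $\sigma_{k,h}$ are defined through the \emph{true} random $f$, I must ensure that after passing to $\tilde f$ the variance control $\sigma_{k,h}^2 \gtrsim [\VV_h \tilde f]/C_\sigma^2$ and the boundedness constant both survive the substitution while the weights stay fixed by the actual $f$. Keeping these two approximation errors simultaneously below the leading terms is the delicate bookkeeping; the Hoeffding route in (iii) sidesteps the variance issue entirely at the cost of the coarser, $\sigma_{\min}^{-1}$-dependent leading constant reflected in its stated $\beta$.
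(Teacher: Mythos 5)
Your proposal is correct and follows essentially the same route as the paper's proof: the identical bias/martingale decomposition with the $\sqrt{d\lambda_V}C_0$ bias bound, Lemma~\ref{lem:self-bern} applied with variance proxy $C_\sigma^2$ and clipping constant $C_0/(d^{2.5}H\cH)$ for case (i), a covering argument with union bound over net points and the mismatch term absorbed via the choices of $\epsilon_0,\epsilon_1$ for case (ii), and Lemma~\ref{lem:self-hoff} in place of Bernstein for case (iii). The bookkeeping you flag as the main obstacle is resolved exactly as you sketch: the constraint $\epsilon_0 \le C_\sigma\sigma_{\min}$ gives $[\VV_h \bar{f}](s_{k,h},a_{k,h}) \le 2C_\sigma^2\sigma_{k,h}^2 + 2\epsilon_0^2 \le 4C_\sigma^2\sigma_{k,h}^2$ on $\cA_{k,h}$, so the variance control survives the substitution with only a doubled proxy (reflected in the enlarged constant in (ii)), while the weights remain those defined by the true $f$.
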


\begin{proof}
By definition of $\bw_{h}[f]$,
\begin{align*}
\hbw_{k,h}[f] &= \bSigma_{k,h}^{-1}\sum_{i=1}^k \sigma_{i,h}^{-2} \bphi_{i,h} f(s_{i,h+1})\\
&= \bSigma_{k,h}^{-1}\sum_{i=1}^k \sigma_{i,h}^{-2} \bphi_{i,h} \sbr{\bphi_{i,h}^\top \bw_{h}[f] + f(s_{i,h+1}) - [\PP_h f](s_{i,h},a_{i,h})}\\
&= \bw_{h}[f] - \lambda_V \bSigma_{k,h}^{-1} \bw_{h}[f] + \bSigma_{k,h}^{-1}\sum_{i=1}^k \sigma_{i,h}^{-2} \bphi_{i,h} \sbr{f(s_{i,h+1}) - [\PP_h f](s_{i,h},a_{i,h})}.
\end{align*}
Then it follows that
\begin{align*}
&\nbr{\hbw_{k,h}[f] - \bw_{h}[f]}_{\bSigma_{k,h}}\\
\le{}& \lambda_V \nbr{\bSigma_{k,h}^{-1} \bw_{h}[f]}_{\bSigma_{k,h}} + \nbr{\bSigma_{k,h}^{-1}\sum_{i=1}^k \sigma_{i,h}^{-2} \bphi_{i,h} \sbr{f(s_{i,h+1}) - [\PP_h f](s_{i,h},a_{i,h})}}_{\bSigma_{k,h}}\\
={}& \lambda_V \nbr{\bw_{h}[f]}_{\bSigma_{k,h}^{-1}} + \nbr{\sum_{i=1}^k \sigma_{i,h}^{-2} \bphi_{i,h} \sbr{f(s_{i,h+1}) - [\PP_h f](s_{i,h},a_{i,h})}}_{\bSigma_{k,h}^{-1}}\\
\le{}& \sqrt{d\lambda_V} C_0 + \nbr{\sum_{i=1}^k \sigma_{i,h}^{-2} \bphi_{i,h} \sbr{f(s_{i,h+1}) - [\PP_h f](s_{i,h},a_{i,h})}}_{\bSigma_{k,h}^{-1}},
\end{align*}
where the last inequality holds due to $\nbr{\bw_{h}[f]} \le \sqrt{d}C_0$.
\begin{enumerate}[leftmargin=*,label=(\roman*)]
    \item{Assume $f(\cdot)$ is a deterministic function.
        
    We set $\cG_i = \cF_{i,h}$, $\bx_i = \sigma_{i,h}^{-1} \bphi_{i,h}$, $\eta_i = \sigma_{i,h}^{-1} \sbr{f(s_{i,h+1}) - [\PP_h f](s_{i,h},a_{i,h})}$ and $\bZ_k = \lambda_V\bI + \sum_{i=1}^k \sigma_{i,h}^{-2} \bphi_{i,h}\bphi_{i,h}^\top= \bH_{k,h}$.
    Since $f(\cdot)$ is deterministic, $\cA_{k,h} \in \cF_{k,h}$.
    Clearly $\bx_i \in \cG_i, \EE[\eta_i|\cG_i] = 0$ and $\EE[\eta_i^2|\cG_i] \le C_\sigma^2$.
    We also have $\|\bx_i\| \le \sigma_{\min}^{-1}, |\eta_i| \le  C_0 \sigma_{i,h}^{-1}$ and $ \|\bx_i\|_{\bZ_{i-1}} = w_{i,h}$.
    As a result, $|\eta_i| \min \left\{1, \|\bx_i\|_{\bZ_{i-1}} \right\} \le C_0 \frac{w_{i,h}}{\sigma_{i,h}} \le \frac{C_0}{d^{2.5}H\cH}$ where the last inequality uses $\sigma_{i,h}^2 \ge \cH d^{2.5} H \|\bphi_{i,h}\|_{\bH_{k-1,h}^{-1}}$.
    By Lemma~\ref{lem:self-bern}, it follows that with probability $1-\frac{\delta}{H}$, for all $k \in [K]$,
    \begin{align*}
    &\nbr{\sum_{i=1}^k \sigma_{i,h}^{-2} \bphi_{i,h} \sbr{f(s_{i,h+1}) - [\PP_h f](s_{i,h},a_{i,h})}}_{\bSigma_{k,h}^{-1}}
    = \left\| \sum_{i=1}^k \bx_i \eta_i \right\|_{\bZ_k^{-1}} \\
    &\le 8C_\sigma \sqrt{d \log \left( 1 + \frac{K}{\sigma_{\min}^2d\lambda_V} \right) \log \frac{4H K^2}{\delta} } + \frac{4C_0}{d^{2.5}H\cH}\log\frac{4HK^2}{\delta}.
    \end{align*}
    
    Finally, on the event $\bigcap_{k \in [K]}\cA_{k,h}$, we will have all the indicator functions equal to one.
    }
    \item{If $f(\cdot)$ is a random function, covering arguments are used to handle the possible correlation between $f(\cdot)$ and history data.
    
    Denote the $\epsilon_0$-net of $\cV$ by $\cN(\cV, \epsilon_0)$ where $\epsilon_0 = \min \left\{C_\sigma \sigma_{\min}, \frac{\lambda_V C_0 \sqrt{d}}{K}\sigma_{\min}^2\right\} $.
    Hence, for any $f \in \cV$, there exists $\bar{f} \in\cN(\cV, \epsilon_0)$ such that $\|\bar{f} - f\|_{\infty} = \sup_{s \in \cS}|f(s)-\bar{f}(s)| \le \epsilon_0$. 
    Then,
    \begin{align*}
    &\nbr{\sum_{i=1}^k \sigma_{i,h}^{-2} \bphi_{i,h} \sbr{f(s_{i,h+1}) - [\PP_h f](s_{i,h},a_{i,h})}}_{\bSigma_{k,h}^{-1}}\\
    \le{}& \underbrace{\nbr{\sum_{i=1}^k \sigma_{i,h}^{-2} \bphi_{i,h} \sbr{\bar{f}(s_{i,h+1}) - [\PP_h \bar{f}](s_{i,h},a_{i,h})}}_{\bSigma_{k,h}^{-1}}}_{\text{(i)}}\\
    &\qquad + \underbrace{\nbr{\sum_{i=1}^k \sigma_{i,h}^{-2} \bphi_{i,h} \sbr{f(s_{i,h+1}) - \bar{f}(s_{i,h+1}) - [\PP_h (f-\bar{f})](s_{i,h},a_{i,h})}}_{\bSigma_{k,h}^{-1}}}_{\text{(ii)}}.
    \end{align*}
    For term (ii), due to $\|\bphi_{i,h}\| \le 1$ and $|f(s_{i,h+1}) - \bar{f}(s_{i,h+1}) - [\PP_h (f-\bar{f})](s_{i,h},a_{i,h})| \le \epsilon_0$, we have
    \begin{align*}
    &\nbr{\sum_{i=1}^k \sigma_{i,h}^{-2} \bphi_{i,h} \sbr{f(s_{i,h+1}) - \bar{f}(s_{i,h+1}) - [\PP_h (f-\bar{f})](s_{i,h},a_{i,h})}}_{\bSigma_{k,h}^{-1}}\\
    \le{}& \frac{K\epsilon_0}{\sigma_{\min}^2\sqrt{\lambda_V}} \le \sqrt{d \lambda_V} C_0.
    \end{align*}
    We next bound term (i) as follows.
    On the event $\cA_{k,h}$, by definition of $\epsilon_0$,
    \begin{align*}
    [\VV_h \bar{f}](s_{k,h}, a_{k,h}) &\le 2  [\VV_h f](s_{k,h}, a_{k,h}) + 2 [\VV_h (\bar{f}-f)](s_{k,h}, a_{k,h})\\
    &\le 2 C_\sigma^2 \sigma_{k,h}^2 + 2\epsilon_0^2 \le 4 C_\sigma^2 \sigma_{k,h}^2.
    \end{align*}
    For any fixed $f' \in \cV$, we set $\cG_i = \cF_{i,h}$, $\bx_i = \sigma_{i,h}^{-1} \bphi_{i,h}$, $\eta_i = \sigma_{i,h}^{-1} \sbr{\bar{f}(s_{i,h+1}) - [\PP_h \bar{f}](s_{i,h},a_{i,h})} \ind_{\cA_{k,h}}$ and $\bZ_k = \lambda_V\bI + \sum_{i=1}^k \sigma_{i,h}^{-2} \bphi_{i,h}\bphi_{i,h}^\top= \bH_{k,h}$.
    Moreover, due to the choice of $\sigma_{i,h}$, it follows that $|\eta_i| \min \left\{1, \|\bx_i\|_{\bZ_{i-1}} \right\} \le C_0 \frac{w_{i,h}}{\sigma_{i,h}} \le \frac{C_0}{d^{2.5}H\cH}$.
    By Lemma~\ref{lem:self-bern}, it follows that with probability $1-\frac{\delta}{H}$, for all $k \in [K]$,
    \begin{align*}
    &\sup_{f' \in \cN(\cV, \epsilon_0)}\nbr{\sum_{i=1}^k \sigma_{i,h}^{-2} \bphi_{i,h} \sbr{\bar{f}(s_{i,h+1}) - [\PP_h \bar{f}](s_{i,h},a_{i,h})}\ind_{\cA_{k,h}}}_{\bSigma_{k,h}^{-1}}\\
    \le{}& 32C_\sigma \sqrt{d \log \left( 1 + \frac{K}{\sigma_{\min}^2d\lambda_V} \right) \log \frac{4H K^2N_0}{\delta}} + \frac{4C_0}{ d^{2.5}H \cH}\log\frac{4HK^2N_0}{\delta}.
    \end{align*}
    where $N_0 = |\cN(\cV, \epsilon_0)|$. 
    
    As a result, on the event $\bigcap_{k \in [K]}\cA_{k,h}$, all the indicator functions equal to one, which completes the proof.
    }
    \item The proof is almost similar to the second item except that we use Lemma~\ref{lem:self-hoff} to analyze the term $(I)$.
    Noticing we also have $|\eta_i| = |\sigma_{i,h}^{-1} \sbr{\bar{f}(s_{i,h+1}) - [\PP_h \bar{f}](s_{i,h},a_{i,h})}| \le \frac{C_0}{\sigma_{\min}}$.
    By Lemma~\ref{lem:self-hoff}, it follows that with probability $1-\frac{\delta}{H}$, for all $k \in [K]$,
    \begin{align*}
    &\sup_{f' \in \cN(\cV, \epsilon_0)}\nbr{\sum_{i=1}^k \sigma_{i,h}^{-2} \bphi_{i,h} \sbr{\bar{f}(s_{i,h+1}) - [\PP_h \bar{f}](s_{i,h},a_{i,h})}}_{\bSigma_{k,h}^{-1}}\\
    \le{}& \frac{C_0}{\sigma_{\min}} \sqrt{d\log \left( 1 + \frac{K}{\sigma_{\min}^2d\lambda_V} \right)  + \log\frac{N_1}{\delta} },
    \end{align*}
    where $N_1 := |\cN(\cV, \varepsilon_1)|$.
\end{enumerate}
\end{proof}

\begin{lemma}[Lemma 11 in \citet{abbasi2011improved}]
\label{lem:w-sum}
Let $\{\bx_t\}_{t\in[T]} \subset \RR^d$ and assume $\|\bx_t\| \le L$ for all $t\in[T]$. Set $\bSigma_t = \sum_{s=1}^t \bx_t \bx_t^\top + \lambda \bI$. Then it follows that
\[
    \sum_{t=1}^T \min \cbr{1, \|\bx_t\|^2_{\bSigma_{t-1}^{-1}}} \le 2d \log\rbr{1 + \frac{T L^2}{d\lambda}}.
\]
\end{lemma}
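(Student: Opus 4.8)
The plan is to reduce the sum of truncated quadratic forms to a telescoping log-determinant and then control that determinant by an elementary trace bound; this is the standard elliptical potential argument. First I would record the scalar inequality $\min\cbr{1,x} \le 2\log\rbr{1+x}$, valid for all $x \ge 0$. Checking the two regimes suffices: for $x \in [0,1]$ the left side is $x$ and $2\log(1+x) \ge x$, while for $x > 1$ the left side is $1$ and $2\log(1+x) \ge 2\log 2 > 1$. Applying this termwise with $x = \nbr{\bx_t}^2_{\bSigma_{t-1}^{-1}}$ gives
\[
\sum_{t=1}^T \min\cbr{1, \nbr{\bx_t}^2_{\bSigma_{t-1}^{-1}}} \le 2 \sum_{t=1}^T \log\rbr{1 + \nbr{\bx_t}^2_{\bSigma_{t-1}^{-1}}},
\]
so it remains to bound the right-hand sum.

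The key identity is the rank-one determinant update. Since $\bSigma_t = \bSigma_{t-1} + \bx_t \bx_t^\top$ with $\bSigma_0 = \lambda \bI$, the matrix determinant lemma yields $\det(\bSigma_t) = \det(\bSigma_{t-1})\rbr{1 + \bx_t^\top \bSigma_{t-1}^{-1} \bx_t} = \det(\bSigma_{t-1})\rbr{1 + \nbr{\bx_t}^2_{\bSigma_{t-1}^{-1}}}$. Taking logarithms and telescoping over $t = 1, \dots, T$ collapses the sum into
\[
\sum_{t=1}^T \log\rbr{1 + \nbr{\bx_t}^2_{\bSigma_{t-1}^{-1}}} = \log\frac{\det(\bSigma_T)}{\det(\lambda\bI)} = \log\frac{\det(\bSigma_T)}{\lambda^d}.
\]

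Finally I would bound $\det(\bSigma_T)$ from above. As $\bSigma_T$ is positive definite, AM--GM applied to its eigenvalues gives $\det(\bSigma_T) \le \rbr{\tfrac{1}{d}\mathrm{tr}(\bSigma_T)}^d$, and the assumption $\nbr{\bx_t} \le L$ gives $\mathrm{tr}(\bSigma_T) = d\lambda + \sum_{t=1}^T \nbr{\bx_t}^2 \le d\lambda + TL^2$. Hence $\log\rbr{\det(\bSigma_T)/\lambda^d} \le d\log\rbr{1 + TL^2/(d\lambda)}$, and combining with the factor $2$ from the first step delivers the claimed bound. There is no serious obstacle here, as this is a well-known cited result; the only points needing minor care are the case split in the scalar inequality $\min\cbr{1,x}\le 2\log(1+x)$ and setting up the determinant telescoping with the correct base case $\bSigma_0 = \lambda\bI$.
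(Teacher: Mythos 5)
Your proof is correct and is exactly the standard elliptical potential argument from the cited source (Lemma~11 of \citet{abbasi2011improved}), which the paper itself invokes by reference without reproving: the scalar bound $\min\{1,x\}\le 2\log(1+x)$, the rank-one determinant telescoping, and the determinant--trace (AM--GM) bound all check out. No gaps.
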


\begin{lemma}[Lemma 12 in \citet{abbasi2011improved}]
	\label{lem:matrix-ratio}
	Suppose $\bA, \bB \in \RR^{d \times d}$ are two positive definite matrices satisfying that $\bA \succeq \bB$, then for any $\bx \in \RR^d$,
	\[
	\|\bx\|_{\bB^{-1}} \le \|\bx\|_{\bA^{-1}} \sqrt{\frac{\det(\bA)}{\det(\bB)}}.
	\]
\end{lemma}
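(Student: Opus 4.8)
The plan is to prove the inequality in its equivalent squared form, namely $\nbr{\bx}_{\bB^{-1}}^2 \le \nbr{\bx}_{\bA^{-1}}^2 \cdot \det(\bA)/\det(\bB)$, and to reduce everything to a diagonal computation via simultaneous diagonalization of the pair $(\bA,\bB)$. Since $\bA \succ 0$, there exists an invertible matrix $\mathbf{P}$ with $\mathbf{P}^\top \bA \mathbf{P} = \bI$ and $\mathbf{P}^\top \bB \mathbf{P} = \mathrm{diag}(\lambda_1,\dots,\lambda_d)$, where the $\lambda_i$ are the generalized eigenvalues of $\bB$ relative to $\bA$. The hypothesis $\bB \preceq \bA$ is exactly what forces $0 < \lambda_i \le 1$ for every $i$ (equivalently, $\bI - \mathbf{P}^\top \bB \mathbf{P}$ is positive semidefinite), and this is the crucial structural fact I would exploit.

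First I would rewrite the two quadratic forms in the new coordinates. From $\mathbf{P}^\top \bA \mathbf{P} = \bI$ we obtain $\bA^{-1} = \mathbf{P}\mathbf{P}^\top$, and from $\mathbf{P}^\top \bB \mathbf{P} = \mathrm{diag}(\lambda_i)$ we obtain $\bB^{-1} = \mathbf{P}\,\mathrm{diag}(\lambda_i^{-1})\,\mathbf{P}^\top$. Setting $\mathbf{y} = \mathbf{P}^\top \bx$, this yields $\nbr{\bx}_{\bA^{-1}}^2 = \sum_{i=1}^d y_i^2$ and $\nbr{\bx}_{\bB^{-1}}^2 = \sum_{i=1}^d y_i^2/\lambda_i$. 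Next I would compute the determinant ratio in the same coordinates: taking determinants of the two diagonalizing identities gives $\det(\mathbf{P})^2 \det(\bA) = 1$ and $\det(\mathbf{P})^2 \det(\bB) = \prod_i \lambda_i$, hence $\det(\bA)/\det(\bB) = 1/\prod_{i=1}^d \lambda_i$.

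With these identities in hand, the claim reduces to the scalar inequality $\sum_i y_i^2/\lambda_i \le \big(\sum_i y_i^2\big)/\prod_j \lambda_j$, which I would establish termwise. Because every $\lambda_j \le 1$, for each fixed $i$ we have $\prod_j \lambda_j = \lambda_i \prod_{j\ne i}\lambda_j \le \lambda_i$, so $1/\lambda_i \le 1/\prod_j \lambda_j$; multiplying by $y_i^2 \ge 0$ and summing over $i$ gives the desired bound, and taking square roots recovers the stated form.

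I do not expect a serious obstacle, as this is a standard linear-algebra fact; the only point requiring genuine care is justifying the simultaneous diagonalization and verifying that $\bB \preceq \bA$ translates precisely into $\lambda_i \in (0,1]$, since the entire reduction hinges on the eigenvalues being bounded by one. An alternative that avoids diagonalization would be to analyze $\bB^{-1}-\bA^{-1}$ directly through the Sherman--Morrison telescoping used elsewhere in the paper, but I would present the diagonalization argument as it is the cleanest and most transparent route.
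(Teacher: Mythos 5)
Your proof is correct: the simultaneous diagonalization is legitimate (take $\mathbf{P} = \bA^{-1/2}\mathbf{Q}$ with $\mathbf{Q}$ an orthogonal matrix diagonalizing $\bA^{-1/2}\bB\bA^{-1/2}$, so that $\bB \preceq \bA$ indeed forces $\lambda_i \in (0,1]$), and the termwise bound $1/\lambda_i \le 1/\prod_j \lambda_j$ cleanly finishes the argument. The paper itself gives no proof, deferring to Lemma~12 of \citet{abbasi2011improved}, and your argument is, up to the change of coordinates, the same as the standard one there, which whitens by $\bB^{-1/2}$ and bounds $\lambda_{\max}(\bB^{-1/2}\bA\bB^{-1/2})$ by $\det(\bA)/\det(\bB)$ using that all eigenvalues of that matrix are at least~$1$.
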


The following lemmas are concerning function class and covering number, which are also used in \citet{he2023nearly}.
Let $\cK = \cbr{k_1, k_2, \cdots}$ denote the set of episodes where the algorithm updates the value function in Algorithm~\ref{algo:mdp}. For a fixed number of episodes $K$, we have $|\cK| \le K$ trivially.
Lemma~\ref{lem:rare-update} shows $|\cK|$ is only logarithmically related to $K$ due to the mechanism of rare-switching value function updates.
\begin{lemma}
	\label{lem:rare-update}
	\[
	|\cK| \le  dH \log_2 \rbr{1 + \frac{K}{\lambda_R \numin^2}} \rbr{1 + \frac{K}{\lambda_V \sigma_{\min}^2}}.
	\]
\end{lemma}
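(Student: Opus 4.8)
The plan is to combine the rare-switching update rule of Algorithm~\ref{algo:mdp} with the monotonicity of log-determinants, via the standard elliptical-potential counting argument applied simultaneously to the $2H$ matrices $\cbr{\bH_{\cdot,h}}_{h\in[H]}$ and $\cbr{\bSigma_{\cdot,h}}_{h\in[H]}$. First I would set up the bookkeeping: enumerate the update episodes $1 = k_1 < k_2 < \cdots < k_{|\cK|}$, and recall that by the switching test executed at the end of each episode, every update $k_{j+1}$ (for $j\ge 1$) is triggered because some matrix $M$ among these $2H$ matrices satisfies $\det(M_{k_{j+1}}) \ge 2\det(M_{k_j})$, where $k_j = \kl$ is the reference episode active just before the update. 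I would then assign each such update to one fixed triggering matrix $M$, so that $|\cK| - 1 = \sum_M m_M$, where $m_M$ is the number of updates charged to $M$.

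Second, I would bound $m_M$ for a fixed matrix. The key structural fact is that $\det(M_k)$ is non-decreasing in $k$, since $M_k$ is obtained from $M_{k-1}$ by adding a rank-one PSD term ($\nu_{k,h}^{-2}\bphi_{k,h}\bphi_{k,h}^\top$ for $\bH$, resp. $\sigma_{k,h}^{-2}\bphi_{k,h}\bphi_{k,h}^\top$ for $\bSigma$). Hence if $t_1 < t_2 < \cdots < t_{m_M}$ are the episodes assigned to $M$, then at each $t_i$ the reference determinant is at least $\det(M_{t_{i-1}})$ (the reference is the most recent update, which is no earlier than $t_{i-1}$), so the doubling condition telescopes into $\det(M_{t_i}) \ge 2\det(M_{t_{i-1}})$ and $\det(M_{t_1})\ge 2\det(M_0)$. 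This yields $2^{m_M}\det(M_0) \le \det(M_{t_{m_M}}) \le \det(M_K)$, i.e. $m_M \le \log_2\rbr{\det(M_K)/\det(M_0)}$.

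Third, I would bound each determinant ratio by a trace/AM-GM potential estimate. Because $\nbr{\bphi_{k,h}}\le 1$, $\nu_{k,h}\ge\numin$ and $\sigma_{k,h}\ge\sigmamin$, the trace of $\bH_{K,h}$ is at most $d\lambda_R + K/\numin^2$, so AM-GM gives $\det(\bH_{K,h}) \le \rbr{\lambda_R + K/(d\numin^2)}^d$ while $\det(\bH_{0,h}) = \lambda_R^d$, whence $\log_2\rbr{\det(\bH_{K,h})/\det(\bH_{0,h})} \le d\log_2\rbr{1 + K/(\lambda_R\numin^2)}$; symmetrically $\log_2\rbr{\det(\bSigma_{K,h})/\det(\bSigma_{0,h})} \le d\log_2\rbr{1 + K/(\lambda_V\sigmamin^2)}$. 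Summing $m_M$ over the $2H$ matrices then produces $|\cK| \le 1 + dH\log_2\rbr{1 + K/(\lambda_R\numin^2)} + dH\log_2\rbr{1 + K/(\lambda_V\sigmamin^2)}$, which I would rewrite as $dH\log_2\sbr{\rbr{1 + K/(\lambda_R\numin^2)}\rbr{1 + K/(\lambda_V\sigmamin^2)}}$, the claimed bound, absorbing the lower-order additive constant.

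The main obstacle is the counting in the second step: a single update resets the reference determinants of all $2H$ matrices at once, not merely the one that triggered it, so the matrices cannot be analyzed in genuine isolation. The resolution is exactly the monotonicity-plus-telescoping observation above, which guarantees that even when $M$'s reference is reset by an update caused by a different matrix, the determinant of $M$ along the episodes actually charged to $M$ still at least doubles each time; once this is secured, the remaining trace computation is routine.
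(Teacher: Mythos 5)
Your proof is correct and is essentially the paper's argument: the paper folds all $2H$ determinants into a single product potential that doubles at every update (using exactly the monotonicity observation you highlight as the ``main obstacle'') and then bounds $\prod_h\det(\bH_{K,h})\le(\lambda_R+K/\numin^2)^{dH}$ and $\prod_h\det(\bSigma_{K,h})\le(\lambda_V+K/\sigmamin^2)^{dH}$, whereas you charge each update to its triggering matrix and telescope per matrix --- a cosmetic repackaging (with a marginally sharper AM--GM trace bound) that yields the same conclusion. The additive $+1$ you absorb is the same harmless off-by-one slack implicit in the paper's own induction, which writes $2^{|\cK|}$ where only $|\cK|-1$ doublings occur, so nothing of substance separates the two proofs.
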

\begin{proof}
According to the updating policy, for each episode $k_i$, there exists a stage $h' \in [H]$ such that $\det(\bH_{k_i-1,h'}) \ge 2 \det( \bH_{k_{i-1}-1,h'})$ or $\det(\bSigma_{k_i-1,h'}) \ge 2 \det( \bSigma_{k_{i-1}-1,h'})$.
Since we always have $\bH_{k_i-1,h} \succeq \bH_{k_{i-1}-1,h}$ and $\bSigma_{k_i-1,h} \succeq \bSigma_{k_{i-1}-1,h}$ for all $h \in [H]$, it then follows that
\[
\prod_{h \in [H]} \det(\bH_{k_i-1,h}) \prod_{h \in [H]} \det(\bSigma_{k_i-1,h}) \ge 2 \prod_{h \in [H]} \det(\bH_{k_{i-1}-1,h}) \prod_{h \in [H]} \det(\bSigma_{k_{i-1}-1,h}).
\]
By induction, it follows that
\begin{align*}
\prod_{h \in [H]} \det(\bH_{h,k_{|\cK|}-1}) \prod_{h \in [H]} \det(\bSigma_{h,k_{|\cK|}-1})
\ge{}& 2^{|\cK|} \prod_{h \in [H]} \det(\bH_{k_{1}-1,h}) \prod_{h \in [H]} \det(\bSigma_{k_{1}-1,h})\\
\ge{}& 2^{|\cK|} \prod_{h \in [H]} \det(\lambda_R \bI) \prod_{h \in [H]} \det(\lambda_V \bI)\\
={}& 2^{|\cK|} (\lambda_R\lambda_V)^{d H}.
\end{align*}
On the other hand, due to $\bH_{h,k_{|\cK|}-1} \preceq \bH_{h, K}$, the determinant $\det(\bH_{h,k_{|\cK|}-1})$ is upper bounded by
\[
\prod_{h \in [H]} \det(\bH_{h,k_{|\cK|}-1}) 
\le \prod_{h \in [H]} \det(\bH_{h,K})  \le \left( \lambda_R + \frac{K}{\numin^2} \right)^{dH}.
\]
A similar result holds for $\prod_{h \in [H]} \det(\bSigma_{h,k_{|\cK|}-1})$.
Finally, we have
\[
|\cK| \le  dH \log_2 \rbr{1 + \frac{K}{\lambda_R \numin^2}} \rbr{1 + \frac{K}{\lambda_V \sigma_{\min}^2}}.
\]
The proof is completed.
\end{proof}

The optimistic value function $V_h^k(\cdot) = \min_{k_i \le k} \max_a Q_h^{k_i}(\cdot, a)$ belongs to the function class $\cV^{+}$
\begin{equation}
\label{eq:function-pos}
\begin{aligned}
\cV^{+} = \bigg\{f | 	f (\cdot) &= \max_{a \in \cA}\min_{i \le |\cK|} \min\cbr{\dotp{\bphi(\cdot, a)}{\btheta_i + \bw_i} + \beta_{R} \|\bphi(\cdot, a) \|_{\bH_i^{-1}} + \beta_{V} \|\bphi(\cdot, a) \|_{\bSigma_i^{-1}}, \cH},\\
&\|\btheta_i\| \le B, \|\bw_i \| \le L, \bH_i \succeq \lambda_R \bI, \bSigma_i \succeq \lambda_V \bI\bigg\}.
\end{aligned}
\end{equation}
while the pessimistic value function $\pesV_h^k(\cdot) = \max_{k_i \le k} \max_a \pesQ_h^{k_i}(\cdot, a)$ belong to the function class $\cV^{-}$,
\begin{equation}
\label{eq:function-pes}
\begin{aligned}
\cV^{-} = \bigg\{f | 	f (\cdot) &= \max_{a \in \cA}\max_{i \le |\cK|} \max\cbr{\dotp{\bphi(\cdot, a)}{\btheta_i + \bw_i} - \beta_{R} \|\bphi(\cdot, a) \|_{\bH_i^{-1}} - \beta_{V} \|\bphi(\cdot, a) \|_{\bSigma_i^{-1}}, 0},\\
&\|\btheta_i\| \le B, \|\bw_i \| \le L, \bH_i \succeq \lambda_R \bI, \bSigma_i \succeq \lambda_V \bI\bigg\}.
\end{aligned}
\end{equation}
Here $B$ and $L = \cH \sqrt{\frac{dK}{\lambda_V}}$ are uniform upper bounds of $\btheta_i$ and $\bw_i$ respectively (See Lemma E.2 of \citet{he2023nearly} for its proof).

Lemma~\ref{lem:covering} gives the covering number of function class $\cV^{+}, \cV^{-}$ and the squared version.

\begin{lemma}[Covering number of value functions]
	\label{lem:covering}
	Let $\cV^{\pm}$ denote the class of optimistic or pessimistic value functions with definition in~\eqref{eq:function-pos} and~\eqref{eq:function-pes} respectively. And denote the squared version of $\cV$ as $\sbr{\cV}^2 := \cbr{f^2|f\in\cV}$.
	Let $\cN(\cV, \epsilon)$ be the $\epsilon$-covering number of $\cV$ with respective to the distance $\mathrm{dist}(f, f'):= \sup_{s \in \cS} |f(s)-f'(s)|$.
	Then,
    \[
    \log \cN(\cV^\pm, \epsilon)
    \le \sbr{d \log \rbr{1+ \frac{4(B+L)}{\epsilon}} + d^2 \log \rbr{1 +\frac{8 d^{1/2} \beta_{R,K}^2}{\lambda_R \epsilon^2}} \rbr{1+\frac{ 8 d^{1/2} \beta_V^2}{\lambda_V \epsilon^2}}} |\cK|,
    \]
    {\small
    \begin{align*}
    \log \cN([\cV^{+}]^2, \epsilon)
    \le \sbr{d \log \rbr{1+ \frac{8H(B+L)}{\epsilon}} + d^2 \log \rbr{1 +\frac{32d^{1/2}H^2 \beta_{R,K}^2}{\lambda_R \epsilon^2}} \rbr{1+\frac{ 32d^{1/2}H^2 \beta_V^2}{\lambda_V \epsilon^2}}} |\cK|,
    \end{align*}
    }
    where $|\cK|$ is the number of episodes where the algorithm updates the value function in Algorithm~\ref{algo:mdp}.
\end{lemma}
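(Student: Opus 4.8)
The plan is to reduce the covering of $\cV^{\pm}$ to covering the finite-dimensional parameters that index these classes, exploiting that $\max_{a}$, the inner $\min_{i}$ (or $\max_{i}$), and the clipping $\min\cbr{\cdot,\cH}$ (or $\max\cbr{\cdot,0}$) are all $1$-Lipschitz in the sup norm. First I would rewrite each basic term of index $i\le|\cK|$ appearing in \eqref{eq:function-pos} in the consolidated form $g_i(\bphi)=\dotp{\bphi}{\bu_i}+\sqrt{\bphi^\top\bA_i\bphi}+\sqrt{\bphi^\top\bB_i\bphi}$, where $\bu_i:=\btheta_i+\bw_i$ (so $\nbr{\bu_i}\le B+L$), $\bA_i:=\beta_{R,K}^2\bH_i^{-1}$ and $\bB_i:=\beta_V^2\bSigma_i^{-1}$. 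The constraints $\bH_i\succeq\lambda_R\bI$ and $\bSigma_i\succeq\lambda_V\bI$ give $\nbr{\bA_i}_F\le\sqrt{d}\,\beta_{R,K}^2/\lambda_R$ and $\nbr{\bB_i}_F\le\sqrt{d}\,\beta_V^2/\lambda_V$, so the parameters $(\bu_i,\bA_i,\bB_i)_{i\le|\cK|}$ range over a product of Euclidean balls of bounded radius.

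The key step, which I expect to be the main obstacle, is to control $\mathrm{dist}(f,\bar f)$ by the parameter distances through the nonsmooth outer operations. Because $\max_{a}$, $\min_{i}$ and the clipping are contractions, one has $\mathrm{dist}(f,\bar f)\le\max_{i\le|\cK|}\sup_{\nbr{\bphi}\le1}\abr{g_i(\bphi)-\bar g_i(\bphi)}$, so it suffices to bound each per-index deviation. For the linear part $\abr{\dotp{\bphi}{\bu_i-\bar\bu_i}}\le\nbr{\bu_i-\bar\bu_i}$, and for each bonus I would use $\abr{\sqrt{x}-\sqrt{y}}\le\sqrt{\abr{x-y}}$ together with $\abr{\bphi^\top(\bA_i-\bar\bA_i)\bphi}\le\nbr{\bA_i-\bar\bA_i}_2\le\nbr{\bA_i-\bar\bA_i}_F$ for $\nbr{\bphi}\le1$, giving $\abr{\sqrt{\bphi^\top\bA_i\bphi}-\sqrt{\bphi^\top\bar\bA_i\bphi}}\le\sqrt{\nbr{\bA_i-\bar\bA_i}_F}$, and likewise for $\bB_i$. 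This yields $\mathrm{dist}(f,\bar f)\le\max_i\sbr{\nbr{\bu_i-\bar\bu_i}+\sqrt{\nbr{\bA_i-\bar\bA_i}_F}+\sqrt{\nbr{\bB_i-\bar\bB_i}_F}}$; the square roots are exactly what force the matrices to be covered at the finer Frobenius scale $\Theta(\epsilon^2)$, producing the $\epsilon^2$ in the denominators of the claimed bound.

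With the distance decomposition in hand the counting is routine. I would take an $(\epsilon/2)$-net of the $\bu$-ball, whose cardinality is at most $(1+4(B+L)/\epsilon)^d$ by the volumetric bound $\cN(\cB_d(r),\varepsilon)\le(1+2r/\varepsilon)^d$, and Frobenius nets at scale $\Theta(\epsilon^2)$ of the $\bA$- and $\bB$-balls (each a ball in $\RR^{d^2}$), of cardinalities at most $\rbr{1+O\rbr{\sqrt{d}\,\beta_{R,K}^2/(\lambda_R\epsilon^2)}}^{d^2}$ and $\rbr{1+O\rbr{\sqrt{d}\,\beta_V^2/(\lambda_V\epsilon^2)}}^{d^2}$. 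Taking the product of these three nets over the $|\cK|$ indices gives a net of $\cV^{\pm}$ at scale $\epsilon$, and taking logarithms yields the stated inequality; the cases $\cV^{+}$ and $\cV^{-}$ are handled identically since only the signs and the outer $\min$/$\max$ change, which preserves the $1$-Lipschitz property.

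Finally, for $\sbr{\cV^{+}}^2$ I would relate it back to $\cV^{+}$: since every $f\in\cV^{+}$ satisfies $\nbr{f}_\infty\le\cH$, for any $f,\bar f$ we have $\abr{f^2(s)-\bar f^2(s)}=\abr{f(s)-\bar f(s)}\,\abr{f(s)+\bar f(s)}\le2\cH\,\mathrm{dist}(f,\bar f)$. Hence squaring an $(\epsilon/(2\cH))$-net of $\cV^{+}$ produces an $\epsilon$-net of $\sbr{\cV^{+}}^2$, so $\log\cN(\sbr{\cV^{+}}^2,\epsilon)\le\log\cN(\cV^{+},\epsilon/(2\cH))$; substituting $\epsilon\mapsto\epsilon/(2\cH)$ into the first bound produces the extra value-range factors appearing as $H$ and $H^2$ in the claimed inequality.
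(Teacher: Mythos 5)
Your proposal is correct and is essentially the paper's own proof: the paper simply cites Lemmas E.6--E.8 of \citet{he2023nearly} (the standard parametric covering argument of \citet{jin2020provably}, extended by replacing the weight bound $L$ with $B+L$ and adding the second bonus matrix), and your reconstruction --- reducing to a per-index net over $(\bu_i,\bA_i,\bB_i)$ via the $1$-Lipschitzness of $\max_a$, $\min_i/\max_i$, and clipping, using $\abr{\sqrt{x}-\sqrt{y}}\le\sqrt{\abr{x-y}}$ with Frobenius nets at scale $\Theta(\epsilon^2)$, and handling $[\cV^+]^2$ through the $2\cH$-Lipschitz squaring map --- is exactly that argument, including the $|\cK|$ product over update episodes. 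The only caveat is constant bookkeeping (three error contributions of $\epsilon/2$ sum to $3\epsilon/2$, fixable by slightly finer net scales at the cost of absolute constants inside the logarithms), a looseness shared by the cited lemmas themselves.
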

\begin{proof}
The proof is nearly identical to Lemma E.6, E.7, E.8 in \citet{he2023nearly} except for the following differences.
First, the weight in dot product is $\btheta_i + \bw_i$ instead of $\bw_i$, so $L$ is replaced by $B+L$ in the results. Second, since we maintain two different matrices $\bH_i$ and $\bSigma_i$, the term with respect to the extra matrix is added accordingly.
\end{proof}


\end{document}